\documentclass[final,3p,times]{elsarticle}

\usepackage{amssymb}
\usepackage{amsmath}
\usepackage{bm}
\usepackage{enumitem,amsthm,amsmath,amsfonts,amssymb}
\usepackage{indentfirst}
\usepackage{float}
\usepackage{natbib}
\usepackage{mathtools}
\usepackage{algorithmic}
\usepackage{algorithm}
\usepackage{xcolor}
\usepackage{graphicx}
\usepackage{subcaption}
\usepackage{tikz}
\usepackage{multirow}
\usepackage{multicol}
\usepackage{hyperref}
\usepackage{soul}
\usepackage{mathrsfs, fancyhdr}
\usepackage{amscd}
\usepackage{booktabs}

\newcommand{\Z}{\mathbb{Z}}

\newcommand{\N}{\mathbb{N}}

\newcommand{\D}{\mathcal{D}}
\newcommand{\E}{\mathbb{E}}

\newcommand{\R}{\mathbb{R}}
\newcommand{\F}{\mathcal{F}}

\newcommand{\calH}{\mathcal{H}}

\newcommand{\calB}{\mathcal{B}}

\newtheorem{cor}{Corollary}

\newtheorem{prop}{Proposition}
\newtheorem{rem}{Remark}

\def\bw{\mathbf{w}}
\def\O{\mathcal{O}}

\def\X{\mathcal{X}}
\def\Y{\mathcal{Y}}
\def\W{\mathcal{W}}

\def\Z{\mathcal{Z}}
\def\N{\mathcal{N}}
\def\L{\mathcal{L}}
\def\bx{\mathbf{x}}
\def\bz{\mathbf{z}}

\def\mbI{\mathbb{I}}

\def\S{\mathcal{S}}
\def\bw{\mathbf{w}}
\def\bW{\mathbf{W}}

\def\A{\mathcal{A}}

\def\gga{\gamma}
\def\ba{\mathbf{a}}

\def\0{\mathbf{0}}

\def\bH{\mathbf{H}}

\def\by{\mathbf{y}}

\def\bG{\mathbf{G}}

\newcommand{\bSigma}{\mathbf{\Sigma}}

\newtheorem{theorem}{Theorem}
\newtheorem{lemma}[theorem]{Lemma}
\newtheorem{proposition}[theorem]{Proposition}
\newtheorem{corollary}[theorem]{Corollary}
\theoremstyle{definition}
\newtheorem{definition}{Definition}

\newtheorem{assumption}{Assumption}
\newtheorem{example}{Example}
\theoremstyle{definition}
\newtheorem{property}{Property}
\newtheorem{remark}{Remark}

\def\begeqn{\begin{equation}}
\def\endeqn{\end{equation}}
\def\begth{\begin{theorem}}
\def\endth{\end{theorem}}
\def\begprop{\begin{proposition}}
\def\endprop{\end{proposition}}
\def\begcor{\begin{corollary}}
\def\endcor{\end{corollary}}
\def\begdef{\begin{definition}}
\def\enddef{\end{definition}}
\def\beglemm{\begin{lemma}}
\def\endlemm{\end{lemma}}
\def\begexm{\begin{example}}
\def\endexm{\end{example}}
\def\begrem{\begin{remark}}
\def\endrem{\end{remark}}

\def\begdef{\begin{definition}}
\def\enddef{\end{definition}}

\def\bw{\mathbf{w}}
\def\bz{z}

\def\O{\mathcal{O}}

\def\bE{\mathbf{E}}

\def\A{\mathcal{A}}
\def\Z{\mathcal{Z}}
\def\R{\mathbb{R}}

\def\X{\mathcal{X}}
\def\calE{\mathcal{E}}
\def\Y{\mathcal{Y}}
\def\Z{\mathcal{Z}}
\def\W{\mathcal{W}}

\def\mbI{\mathbb{I}}
\def\bD{\mathbf{D}}
\def\bV{\mathbf{V}}
\def\bfI{\mathbf{I}}

\def\ebb{\mathbb{E}}

\def\bK{\mathbf{K}}
\def\bxi{\boldsymbol{\xi}}
\def\bfnc{\mathbf{f}}
\def\H{\mathcal{H}}

\def\bS{\mathbf{S}}

\def\bL{\mathbf{L}}

\usepackage{amssymb}

\journal{}

\begin{document}

\begin{frontmatter}



\title{Optimal Rates for Generalization of Gradient Descent Methods with Deep Neural Networks}

\author{Junyu Zhou$^{1}$, Puyu Wang$^{2}$, Yunwen Lei$^{3}$, Yiming Ying$^{4*}$ and Ding-Xuan Zhou$^4$}

\address{$^{1}$  Mathematical Institute for Machine Learning and Data Science,  KU Eichstätt-Ingolstadt, Ingolstadt, Germany\\
$^{2}$ Department of Computer Science, RPTU Kaiserslautern-Landau, Kaiserslautern, Germany\\
$^{3}$ Department of Mathematics, University of Hong Kong, Hong Kong, China\\
$^{4}$ School of Mathematics and Statistics, University of Sydney, Sydney, Australia}

\begin{abstract}
Recent progress has been made in understanding the statistical generalization performance of gradient descent methods for overparameterized neural networks within the neural tangent kernel (NTK) regime. However, most of the existing work on regression problems is limited to shallow network architectures, leaving a notable gap in the theory of deep neural networks. This paper addresses this gap by presenting a comprehensive generalization analysis for deep ReLU networks trained using gradient descent (GD) and stochastic gradient descent (SGD). Specifically, we establish the first known minimax-optimal rates of excess population risk for both GD and SGD with deep ReLU networks, under the assumption that the network width scales polynomially with respect to the network depth and training sample size. Our results demonstrate that with sufficient width, gradient descent methods for deep ReLU networks can achieve optimal generalization rates on par with kernel methods.
\end{abstract}

\begin{keyword}
Deep ReLU networks \sep Gradient descent methods \sep Generalization analysis \sep  Neural tangent kernel \sep
Minimax-optimal rates


\end{keyword}

\end{frontmatter}



 
\section{Introduction}
Deep neural networks (DNNs) trained with gradient descent methods have achieved a remarkable success across a wide range of applications, including computer vision, natural language processing, and speech recognition \citep{bahdanau2014neural, hinton2012deep, krizhevsky2017imagenet, silver2016mastering}. 
Despite their highly nonconvex and overparameterized nature, DNNs can achieve a near-zero training error while still generalizing well to unseen data \citep{zhang2021understanding}.
To demystify this phenomenon, an extensive amount of work has been done to  understand the generalization and optimization properties of  gradient descent methods for training  DNNs.

The neural tangent kernel (NTK), introduced by \cite{jacot2018neural}, has emerged as a powerful framework for understanding the generalization performance of overparameterized neural networks trained using gradient descent methods. 
It reveals that, in the infinite-width limit, the training trajectory of a neural network with random initialization closely mirrors the behavior of its counterpart in the reproducing kernel Hilbert space (RKHS) associated with the NTK. 
This connection effectively bridges the gap between learning with DNNs and classical kernel methods, allowing insights from the kernel methods to inform our understanding of DNNs.

Following this perspective, the global convergence of gradient descent methods with DNNs has been extensively studied \cite{allen2019convergence,du2019gradient,zou2018stochastic}, while their generalization properties have only been investigated in a few works \cite{cao2019generalization,cao2020generalization,chen2019much,xu2024overparametrized}. 
Specifically, the appealing work  \cite{cao2019generalization} and \cite{cao2020generalization} developed algorithm-dependent misclassification error bounds for deep ReLU networks trained by gradient descent (GD) and stochastic gradient descent (SGD), respectively. 
\cite{chen2019much} relaxed the requirement of their network width for both GD and SGD. 
However, all of these works focused on classification problems under data separation assumptions. 
Very recently, the work \cite{xu2024overparametrized} studied one-pass SGD in the streaming (continuously coming) data setting with deep ReLU networks for regression problems and showed that the prediction error of one-pass SGD 
for deep ReLU networks can converge to zero in expectation, provided that the width of the network scales exponentially with the number of layers. 

On another important front, it is well-established in the kernel methods literature \citep{dieuleveut2016nonparametric, lin2017optimal, yao2007early, ying2006online} that for least squares regression,   GD and  SGD in RKHS can achieve the minimax-optimal rates in the sense that  the excess population risk is of the form $\mathcal{O}(n^{-\frac{2\beta}{2\beta+\gamma}})$, under standard regularity assumptions on the regression function and capacity assumptions associated with the RKHS \citep{caponnetto2007optimal, lin2017distributed}.
Here, $n$ is the size of the training data, $\beta>0$ is the smoothness of the target function $f_\rho$ and $\gamma\in[0,1]$ is a parameter that measures the capacity of the hypothesis space. 

\medskip

\noindent\textbf{Motivation.} Since the NTK perspective provides a close connection between the two learning processes of neural networks and kernel methods trained by gradient descent methods, it is natural to expect that neural networks trained by GD and SGD exhibit generalization performance (measured by excess population risk) comparable to their kernel-based counterparts.
This conjecture has been partially validated for shallow neural networks when the width is large enough. In particular,   \cite{nguyen2024many,nitanda2021optimal} demonstrated that GD and one-pass SGD for two-layer neural networks with {\em smooth activation functions} can replicate the classical results in the kernel setting, achieving the excess risk rates $\O(n^{-\frac{2\beta}{2\beta+\gamma}})$. 
However, a critical open question remains:
\begin{center}
\noindent\textit{Can DNNs trained by GD and SGD achieve minimax-optimal excess risk rates on par with their kernel-based counterparts?} 
\end{center}
In this work, we provide an affirmative answer to this question, significantly advancing the theoretical understanding of generalization of GD and SGD for training DNNs.

\begin{table*}[t]
{\small{ 
\begin{center}
\begin{tabular}{ ccccccc  } 
 \hline
 \multirow{2}{*}{\textbf{Work}} &\multirow{2}{*}{\textbf{Method}} &  \multirow{2}{*}{\textbf{Activation}}  &  \multirow{2}{*}{\textbf{Layer}}   &  \multirow{2}{*}{\textbf{Setting}}  &\multirow{2}{*}{\textbf{Excess risk}\!}  &  \multirow{2}{*}{\textbf{Width}} \\
 &&&&&\\
 \hline
 \multirow{2}{*}{\cite{nguyen2024many}}& \multirow{2}{*}{ GD }  & \multirow{2}{*}{ smooth } 
 &  \multirow{2}{*}{ shallow }  &  \multirow{2}{*}{$\beta > 0$ and $2\beta + \gamma  > 1$}&  \multirow{2}{*}{$ {\O}(  n^{-\frac{2\beta}{2\beta + \gamma}} )$} &\multirow{2}{*}{$\Omega(\text{Poly}(n,d))$}\\
   &&&&&\\
 \hline 
 \multirow{2}{*}{ \cite{kuzborskij2022learning} }& \multirow{2}{*}{ GD } &\multirow{2}{*}{ ReLU }& \multirow{2}{*}{ shallow } &   \multirow{2}{*}{$f_\rho$ is Lipschitz}&  \multirow{2}{*}{$ {\O} (n^{-\frac{2}{2+d}} )$}  &\multirow{2}{*}{$\Omega(\text{Poly}(n,\frac{1}{\lambda_0}) )$} \\
  &&&&&\\
   \hline 
 \multirow{2}{*}{ \cite{nitanda2021optimal} }&  \multirow{2}{*}{one-pass SGD} &\multirow{2}{*}{ smooth }& \multirow{2}{*}{ shallow } &   \multirow{2}{*}{$\beta\in[1/2,1]$}&  \multirow{2}{*}{$ {\O}(  n^{-\frac{2\beta}{2\beta + \gamma}} )$}  &\multirow{2}{*}{$\Omega(\exp(n))$} \\
  &&&&&\\
  \hline 
 \multirow{2}{*}{Ours}  & \multirow{2}{*}{GD/SGD}& \multirow{2}{*}{ReLU}& \multirow{2}{*}{deep} & \multirow{2}{*}{$\beta >  0$ and $2\beta + \gamma  > 1$} &  \multirow{2}{*}{$ {\O}(  n^{-\frac{2\beta}{2\beta + \gamma}} )$} & \multirow{2}{*}{$\Omega(\text{Poly}(L, n, d) )$}  \\
 &&&&&\\
  \hline 
\end{tabular}
\end{center}
}\caption{Results of GD and SGD for least square regression. Here, $f_\rho$ is the target function. 
\cite{kuzborskij2022learning} required the smallest eigenvalue of the NTK Gram matrix $\lambda_0 >0$.
\label{table:summary}} } 
\end{table*}

\medskip

\noindent {\bf Main contributions. } This paper extends the results for GD and SGD from shallow neural networks to DNNs while maintaining minimax-optimal excess risk rates under mild overparameterization conditions. Our contributions can be summarized as follows: 
\begin{itemize}[leftmargin=2mm]
\item We provide a comprehensive generalization analysis for deep ReLU networks trained with gradient descent methods for regression problems. 
For an $L$-layer ReLU network with a sufficiently large width $m$, we show that both GD and SGD can replicate the classical results in the kernel setting with the same gradient complexity under similar assumptions. 
Here, gradient complexity is the number of times the algorithm calculates the gradient. 
Specifically, we develop the minimax-optimal excess population risk rates $\O(n^{-\frac{2\beta}{2\beta+\gamma}})$ for both GD and SGD with deep ReLU networks when $m$ depends polynomially on $L, n$ and $d$ without imposing the commonly used assumptions on the Gram matrix of the NTK, where  $d$ is the dimension of the data. 

\item We improve the requirement of the network width in \cite{xu2024overparametrized} from exponential scaling in the number of layers $L$ to polynomial scaling. This relaxation has been achieved in \cite{zou2020gradient, chen2019much} for the classification setting by establishing favorable properties of the network at initialization. However, their methods cannot be directly applied to our setting, as we require these properties to hold uniformly over the entire input space, while their results are typically limited to the finite training dataset.

\item In particular, this is the first work to overcome the technical challenges of achieving the optimal excess risk rates from shallow to deep architectures within the NTK regime, demonstrating that deep ReLU networks trained with GD and SGD can achieve generalization performance on par with their kernel-based counterparts. Table~\ref{table:summary} summarizes the related results of GD and SGD for regression. 
\end{itemize}

\subsection{Technical Novelty }  
The minimax-optimal excess risk rates for shallow neural networks trained with gradient descent methods have been established in \cite{nitanda2021optimal,nguyen2024many}.
However, their approaches cannot extend directly to DNNs.
To analyze the excess population risk of $f_{\bW(T)}$, which represents the performance of a network $f_\bW$ at the output of GD/SGD with $T$ iterations,  \cite{nitanda2021optimal,nguyen2024many} employ the error decomposition 
$$\big\|f_{\bW(T)} - f_\rho\big\|^2_\rho \lesssim \big\|f_{\bW(T)} - f^{\text{lin}}_{\bW(T)}\big\|^2_\rho + \big\|f^{\text{lin}}_{\bW(T)}- g^m_T\big\|^2_\rho + \big\| g^m_T - h^m \big\|_\rho^2+\big\| h^m-f_\rho \big\|_\rho^2,$$
where $f^{\text{lin}}_{\bW(T)}$ is the linear approximation of $f_{\bW}$ at the Gaussian initialization $\bW(0)$, $g^m_T$ is GD/SGD associated with the finite-width NTK $K^m$ within the RKHS $\H_m$, $h^m$ is either the minimizer of the regularized population risk over $\H_m$ \cite{nitanda2021optimal} or the GD for the population risk in $\H_m$ \cite{nguyen2024many}, and $f_\rho$ is the target function.  
A critical step in their analysis is to control the term $\| g^m_T - h_m \|_\rho^2$ by $ n^{-\frac{2\beta}{2\beta + \tilde{\gamma}}}$, where $\tilde{\gamma}$ is the effective dimension of $\H_m$. 
To achieve minimax-optimal rates, it is essential to demonstrate that the effective dimension of $\H_m$ matches that of $\H_K$, i.e., $\tilde{\gamma} = \gamma$.
For $\gamma = 1$, this equivalence naturally holds since the integral operator associated with $K^m$ is a trace-class operator.  
For $\gamma<1$,  the argument is established by treating $K^m$ as a sum of i.i.d. random kernels with mean $K$ (see Proposition B in \cite{nitanda2021optimal} and Proposition A.18 in \cite{nguyen2023random}).
Specifically, for a two-layer ReLU network $f_\bW(\bx)= \frac{1}{\sqrt{m}} \sum_{r=1}^m a_r \sigma(\bw_r^\top \bx)$, a kernel has the  structure $K^m(\bx,\bx') =  \sum_{r=1}^m\langle  \partial_{\bw_r} f_{\bW(0)}(\bx), \partial_{\bw_r} f_{\bW(0)}(\bx') \rangle_2$. 
Since $ \partial_{\bw_r} f_{\bW(0)}(\bx) = m^{-\frac{1}{2}}a_r\sigma'(\bw_r(0)^\top\bx)$ depends only on the initial i.i.d. Gaussian weight $\bw_r(0)$,  $K^m$ is a sum of $i.i.d.$ random kernels.
However, this structure is not valid for deep ReLU networks. In deeper architectures, the gradient $\partial_{\bw^\ell_r} f_{\bW(0)}(\bx) $ is influenced not only by the weight $\bw^\ell_r(0)$ of the $\ell$-th layer but also by the  weights of all preceding and former layers. 
This interdependence makes a direct extension significantly more challenging.

To overcome this challenge and establish minimax-optimal rates for any  $\gamma \in [0,1]$, we adopt a refined error decomposition by introducing a new GD iterate $g_T$ associated with the infinite-width NTK $K$ in the RKHS $\H_K$.
Specifically, we decompose the error as
\begin{center}
    $\big\|f_{\bW(T)}-f_\rho\big\|_\rho^2\lesssim \big\|f_{\bW(T)} - f^{\text{lin}}_{\bW(T)}\big\|^2_\rho + \big\|f^{\text{lin}}_{\bW(T)}- g^m_T\big\|^2_\rho+ \big\|  g^m_T-g_T\big\|^2_\rho  +\big\| g_T-f_\rho \big\|_\rho^2.$
\end{center}
 
\noindent\textit{First term:} The first term $\|f_{\bW(T)}- f^{\text{lin}}_{\bW(T)} \|_\rho^2$ depends critically on forward and backward propagation estimates at random initialization. 
The work \cite{xu2024overparametrized} provided such estimates with upper bounds that scale exponentially with the network depth $L$ for deep ReLU networks. 
Applying these estimates to our setting will lead to the unexpected bound $\|f_{\bW(T)}- f^{\text{lin}}_{\bW(T)} \|_\rho^2\lesssim C^L m^{-\frac{1}{3}} \text{Poly}(\eta T)$ valid when $m\gtrsim C^L  \text{Poly}( \eta T,d)$.
Here, $C>1$ is an absolute constant and  $\eta>0$ is the step size.
In contrast, by extending the estimates of forward and backward propagation in \cite{zou2020gradient, chen2019much} from the finite training set $S$ to the full input space $\X$, 
we obtain the improved bound $\|f_{\bW(T)}- f^{\text{lin}}_{\bW(T)} \|_\rho^2\lesssim   m^{-\frac{1}{3}} \text{Poly}(L, \eta T)$ with a relaxed requirement $m \gtrsim \text{Poly}(L, \eta T, d)$,  reducing the dependence on the network depth from exponential to polynomial.  

\smallskip

\noindent\textit{Second and fourth terms: } The second term $\|f^{\text{lin}}_{\bW(T)}- g^m_T\|^2_\rho$ can be controlled by the first term and the gap between the gradients of $f_{\bW}$ at initialization $\bW(0)$ and at $\bW(T)$, while the final term, $\| g_T-f_\rho \|_\rho^2$, is bounded using standard results for kernel methods \cite{lin2017optimal}. 

\smallskip

\noindent\textit{Third term:} The \textit{most challenging} term,  $\| g^m_T - g_T \|_\rho^2$,   requires a more nuanced analysis.
 Since $g^m_T$ and $g_T$ lie in different RKHSs, directly controlling the $L_2$-norm of their difference is difficult. Then, we consider estimate their difference in the infinity norm by leveraging the uniform convergence properties of the NTK $K^m$. 
A key insight here is that the infinity norm between $g^m_T$ and $g_T$ can be controlled by that of the corresponding kernels $K^m$ and $K$, yielding  $\| g^m_T -  g_T\|_\rho^2\lesssim \| g^m_T -   g_T\|_\infty^2\lesssim (\eta T)^4\| K^m-K\|_\infty^2$.  
\cite{xu2024overparametrized} proved that $\|K^m-K\|_\infty\lesssim C^Lm^{-\frac{1}{6}}\sqrt{d}$ assuming exponential scaling of $m$ with $L$. 
By applying a more refined analysis (see Lemma~\ref{lem:concentrationNTK} in Section~{Proof}), we show $\|K^m-K\|_\infty\lesssim m^{-\frac{1}{6}}\sqrt{dL}$ under a relaxed condition $m\gtrsim dL^3$. 
This improvement significantly relaxes the overparameterization requirements, completing the estimation.
Further details can be found in Proposition~\ref{pro:gm-gT}. Note that if $f_{\bW(T)}$ is produced by SGD, the estimate strategy for the other three terms remains unchanged. There will be an additional error in the third due to the discrepancy between the SGD and GD iterates in the RKHS $\H_m$, which can be estimated using the results in \cite{lin2016optimal}.

The combination of the refined error decomposition and the key insights  effectively extend the analysis from shallow to deep neural networks.

\subsection{Other Related Work} 
 
In this subsection, we review some further works which are closely related to our paper.

There has been a large amount of literature studying the generalization performance of gradient descent methods for overparameterized neural networks in the NTK regime \citep{arora2019fine,ji2019polylogarithmic,nitanda2019gradient,nitanda2021optimal,nguyen2024many,xu2024overparametrized}.
The generalization performance of both GD and SGD have been well studied for the classification problems under some certainty assumptions on data distribution \cite{brutzkus2018sgd,cao2019generalization,cao2020generalization,li2018learning,ji2019polylogarithmic,nitanda2019gradient}. 
For example, \cite{brutzkus2018sgd} analyzed the misclassification error of SGD for training two-layer neural networks with Leaky ReLU activation.
\cite{li2018learning} focused on learning two-layer ReLU networks with SGD and demonstrated that small misclassification error can be achieved when the training data is drawn from mixtures of well-separated distributions.
More recently, \cite{ji2019polylogarithmic} and \cite{nitanda2019gradient} studied the generalization performance of two-layer networks with ReLU and smooth activations, respectively, showing that GD and SGD can achieve small misclassification error under separation margin assumptions.
For regression problems, \cite{arora2019fine} established data-dependent generalization bounds using the Rademacher complexity, under the assumption that the NTK Gram matrix has favorable properties. 
Building on this assumption, \cite{kuzborskij2022learning} derived a generalization bound of order $\O(n^{-\frac{2}{2+d}})$ for GD trained using two-layer ReLU networks when learning target functions with uniformly bounded, Lipschitz additive noise.  
Recently, \cite{xu2024overparametrized} proved that the prediction error of one-pass SGD for deep ReLU networks can converge to zero in expectation, while the explicit rate is not given. 
The works most related to ours are \cite{nguyen2024many,nitanda2021optimal}, where the minimax-optimal rates $\O(n^{-\frac{2\beta}{2\beta+\gamma}})$ of the excess population risk are provided for GD and one-pass SGD respectively.
While they only focused on shallow neural networks with smooth activation functions.

An important approach for analyzing the generalization bounds of neural networks is the uniform convergence framework, which uses capacity measures like Rademacher complexity and covering numbers to control the capacity of the hypothesis space \citep{bartlett2017spectrally,frei2023random,golowich2018size,neyshabur2015norm,parhi2022near,lei2026optimization}.
More recently, several studies have adopted algorithmic stability to assess the generalization performance of gradient descent methods for neural networks \cite{lei2022stability,taheri2024generalization,taheri2024sharper,wang2026optimization,wang2026population}. For example, the excess population risk bounds of order $\O(1/\sqrt{n})$ were established in \cite{lei2022stability,wang2025generalization} for both GD and SGD in shallow neural networks with least-squares regression.
The work~\cite{taheri2024generalization,taheri2024sharper} considered the classification setting, providing generalization analyses for GD in shallow and deep neural networks, respectively. \cite{wang2026population,wang2026population} extended their results to two-layer Kolmogorov–Arnold networks under the similar settings.
However, these works were limited to smooth activation functions.

\section{Problem Formulation}\label{sec:problem-formulation}
Let $\X\subseteq \R^d$ be the input space,   $\Y\subseteq \R$ be the output space, and $\Z=\X\times \Y$.
For simplicity, for any $\bx\in\X$ and $y\in\Y$,  we assume $\|\bx\|_2=1$ and $|y|\le 1$, where $\|\cdot\|_2$ is the standard Euclidean norm.  
Let $\rho$ be a probability measure on $\Z$. 
Denote by $S=\{z_i=(\bx_i,y_i):i=1,\ldots,n\}$ a training dataset drawn from the unknown distribution $\rho$. Based on $S$, we aim to build a predictor $f:\X \rightarrow \R$, whose performance is measured by the expected risk 
$ \L(f):=\frac{1}{2}\E_{(\bx,y)\sim \rho} [(y -f(\bx))^2 ].$ 
Since the distribution $\rho$ is unknown in practice, we instead minimize the empirical risk defined by
$\L_S(f):=\frac{1}{2n} \sum_{i=1}^n ( y_i-f(\bx_i)  )^2$. 
A minimizer of the expected risk over all measurable functions is the regression function $f_\rho(\bx):=\E[y|\bx]$, where $\E[\cdot|\bx]$ denotes
the conditional expectation given $\bx$. 

In this paper, we are interested in a prediction model $f$ parameterized by $\bW$ in some parameter space $\W$ with a neural network structure. 
In particular, we focus on  $L$-layer deep ReLU neural networks with width $m$ of the form
\begin{align}\label{eq:NN}
    f_\bW(\bx)=\ba^\top \sqrt{\frac{2}{m}} \sigma\,\Big(\, \bW^L \cdots \sqrt{\frac{2}{m}}\,\sigma\,\big( \bW^1 \bx\big)\,\Big),  
\end{align}
where $\bx\in \X$ is the input,  $\sigma(\cdot)=\max\{\cdot, 0\}$ is the ReLU activation, $\bW=\big( \bW^1,\ldots, \bW^L \big)\in\W$ with $\W:=\R^{m\times d} \times (\R^{m\times m})^{L-1}$ denoting the collection of weight matrices for all layers,  and $\ba=(a_1,\ldots,a_m)^\top\in\R^{m}$ is the weight vector of the output layer.
In the above formulation, $\bW^1\in\R^{m \times d}$ and $\bW^\ell\in \R^{m\times m} $ for $\ell=2,\ldots,L$ is the weight of the $\ell$-hidden layer.
We denote $(\bw^\ell_r)^\top$ the $r$-th row of $\bW^\ell$ for $\ell\in[L]:=\{1,\ldots,L\}$. For the simplicity of argument, we assume $m$ is even and 
use the notations $\L(\bW)=\L(f_\bW)$, $\L_S(\bW)=\L_S(f_\bW)$, and the loss $ l(\bW;z)=\frac{1}{2}(y-f_\bW(\bx))^2$.

In this paper, we are concerned with two notable algorithms to minimize the empirical risk, i.e.,  GD and SGD. We will consider symmetric initialization of GD and SGD, which are widely used in the theoretical analysis of neural networks \cite{kuzborskij2022learning,  nguyen2024many, nitanda2021optimal, xu2024overparametrized}.  
Especially, we adopt Gaussian initialization for all weights while the weights of the last layer are initialized additionally using  the symmetric weights and uniform initialization for the output layer weight defined as follows:
\begin{align}\label{eq:initialization}
       & \textit{for the first }L-1 \textit{ layer: } \ \ \bw_r^1(0) \overset{\text{i.i.d.}}{\sim} \mathcal{N}(0, \mathbf{I}_d)  \textit{ and }  \bw_r^\ell(0) \overset{\text{i.i.d.}}{\sim} \mathcal{N}(0,   \mathbf{I}_m)  \textit{  for all } r\in[m],
  \nonumber \\
      & \textit{for the last layer: } \ \ \bw_r^L(0) \overset{\text{i.i.d.}}{\sim} \mathcal{N}(0, \mathbf{I}_m) \textit{ for } r\in\big\{1,\ldots, {m}/{2}\big\}, \textit{ and }
     \bw_{r+\frac{m}{2}}^L(0)=\bw_r^L(0),\\
     & \textit{for the output layer:} \ \  a_r\overset{\text{i.i.d.}}{\sim} \textit{Unif }(\{-1,1\}) \textit{ for } r\in\big\{1,\ldots, {m}/{2}\big\},  \textit{ and } a_{r+\frac{m}{2}}=-a_r.\nonumber
    \end{align}
Symmetric initialization is mainly used to ensure that the initial function $f_{\bW(0)}(\bx) = 0$ for any $\bx\in\X$, which simplifies theoretical analysis. As noted in \cite{nguyen2024many,nitanda2021optimal}, 
this requirement can be relaxed by taking into account the additional error caused by non-symmetric initialization.  
Moreover, this symmetric trick does not affect the concentration properties of the NTK for deep ReLU networks (see the discussion in Section~\ref{sec:NTK}).
For a differentiable function $F$ on $\W$, we denote 
$$\partial F(\bW_0 ) =\frac{\partial F(\bW )}{\partial \bW }\Big|_{\bW=\bW_0} \ \text{ and } \ \partial_\ell F(\bW_0 ) =\frac{\partial F(\bW )}{\partial \bW^\ell }\Big|_{\bW=\bW_0} \ \text{ for all } \ \ell\in[L].$$
\begin{definition}[Gradient Descent]\label{GD}
    Let $\bW(0)\in\W$ be the initialization generated by \eqref{eq:initialization} and $\eta >0$ be the step size. 
    GD updates $\big\{\bW(k): k\in \mathbb N\big\}$ by
\begin{align}\label{eq:update-GD}
    \bW (k+1)&=\bW (k) -\eta  \partial  \L_S\big(\bW(k)\big) .
\end{align}  
\end{definition}

\begin{definition}[Stochastic Gradient Descent]\label{def:SGD}
Let $\bW(0)\in\W$ be the initialization generated by \eqref{eq:initialization} and $\eta >0$ be the step size.  
SGD updates $\big\{\bW(k): k\in \mathbb N\big\}$ by
\begin{equation}\label{eq:sgd-update}
    \bW(k+1)=\bW(k)-\eta \partial  l\big(\bW(k);z_{i_k}\big), \text{ where }  {i_k}  \text{ is uniformly drawn from } [n] .
\end{equation} 
\end{definition}

We are interested in the generalization performance of a model $f_\bW$ trained by GD and SGD with $T$ iterations, measured in terms of the \textit{excess population risk} 
$\varepsilon_{risk} (f_{\bW(T)} )=\L (f_{\bW(T)} ) - \L(f_\rho)$, i.e., the discrepancy between the expected risks of $f_{\bW(T)}$ and the target function $f_\rho$. 
For the least square regression, 
it has been shown in \cite{cucker2007learning} that $\varepsilon_{risk} (f_{\bW(T)} )$ can be further cast as 
$$\varepsilon_{risk} \big(f_{\bW(T)} \big) = \frac{1}{2} \big\|  f_{\bW(T)} - f_{\rho} \big\|_\rho^2.$$
Here,  $\|\cdot\|_\rho$ is the $L_2$-norm defined as $\|f\|_\rho=\big(\int_\X |f(\bx)|^2 d \rho_{\bx} (\bx)\big)^{1/2}$ where $\rho_\bx$ denotes the marginal distribution of $\rho$ on $\X$.

In the remainder of the paper, we focus on studying $ \|  f_{\bW(T)} - f_{\rho} \|_\rho^2$. The key idea of the analysis is to introduce kernel methods as a bridge between the neural network and the best model $f_\rho$. 
To this end,  we require the concept of the neural tangent kernel (NTK) \citep{jacot2018neural}. 
In our setting, the NTK $K:\X\times\X\to\R$  with symmetric initialization is defined, for any $\bx,\bx'\in\X$, by
\begin{align}\label{eq:K}
    K(\bx,\bx') =  {2}\ebb\big[\sigma(U^{L-1}(\bx))\sigma(U^{L-1}(\bx'))\big] q^L(\bx,\bx'),
\end{align}
where $ \big\{\big(U^{\ell}(\bx),U^{\ell}(\bx')\big)\big\}_{\ell=1}^{L-1} $ are pairs of bivariate normal variables defined iteratively by $$\big(U^{\ell}(\bx),U^{\ell}(\bx')\big)\sim \N\big(0,\Sigma^{\ell-1}(\bx,\bx')\big)$$
with  
\begin{align*}
    \Sigma^{\ell - 1} (\bx, \bx')  =  2 \left(\begin{aligned}
        &\ebb\big[\sigma^2(U^{\ell -  1} (\bx) ) \big] && \!\!\!\!\!\!\! \ebb\big[\sigma(U^{\ell - 1}(\bx) )\sigma(U^{\ell - 1}(\bx') ) \big]\\
         \ebb\big[& \sigma (U^{ \ell - 1} (\bx) ) \sigma( U^{ \ell - 1} (\bx') ) \big] &&\ \  \ebb\big[\sigma^2(U^{\ell -  1}(\bx') ) \big]
    \end{aligned}
     \right), \ \ \ 
    \Sigma^{0}(\bx, \bx') =   \left(\begin{aligned}
        &1 && \!\!\!\!\!\!\langle \bx, \bx'\rangle_2\\
         \langle \bx&, \bx'\rangle_2 &&  1
    \end{aligned}
    \!\right) 
\end{align*}
and 
\begin{align*}
    q^L(\bx,\bx') = \frac{\pi - \arccos\big(p^{L-1}(\bx,\bx')\big)}{\pi} \ \text{ with } \ p^{L-1}(\bx,\bx') =  \frac{\ebb\big[\sigma(U^{L-1}(\bx)) \sigma(U^{L-1}(\bx')) \big]}{\big(\ebb\big[\sigma^2(U^{L - 1}(\bx)) \big]\ebb\big[\sigma^2(U^{L - 1}(\bx')) \big]\big)^{{1}/{2}}}.
\end{align*}

Note that for all $\bx,\bx\in\X$ and $\ell\in[L]$, $\ebb[\sigma(U^\ell(\bx))\sigma(U^\ell(\bx')) ]$ is deterministic, and does not involve any randomness.
Then, from the definition we know that the NTK $K$ is also deterministic.

Let $\mathcal{H}_{K}$ be the RKHS associated with the kernel $K$, with inner product and induced norm denoted by $\langle \cdot, \cdot   \rangle_K$ and $\|\cdot\|_K$, respectively. Let $\mathcal{L}_{\rho_\bx}^2 = \big\{f:\X\to\R: \|f\|_\rho < \infty\big\}$ be the space of square-integrable functions on $\X$ with respect to $\rho_\bx$. 
We introduce the integral operator $\bL: \mathcal{L}^2_{\rho_\bx} \to \mathcal{L}^2_{\rho_\bx}$, defined by $\bL f= \int_{\X} K(\cdot, \bx) f(\bx) d\rho_\bx(\bx)$. 
One can show $\int_\X K(\bx,\bx)d\rho_\bx(\bx) \le 1$ (see Property~\ref{prop:bound-K} in  Section~\ref{sec:proof-conce}), hence $\bL$ is a compact, 
self-adjoint and positive operator, which has the eigen-decomposition $\bL  f=\sum_{i=1}^\infty \lambda_i \langle f, \Phi_i\rangle_{\mathcal{L}^2_{\rho_\bx}} \Phi_i$. Here, $\{(\lambda_i,\Phi_i)\}$ are pairs  of eigenvalues
and orthogonal eigenfunctions in $\mathcal{L}^2_{\rho_\bx}$ with $\lambda_1 \ge \lambda_2\ge\ldots\ge 0$, and $\{\Phi_i\}_{i=1}^\infty$ forms an orthonormal basis of $\mathcal{L}^2_{\rho_\bx}$. For $s\in \R$, we define the power $\bL^{s}$ as, for any $f\in \mathcal{L}^2_{\rho_\bx}$,  $ \bL^{s}(f) =\sum_{i=1}^\infty \lambda_i^s \langle f, \Phi_i\rangle_{\mathcal{L}^2_{\rho_\bx}} \Phi_i$.  For a bounded and positive linear operator $A$ on a separable Hilbert space $\mathcal{H}$ with orthonormal basis $\{e_i\}_{i=1}^\infty$, the trace of $A$ is defined by $tr(A) = \sum_{i=1}^{\infty}\langle Ae_i, e_i\rangle_{\mathcal{H}}$.

To analyze the performance of kernel methods, we impose the following standard assumptions on the capacity of the hypothesis space and the complexity of $f_\rho$.  
\begin{assumption}[Effective dimension]\label{ass:effec_dim}
For some $\gamma \in [0,1]$ and $c_\gamma \ge 1$, there holds 
     $ tr\big( \bL(\bL+\lambda\mathbf{I})^{-1} \big)=\sum_{i=1}^\infty\frac{\lambda_i}{\lambda_i+\lambda}\le c_\gamma\lambda^{-\gamma}  \text{ for all }  \lambda>0. $
\end{assumption}
In the above assumption, the quantity $tr\big( \bL(\bL+\lambda\mathbf{I})^{-1} \big)$ is called as the effective dimension \cite{caponnetto2007optimal} or the degrees of freedom \cite{zhang2005learning}. 
Note that $\bL$ is a trace class operator satisfying $tr(\bL)=\sum_i \lambda_i = \int_\X K(\bx,\bx)d\rho_\bx(\bx) \le 1$, then Assumption~\ref{ass:effec_dim} is always true for $\gamma=1$ and $c_\gamma=1$.
In this sense, the case of $\gga=1$ is often  referred  to as the capacity independent setting \cite{cucker2007learning}. Assumption~\ref{ass:effec_dim} holds true if $\bL$ is of finite rank (corresponds to $\gamma=0$) or the eigenvalues $\{\lambda_i\}$ satisfy a polynomial decaying condition $\lambda_i\lesssim i^{-1/\gamma}$ for $\gamma \in (0,1]$.
The specific decay rates of the eigenvalues have been studied for some specific settings \citep{bach2017breaking,bietti2019inductive,bietti2020deep,hu2021regularization,scetbon2021spectral}. 
For example, under the assumption that the input $\bx$ is uniformly distributed on a unit sphere,  \cite{hu2021regularization} showed that the eigenvalues of the NTK associated with two-layer ReLU networks decay as $\lambda_i\asymp i^{-\frac{d}{d-1}}$.  
\begin{assumption}[Source condition]\label{ass:frho_smth}
    There exist $\beta > 0$ and $B>0$, such that $\|\bL^{-\beta}f_{\rho}\|_\rho\le B$. 
\end{assumption}
Assumption~\ref{ass:frho_smth} is commonly used in nonparametric regression \cite{cucker2002mathematical}, which quantifies the smoothness (regularity) of the regression function $f_\rho$. The larger the value of $\beta$, the smoother $f_\rho$ becomes and, consequently, the more stringent the assumption. In particular, if $\beta=1/2$, then this assumption indicates the requirement of $f_\rho\in \H_K$, which implies that there exists at least one minimizer of population risk belonging to the RKHS $\H_K$.

\section{Main Results}
 
Before presenting our main results, we first introduce some necessary definitions and notations.  
Given the initialization $\bW(0)$, define the feature map $\Phi_m:\X \to \W$   by
$$\Phi_m(\bx) = \partial f_{\bW(0)}(\bx) = \big( \partial_1 f_{\bW(0)}(\bx),\ldots, \partial_L f_{\bW(0)}(\bx)  \big).$$ 
With this feature map, we define a PSD kernel $K^m : \X \times \X \to \R$ by
\begin{align}\label{eq:Km_original}
    K^m(\bx,\bx')=\big\langle  \Phi_m(\bx) ,  \Phi_m(\bx') \big\rangle_2 = \sum\nolimits_{\ell=1}^L \big\langle \partial_\ell f_{\bW(0)}(\bx),  \partial_\ell  f_{\bW(0)}(\bx') \big\rangle_2,
\end{align}
where $\langle\cdot,\cdot\rangle_2$ is the inner product of a vector or a matrix. Here, $K^m$ can be seen as a random feature approximation of the NTK $K$. 
According to Theorem 4.21 in \cite{steinwart2008support}, there exists a unique RKHS $\H_m$ associated with the kernel $K^m$ given by
$$\H_m = \big\{f: \X\to\R\ :\ \exists \ \bW \in \W \text{ such that } f(\bx) = \big\langle \bW, \Phi_m(\bx) \big\rangle_2 \text{ for all } \bx \in \X \big\},$$
whose corresponding norm is defined, for any $f\in \H_m$,  by 
  \begin{center}
      $  \|f\|_{\H_m} = \inf \Big\{ \big(\sum_{\ell=1}^L\|\bW^\ell\|_2^2\big)^{1/2} : \bW \in \W \text{ with } f(\bx) =  \big\langle \bW, \Phi_m(\bx) \big\rangle_2 \text{ for all } \bx \in \X \Big\}.$ 
\end{center}
We further define the linear approximation of $f_{\bW}$ at the initialization $\bW(0)$ by
$$f^{\text{lin}}_\bW(\bx) = f_{\bW(0)}(\bx) +  \big\langle \partial f_{\bW(0)}(\bx) , \bW - \bW(0)  \big\rangle_2.$$
Let $\bK = \big(K(\bx_i,\bx_j)\big)_{i,j=1}^n$ and $\bK^m = \big(K^m(\bx_i,\bx_j)\big)_{i,j=1}^n$ be the Gram matrices with kernels $K$ and $K^m$, respectively.
For a function $\psi$ on an arbitrary space $\Omega$, we define the inifinty norm of $\psi$ as $\|\psi\|_\infty := \sup_{\omega\in\Omega}|\psi(\omega)|$.

\subsection{Optimal Rates for Gradient Descent}\label{sec:gd-results}
Throughout the paper, we denote $M \lesssim M'$ if there exists a constant $c>0$ such that $M\le cM'$, and denote $M\asymp M'$ if both $M \lesssim M'$ and $M' \lesssim M$ hold. Define the functions $K^m_{\bx}\in\H_m,K_\bx\in \H_K$  by $K^m_{\bx}(\bx')=K^m(\bx,\bx')$ and $K_\bx(\bx') = K(\bx,\bx')$ for any $\bx,\bx'\in \X.$
If we regard the empirical risk $\L_S(\cdot)$ as a functional on the RKHS $\H_m$ and $\H_K$, the iteration of GD for least-square regression in $\H_m$ and $\H_K$ can be defined as
\begin{align}\label{eq:update-kernelGD}
    & g^m_{k+1} = g^m_k - \frac{\eta}{n} \sum\nolimits_{i=1}^n\big(g^m_k(\bx_i) - y_i\big)K^m_{\bx_i} \text{ for any $k\in\mathbb{N}$ with $g^m_0 = 0$}, \\\label{eq:kernel_GD_HK}
    &g_{k+1} = g_k - \frac{\eta}{n}\sum\nolimits_{i=1}^n\big(g_k(\bx_i) - y_i\big) K_{\bx_i} \text{ for any $k\in\mathbb{N}$ with $g_0 = 0.$}
\end{align}
Let $\bW(T)$, $g^m_T$ and $g_T$ be produced by \eqref{eq:update-GD},  \eqref{eq:update-kernelGD} and \eqref{eq:kernel_GD_HK} with $T$ iterations, respectively. 
Considering $f^{\text{lin}}_{\bW(T)}$, $g^m_T$ and $g_T$ as bridges connecting $f_{\bW(T)} $ and $ f_{\rho}$, we introduce the following error decomposition
\begin{align}\label{eq:decom-gd}
\varepsilon_{risk}\big(f_{\bW(T)}\big)  =\frac{1}{2}\big\|  f_{\bW(T)} - f_{\rho} \big\|_\rho^2 \lesssim  \big\|  f_{\bW(T)} - f^{\text{lin}}_{\bW(T)}\big\|_\rho^2  +  \big\|f^{\text{lin}}_{\bW(T)}  - \bS_m g^m_T\big\|_\rho^2  +  \big\|\bS_m g^m_T  -  \bS g_T\big\|_\rho^2  + \big\|\bS g_T  -  f_{\rho}\big\|_\rho^2, 
\end{align}
where $\bS_m:\H_m\xhookrightarrow{}\mathcal{L}^2_{\rho_\bx}$ and $\bS:\H_K\xhookrightarrow{}\mathcal{L}^2_{\rho_\bx}$ are the inclusion mappings that map $g^m_T\in \H_m$ to $\bS_m g^m_T\in \mathcal{L}^2_{\rho_\bx}$ and $g_T\in\H_K$ to $\bS g_T\in\mathcal{L}^2_{\rho_\bx}$, respectively. 
We will state the estimates for the above four terms in the subsequent context respectively and present the upper bounds of  $\varepsilon_{risk} (f_{\bW(T)} )$. 
We assume $\eta T\ge 1$ and denote $C\ge1$ as an absolute value which may differ from line to line.

We begin by estimating the term $\|f_{\bW(T)} - f^{\text{lin}}_{\bW(T)}\|_\rho^2$ on the right-hand side of \eqref{eq:decom-gd}. 
Since the population distribution $\rho$ is unknown, in the following proposition we employ the $\|\cdot\|_\infty$-norm to control the $\|\cdot\|_\rho$-norm of $f_{\bW(T)} - f^{\text{lin}}_{\bW(T)}$. 
In this sense, the established bound is the worst-case one which holds true for any population distribution $\rho$. 
The detailed proof is deferred to Section~\ref{proof:f-flin}.
\begin{proposition}\label{pro:f-flin}
    Let $\delta \in (0,1)$.
    Assume $\eta  \le  1/5$  and 
    \begin{align}\label{eq:m_condition}
    m\gtrsim L^{22}d^3(\eta T)^{7}\log^3(m/\delta).
    \end{align}
    Then, with probability at least $1-L\exp\big(\O(dL\log(m)) - \Omega(m^{\frac{1}{3}})\big) - \delta$ over the initialization $(\ba,\bW(0))$, there holds 
    $$ \big\|f_{\bW(T)} - f^{\text{lin}}_{\bW(T)} \big\|_\rho^2\le   \big\|f_{\bW(T)} - f^{\text{lin}}_{\bW(T)} \big\|_\infty^2 \lesssim  \frac{L^\frac{14}{3}(\eta T)^{\frac{4}{3}}}{m^{\frac{1}{3}}}.$$
\end{proposition}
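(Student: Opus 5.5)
The first inequality is immediate, since $\rho_\bx$ is a probability measure. For the second, I would follow the perturbation analysis of \cite{allen2019convergence,zou2020gradient,chen2019much}, but make every initialization estimate hold uniformly over the sphere $\X$ rather than only on the finite training set.

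\textbf{Step 1 (initialization, uniform in $\bx$).} Fix $\bx\in\X$ and let $\bx^{\ell}=\sqrt{2/m}\,\sigma(\bW^\ell(0)\bx^{\ell-1})$, with $\bx^0=\bx$. I would show the following with probability $1-\exp(-\Omega(m^{1/3}))$:
\begin{itemize}
\item the forward norms satisfy $\|\bx^\ell\|_2\in[1/2,2]$ for all $\ell$;
\item the backward products $\|\ba^\top \bD^{L}\bW^{L}\cdots \bD^{\ell}\|_2\lesssim \sqrt{m}$;
\item after normalization by $\sqrt{2/m}$, the intermediate products $\|\bW^{\ell'}\bD\cdots\bW^{\ell}\|$ are $O(\sqrt L)$.
\end{itemize}
These follow as in \cite{allen2019convergence}, via chi-square concentration layer by layer. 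To pass to all $\bx$, I would take an $\epsilon$-net of $\X$ with $\epsilon=m^{-c}$, of size $\exp(O(d\log m))$, and union bound over the net and over the $L$ layers. This produces the factor $L\exp(O(dL\log m)-\Omega(m^{1/3}))$. For points off the net, I would control the change $\bx\mapsto\bx'$ by a Lipschitz bound on forward propagation (each layer has operator norm $O(1)$ w.h.p.). The activation patterns are not Lipschitz, so I would treat the change as a perturbation and absorb it with the same sparse-perturbation argument used in Step 2.

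\textbf{Step 2 (perturbation in a ball).} For any $\bW$ with $\|\bW^\ell-\bW^\ell(0)\|_2\le\omega$ for all $\ell$, where $\omega\lesssim L^{-9/2}\log^{-3}m$, I would show uniformly in $\bx$ that
$$|f_\bW(\bx)-f^{\text{lin}}_\bW(\bx)|\lesssim \omega^{4/3}L^{3}\sqrt{m\log m}\cdot m^{-1/2}\cdot(\text{scaling}).$$
In the paper's normalization this becomes $\lesssim \omega^{4/3}L^{3}\sqrt{m}\,\sqrt{\log m}$, in the form of Lemma 4.1 of \cite{cao2019generalization}. The tool is the bound of \cite{allen2019convergence} on the number of activation-pattern flips: at most $O(m\omega^{2/3}L)$ neurons per layer change sign, so the difference of the diagonal matrices $\bD$ is sparse, and sparse-vector estimates give the $\omega^{1/3}$ gain. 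With $\omega$ measured in the unnormalized Frobenius scale, $\omega\asymp \sqrt{\eta T}/\sqrt m\cdot\mathrm{poly}(L)$, this yields $|f-f^{\text{lin}}|\lesssim L^{7/3}(\eta T)^{2/3}m^{-1/6}$, whose square is the claimed bound.

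\textbf{Step 3 (GD stays in the ball).} By induction on $k\le T$, I would show $\|\bW^\ell(k)-\bW^\ell(0)\|_F\lesssim\sqrt{\eta T}$. Each step satisfies
$$\|\bW(k+1)-\bW(k)\|\le\eta\,\|\partial\L_S(\bW(k))\|\lesssim\eta\sqrt{\L_S(\bW(k))}\,\sup\|\partial f\|,$$
with gradients $O(\sqrt{L})$ from Step 1 and its perturbed version. Near-monotonicity of $\L_S$ holds because $\eta\le1/5$ and the smoothness constant is bounded in the ball (an almost-smoothness inequality from \cite{allen2019convergence}). Summing with Cauchy--Schwarz,
$$\sum_k\eta\|\partial \L_S\|\le\sqrt{\eta T}\sqrt{\textstyle\sum_k\eta\|\partial\L_S\|^2}\lesssim\sqrt{\eta T\,\L_S(\bW(0))}\le\sqrt{\eta T},$$
using the descent lemma and $f_{\bW(0)}=0$. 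The width condition \eqref{eq:m_condition} guarantees that this radius lies below the threshold $\omega$ required in Step 2.

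The main obstacle is Step 1 together with the sign-flip count of Step 2, made uniform over the continuum $\X$ with only polynomial dependence on $L$. The net argument needs the per-point failure probability $e^{-\Omega(m^{1/3})}$ to beat $m^{O(dL)}$, which is where $m\gtrsim d^3L^{\cdot}\log^3 m$ enters. The delicate part is the non-Lipschitz ReLU patterns between net points; I would handle them by viewing $\bx'$ as a small perturbation of the input, akin to a weight perturbation, so that the same flip-counting argument applies.
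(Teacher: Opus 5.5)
Your overall architecture matches the paper's: uniform-in-$\bx$ initialization estimates, then a sparse sign-flip count giving a linearization error of order $L^{7/3}R^{4/3}m^{-1/6}$ on an operator-norm ball of radius $R$. After that you show the GD path stays in that ball with $R=2\sqrt{\eta T}$, and apply \eqref{eq:linear_approx} with $\widetilde{\bW}=\bW(T)$, $\bW=\bW(0)$. Two of your sub-arguments differ from the paper's, and both are workable. For the activation patterns you perturb off-net inputs, where the paper uses a VC-dimension/McDiarmid bound in Lemma~\ref{lem:flip} and its count of at most $m^{dL}$ activation-pattern tuples. For the ball you use a descent lemma, where the paper uses the recursion of Lemma~\ref{lemma:wk-w0}.

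The genuine problem is Step 3 as you justify it. You bound the gradients only by $O(\sqrt L)$ and appeal to an almost-smoothness inequality of the type in \cite{allen2019convergence}, whose quadratic coefficient is polynomial in $L$. The one-step quadratic term $\tfrac12\eta^2\sup_{\bx}\|\partial f_{\bW}(\bx)\|_2^2\,\|\partial\L_S\|_2^2$ is then dominated by $\eta\|\partial\L_S\|_2^2$ only if $\eta\lesssim 1/L$. So for all $\eta\le 1/5$ independently of $L$, neither near-monotonicity of $\L_S$ nor $\sum_k\eta\|\partial\L_S(\bW(k))\|_2^2\lesssim\L_S(\bW(0))$ follows.

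The missing idea is the symmetric initialization. By Lemma~\ref{prop:symm=0}, $\ba^\top\bD^L_0(\bx)\bW^L(0)=0$, so $\partial_\ell f_{\bW(0)}\equiv 0$ for $\ell<L$. Then Lemma~\ref{coro:bound-o}(c) gives $\sup_{\bx}\|\partial f_{\bW(0)}(\bx)\|_2=\sup_{\bx}\|\partial_L f_{\bW(0)}(\bx)\|_2\le 2$. Combined with \eqref{eq:diff_derivative}, this yields the $L$-independent bound $\|\partial f_{\bW}(\bx)\|_2\le\sqrt5$ on the whole ball. That is exactly what makes $2\eta\L_S(\bW(k))(5\eta-1)\le 0$ in the paper's induction.

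With that bound your descent-lemma route also closes. You would carry $\L_S(\bW(k))\lesssim 1$ through the induction to absorb the $\sqrt{\L_S}\,L^{7/3}R^{1/3}m^{-1/6}$ linearization error. The paper instead expands $\|\bW(k+1)-\bW(0)\|_2^2$ and uses \eqref{eq:semi-smth} with reference point $\bW(0)$. That needs no monotonicity of $\L_S$ and gives $\|\bW(k)-\bW(0)\|_2^2\le 4\eta k$ directly.

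There are also two smaller corrections. In Step 1, a net at scale $m^{-c}$ (size $\exp(O(d\log m))$) with per-layer Lipschitz constant $\sqrt2c_0>1$ leaves an off-net error $(\sqrt2c_0)^L m^{-c}$, which reintroduces exponential-in-$L$ width. Take the net at scale $(\sqrt2c_0)^{-L}m^{-1/2}$, as in Lemma~\ref{coro:bound-o}(a) and Lemma~\ref{lemma:o-o_0}. Its log-cardinality $O(d(L+\log m))$ is what the $\exp(O(dL\log m))$ factor you quote actually pays for.

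In Step 2, rescaling Lemma 4.1 of \cite{cao2019generalization} does not give $L^{7/3}(\eta T)^{2/3}m^{-1/6}$: it carries a $\sqrt{\log m}$ and a different power of $L$. To get the stated rate, run the flip count directly as in Lemma~\ref{lem:flip}, where the width condition absorbs the $\sqrt{md\ell\log m}$ fluctuation into $(LmR_{op})^{2/3}$. Then sum the per-layer backward-difference bound over $\ell$ as in Lemma~\ref{lem:almost-convex}.
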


\begin{remark}
The estimate of $\|f_{\bW(T)} - f^{\text{lin}}_{\bW(T)}\|_\rho^2$ is controlled by showing the trajectory of GD/SGD is always near the initialization, which critically depends on forward and backward propagation estimates at random initialization. 
The work \cite{xu2024overparametrized} provided such estimates with upper bounds depend exponentially on the number of layers $L$.
Applying these estimates to our approach leads to the unexpected bound  $\|f_{\bW(T)}- f^{\text{lin}}_{\bW(T)} \|_\rho^2\lesssim C^L m^{-\frac{1}{3}} \text{Poly}(\eta T)$ valid when $m\gtrsim C^L  \text{Poly}( \eta T,d)$. 
Meanwhile, \cite{zou2020gradient, chen2019much} conducted fine-grained analyses for forward and backward propagation for classification problems, significantly relaxing the required network width from an exponential to a polynomial scaling.
However, their approach cannot be directly applied to our setting, as we require these results to hold uniformly over the entire input space $\X$ to control the $\|\cdot\|_\infty$-norm, while their results are typically restricted to the training dataset $S$. 
We extend their results from the finite training set $S$ to the full input space $\X$, reducing the requirement of the width to a polynomial scaling. More details can be found in Lemmas~\ref{coro:bound-o} and \ref{lemma:o-o_0} in Section~\ref{proof}. 
\end{remark}

Proposition~\ref{pro:flin-gm} presents the estimation of $ \|f^{\text{lin}}_{\bW(T)} - \bS_mg^m_T \|_\rho^2$,  whose proof can be found in Section~\ref{proof:flin-gm}. 
\begin{proposition}\label{pro:flin-gm}
    Let $\delta\in(0,1)$.  
    Assume $\eta\le 1/5$, $  \eta T \le n(36\log(2n/\delta))^{-1}$ and  \eqref{eq:m_condition}. 
    Then, with probability at least $1-L\exp\big(\O(dL\log(m)) - \Omega(m^{\frac{1}{3}})\big) - \delta$ over the initialization $(\ba,\bW(0))$ and sampling, there holds
   $$ \big\|f^{\text{lin}}_{\bW(T)} - \bS_mg^m_T \big\|_\rho^2 \lesssim \frac{L^\frac{14}{3}(\eta T)^{\frac{7}{3}}}{ m^{\frac{1}{3}}}.$$
\end{proposition}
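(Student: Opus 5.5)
We plan to write both $f^{\text{lin}}_{\bW(T)}$ and $\bS_m g^m_T$ as linear functionals of the feature map $\Phi_m$ and compare their parameter trajectories. By \eqref{eq:initialization}, $f_{\bW(0)}\equiv 0$, so $f^{\text{lin}}_{\bW(T)}(\bx)=\langle \Phi_m(\bx),\bW(T)-\bW(0)\rangle_2$. Similarly, an induction on \eqref{eq:update-kernelGD} shows $g^m_k(\bx)=\langle \Phi_m(\bx),\bV(k)\rangle_2$, where $\bV(0)=0$ and
\begin{align*}
\bV(k+1)=\bV(k)-\frac{\eta}{n}\sum\nolimits_{i=1}^n\big(\langle\Phi_m(\bx_i),\bV(k)\rangle_2-y_i\big)\Phi_m(\bx_i).
\end{align*}
This is GD on the linearized model. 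Writing $\Delta(k)=\bW(k)-\bW(0)-\bV(k)$, we get $|f^{\text{lin}}_{\bW(T)}(\bx)-g^m_T(\bx)|\le\|\Phi_m(\bx)\|_2\|\Delta(T)\|_2$. The gradient norm bound at initialization, uniform over $\X$ (Lemmas~\ref{coro:bound-o} and \ref{lemma:o-o_0}), gives $\sup_\bx\|\Phi_m(\bx)\|_2\lesssim \sqrt{L}$ or $O(1)$ depending on normalization. So it suffices to bound $\|\Delta(T)\|_2$.

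Subtracting the two recursions gives
\begin{align*}
\Delta(k+1)=\Delta(k)-\frac{\eta}{n}\sum_i\Big[\big(f_{\bW(k)}(\bx_i)-y_i\big)\partial f_{\bW(k)}(\bx_i)-\big(g^m_k(\bx_i)-y_i\big)\Phi_m(\bx_i)\Big].
\end{align*}
We split the bracket into three pieces:
\begin{itemize}
\item[(i)] $(f_{\bW(k)}-f^{\text{lin}}_{\bW(k)})(\bx_i)\,\partial f_{\bW(k)}(\bx_i)$;
\item[(ii)] $(f^{\text{lin}}_{\bW(k)}(\bx_i)-y_i)\big(\partial f_{\bW(k)}(\bx_i)-\Phi_m(\bx_i)\big)$;
\item[(iii)] $\langle\Phi_m(\bx_i),\Delta(k)\rangle_2\Phi_m(\bx_i)$.
\end{itemize}
Term (iii) is $\frac{\eta}{n}\bG^\top\bG\,\Delta(k)$, where $\bG$ is the empirical feature operator. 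Its norm is at most $\|\bK^m/n\|_{op}\lesssim 1$, so $\eta\le 1/5$ makes $I-\eta\bG^\top\bG/n$ a contraction (norm $\le 1$). Unrolling then gives
\begin{align*}
\|\Delta(T)\|_2\le \eta\sum_{k<T}\Big(\max_i|f_{\bW(k)}-f^{\text{lin}}_{\bW(k)}|\,\|\partial f_{\bW(k)}\|+\max_i|f^{\text{lin}}_{\bW(k)}(\bx_i)-y_i|\,\|\partial f_{\bW(k)}(\bx_i)-\Phi_m(\bx_i)\|\Big).
\end{align*}

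The estimates we need all come from the machinery behind Proposition~\ref{pro:f-flin}. Under \eqref{eq:m_condition}, the GD trajectory stays in a ball $\|\bW^\ell(k)-\bW^\ell(0)\|_2\le\tau\lesssim\sqrt{\eta k}$; this follows from the training loss being bounded by $\L_S(\bW(0))\le 1/2$ and the gradient bounds. On that ball, three bounds hold.
\begin{itemize}
\item[(a)] $\|\partial f_\bW(\bx)\|\lesssim \sqrt{L}$ uniformly.
\item[(b)] $\|\partial f_\bW(\bx)-\Phi_m(\bx)\|\lesssim \tau^{1/3}L^{c}\sqrt{\log m}\,m^{-1/6}$ (the standard semi-smoothness/gradient perturbation bound of \cite{zou2020gradient,chen2019much}, extended to all of $\X$).
\item[(c)] $|f_\bW-f^{\text{lin}}_\bW|\lesssim L^{7/3}(\eta T)^{2/3}m^{-1/6}$, which is Proposition~\ref{pro:f-flin} at intermediate times.
\end{itemize}
For the residual $|f^{\text{lin}}_{\bW(k)}(\bx_i)-y_i|$ we use that it is $O(1)$ up to the error in (c) plus the $O(1)$ training residual. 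Summing over $k$ gives $\|\Delta(T)\|_2\lesssim \eta T\cdot L^{7/3}(\eta T)^{2/3}m^{-1/6}\cdot\mathrm{poly}$. Squaring yields roughly $L^{14/3}(\eta T)^{10/3}m^{-1/3}$, which is a factor $\eta T$ worse than the claim. The bookkeeping must therefore be sharpened.

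The \textbf{main obstacle} is getting exactly the exponent $(\eta T)^{7/3}$. We would first try to avoid the crude unrolling by using a preconditioned (kernel-space) comparison. Work with function values on the sample, $u_k=(f_{\bW(k)}(\bx_i))_i$ and $v_k=(g^m_k(\bx_i))_i$, where $u_{k+1}-u_k=-\frac{\eta}{n}\bH_k(u_k-y)+\text{err}_k$ with $\bH_k$ close to $\bK^m$. Since $\|u_k-y\|_2/\sqrt n$ is non-increasing, its errors accumulate more gently. Then $\|\Delta(T)\|$ is controlled by $\eta\sum_k\|\bG^\top(u_k-v_k)\|/n$ plus the Jacobian-drift term. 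Here $\|\bG^\top w\|^2/n^2\le \|w\|^2/n$, and the drift term contributes $\eta T\cdot\tau^{1/3}m^{-1/6}$ with $\tau\lesssim\sqrt{\eta T}$, which gives $(\eta T)^{7/6}$ before squaring, i.e.\ the claimed $(\eta T)^{7/3}$ after squaring. The sampling condition $\eta T\le n/(36\log(2n/\delta))$ enters when we control $\|\bK^m/n\|_{op}$ and the norm of $\bV(T)$, so that $\|\bV(T)\|\lesssim\sqrt{\eta T}$ with probability $1-\delta$ (a concentration step from \cite{lin2017optimal}-type arguments). Combining this with the high-probability initialization events of Proposition~\ref{pro:f-flin} gives the stated probability.
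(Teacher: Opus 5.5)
There is a genuine gap: the proposal does not reach the stated exponent $(\eta T)^{7/3}$, and the proposed repair does not close it. Your first route bounds $\|\Delta(T)\|_2$ in parameter space, and hence the sup-norm of $f^{\text{lin}}_{\bW(T)}-g^m_T$. As you note yourself, it yields only $L^{14/3}(\eta T)^{10/3}m^{-1/3}$.

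The sharpening does not fix this. Your count $\eta T\cdot\tau^{1/3}m^{-1/6}\asymp(\eta T)^{7/6}m^{-1/6}$ tracks only the Jacobian-drift term (ii). Term (i), $f_{\bW(k)}-f^{\text{lin}}_{\bW(k)}$, has size $L^{7/3}(\eta T)^{2/3}m^{-1/6}$ at every step, and adding it $T$ times with weight $\eta$ still gives $(\eta T)^{5/3}m^{-1/6}$.

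Rewriting in terms of sample vectors does not remove it either. The term $\eta\sum_k\|\bG^\top(u_k-v_k)\|_2/n\le 2\eta\sum_k\|u_k-v_k\|_2/\sqrt n$ needs its own bound on $\|u_k-v_k\|_2/\sqrt n$. That quantity accumulates the same per-step errors, so summing it over $k$ only raises the power of $\eta T$.

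More generally, any estimate of $\|\Delta(T)\|_2$, of the $\H_m$-norm, or of the sup-norm that uses only the sizes of the per-step errors inherits a factor $T$. Those errors may lie in directions that the iteration does not contract.

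A symptom is where you place the sampling condition. The bounds $\|\bK^m\|_{op}\le 4n$ (a consequence of \eqref{eq:bound_Km}) and $\|\bV(T)\|_2\le\sqrt{\eta T}$ involve no concentration over the sample. So in your argument the hypothesis $\eta T\le n(36\log(2n/\delta))^{-1}$ does no work.

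The missing idea is to exploit the weaker $\|\cdot\|_\rho$-norm.

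1. Set $F_k=f^{\text{lin}}_{\bW(k)}-g^m_k\in\H_m$ and $\widehat\bSigma_m=\frac1n\sum_iK^m_{\bx_i}\otimes K^m_{\bx_i}$. Then $F_{k+1}=(\bfI-\eta\widehat\bSigma_m)F_k+\epsilon^1_k$ with $\|\epsilon^1_k\|_{\H_m}\lesssim L^{7/3}\eta^{5/3}T^{2/3}m^{-1/6}$. This is essentially your (i)+(ii), projected onto layer $L$, since $\partial_\ell f_{\bW(0)}=0$ for $\ell<L$.
2. Since $\|\bS_mF\|_\rho=\|\bSigma_m^{1/2}F\|_{\H_m}$, apply Lemma~\ref{lemma:<2} with $\kappa=2$ and $\lambda=(\eta T)^{-1}$. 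This is exactly where the sampling condition enters. It gives $\|\bS_mF_k\|_\rho\le 2\eta^{-1/2}\|(\eta\widehat\bSigma_m)^{1/2}F_k\|_{\H_m}+2\sqrt\lambda\|F_k\|_{\H_m}$.
3. Unroll the recursion. Use $\sup_{t\in[0,1]}t^{1/2}(1-t)^s\le(1+2s)^{-1/2}$, hence $\sum_{s<k}\|(\eta\widehat\bSigma_m)^{1/2}(\bfI-\eta\widehat\bSigma_m)^s\|_{op}\le 1+2\sqrt k$, together with $\sqrt\lambda\,k\le\sqrt{T/\eta}$.
4. The per-step errors are then multiplied by $\sqrt{T/\eta}$ rather than $T$. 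This gives $\|\bS_mF_T\|_\rho\lesssim L^{7/3}(\eta T)^{7/6}m^{-1/6}$, the claimed rate after squaring.

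This smoothing is invisible in the sup-norm. That is why the paper's SGD analogue, Proposition~\ref{pro:sgd-flin-fm}, which is proved in $\|\cdot\|_\infty$, carries $(\eta T)^{10/3}$.

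Your crude bound would still suffice downstream for Theorem~\ref{thm:excess-relu}, where the $(\eta T)^4$ term from Proposition~\ref{pro:gm-gT} dominates. It does not, however, prove the proposition as stated.
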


\begin{remark}
Propositions \ref{pro:f-flin} and \ref{pro:flin-gm} jointly demonstrate the almost equivalence of the GD trajectories for a deep ReLU network and for the corresponding NTK $K^m$ under overparameterization, i.e., $ \|f_{\bW(T)} - \bS_mg^m_T\|_\rho^2 \lesssim L^\frac{14}{3}(\eta T)^{\frac{7}{3}} m^{-\frac{1}{3}}$ under the condition $m\gtrsim \text{Poly}(L,d,\eta T)$. 
This implies the larger the width of the network $m$, the closer the two trajectories are and the more the behavior of $f_{\bW(T)}$ is similar to that of $g^m_T$. 
\cite{nitanda2021optimal} established this estimate for the trajectory of the SGD average stream. They showed that these two trajectories behave almost the same when the network width $m$ scales exponentially with $n$. Subsequently,  \cite{cao2024stochastic} reduced the requirement of $m$ to the polynomial degree. 
However, both of these two works are limited to two-layer networks with smooth activation. 
Our result demonstrates that even for deep networks with non-smooth ReLU activation, a polynomially large width is sufficient to ensure the alignment of the learning trajectories. 
\end{remark}

We estimate $ \|\bS_m g^m_T - \bS g_T \|_\rho^2$ in the following proposition.   
The proof is deferred to Section~\ref{proof:gm-gT}.
\begin{proposition}\label{pro:gm-gT}
Let $\delta\in(0,1)$.
    Assume  $\eta\le 1/4$ and \eqref{eq:m_condition}.
    Then, with probability at least $1-L\exp\big(\O(dL\log(m)) - \Omega(m^{\frac{1}{3}})\big) - \delta$ over the initialization $(\ba,\bW(0))$, there holds
    \begin{align*}
      \big\|\bS_mg^m_{T} -\bS g_{T}\big\|_\rho^2\le   \big\|g^m_{T} - g_{T}\big\|_\infty^2 \lesssim (\eta T)^4\big\|K^m-K\big\|^2_\infty \lesssim  \frac{L(\eta T)^4}{ m^{\frac{1}{3}}}.
    \end{align*}
\end{proposition}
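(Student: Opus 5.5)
The proof has three links: an $L_2$-by-sup comparison, a deterministic perturbation estimate for kernel GD, and a uniform concentration bound for $K^m$.

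\textbf{Step 1 (first inequality).} Since $\rho_\bx$ is a probability measure, $\|\bS_m g^m_T-\bS g_T\|_\rho^2\le \|g^m_T-g_T\|_\infty^2$ holds trivially.

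\textbf{Step 2 (second inequality).} This step is deterministic given $\bW(0)$. Both iterates are finite kernel expansions. Write $g^m_k=\sum_i \alpha^{(k)}_i K^m_{\bx_i}$ and $g_k=\sum_i \beta^{(k)}_i K_{\bx_i}$. Let $\mathbf{u}_k=(g_k(\bx_i))_{i}$ and $\mathbf{u}^m_k=(g^m_k(\bx_i))_i$. The updates read
\[
\mathbf{u}_{k+1}=\mathbf{u}_k-\tfrac{\eta}{n}\bK(\mathbf{u}_k-\by),\qquad \mathbf{u}^m_{k+1}=\mathbf{u}^m_k-\tfrac{\eta}{n}\bK^m(\mathbf{u}^m_k-\by),
\]
with $\alpha^{(k+1)}=\alpha^{(k)}-\frac{\eta}{n}(\mathbf{u}^m_k-\by)$, and similarly for $\beta$.

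First I need a priori bounds. I will use $\sup_\bx K(\bx,\bx)\le 1$ (Property~\ref{prop:bound-K}). For $K^m$ I will use $\sup_\bx K^m(\bx,\bx)\le 1+\|K^m-K\|_\infty\le 2$, which holds on the event of Lemma~\ref{lem:concentrationNTK} once $m$ is as in \eqref{eq:m_condition}. Then $\|\bK/n\|_{op},\|\bK^m/n\|_{op}\le 2$. With $\eta\le 1/4$, the maps $I-\frac{\eta}{n}\bK$ and $I-\frac{\eta}{n}\bK^m$ are contractions on the residual, so $\|\mathbf{u}_k-\by\|_2\le\|\by\|_2\le\sqrt n$, and likewise for $\mathbf{u}^m_k$.

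Let $e_k=\mathbf{u}^m_k-\mathbf{u}_k$. Then
\[
e_{k+1}=(I-\tfrac{\eta}{n}\bK^m)e_k-\tfrac{\eta}{n}(\bK^m-\bK)(\mathbf{u}_k-\by).
\]
Bounding $\|\bK^m-\bK\|_{op}\le n\|K^m-K\|_\infty$ gives $\|e_{k+1}\|_2\le\|e_k\|_2+\eta\sqrt n\,\|K^m-K\|_\infty$, hence $\|e_k\|_2\le \eta k\sqrt n\|K^m-K\|_\infty$.

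For any $\bx$,
\[
g^m_T(\bx)-g_T(\bx)=-\tfrac{\eta}{n}\sum_{k<T}\sum_i\Big[(u^m_{k,i}-y_i)\big(K^m-K\big)(\bx_i,\bx)+e_{k,i}K(\bx_i,\bx)\Big].
\]
By Cauchy--Schwarz the first part is at most $\eta T\|K^m-K\|_\infty$. The second part is at most $\frac{\eta}{\sqrt n}\sum_{k<T}\|e_k\|_2\le (\eta T)^2\|K^m-K\|_\infty$. Since $\eta T\ge1$, this yields $\|g^m_T-g_T\|_\infty\lesssim(\eta T)^2\|K^m-K\|_\infty$, and squaring gives the middle inequality.

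\textbf{Step 3 (third inequality).} I will invoke Lemma~\ref{lem:concentrationNTK} to get $\|K^m-K\|_\infty\lesssim m^{-1/6}\sqrt{dL}$, or the $\sqrt{L\log(\cdot)}$ variant, with the stated probability, which holds under $m\gtrsim dL^3$ and therefore under \eqref{eq:m_condition}. Squaring gives $dL\,m^{-1/3}$. The remaining logarithmic and $d$ factors must be absorbed to reach the stated $L m^{-1/3}$; I expect this to follow from the precise form of that lemma, with the constant in $\lesssim$ allowed to depend on $d$ if necessary.

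\textbf{Main obstacle.} The main obstacle is Step 3: the uniform-over-$\X$ concentration of the deep NTK with polynomial width. That lemma requires an $\epsilon$-net over the sphere (the source of the $dL\log m$ term in the probability) together with layerwise stability of forward and backward propagation. I take it as given. The only subtlety remaining in Step 2 is making sure the bound on $K^m$ is used on the same high-probability event.
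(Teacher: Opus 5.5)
Your argument is essentially the paper's: a Gram-matrix recursion giving $\|\bG^m_k-\bG_k\|_2\le \eta T\sqrt{n}\,\|K^m-K\|_\infty$, then a propagation to the sup norm costing another factor $\eta T$ (using $\eta T\ge 1$). The only cosmetic difference is that the paper bounds $\|K^m\|_\infty\le 4$ via Lemma~\ref{coro:bound-o}(c), whereas you bound it on the concentration event itself; this is immaterial.

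The loose end in your Step 3 closes directly, and you should not fall back on a $d$-dependent constant. Lemma~\ref{lem:concentrationNTK} actually gives
\[
\|K^m-K\|_\infty\lesssim \sqrt{L}\,m^{-1/6}+\sqrt{dL\log(m)/m}.
\]
Condition \eqref{eq:m_condition} forces $m^{2/3}\gtrsim d^2\log^2(m/\delta)\ge d\log m$, so the second term is at most the first. Squaring then yields $L\,m^{-1/3}$ with an absolute constant.
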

\begin{remark}\label{rmk:distance two GDs} 
    The above proposition shows that the distance between the GD iterates in $\H_m$ and $\H_K$ can be controlled by the discrepancy between their respective kernels, $K^m$ and $K$.
    In fact, this result can be extended to any pair of RKHS with bounded kernels.
    Specifically, for arbitrary RKHS $\H_1,\H_2$ with bounded kernels $K^1,K^2$, let $g^1_T$ and $g^2_T$ denote the corresponding GD iterates (defined analogously to \eqref{eq:update-kernelGD} with $K^m$ replaced by $K^1$ and $K^2$), respectively.
    Then, it follows that $\|g^1_T - g^2_T\|_\infty \lesssim (\eta T)^2\|K^1-K^2\|_\infty$. In addition, the work \cite{xu2024overparametrized} proved that $\|K^m-K\|_\infty\lesssim C^Lm^{-\frac{1}{6}}\sqrt{d}$ valid when $m$ depends exponentially on $L$. 
We improved their bound to $\|K^m-K\|_\infty\lesssim m^{-\frac{1}{6}}\sqrt{dL}$ with the reduced condition $m\gtrsim \text{Poly}(L,n,d)$. More details  can be found in Lemma~\ref{lem:concentrationNTK} in Section~\ref{proof}. 
\end{remark}
Finally, we provide an estimate for the last term,  $\|\bS g_T  \!-\! f_{\rho}\|_\rho^2$, which captures the performance of GD within $\H_K$. The detailed proof is presented in Section~\ref{proof:gT-frho}. 
\begin{proposition}\label{pro:gT-frho}
    Suppose Assumptions \ref{ass:effec_dim} and  \ref{ass:frho_smth} hold.
    Assume $\eta \le 1$ and $  \eta T \le n(9\log(2n/\delta))^{-1}$. 
    Then, with probability at least $1-\delta$ over sampling, there holds  
        \begin{align*}
            \big\|\bS g_T - f_\rho\big\|_\rho^2 \lesssim\Big(   \frac{\eta T}{n^2}  +  \frac{(\eta T)^\gamma + (\eta T)^{1 - 2\beta}}{ n} \Big) \log^4\big( {T}/{\delta}\big) + \frac{1}{(\eta T)^{2\beta}}.
        \end{align*}
\end{proposition}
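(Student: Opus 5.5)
This is the classical analysis of kernel gradient descent (Landweber iteration / early stopping) in $\H_K$, in the operator-theoretic framework of \cite{lin2017optimal}. The plan is to write $g_T$ as a spectral filter applied to the sampling operators and split the error into a bias (approximation) part and a sample (variance) part. Let $S_{\bx}:\H_K\to\R^n$ be the sampling operator and $T_{\bx}=\frac1n S_{\bx}^*S_{\bx}$ the empirical covariance on $\H_K$. Let $T_K=\bS^*\bS$ be its population counterpart, so that $\bS\bS^*=\bL$. Then
\[
g_T=\eta\sum_{k=0}^{T-1}(\bfI-\eta T_{\bx})^k\,\tfrac1n S_{\bx}^*\by=:G_T(T_{\bx})\,\tfrac1n S_{\bx}^*\by,
\]
where $G_T(u)=\eta\sum_{k<T}(1-\eta u)^k$. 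Since $\eta\le1$ and $\|T_{\bx}\|\le\sup K\le1$, the filter satisfies the standard qualification bounds with $\lambda=(\eta T)^{-1}$: $\sup_u |G_T(u)|u^{a}\lesssim \lambda^{a-1}$ for $a\in[0,1]$, and $\sup_u |1-uG_T(u)|u^{\nu}\lesssim \lambda^{\nu}$ for every $\nu>0$ (GD has infinite qualification, which is why no saturation in $\beta$ appears).

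Next, I introduce the population GD iterate $h_T=G_T(T_K)\bS^*f_\rho$ and decompose $\|\bS g_T-f_\rho\|_\rho\le \|\bS h_T-f_\rho\|_\rho+\|\bS(g_T-h_T)\|_\rho$. For the bias, $f_\rho-\bS h_T=(\bfI-\bL G_T(\bL))f_\rho=(\bfI-\bL G_T(\bL))\bL^{\beta}\bL^{-\beta}f_\rho$. Assumption~\ref{ass:frho_smth} and the qualification bound then give $\lesssim B(\eta T)^{-\beta}$, which produces the term $(\eta T)^{-2\beta}$. For the sample part, write
\[
g_T-h_T=G_T(T_{\bx})\Big[\big(\tfrac1n S_{\bx}^*\by-T_{\bx}h_T\big)\Big]-(\bfI-G_T(T_{\bx})T_{\bx})h_T+(\bfI-G_T(T_K)T_K)h_T\ \text{-type terms}.
\]
I follow the standard route: multiply by $(T_K+\lambda)^{1/2}$ and use $\|\bS f\|_\rho=\|T_K^{1/2}f\|_K$. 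The key concentration quantities are then $\Delta_1=\|(T_K+\lambda)^{-1/2}[(\tfrac1nS_{\bx}^*\by-T_{\bx}\bar h)-(\bS^*f_\rho-T_K\bar h)]\|_K$ and $\Delta_2=\|(T_K+\lambda)^{1/2}(T_{\bx}+\lambda)^{-1/2}\|$. Both are controlled by Bernstein inequalities for Hilbert-space-valued and operator-valued random variables. Assumption~\ref{ass:effec_dim} gives $\Delta_1\lesssim \big(\frac{1}{n\sqrt\lambda}+\sqrt{\frac{\lambda^{-\gamma}}{n}}\big)\log\frac2\delta$, while $\Delta_2\le\sqrt2$ with probability $1-\delta$ provided $\lambda\ge 9\log(2n/\delta)/n$, i.e. exactly the hypothesis $\eta T\le n(9\log(2n/\delta))^{-1}$. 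Squaring $\Delta_1$ produces the terms $\frac{\eta T}{n^2}$ and $\frac{(\eta T)^\gamma}{n}$.

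The main obstacle is the residual term coming from $h_T$ itself. When $\beta<1/2$, $f_\rho\notin\H_K$ and $\|h_T\|_K$ grows like $(\eta T)^{1/2-\beta}$. The noise term in the vector Bernstein bound then involves $\|h_T\|_\infty$ or $\|h_T\|_K$, and this is what yields the extra $\frac{(\eta T)^{1-2\beta}}{n}$ contribution. I would handle it as in \cite{lin2017optimal}. First, bound $\|\bS h_T-f_\rho\|_\infty$-type quantities by $\kappa\|h_T\|_K+1$. Second, use the regime $2\beta+\gamma>1$ only at the final balancing step, not here. Third, absorb the additional $\log T$ factors, which come from a union bound over iterations or from the logarithmic growth of $\sum_k$ filter norms in the $\beta\le1/2$ case, into $\log^4(T/\delta)$. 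Collecting the bias bound and the squared sample bounds, and using $(a+b)^2\lesssim a^2+b^2$, gives the stated estimate with probability at least $1-\delta$ after rescaling $\delta$.
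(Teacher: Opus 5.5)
Your proposal is essentially the paper's argument. The paper introduces the same population GD iterate $h_T$ and bounds $\|\bS h_T-f_\rho\|_\rho\le B(\beta/(2\eta T))^\beta$ via Proposition~2 of \cite{lin2017optimal} (your source-condition/qualification step). It bounds $\|\bS(g_T-h_T)\|_\rho$ via Theorem~5 of \cite{lin2017optimal}, with your $\Delta_2^2\le 2$ supplied by Lemma~\ref{lemma:<2} under exactly the hypothesis $\eta T\le n(9\log(2n/\delta))^{-1}$, so your sketch reconstructs the internals of those two cited results.

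The one slip is that your displayed decomposition of $g_T-h_T$ is not an identity as written. The clean form, which underlies Theorem~5 there, is the recursion $g_{k+1}-h_{k+1}=(\bfI-\eta T_{\bx})(g_k-h_k)+\eta N_k$ with centered $N_k=(\tfrac1n S_{\bx}^*\by-T_{\bx}h_k)-(\bS^*f_\rho-T_Kh_k)$. This recursion is what produces the $\log T$ factor and, for $\beta<1/2$, the union bound over $k$.
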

\begin{remark}\label{rmk:from shallow to deep}
Propositions \ref{pro:gm-gT} and \ref{pro:gT-frho} together provide an estimate for $\|\bS_mg^m_T -f_\rho\|_\rho^2$. 
In this remark, we highlight the technical novelty of our approach. 
For two-layer neural networks, previous work \cite{carratino2018learning,nitanda2019gradient} estimated this term by introducing an intermediate term $h^m\in \H_m$, separately bounding $\|\bS_m(g^m_T - h^m)\|_\rho^2$ and $\|\bS_mh^m - f_\rho\|_\rho^2$.
Here, $h^m$ is either the minimizer of the regularized population risk over $\H_m$ \cite{nitanda2021optimal} or the GD for the population risk in $\H_m$ \cite{nguyen2024many}.  
One can show that $\|\bS_m( g^m_T - h_m) \|_\rho^2\lesssim n^{-\frac{2\beta}{2\beta + \tilde{\gamma}}}$ with $\tilde{\gamma}$  the effective dimension of $\H_m$. 
Hence, to achieve optimal rates, it is essential to demonstrate that the effective dimension of $\H_m$ matches that of $\H_K$, i.e., $\tilde{\gamma} = \gamma$.
As discussed in the introduction,  this equivalence naturally holds for $\gamma = 1$. When $\gamma<1$, it is established by treating $K^m$ as a sum of i.i.d. random kernels with mean $K$.
However, this structure is not valid for deep architectures, as the gradient $\partial_{\bw^\ell_r} f_{\bW(0)}(\bx) $ is influenced not only by the weights $\bw^\ell_r(0)$ of the $\ell$-th layer but also by the  weights of all preceding layers. 
In contrast, we introduce $g_T\in\H_K$ as an intermediate term, and separately estimate $\|\bS_m g^m_T \!-\! \bS g_T\|_\rho^2$ and $\|\bS g_T \!- \!f_\rho\|_\rho^2$ in Propositions~\ref{pro:gm-gT} and \ref{pro:gT-frho}.
\end{remark}

Combining the above four propositions, we now present our main result on the excess population risk of GD for deep ReLU networks. The detailed proof is deferred to Section~\ref{proof:GD-results}. 
\begin{theorem}\label{thm:excess-relu}
Suppose Assumptions~\ref{ass:effec_dim} and \ref{ass:frho_smth} hold. 
 For any $\delta\in(0,1)$, assume $\eta \le 1/5$, $ \eta T \le n(36\log(8n/\delta))^{-1}$ and  \eqref{eq:m_condition} hold. 
Then, with probability at least $1-L\exp\big(\O(dL\log(m)) - \Omega(m^{\frac{1}{3}})\big) - \delta$ over initialization $(\ba,\bW(0))$ and sampling, there holds
\[ \L (f_{\bW(T)} ) - \L(f_\rho) \lesssim     \frac{L^\frac{14}{3}(\eta T)^4}{ m^{\frac{1}{3}}} + \Big(   \frac{\eta T}{n^2}  +  \frac{(\eta T)^\gamma + (\eta T)^{1 - 2\beta}}{ n} \Big) \log^4\big( {T}/{\delta}\big) + \frac{1}{(\eta T)^{2\beta}}. \]
\end{theorem}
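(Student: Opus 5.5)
The plan is to combine the four propositions through the error decomposition \eqref{eq:decom-gd} and a union bound. First I would justify \eqref{eq:decom-gd}. Since $\varepsilon_{risk}(f_{\bW(T)}) = \frac12\|f_{\bW(T)}-f_\rho\|_\rho^2$, it suffices to write $f_{\bW(T)}-f_\rho$ as a telescoping sum of the four differences $f_{\bW(T)}-f^{\text{lin}}_{\bW(T)}$, $f^{\text{lin}}_{\bW(T)}-\bS_m g^m_T$, $\bS_mg^m_T-\bS g_T$ and $\bS g_T-f_\rho$. The elementary inequality $\|\sum_{j=1}^4 a_j\|_\rho^2\le 4\sum_{j=1}^4\|a_j\|_\rho^2$ then gives the decomposition with absolute constant $4$.

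Next I would check that the hypotheses of the theorem imply those of each proposition.
\begin{itemize}
\item Proposition~\ref{pro:f-flin} needs only $\eta\le 1/5$ and \eqref{eq:m_condition}, both of which are assumed.
\item Proposition~\ref{pro:flin-gm} additionally needs $\eta T\le n(36\log(2n/\delta'))^{-1}$.
\item Proposition~\ref{pro:gm-gT} needs $\eta\le 1/4$, which is implied by $\eta\le1/5$.
\item Proposition~\ref{pro:gT-frho} needs $\eta\le1$ and $\eta T\le n(9\log(2n/\delta'))^{-1}$.
\end{itemize}
I would apply each proposition with confidence parameter $\delta'=\delta/4$. Then $\log(2n/\delta')=\log(8n/\delta)$, so the assumed $\eta T\le n(36\log(8n/\delta))^{-1}$ covers both sample-size constraints, the second one with room to spare. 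The width condition \eqref{eq:m_condition} involves $\log^3(m/\delta)$; replacing $\delta$ by $\delta/4$ changes it only by an absolute constant, which the $\gtrsim$ absorbs. Similarly, $\log^4(4T/\delta)\lesssim\log^4(T/\delta)$ once $T\ge2$; the case $T=1$ is trivial, or can be handled by adjusting the constant.

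Next I would take the union bound. Each of the first three propositions fails with probability at most $L\exp(\O(dL\log m)-\Omega(m^{1/3}))+\delta/4$, and the fourth fails with probability at most $\delta/4$. The total failure probability is therefore $3L\exp(\O(dL\log m)-\Omega(m^{1/3}))+\delta$, and the factor $3$ is absorbed into the $\O$ and $\Omega$ notation. In fact, the exponential events in the first three propositions are all the same good-initialization event from the forward/backward propagation lemmas, so they could be counted only once.

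Finally I would add up the four bounds. Because $\eta T\ge1$ and $L\ge1$, the three network-dependent terms satisfy
\[
\frac{L^{14/3}(\eta T)^{4/3}}{m^{1/3}},\quad \frac{L^{14/3}(\eta T)^{7/3}}{m^{1/3}},\quad \frac{L(\eta T)^4}{m^{1/3}}\ \le\ \frac{L^{14/3}(\eta T)^4}{m^{1/3}}.
\]
Adding the bound from Proposition~\ref{pro:gT-frho} then yields exactly the stated estimate.

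The only step needing care is the bookkeeping of $\delta$ and of the logarithmic constraint on $\eta T$. The constant $36\log(8n/\delta)$ in the theorem is chosen precisely so that the rescaled conditions of Propositions~\ref{pro:flin-gm} and \ref{pro:gT-frho} both hold. Everything else is a direct combination of the earlier results.
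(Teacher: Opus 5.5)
Your proposal is correct and is essentially the paper's proof. Like the paper, you use the four-term decomposition \eqref{eq:decom-gd}, apply Propositions~\ref{pro:f-flin}--\ref{pro:gT-frho} with $\delta/4$ so that $\log(2n/(\delta/4))=\log(8n/\delta)$ matches the stated step-size constraint, take a union bound, and bound the three width-dependent terms by $L^{14/3}(\eta T)^4m^{-1/3}$ using $\eta T\ge 1$.

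Two small remarks:
\begin{itemize}
\item The case $T=1$ cannot occur. Since $\eta\le 1/5$ and the standing assumption $\eta T\ge 1$ force $T\ge 5$, the bound $\log(4T/\delta)\lesssim\log(T/\delta)$ is immediate.
\item The exponential failure event in Proposition~\ref{pro:gm-gT} comes from the NTK concentration of Lemma~\ref{lem:concentrationNTK}, not from the activation-flip bound of Lemma~\ref{lem:flip}. So it is not literally the same event as in the first two propositions, but your union bound over three such terms already accounts for this.
\end{itemize}
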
\vspace{-2mm}
We point out that our result does not need the widely adopted positivity assumption on the NTK Gram matrix $\bK^m$ to learn ReLU networks \cite{arora2019fine,du2018gradient}, i.e., the smallest eigenvalue of $\bK^m$ is strictly larger than $0$.  
Previous work~\cite{nitanda2021optimal, su2019learning} has shown that this assumption can be overly restrictive, as the smallest eigenvalue of $\bK^m$ tends to zero as the size of the training set increases.

The following corollary, derived from Theorem~\ref{thm:excess-relu}, shows that when the network width scales polynomially with the sample size $n$, dimension $d$, and the number of layers $L$, GD with a deep ReLU network can achieve the optimal excess risk rate $\O\big(n^{-\frac{2\beta}{2\beta + \gamma}}\big)$, with a gradient complexity of $\O\big(n^{1+\frac{1}{2\beta+\gamma}}\big)$. 

\begin{corollary}\label{cor:relu}
Suppose Assumptions~\ref{ass:effec_dim} and \ref{ass:frho_smth} hold and $2\beta +\gamma>1$.
For any $\delta\in(0,1)$, assume that $n \ge  \big(36(2\beta+\gamma)\beta^{-1} \big)^{\frac{2\beta+\gamma}{\beta}}\frac{16}{\delta}$ and $m \gtrsim L^{14}\max\big\{L^8d^3n^{\frac{7}{2\beta+\gamma}}\log^3(ndL/\delta), n^{\frac{6\beta+12}{2\beta+\gamma}}\big\}$.
  Choosing $T=\lceil n^{\frac{1}{2\beta + \gamma}} \rceil$ and $\eta\le1/5$ as a constant yields that, with a probability of at least $1 - \delta$ over initialization $(\ba,\bW(0))$ and sampling, there holds 
  \begin{center}
      $\L (f_{\bW(T)} ) - \L(f_\rho) \lesssim  n^{-\frac{2\beta}{2\beta+\gamma}}\log^4 ({n}/{\delta} ).$
  \end{center}
\end{corollary}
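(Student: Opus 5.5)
The plan is to apply Theorem~\ref{thm:excess-relu} with $T=\lceil n^{\frac{1}{2\beta+\gamma}}\rceil$ and a constant $\eta\le 1/5$, and then check two things: that its hypotheses hold, and that each term of its bound is at most $n^{-\frac{2\beta}{2\beta+\gamma}}\log^4(n/\delta)$. Since $\eta$ is a constant, $\eta T\asymp n^{\frac{1}{2\beta+\gamma}}$. Note also that $\eta T\ge 1$ needs $\eta$ bounded below, so I read "constant" as fixed independently of $n$.

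\textbf{Step 1: hypotheses of Theorem~\ref{thm:excess-relu}.}
\begin{itemize}
\item The step-size condition $\eta T\le n(36\log(8n/\delta))^{-1}$ reduces to $n^{1-\frac{1}{2\beta+\gamma}}\gtrsim \log(8n/\delta)$. Because $2\beta+\gamma>1$, the exponent $1-\frac{1}{2\beta+\gamma}=\frac{2\beta+\gamma-1}{2\beta+\gamma}$ is positive. The stated lower bound $n\ge (36(2\beta+\gamma)\beta^{-1})^{\frac{2\beta+\gamma}{\beta}}\frac{16}{\delta}$ is presumably designed for this, via the elementary inequality $\log x\le \frac{1}{\epsilon}x^{\epsilon}$. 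I would take $\epsilon$ proportional to $\frac{\beta}{2\beta+\gamma}$ and absorb $\log(1/\delta)$ using the factor $16/\delta$. This is the routine but fiddly step. If the exponent $\beta/(2\beta+\gamma)$ does not match $\frac{2\beta+\gamma-1}{2\beta+\gamma}$ directly, I would split into cases, or reduce to the case $\beta\le$ something, adjusting constants.
\item The width condition \eqref{eq:m_condition} becomes $m\gtrsim L^{22}d^3 n^{\frac{7}{2\beta+\gamma}}\log^3(m/\delta)$. This is the first entry of the max in the assumed width bound, provided $\log(m/\delta)\lesssim\log(ndL/\delta)$. That holds whenever $m$ is polynomial in $n,d,L$. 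If $m$ is larger, $\log m\le m^{c}$ is dominated anyway, so one absorbs it by the standard trick $m\gtrsim A\log^3 m \Leftarrow m\gtrsim A\log^3 A$.
\end{itemize}

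\textbf{Step 2: term-by-term bounds.} With $\eta T\asymp n^{\frac{1}{2\beta+\gamma}}$:
\begin{itemize}
\item $\frac{1}{(\eta T)^{2\beta}}\asymp n^{-\frac{2\beta}{2\beta+\gamma}}$.
\item $\frac{(\eta T)^\gamma}{n}\asymp n^{\frac{\gamma}{2\beta+\gamma}-1}=n^{-\frac{2\beta}{2\beta+\gamma}}$.
\item $\frac{(\eta T)^{1-2\beta}}{n}\lesssim \frac{(\eta T)^\gamma}{n}$. This holds because $1-2\beta<\gamma$ and $\eta T\ge1$.
\item $\frac{\eta T}{n^2}\le \frac{(\eta T)^\gamma}{n}$. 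This is equivalent to $(\eta T)^{1-\gamma}\le n$, which holds since $\eta T\le n$.
\item The network term $\frac{L^{14/3}(\eta T)^4}{m^{1/3}}\le n^{-\frac{2\beta}{2\beta+\gamma}}$ requires $m\gtrsim L^{14} n^{\frac{12}{2\beta+\gamma}}n^{\frac{6\beta}{2\beta+\gamma}}=L^{14}n^{\frac{6\beta+12}{2\beta+\gamma}}$. This is exactly the second entry of the max.
\end{itemize}
The logarithm $\log^4(T/\delta)\lesssim\log^4(n/\delta)$ because $T\le n$.

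\textbf{Step 3: probability.} The failure probability is $L\exp(\O(dL\log m)-\Omega(m^{1/3}))+\delta$. The first part is at most $\delta$ once $m^{1/3}\gtrsim dL\log m+\log(L/\delta)$, which the width condition implies. Replacing $\delta$ by $\delta/2$ then gives probability $1-\delta$, with constants absorbed.

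The main obstacle is Step 1's verification of $\eta T\le n/(36\log(8n/\delta))$ from the stated explicit lower bound on $n$. Everything else is exponent bookkeeping.
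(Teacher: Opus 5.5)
Your outline follows the same route as the paper's proof: apply Theorem~\ref{thm:excess-relu} with $\delta/2$, check the step-size and width hypotheses, then compare terms. Most of it is correct and matches the paper:
\begin{itemize}
\item the width argument;
\item the comparisons $\frac{\eta T}{n^2}\le\frac{(\eta T)^\gamma}{n}$ and $(\eta T)^{1-2\beta}\le(\eta T)^\gamma$;
\item the observation that the network term needs exactly $m\gtrsim L^{14}n^{\frac{6\beta+12}{2\beta+\gamma}}$;
\item your reading that $\eta T\ge1$ needs $n\ge\eta^{-(2\beta+\gamma)}$, which the paper also adds silently in its proof;
\item the probability bookkeeping.
\end{itemize}

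The step you flagged is a genuine gap, and no case split or constant adjustment closes it, because the implication is false when $\gamma<1$.

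\textbf{What is actually needed.} Since $\eta T\le 2n^{\frac{1}{2\beta+\gamma}}$, and after $\delta\to\delta/2$, you need $n^{a}\ge 72\log(16n/\delta)$ with $a=\frac{2\beta+\gamma-1}{2\beta+\gamma}$.

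\textbf{What the stated threshold gives.} Put $a'=\frac{2\beta}{2\beta+\gamma}$. The stated threshold is exactly $(72/a')^{2/a'}\frac{16}{\delta}$. This is what the argument $u\ge c^2\Rightarrow u\ge c\log u$, with $u=(16n/\delta)^{a'}$ and $c=\frac{72}{a'}(16/\delta)^{a'}$, requires in order to yield $n^{a'}\ge72\log(16n/\delta)$. The paper's own proof makes the same slip: it reduces to $n^{\frac{2\beta}{2\beta+\gamma}}\ge72\log(16n/\delta)$, whereas $n^{1-\frac{1}{2\beta+\gamma}}$ is what is required. Since $2\beta+\gamma-1\le2\beta$, with equality iff $\gamma=1$, the stated bound suffices only in the capacity-independent case.

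\textbf{A concrete failure.} Fix $\beta\in(0,\frac12)$ and $\eta=\frac15$, and let $\gamma\downarrow1-2\beta$.
\begin{itemize}
\item The stated threshold stays below its value at $\gamma=1$, so $n$ may be taken in a fixed bounded range. On that range $n^{a}\to1$.
\item Because $\eta T\ge\eta n^{\frac{1}{2\beta+\gamma}}$, the step-size hypothesis forces $n^{a}\ge36\eta\log(16n/\delta)\ge7.2\log16>1$.
\end{itemize}
So the step-size hypothesis fails for $\gamma$ close enough to $1-2\beta$.

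\textbf{The repair.} Strengthen the sample-size assumption to $n\ge\big(\frac{72(2\beta+\gamma)}{2\beta+\gamma-1}\big)^{\frac{2(2\beta+\gamma)}{2\beta+\gamma-1}}\frac{16}{\delta}$, that is, $(72/a)^{2/a}\frac{16}{\delta}$. This agrees with the stated bound when $\gamma=1$. The same $u\ge c^2$ argument, now with $u=(16n/\delta)^{a}$, then gives the step-size hypothesis, and the rest of your proof goes through unchanged.
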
 
Under Assumptions~\ref{ass:effec_dim} and \ref{ass:frho_smth}, the work \cite{lin2017optimal} proved that the optimal excess risk rate $\O\big(n^{-\frac{2\beta}{2\beta + \gamma}}\big)$ can be achieved by GD in the kernel setting, with a gradient complexity of $\O\big(n^{1+\frac{1}{2\beta+\gamma}}\big)$,    when $2\beta+\gamma>1$. 
Corollary~\ref{cor:relu} shows, provided that the network width scales polynomially with $n$, $d$, and $L$, GD with a deep ReLU network can replicate the classical results in the kernel setting.
It implies that the learning capability of GD with a deep ReLU network is competitive with that of the classical kernel regime.
Moreover, as $\beta$ and $\gamma$ increase, the required network width $m$ and the gradient complexity become less restrictive. In particular, in the capacity independent case, that is, $\gamma=1$, the optimal rate $\O(n^{-{2\beta\over 2\beta+1}})$ can be derived that matches the kernel setting \cite{ying2008online}.

\medskip

\noindent\textbf{Discussion with the existing work.} 
The study most relevant to our work on GD is \cite{nguyen2024many}, which provided the excess population risk bounds for two-layer neural networks with smooth activation. 
Specifically, \cite{nguyen2024many} established the optimal excess risk $\O\big(n^{-\frac{2\beta}{2\beta+\gamma}}\big)$ under Assumptions \ref{ass:effec_dim} and \ref{ass:frho_smth}, assuming the network width $m\gtrsim \text{Poly}(d, n)$. In their analysis, the smoothness of the activation function plays a crucial role especially for ensuring the boundedness of the second partial derivatives of $f_{\bW}$ at $\bW(0)$.  
In contrast, our work provides the first minimax-optimal excess risk rates for DNNs with non-smooth ReLU activation functions under the same assumptions, provided  $m\gtrsim \text{Poly}(L,d,n)$. 
Besides, \cite{kuzborskij2022learning} studied the generalization performance of GD for two-layer ReLU networks under the positive eigenvalue assumption of the NTK Gram matrix, focusing on learning target functions with additive noise that is uniformly bounded and Lipschitz.  
\cite{lai2023generalization} showed that gradient flow in two-layer ReLU networks can achieve a generalization bound of $\O(n^{-\frac{2}{3}})$ when $d=1$ and $\beta=1/2$. 
Table~\ref{table:summary} summarizes the comparison of our results with the related work.

\subsection{Optimal Rates for Stochastic Gradient Descent}
In this subsection, we present our main results for SGD. We begin by introducing the kernel SGD in the RKHS $\H_m$ based on the random feature approximation $K^m$
\begin{align}\label{eq:update-kernelSGD}
    f^m_{k+1} = f^m_k - \eta\big(f^m_k(\bx_{i_k}) - y_{i_k}\big)K^m_{\bx_{i_k}} \text{ with }f^m_0 = 0.
\end{align}
Let $\bW(T)$ and $f_T^m$ be produced by \eqref{eq:sgd-update} and \eqref{eq:update-kernelSGD} with $T$ iterations, respectively. We consider   
\begin{align}\label{eq:decom-sgd}
 \E_{ \A} \big[\varepsilon_{risk}\big( f_{\bW(T)} \big)\big] \lesssim  \E_{ \A} \Big[\big\|  f_{\bW(T)}- f^{\text{lin}}_{\bW(T)} \big\|_\rho^2\Big] +  \E_{ \A} \Big[\big\|f^{\text{lin}}_{\bW(T)} -  \bS_m f^m_T  \big\|_\rho^2\Big]  +   \E_{ \A} \Big[\big\|\bS_m f^m_T  -  f_{\rho} \big\|_\rho^2\Big], 
\end{align}
where $\bS_m:\H_m\xhookrightarrow{}\mathcal{L}^2_{\rho_\bx}$ is the inclusion mapping that maps $f^m_T\in \H_m$ to $\bS_m f^m_T\in \mathcal{L}^2_{\rho_\bx}$,  and $\E_{\A}[\cdot]$ denotes the expectation with respect to $\{i_k:k\in[T]\}$. In
the subsequent context, we will state the estimates for the three terms on the right-hand side of \eqref{eq:decom-sgd}.

First, we provide the estimate of the first term $\E_\A[ \|  f_{\bW(T)}  -  f^{\text{lin}}_{\bW(T)}  \|_\rho^2]$, whose proof can be found in Section~\ref{proof:SGD-f-flin_sgd}. 
Similar to GD, we use $\|\cdot\|_\infty$-norm to control the $\|\cdot\|_\rho$-norm of $f_{\bW(T)} - f^{\text{lin}}_{\bW(T)}$. 
\begin{proposition}\label{pro:sgd-f-flin}
Let $\delta\in(0,1)$.
Assume $\eta \le 1/5$, $\eta T \ge 1$ and
\begin{align}\label{eq:m_condition_sgd}
    m \gtrsim L^{26}d^3(\eta T)^7\log^3(m/\delta).
\end{align}
Then, with probability at least $1-L\exp\big(\O(dL\log(m)) - \Omega(m^{\frac{1}{3}})\big) - \delta$ over initialization $(\ba,\bW(0))$, there holds
  $$ \ebb_\A\Big[\big\|f_{\bW(T)} - f^{\text{lin}}_{\bW(T)} \big\|_\rho^2\Big] \le    \big\|f_{\bW(T)} - f^{\text{lin}}_{\bW(T)} \big\|_\infty^2 \lesssim \frac{L^\frac{14}{3}(\eta T)^{\frac{4}{3}}}{ m^{\frac{1}{3}}}.$$
\end{proposition}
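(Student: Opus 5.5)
The first inequality holds pointwise for every realization of the indices $\{i_k\}$, because $\rho_\bx$ is a probability measure and so $\|h\|_\rho\le\|h\|_\infty$. What really needs proving is a bound on $\|f_{\bW(T)}-f^{\text{lin}}_{\bW(T)}\|_\infty^2$ that holds almost surely over the SGD randomness; the expectation $\ebb_\A$ then costs nothing. I will follow the GD argument behind Proposition~\ref{pro:f-flin} and keep the same two ingredients. The first is a uniform-over-$\X$ perturbation bound at initialization (Lemmas~\ref{coro:bound-o} and \ref{lemma:o-o_0}): with probability at least $1-L\exp(\O(dL\log m)-\Omega(m^{1/3}))-\delta$, for every $\bW$ with $\max_\ell\|\bW^\ell-\bW^\ell(0)\|_2\le\omega$ and $\omega\lesssim L^{-9/2}(\log m)^{-3/2}$,
\[
\sup_{\bx\in\X}\big|f_\bW(\bx)-f^{\text{lin}}_\bW(\bx)\big|\lesssim \omega^{4/3}L^{3}\sqrt{m\log m}\,\sum\nolimits_\ell\|\bW^\ell-\bW^\ell(0)\|_2 .
\]
This is the usual almost-linearity estimate for ReLU networks, extended from the sample to all of $\X$ by a net argument, which is where the factor $dL\log m$ comes from. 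The second ingredient is a bound on how far the trajectory drifts from initialization.

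The plan is to show by induction on $k\le T$ that, for every realization of $i_0,\dots,i_{k-1}$,
\[
\|\bW^\ell(k)-\bW^\ell(0)\|_F\lesssim \sqrt{L}\,\eta k/\sqrt{m}=:\omega
\]
for each $\ell$. The inductive step uses the uniform gradient bound $\|\partial_\ell f_\bW(\bx)\|_F\lesssim\sqrt{L}$, valid in the $\omega$-ball uniformly in $\bx$ by the same lemmas. It also uses the residual bound $|f_{\bW(k)}(\bx_{i_k})-y_{i_k}|\lesssim 1$. Once this is in hand, each SGD step has size at most $\eta\sqrt{L}$ in parameter norm, and the ball condition on $\omega$ becomes a polynomial requirement on $m$, which \eqref{eq:m_condition_sgd} covers. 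Substituting $\omega$ and the sum $\sum_\ell\|\bW^\ell-\bW^\ell(0)\|_2\le L\omega$ into the perturbation bound, together with the rescaling $\sqrt m$, gives the claimed order $L^{14/3}(\eta T)^{4/3}m^{-1/3}$ after squaring, exactly as in the GD case.

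The main obstacle is the residual bound. For GD the empirical risk is nonincreasing, since $\eta\le 1/5$ and the network is nearly smooth along the path, so $\L_S(\bW(k))\le\L_S(\bW(0))\le 1/2$. SGD has no such monotonicity, and a single unlucky index could in principle inflate the residual. I will instead bound the residual crudely and uniformly in the realization. I will use $|f_{\bW(k)}(\bx)|\le|f^{\text{lin}}_{\bW(k)}(\bx)|+\text{(small)}$ and $|f^{\text{lin}}_{\bW(k)}(\bx)|\le\|\partial f_{\bW(0)}(\bx)\|\,\|\bW(k)-\bW(0)\|\lesssim L\eta k/\sqrt m$. The last quantity is $\le 1$ under \eqref{eq:m_condition_sgd}, which closes the induction with $|f_{\bW(k)}(\bx_{i_k})-y_{i_k}|\le 2$. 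This worst-case path control is why the width requirement for SGD carries the larger power $L^{26}$ instead of $L^{22}$: it absorbs the extra $L$ factors that arise without the descent property. If this route fails, the fallback is a martingale-free, pathwise argument. It would bound the linearized SGD iterate $f^m_k$ in $\|\cdot\|_\infty$ through $\|f^m_{k+1}\|_\infty\le\|f^m_k\|_\infty+\eta(\|f^m_k\|_\infty+1)\|K^m\|_\infty$, with $\|K^m\|_\infty\lesssim L$, and then transfer this bound to the network using the coupling behind Proposition~\ref{pro:flin-gm}.
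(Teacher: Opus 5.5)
There is a genuine gap in your drift/residual induction, and a second one in the final arithmetic.

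\textbf{The residual bound does not hold.} The bound $|f^{\text{lin}}_{\bW(k)}(\bx)|\lesssim L\eta k/\sqrt m\le 1$ mixes two parameterizations. In the paper's scaling, $\|\partial f_{\bW(0)}(\bx)\|_2\le 2$ by Lemma~\ref{coro:bound-o}(c), and your per-step argument gives $\|\bW(k)-\bW(0)\|_2\lesssim\sqrt L\,\eta k$ with no factor $1/\sqrt m$. In the rescaled weights $\sqrt{2/m}\,\bW^\ell$ the drift does carry $1/\sqrt m$, but the gradient then has norm of order $\sqrt m$. Either way, the only bound you obtain on $f^{\text{lin}}_{\bW(k)}$ is of order $\sqrt L\,\eta k$. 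This is independent of $m$, as it must be, since in this scaling the function has to move by an $m$-independent amount to fit the data. So \eqref{eq:m_condition_sgd} cannot make it $\le 1$, and the induction with residual $\le 2$ does not close.

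\textbf{The crude alternatives blow up.} If you feed the growing residual back into the step length via $|f_{\bW(k)}(\bx)-y|\le 1+C\|\bW(k)-\bW(0)\|_2$, you get a Gronwall recursion growing like $e^{C\sqrt L\,\eta k}$. Your fallback recursion for $\|f^m_k\|_\infty$ grows geometrically, like $(1+\eta\|K^m\|_\infty)^k$, for the same reason. Moreover, transferring it to the network through the coupling behind Proposition~\ref{pro:flin-gm} (or its SGD analogue) presupposes exactly the trajectory bound you are trying to prove.

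\textbf{The exponent is wrong even granting the drift.} A drift of order $\eta T$ plugged into \eqref{eq:linear_approx} yields a power of $L$ times $(\eta T)^{8/3}m^{-1/3}$ after squaring, not $(\eta T)^{4/3}$. The stated exponent requires a drift of order $\sqrt{\eta T}$, i.e.\ $R_{op}=2\sqrt{\eta T}$, which gives $L^{7/3}(\eta T)^{2/3}m^{-1/6}$ before squaring.

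\textbf{The missing idea} is the pathwise squared-distance recursion of Lemma~\ref{lemma:wk-w0_sgd}, built on the semi-smoothness inequality \eqref{eq:semi-smth} of Lemma~\ref{lem:almost-convex}. Expand $\|\bW(k+1)-\bW(0)\|_2^2$.
\begin{itemize}
\item The quadratic term is $2\eta^2 l(\bW(k);z_{i_k})\|\partial f_{\bW(k)}(\bx_{i_k})\|_2^2\le 10\eta^2 l(\bW(k);z_{i_k})$, using the gradient bound $\le\sqrt5$ near initialization.
\item The cross term satisfies $2\eta\langle\bW(0)-\bW(k),\partial l(\bW(k);z_{i_k})\rangle_2\le 2\eta\big(l(\bW(0);z_{i_k})-l(\bW(k);z_{i_k})\big)+2\eta\epsilon_2$.
\item Since $\eta\le 1/5$, the term $-2\eta\, l(\bW(k);z_{i_k})$ absorbs the quadratic term.
\end{itemize}
So each step adds at most $2\eta\, l(\bW(0);z_{i_k})+2\eta\epsilon_2\le 3\eta$, for every realization of the indices. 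This gives $\|\bW(k)-\bW(0)\|_2^2\le 4\eta k$ with no descent property and no $O(1)$ residual bound, which is what disposes of your ``unlucky index'' concern.

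The residual enters only through $\epsilon_2$. There the crude bound $|f_{\bW(k)}(\bx)-y|\le\|\ba\|_2\|o^L_k(\bx)-o^L_0(\bx)\|_2+1\lesssim L^2\sqrt{\eta T}+1$ from Lemma~\ref{lemma:o-o_0} suffices, giving $\epsilon_2\lesssim L^{13/3}(\eta T)^{7/6}m^{-1/6}$. Requiring this to be at most $1$ is precisely where $L^{26}(\eta T)^7$ in \eqref{eq:m_condition_sgd} comes from. So your intuition about the origin of $L^{26}$ is right in spirit, but it enters through $\epsilon_2$, not through the step length. With $R_{op}=2\sqrt{\eta T}$, \eqref{eq:linear_approx} applied to $\bW(0)$ and $\bW(T)$ then yields the claim.
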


The following proposition presents the estimation of the second term $\E_{ \A} [ \|f^{\text{lin}}_{\bW(T)} - \bS_m f^m_T  \|_\rho^2 ]$. 
Due to the randomness of SGD, the proof strategy of Proposition \ref{pro:flin-gm} cannot be directly extended to SGD.
Instead of estimating the $\|\cdot\|_\rho$-norm of the error term, we control the stronger $\|\cdot\|_\infty$-norm here.
The detailed proof is deferred to Section~\ref{proof:sgd-flin-fm}. 
\begin{proposition}\label{pro:sgd-flin-fm}
  Let $\delta\in(0,1)$.  
    Assume $\eta\le 1/5$, $\eta T \ge 1$ and condition \eqref{eq:m_condition_sgd} hold. 
    Then, with probability at least $1-L\exp\big(\O(dL\log(m)) - \Omega(m^{\frac{1}{3}})\big)-\delta$ over initialization $(\ba,\bW(0))$, there holds 
   $$ \ebb_\A\Big[\big\|f^{\text{lin}}_{\bW(T)} - \bS_mf^m_T \big\|_\rho^2\Big] \le  \big\|f^{\text{lin}}_{\bW(T)} - f^m_T \big\|_\infty^2 \lesssim \frac{L^\frac{20}{3}(\eta T)^{\frac{10}{3}}}{ m^{\frac{1}{3}}}.$$
\end{proposition}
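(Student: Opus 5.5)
We must bound $\sup_{\bx}|f^{\text{lin}}_{\bW(T)}(\bx) - f^m_T(\bx)|$ along each fixed realization of the index sequence $\{i_k\}$. The bound will be deterministic given the initialization event, so the $\E_\A$ bound follows. The first inequality is immediate from $\|\cdot\|_\rho\le\|\cdot\|_\infty$.

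The plan is to put both iterations in the same feature space. Since $f_{\bW(0)}\equiv 0$, we have $f^{\text{lin}}_{\bW(k)}(\bx)=\langle \Phi_m(\bx),\bW(k)-\bW(0)\rangle_2$. Likewise, induction on \eqref{eq:update-kernelSGD} gives $f^m_k(\bx)=\langle\Phi_m(\bx),\bV(k)\rangle_2$ with
\[
\bV(k+1)=\bV(k)-\eta\big(f^m_k(\bx_{i_k})-y_{i_k}\big)\Phi_m(\bx_{i_k}),\qquad \bV(0)=\0 .
\]
The SGD step \eqref{eq:sgd-update} reads
\[
\bW(k+1)-\bW(0)=\bW(k)-\bW(0)-\eta\big(f_{\bW(k)}(\bx_{i_k})-y_{i_k}\big)\partial f_{\bW(k)}(\bx_{i_k}).
\]
Set $\bD(k)=\bW(k)-\bW(0)-\bV(k)$ and $e_k=\sup_{\bx}|f^{\text{lin}}_{\bW(k)}(\bx)-f^m_k(\bx)|$. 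Subtracting the two updates and pairing with $\Phi_m(\bx)$ gives
\[
\langle\Phi_m(\bx),\bD(k+1)-\bD(k)\rangle
=-\eta\big(f^{\text{lin}}_{\bW(k)}-f^m_k\big)(\bx_{i_k})K^m(\bx,\bx_{i_k})
-\eta\big(f_{\bW(k)}-f^{\text{lin}}_{\bW(k)}\big)(\bx_{i_k})K^m(\bx,\bx_{i_k})
-\eta\big(f_{\bW(k)}(\bx_{i_k})-y_{i_k}\big)\langle\Phi_m(\bx),\partial f_{\bW(k)}(\bx_{i_k})-\Phi_m(\bx_{i_k})\rangle .
\]
This yields the recursion $e_{k+1}\le(1+\eta\|K^m\|_\infty)e_k+\eta\|K^m\|_\infty\,\epsilon_1+\eta\,C_f\,\|\Phi_m\|_\infty\,\epsilon_2$.

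The ingredients come from the lemmas behind Propositions~\ref{pro:f-flin} and \ref{pro:sgd-f-flin}, which hold on the same high-probability event under \eqref{eq:m_condition_sgd}.
\begin{itemize}
\item Along the trajectory, $\|\bW^\ell(k)-\bW^\ell(0)\|_2\le\tau\lesssim \eta T\,\text{Poly}(L)/\sqrt{m}$.
\item The linearization error satisfies $\epsilon_1=\sup_k\|f_{\bW(k)}-f^{\text{lin}}_{\bW(k)}\|_\infty\lesssim L^{7/3}(\eta T)^{2/3}m^{-1/6}$.
\item The gradient perturbation bound, uniform over $\bx\in\X$ (Lemma~\ref{lemma:o-o_0}), gives $\epsilon_2=\sup_{k,\bx}\|\partial f_{\bW(k)}(\bx)-\Phi_m(\bx)\|_2\lesssim \text{Poly}(L)\,\tau^{1/3}\sqrt{\log m}$.
\item The feature norms are bounded: $\|\Phi_m\|_\infty\lesssim\sqrt L$ and $\|K^m\|_\infty\lesssim L$. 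Alternatively, $\|K^m\|_\infty\le 1+\|K^m-K\|_\infty$ with Lemma~\ref{lem:concentrationNTK}; the sharper constant is preferable to avoid an $e^{\eta T L}$ blow-up.
\item The residuals satisfy $|f_{\bW(k)}(\bx_{i_k})-y_{i_k}|\le C_f\lesssim 1+\epsilon_1+\sup_k\|f^m_k\|_\infty$.
\end{itemize}

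The main obstacle is the residual bound: it must be uniform in $k\le T$ and polynomial in $\eta T$, with no exponential growth. For kernel SGD with $\eta\le 1/5$ and $\sup K^m\lesssim 1$, I would first bound $\|f^m_k\|_{\H_m}\lesssim\sqrt{\eta k}$, using the standard argument that single-sample steps are contractive in the residual when $\eta K^m(\bx,\bx)\le1$. This gives $\|f^m_k\|_\infty\lesssim\sqrt{\eta k}$. For the same reason I would not unroll the crude Gronwall recursion above. Instead I would write $\bD(k+1)=(\mathbf I-\eta\Phi_m(\bx_{i_k})\Phi_m(\bx_{i_k})^\top)\bD(k)+\eta\,\xi_k$, which is a nonexpansive operator when $\eta\|\Phi_m\|^2\le2$, plus a perturbation $\xi_k$. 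This gives $\|\bD(T)\|_2\le\eta\sum_k\|\xi_k\|$ and hence $e_T\le\|\Phi_m\|_\infty\,\eta T\sup_k\|\xi_k\|$, where $\|\xi_k\|\lesssim \sqrt L\,\epsilon_1+C_f\epsilon_2$.

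Substituting the bounds gives $e_T^2\lesssim (\eta T)^2 L\,(L\epsilon_1^2+\eta T\,\epsilon_2^2)$. With $\epsilon_1^2\sim L^{14/3}(\eta T)^{4/3}m^{-1/3}$ and $\epsilon_2^2\sim \text{Poly}(L)(\eta T)^{2/3}m^{-1/3}\log m$, this is $\lesssim L^{20/3}(\eta T)^{10/3}m^{-1/3}$ after absorbing logarithms using \eqref{eq:m_condition_sgd}. Matching the exact powers of $L$ is bookkeeping from the cited lemmas.
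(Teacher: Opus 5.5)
Your skeleton is the same as the paper's. The paper sets $F_k=f^{\text{lin}}_{\bW(k)}-f^m_k\in\H_m$ and writes $F_{k+1}=(\bfI-\eta K^m_{\bx_{i_k}}\otimes K^m_{\bx_{i_k}})F_k+\epsilon_k$ with $\epsilon_k(\bx)=\langle\Delta(k),\partial_L f_{\bW(0)}(\bx)\rangle_2$, where $\Delta(k)$ is exactly the layer-$L$ block of your $\eta\xi_k$. It then uses nonexpansiveness and $\|F_T\|_\infty\le 2\sum_k\|\Delta(k)\|_2$; your recursion for $\bD(k)$ is the same computation in parameter space.

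The gap is in the quantitative inputs: with the ones you state, the final step does not give the claimed rate. The trajectory radius is misstated. In this parameterization $\|\partial_L f_{\bW}(\bx)\|_2$ is of order one, and Lemma~\ref{lemma:wk-w0_sgd} gives $\|\bW(k)-\bW(0)\|_2\le 2\sqrt{\eta k}$, not $\lesssim\eta T\,\text{Poly}(L)/\sqrt m$. The gradient drift must then come from \eqref{eq:diff_derivative} in Lemma~\ref{lem:almost-convex} with $R_{op}=2\sqrt{\eta T}$. This gives $\sup_{\bx}\|\partial_L f_{\bW(k)}(\bx)-\partial_L f_{\bW(0)}(\bx)\|_2\lesssim L^{4/3}(\eta T)^{1/6}m^{-1/6}$, with no logarithm. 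With your $\epsilon_2^2\sim\text{Poly}(L)(\eta T)^{2/3}m^{-1/3}\log m$ instead, the second term of your final display is of order $\text{Poly}(L)(\eta T)^{11/3}m^{-1/3}\log m$. That exceeds the target by a factor $(\eta T)^{1/3}\log m$. A lower bound on $m$ such as \eqref{eq:m_condition_sgd} cannot absorb a $\log m$ multiplying $m^{-1/3}$, so this is not just bookkeeping in $L$.

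Two smaller points also need fixing. First, your residual bound drops the quantity being estimated: in fact $|f_{\bW(k)}(\bx)-y|\le 1+\epsilon_1+\|f^m_k\|_\infty+e_k$, so as written the argument is circular. Either bootstrap by induction on $k$, or bypass $f^m_k$ entirely: since $f_{\bW(0)}\equiv 0$, you have $|f_{\bW(k)}(\bx)|\le\epsilon_1+\|\Phi_m(\bx)\|_2\|\bW(k)-\bW(0)\|_2\le\epsilon_1+4\sqrt{\eta k}$. (The paper uses the cruder $CL^2\sqrt{\eta k}+1$ from Lemma~\ref{lemma:wk-w0_sgd}.) Second, under the symmetric initialization $\Phi_m$ has only its layer-$L$ block, with $\|\Phi_m(\bx)\|_2\le 2$ and $\|K^m\|_\infty\le 4$, by Lemma~\ref{prop:symm=0} and part (c) of Lemma~\ref{coro:bound-o}. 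This, rather than $\|K^m\|_\infty\lesssim L$, is what makes $\eta\le 1/5$ sufficient for $\|\bfI-\eta\Phi_m(\bx)\Phi_m(\bx)^\top\|_{op}\le 1$, and it lets you restrict everything to layer $L$.

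With these inputs, $\|\xi_k\|_2\lesssim L^{7/3}(\eta T)^{2/3}m^{-1/6}+C_f\,L^{4/3}(\eta T)^{1/6}m^{-1/6}$ with $C_f\lesssim L^2\sqrt{\eta T}$. Then $e_T\le 2\eta T\sup_k\|\xi_k\|_2\lesssim L^{10/3}(\eta T)^{5/3}m^{-1/6}$, which squares to the stated bound.
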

Finally, we establish an upper bound for the last term on the right-hand side of \eqref{eq:decom-sgd} in the following proposition. To this end, we first estimate the distance between the SGD and GD iterates in the RKHS $\H_m$, i.e, $ f^m_T - g^m_T$.   This intermediate step, combined with the result of Proposition \ref{pro:gT-frho}, will allow us to complete the proof of the proposition, which is provided in Section~\ref{proof:sgd-fm-frho}. 
\begin{proposition}\label{pro:sgd-fm-frho}
    Suppose Assumptions \ref{ass:effec_dim} and \ref{ass:frho_smth} and \eqref{eq:m_condition_sgd} hold.
    Let $\delta \in (0,1)$ and $T\in\mathbb{N}$. Assume 
    $
        0 < \eta \le ( 32(\log (T) + 1))^{-1} \text{ and }   (\eta T)^{-1} \ge {n}^{-1}\log ( {6n}/{\delta} ).$
    Then, with probability at least $1-L\exp\big(\O(dL\log(m)) - \Omega(m^{\frac{1}{3}})\big)- \delta$ over initialization $(\ba, \bW(0))$ and sampling 
        \begin{align*}
          \E_\A  \Big[ \big\|\bS_m f^m_T -  f_\rho\big\|_\rho^2\Big]    \lesssim   \frac{L^\frac{20}{3} (\eta T)^4}{ m^{ \frac{1}{3}}}  +   \Big( \eta +  \frac{\eta T}{n^2}  +  \frac{(\eta T)^\gamma + (\eta T)^{1 - 2\beta}}{ n} \Big) \log^4\big( {T}/{\delta}\big) + \frac{1}{(\eta T)^{2\beta}}. 
        \end{align*}
\end{proposition}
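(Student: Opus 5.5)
We plan to bound $\E_\A[\|\bS_m f^m_T - f_\rho\|_\rho^2]$ through the chain $f^m_T \to g^m_T \to g_T \to f_\rho$. By $(a+b+c)^2\le 3(a^2+b^2+c^2)$,
\begin{align*}
\E_\A\big[\|\bS_m f^m_T - f_\rho\|_\rho^2\big] \lesssim \E_\A\big[\|\bS_m (f^m_T - g^m_T)\|_\rho^2\big] + \|\bS_m g^m_T - \bS g_T\|_\rho^2 + \|\bS g_T - f_\rho\|_\rho^2 .
\end{align*}
The last two terms come directly from earlier results. Proposition~\ref{pro:gm-gT} gives $\|\bS_m g^m_T - \bS g_T\|_\rho^2\lesssim L(\eta T)^4 m^{-1/3}$, which is dominated by $L^{20/3}(\eta T)^4m^{-1/3}$. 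Its hypotheses hold because $\eta\le(32(\log T+1))^{-1}\le 1/4$ and \eqref{eq:m_condition_sgd} implies \eqref{eq:m_condition}. Proposition~\ref{pro:gT-frho} gives the bound $\big(\frac{\eta T}{n^2}+\frac{(\eta T)^\gamma+(\eta T)^{1-2\beta}}{n}\big)\log^4(T/\delta)+(\eta T)^{-2\beta}$. Its step-size condition on $\eta T$ relative to $n$ follows from the assumed $(\eta T)^{-1}\ge n^{-1}\log(6n/\delta)$, up to adjusting constants, or else by enlarging the absolute constant in front of the log. We use a union bound over the failure events, splitting $\delta$ into thirds.

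The main work is the SGD-versus-GD term $\E_\A[\|\bS_m(f^m_T-g^m_T)\|_\rho^2]$ in $\H_m$. We treat $K^m$ as a fixed bounded kernel, conditionally on initialization. By Lemma~\ref{lem:concentrationNTK} and Property~\ref{prop:bound-K}, with high probability $\|K^m\|_\infty\le \|K\|_\infty+\|K^m-K\|_\infty\le 2$ under \eqref{eq:m_condition_sgd}. We then invoke the standard kernel-SGD analysis of \cite{lin2016optimal}, as the paper indicates. Write $f^m_{k+1}=f^m_k-\eta(\mathcal{T}_{S}f^m_k - \hat g)+\eta\xi_k$, where $\mathcal{T}_S=\frac1n\sum_i K^m_{\bx_i}\otimes K^m_{\bx_i}$ is the empirical covariance on $\H_m$ and $\xi_k$ is a martingale-difference noise. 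Then $f^m_T-g^m_T=\eta\sum_{k<T}\prod_{j>k}(I-\eta\mathcal{T}_S)\xi_k$, and orthogonality of the increments gives
\begin{align*}
\E_\A\|\bS_m(f^m_T-g^m_T)\|_\rho^2=\eta^2\sum_{k<T}\E_\A\big\|\bS_m(I-\eta\mathcal{T}_S)^{T-k-1}\xi_k\big\|^2 .
\end{align*}
We bound $\|\bS_m\cdot\|_\rho$ by relating $\bS_m^*\bS_m$ to $\mathcal{T}_S$ via a concentration step. This step accounts for the $(\eta T)/n$ and $\log(6n/\delta)$ conditions, and it is done for a bounded kernel, which holds uniformly over the input space. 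We also use $\|(\mathcal{T}_S)^{1/2}(I-\eta\mathcal{T}_S)^{t}\|^2\lesssim 1/(\eta t)$. Finally we need $\E_\A\|\xi_k\|^2\lesssim \sup_k\E_\A\mathcal{L}_S(f^m_k)+1\lesssim 1$, which requires a uniform bound on the SGD empirical risk. This is where the condition $\eta\le(32(\log T+1))^{-1}$ enters, as in Lin and Rosasco. Summing gives $\eta^2\sum_{t<T}\frac{1}{\eta (t+1)}\cdot \log$-factors $\lesssim \eta\log T$, plus terms of order $\frac{\eta T}{n}\cdot\eta$ that are absorbed into the stated bound. This yields the $\eta\log^4(T/\delta)$ contribution.

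We expect the main obstacle to be the uniform-in-$k$ control of the SGD iterates: bounding $\E_\A\mathcal{L}_S(f^m_k)$ and the $\H_m$-norm growth without any eigenvalue assumption on $\bK^m$. We will handle it as in \cite{lin2016optimal}, with a self-bounding recursion on the expected empirical risk using $\eta\|K^m\|_\infty\le 1/16$. The transfer from empirical to population norms only needs $\|K^m\|_\infty$ bounded, which our concentration of $K^m$ around $K$ supplies. Combining the three bounds proves Proposition~\ref{pro:sgd-fm-frho}.
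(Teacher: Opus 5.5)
Your proposal is correct and follows the paper's proof. It uses the same split through $g^m_T$ and $g_T$: Propositions~\ref{pro:gm-gT} and \ref{pro:gT-frho} handle the last two terms, and the Lin--Rosasco SGD-versus-GD bound handles the first, exactly as in Lemma~\ref{lem:sgd-fm-g} (applied conditionally on the initialization, with Lemma~\ref{lemma:<2} for the covariance concentration and the step-size condition controlling the SGD empirical risk). The only differences are cosmetic: you get $\|K^m\|_\infty\le 2$ from Lemma~\ref{lem:concentrationNTK} rather than $\|K^m\|_\infty\le 4$ from part (c) of Lemma~\ref{coro:bound-o}, and you unpack the Lin--Rosasco argument instead of citing their Proposition 6 directly.
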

Combining the above three propositions, we present our main result on the excess population risk of SGD with deep ReLU networks as follows.
The detailed proof is deferred to Section \ref{proof:SGD-results}.
\begin{theorem}\label{thm:excess-relu_sgd}
Suppose Assumptions~\ref{ass:effec_dim} and \ref{ass:frho_smth} and \eqref{eq:m_condition_sgd} hold. 
For any $\delta\in(0,1)$, assume  $   0 < \eta \le ( 32(\log (T) + 1))^{-1} $ and $1 \le \eta T \le n(36\log(12n/\delta))^{-1}$.
Then, with probability at least $1-L\exp\big(\O(dL\log(m)) - \Omega(m^{\frac{1}{3}})\big) - \delta$ over initialization $(\ba,\bW(0))$ and sampling, there holds
\begin{align*}
     \ebb_{ \A} \big[ \L (f_{\bW (T)} ) -  \L(f_{ \rho})\big]   \lesssim    \frac{L^\frac{20}{3} (\eta T)^4}{ m^{ \frac{1}{3}}}  +   \Big( \eta +  \frac{\eta T}{n^2}  +  \frac{(\eta T)^\gamma + (\eta T)^{1 - 2\beta}}{ n} \Big) \log^4\big( {T}/{\delta}\big) + \frac{1}{(\eta T)^{2\beta}}. 
\end{align*}
\end{theorem}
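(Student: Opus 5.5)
The proof of Theorem~\ref{thm:excess-relu_sgd} combines the error decomposition \eqref{eq:decom-sgd} with Propositions~\ref{pro:sgd-f-flin}, \ref{pro:sgd-flin-fm} and \ref{pro:sgd-fm-frho}, followed by a union bound over the failure events. First, since $\L(f)-\L(f_\rho)=\frac12\|f-f_\rho\|_\rho^2$ for every $f$, we apply the elementary inequality $\|a+b+c\|_\rho^2\le 3(\|a\|_\rho^2+\|b\|_\rho^2+\|c\|_\rho^2)$ with $a=f_{\bW(T)}-f^{\text{lin}}_{\bW(T)}$, $b=f^{\text{lin}}_{\bW(T)}-\bS_mf^m_T$ and $c=\bS_mf^m_T-f_\rho$. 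Taking $\E_\A$ gives \eqref{eq:decom-sgd} with absolute constant $3/2$; this step is deterministic in $(\ba,\bW(0))$ and $S$.

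Second, we check that the hypotheses of the theorem imply those of each proposition. The step size bound $\eta\le(32(\log T+1))^{-1}$ implies $\eta\le 1/5$. Together with $\eta T\ge1$ and \eqref{eq:m_condition_sgd}, this covers Propositions~\ref{pro:sgd-f-flin} and \ref{pro:sgd-flin-fm}. For Proposition~\ref{pro:sgd-fm-frho} we need $(\eta T)^{-1}\ge n^{-1}\log(6n/\delta')$. This follows from $\eta T\le n(36\log(12n/\delta))^{-1}$ once we take $\delta'=\delta/2$, since $36\log(12n/\delta)\ge\log(12n/\delta)=\log(6n/\delta')$.

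Third, we invoke the three propositions with confidence parameter $\delta/3$ each, or split $\delta$ appropriately; the logarithmic factors change only by constants. A union bound gives total failure probability at most $3L\exp(\O(dL\log m)-\Omega(m^{1/3}))+\delta$, and the factor $3$ is absorbed into the $\O(\cdot)$ in the exponent. On this event the three terms are bounded as follows:
\begin{itemize}
\item the first term is at most $L^{14/3}(\eta T)^{4/3}m^{-1/3}$;
\item the second term is at most $L^{20/3}(\eta T)^{10/3}m^{-1/3}$;
\item the third term is bounded by the right-hand side of Proposition~\ref{pro:sgd-fm-frho}.
\end{itemize}
Since $\eta T\ge1$ and $L\ge1$, both network-approximation terms are dominated by $L^{20/3}(\eta T)^4m^{-1/3}$, which already appears in the third bound. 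Summing the three bounds yields the stated rate.

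The main point requiring care is the probabilistic bookkeeping. The $\E_\A$ bounds hold on events over the initialization and the sampling that do not depend on the SGD indices $\{i_k\}$. This is true because Propositions~\ref{pro:sgd-f-flin} and \ref{pro:sgd-flin-fm} control $\sup$-norm quantities uniformly over all index paths. So we can take $\E_\A$ inside a single high-probability event over $(\ba,\bW(0),S)$. Beyond this, the argument is a routine combination of the earlier results.
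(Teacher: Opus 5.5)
Your proposal is correct and follows the paper's proof, which combines Propositions~\ref{pro:sgd-f-flin}, \ref{pro:sgd-flin-fm} and \ref{pro:sgd-fm-frho}, with $\delta$ replaced by $\delta/3$, through the decomposition \eqref{eq:decom-sgd}. Your extra bookkeeping only makes explicit what the paper leaves implicit: the hypothesis checks, absorbing the first two terms into $L^{20/3}(\eta T)^4m^{-1/3}$ via $\eta T\ge 1$, and noting that the high-probability events do not depend on the SGD indices; the small mismatch of checking the condition of Proposition~\ref{pro:sgd-fm-frho} at $\delta/2$ but invoking it at $\delta/3$ is harmless, since $36\log(12n/\delta)\ge\log(18n/\delta)$.
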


The following corollary, derived from Theorem~\ref{thm:excess-relu_sgd}, shows that when the network width scales polynomially with $n$, $d$ and $L$, SGD can achieve the optimal excess risk rate $\O\big(n^{-\frac{2\beta}{2\beta + \gamma}}\big)$ with lower computational cost (in terms of gradient complexity) than GD in Corollary~\ref{cor:relu}.  
\begin{corollary}\label{cor:relu_sgd}
    Suppose Assumptions~\ref{ass:effec_dim} and \ref{ass:frho_smth} hold and $2\beta +\gamma>1$.
For any $\delta\in(0,1)$, assume $n \ge (72(2\beta+\gamma))^{2(2\beta+\gamma)}(\frac{24}{\delta})$ and $m \gtrsim L^{20}\max\{L^6d^3n^{\frac{7}{2\beta+\gamma}}\log^3(ndL/\delta), n^{\frac{6\beta+12}{2\beta+\gamma}}\}$.
Choosing $T=\big\lceil n^{\frac{2\beta + 1}{2\beta + \gamma}} \big\rceil$ and $\eta=(72\log(24n/\delta))^{-1} n^{-\frac{2\beta}{2\beta+\gamma}} $ yields that,  with probability at least $1 - \delta$ over initialization $(\ba,\bW(0))$ and sampling, there holds 
  \begin{center}
      $\ebb_\A \big[\L (f_{\bW(T)} ) - \L(f_\rho) \big] \lesssim  n^{-\frac{2\beta}{2\beta+\gamma}} \log^2(n)\log^{2\beta}( {n}/{\delta}).$
  \end{center} 
\end{corollary}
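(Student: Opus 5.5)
The plan is to deduce Corollary~\ref{cor:relu_sgd} from Theorem~\ref{thm:excess-relu_sgd}: substitute the prescribed $T$ and $\eta$, check that every hypothesis of that theorem holds, and then bound each term of its right-hand side by $n^{-\frac{2\beta}{2\beta+\gamma}}$ up to logarithmic factors. Write $\theta=\frac{1}{2\beta+\gamma}$, so $T\asymp n^{(2\beta+1)\theta}$ and $\eta\asymp n^{-2\beta\theta}/\log(24n/\delta)$. Then
\[
\eta T\asymp n^{(1-\gamma)\theta}/\log(24n/\delta)\le n^{\theta}.
\]
This uses $\gamma\le1$ and $\eta T\ge 1$ for large $n$; when $\gamma=1$ the factor $\log(24n/\delta)$ has to be handled through the lower bound on $n$, and that case needs care.

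\textbf{Hypotheses.} I would check the hypotheses of Theorem~\ref{thm:excess-relu_sgd} in turn.
\begin{itemize}
\item The step-size condition $\eta\le(32(\log T+1))^{-1}$ follows because $\eta\le(72\log(24n/\delta))^{-1}$ and $\log T\lesssim\log n$, since $(2\beta+1)\theta\le 2$ is bounded (using $2\beta+\gamma>1$ gives $(2\beta+1)/(2\beta+\gamma)<2$).
\item The condition $\eta T\le n(36\log(12n/\delta))^{-1}$ holds since $\eta T\lesssim n^{\theta}/\log(24n/\delta)$ and $\theta<1$ by $2\beta+\gamma>1$. The stated lower bound on $n$, of the form $(72(2\beta+\gamma))^{2(2\beta+\gamma)}\cdot 24/\delta$, is exactly what makes $n^{1-\theta}$ dominate the constants.
\item The width condition \eqref{eq:m_condition_sgd} requires $m\gtrsim L^{26}d^3(\eta T)^7\log^3(m/\delta)$. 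Since $(\eta T)^7\le n^{7\theta}$, the first term in the assumed lower bound on $m$ covers it, with $\log(m/\delta)\lesssim\log(ndL/\delta)$ because $m$ is polynomial in these quantities.
\end{itemize}
The failure probability $L\exp(\O(dL\log m)-\Omega(m^{1/3}))$ is absorbed into $\delta$ once $m^{1/3}\gg dL\log m+\log(L/\delta)$. This is implied by the width assumption, and $\delta$ is then rescaled.

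\textbf{Bounding the terms.}
\begin{itemize}
\item Width term: $L^{20/3}(\eta T)^4m^{-1/3}\le L^{20/3}n^{4\theta}m^{-1/3}$. Using $m\gtrsim L^{20}n^{(6\beta+12)\theta}$ gives $m^{1/3}\gtrsim L^{20/3}n^{(2\beta+4)\theta}$, so this term is at most $n^{-2\beta\theta}$.
\item Step-size term: $\eta\log^4(T/\delta)\lesssim n^{-2\beta\theta}\log^4(n/\delta)/\log(n/\delta)$.
\item $\frac{\eta T}{n^2}\le n^{\theta-2}\le n^{-2\beta\theta}$, since $2\beta\theta+\theta<2$.
\item $\frac{(\eta T)^\gamma}{n}\le n^{\gamma\theta-1}=n^{-2\beta\theta}$.
\item $\frac{(\eta T)^{1-2\beta}}{n}$: if $\beta\ge1/2$ it is at most $1/n$ since $\eta T\ge1$. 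Otherwise it is at most $n^{(1-2\beta)\theta-1}\le n^{-2\beta\theta}$, because $(1-2\beta)\theta-1+2\beta\theta=\theta-1<0$.
\item Bias term: $(\eta T)^{-2\beta}\asymp n^{-2\beta(1-\gamma)\theta}\log^{2\beta}(n/\delta)$.
\end{itemize}

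The last item is the step I expect to be the main obstacle. It matches the target rate only when $\gamma=0$, which suggests I am misreading the exponents. I would recheck whether $T$ should be $n^{\frac{2\beta+1}{2\beta+\gamma}}$ combined with $\eta\asymp n^{-\frac{2\beta+\gamma-1}{2\beta+\gamma}}$, giving $\eta T\asymp n^{\theta}$, the same effective time as in the GD corollary.

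Assuming the intended effective time is $\eta T\asymp n^{\theta}$ up to logarithms, the bias becomes $n^{-2\beta\theta}\log^{2\beta}(n/\delta)$, which matches the stated $\log^{2\beta}(n/\delta)$ factor. With the logs from $\log^4(T/\delta)$ divided by the $\log$ in $\eta$, the total is $n^{-\frac{2\beta}{2\beta+\gamma}}\log^2(n)\log^{2\beta}(n/\delta)$. Since I cannot resolve the exponent discrepancy here, I would carry out the bookkeeping with whatever parameterization makes $\eta T\asymp n^\theta/\log(n/\delta)$, and trace exactly how the logarithmic factors combine into the claimed form.
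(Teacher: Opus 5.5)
\textbf{The plan matches the paper, but a slip in computing $\eta T$ leaves the decisive bias bound unproved.} Your plan is the paper's plan: substitute $T,\eta$ into Theorem~\ref{thm:excess-relu_sgd}, verify its hypotheses, and bound each term. The ``obstacle'' you identify is an artifact of that slip. With $\theta=\frac{1}{2\beta+\gamma}$, $T=\lceil n^{(2\beta+1)\theta}\rceil$ and $\eta=n^{-2\beta\theta}/(72\log(24n/\delta))$, the exponent of $\eta T$ is $(2\beta+1)\theta-2\beta\theta=\theta$, not $(1-\gamma)\theta$. Precisely, $n^{\theta}/(72\log(24n/\delta))\le \eta T\le n^{\theta}/(36\log(24n/\delta))$.

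The slip has three consequences. You conclude that the bias term $(\eta T)^{-2\beta}$ only reaches $n^{-2\beta(1-\gamma)\theta}$. You worry that $\eta T\ge 1$ might fail for $\gamma=1$. You suggest replacing the step size by $\eta\asymp n^{-(2\beta+\gamma-1)\theta}$, which would in fact give $\eta T\asymp n^{(2-\gamma)\theta}$, not $n^{\theta}$. So as written, the proposal leaves the bias term unproved for the prescribed parameters, and that is the term that fixes the rate.

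\textbf{How to close it.} No reparameterization is needed, since the given $T,\eta$ already have effective time $\eta T\asymp n^{\theta}/\log(24n/\delta)$.
\begin{itemize}
\item The bias term is $(\eta T)^{-2\beta}\le(72\log(24n/\delta))^{2\beta}n^{-2\beta\theta}$, which is where the factor $\log^{2\beta}(n/\delta)$ comes from.
\item The condition $\eta T\ge 1$ reduces to $n^{\theta}\ge 72\log(24n/\delta)$. This is what the lower bound on $n$ is for. The upper constraint $\eta T\le n(36\log(12n/\delta))^{-1}$ does not need it; that holds for every $n$ because $\theta<1$.
\item To get $\eta T\ge1$, argue as in Corollary~\ref{cor:relu}. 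Set $u=(24n/\delta)^{\theta}$ and $c=72(2\beta+\gamma)(24/\delta)^{\theta}\ge e$. The hypothesis on $n$ is exactly $u\ge c^2$. This gives $u>2c\log c$, hence $u>c\log u$, which is $n^{\theta}>72\log(24n/\delta)$.
\item Your remaining upper bounds only used $\eta T\le n^{\theta}$, so they go through. So do your checks of $\eta\le(32(\log T+1))^{-1}$ and of \eqref{eq:m_condition_sgd}.
\end{itemize}

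\textbf{Logarithms.} Do not use the theorem's displayed term $\eta\log^4(T/\delta)$. It is of order $n^{-2\beta\theta}\log^{3}(n/\delta)$, which exceeds the claimed factor in general (e.g.\ when $\beta<1/2$). The paper instead returns to the sharper $\eta(\log T\vee 1)$ of Lemma~\ref{lem:sgd-fm-g} and the factor $\log^2(T)\log^2(T/\delta)$ of Lemma~\ref{lemma:gk-hk}. Even then, the $(\eta T)^{\gamma}/n$ term contributes $n^{-2\beta\theta}\log^{2}(n)\log^{2-\gamma}(n/\delta)$. So this bookkeeping actually yields the power $\log^{\max\{2\beta,2-\gamma\}}(n/\delta)$ rather than exactly $\log^{2\beta}(n/\delta)$; the polynomial rate $n^{-\frac{2\beta}{2\beta+\gamma}}$ is unaffected.
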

Our results suggest that, provided a sufficiently wide network width, SGD with deep ReLU networks can recover the classical results of SGD \citep{dieuleveut2016nonparametric,lin2017optimal} in the kernel setting with the same gradient complexity under similar assumptions.

\medskip

\noindent\textbf{Discussion with the existing work.} Several works studying generalization performance of deep ReLU networks trained by SGD in the NTK regime \cite{cao2019generalization,chen2019much,zou2020gradient,xu2024overparametrized}. 
However, most of them focus on classification problems \cite{cao2020generalization,chen2019much,zou2020gradient}. For regression problems, \cite{xu2024overparametrized} studied one-pass SGD in the streaming data setting for deep ReLU networks and demonstrated that the average prediction error $\ebb_S[(\varepsilon_{risk} (f_{\bW(T)} ))^\frac{1}{2}]$ can converge to zero in expectation, provided that the width of the network scales exponentially with the number of layers $L$. The precise convergence rate was not specified in \cite{xu2024overparametrized}. 
Under Assumptions \ref{ass:effec_dim} and \ref{ass:frho_smth}, \cite{nitanda2021optimal} established minimax-optimal rates for one-pass SGD in two-layer neural networks with smooth activations, assuming the network width $m$ scales exponentially with $n$. 
We significantly extend their results, showing that SGD for DNNs can achieve the optimal rates under the relaxed condition $m\gtrsim \text{Poly}(L,n,d)$.

\section{Proofs of the Main Results}\label{proof}
In this section, we provide detailed proofs for our main results. 
Section~\ref{sec:proof-conce} introduces the uniform concentration of the NTK.  Section~\ref{sec:technicallemma} presents some necessary lemmas. Sections~\ref{sec:proof-GD} and \ref{sec:proof-SGD} give all proofs for both GD and SGD.

\subsection{Proof for Concentration of the NTK}\label{sec:proof-conce}
 In this subsection, we provide the uniform concentration of the NTK in our setting. 
Denote by $\mathbb{I}\{\cdot\}$ the indicator function (i.e., taking the value 1 if the argument holds true, and 0 otherwise).  
Given an input $\bx\in\X$, the $L$-layer ReLU network can be expressed as the following specific form
\begin{align}\label{eq:ReLU}
    f_\bW(\bx)=\ba^\top \sqrt{\frac{2}{m}} \bD^L(\bx) \bW^L \cdots \sqrt{\frac{2}{m}} \bD^1(\bx) \bW^1 \bx,  
\end{align}
where $\bD^\ell(\bx)$ with $\ell\in[L]$ is the diagonal sign matrix defined by
\begin{align}\label{eq:D-ell}
    \bD^\ell(\bx)=\text{diag}\big( \mathbb{I}\{ \langle \bw^\ell_r, o^{\ell-1}(\bx)\rangle_2 \ge 0  \} \big)_{r=1}^m \in \R^{m\times m}
\end{align}
with   $o^0(\bx)=\bx$ and  
\begin{align}\label{eq:o}
    o^{\ell-1}(\bx)= \sqrt{\frac{2}{m}} \bD^{\ell-1} (\bx) \bW^{\ell-1} \cdots \sqrt{\frac{2}{m}} \bD^1(\bx) \bW^1 \bx \text{ for } \ell=2,\ldots,L.  
\end{align} 
Here, $o^{\ell-1}(\bx)$ can be regarded as the output of the  $(\ell-1)$-th layer. 
By further defining $\big(V^L_L(\bx)\big)^\top = \sqrt{\frac{2}{m}}\bD^L(\bx)$  and
\begin{align}\label{eq:VL}
        \big(\bV^\ell_L(\bx)\big)^\top =  \sqrt{\frac{2}{m}} \bD^L(\bx) \bW^L \cdots \sqrt{\frac{2}{m}} \bD^\ell(\bx) \text{ for }\ell\in[L-1],
\end{align}
we can rewrite $f_{\bW}(\bx)$ as
\begin{align*}
    f_{\bW}(\bx)=\ba^\top  \big(\bV^\ell_L(\bx)\big)^\top  \bW^\ell o^{\ell-1}(\bx)=\big\langle  \bV^\ell_L(\bx) \ba \big(o^{\ell-1}(\bx)\big)^\top, \bW^\ell\big\rangle_2. 
\end{align*}
The above observation implies that
\begin{align*}
    \frac{\partial f_\bW(\bx)}{\partial \bW^\ell} = \bV^\ell_L(\bx) \ba \big(o^{\ell-1}(\bx)\big)^\top. 
\end{align*}

Denote $\|\cdot\|_{op}$ the operator norm of a matrix or an operator. 
For any $\bW,\widetilde{\bW}\in\W$, let $\|\bW-\widetilde{\bW}\|_{op,\infty}=\max_{\ell\in[L]} \|\bW^\ell-\widetilde{\bW}^\ell\|_{op}$, and, for any $R>0$, $   \mathcal{B}_R(\widetilde{\bW})=\big\{\bW\in \W: \|\bW-\widetilde{\bW}\|_{op,\infty}\le R\big\}. $

Let $\bD_0^\ell(\bx)$,  $o^\ell_0(\bx)$ and $\bV^\ell_{L,0}$ be defined as \eqref{eq:D-ell}, \eqref{eq:o} and \eqref{eq:VL}  with $\bW=\bW(0)$ for all $\ell\in[L]$. 
The following lemma shows that only the performance of the last layer plays a role in defining $K^m$ under the symmetric initialization. 
\begin{lemma}\label{prop:symm=0}
   For any $\bx\in\X$, there holds
   \[   \ba^\top\bD^L_0(\bx)\bW^L(0) = 0   \   
 \text{ and } \ 
      \frac{\partial f_{\bW(0)}(\bx)}{\partial \bW^\ell(0)} = 0\; \text{ for any } \ell\in[L-1].
 \]
    Further,  
\begin{align*} 
        K^m(\bx,\bx') = \Big\langle\frac{\partial f_{\bW(0)}(\bx)}{\partial \bW^L(0)},\frac{\partial f_{\bW(0)}(\bx')}{\partial \bW^L(0)}\Big\rangle_2 \; \text{ for all }\bx,\bx'\in\X.
    \end{align*}
\end{lemma}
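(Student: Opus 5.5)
The plan is to use the pairing built into the symmetric initialization \eqref{eq:initialization}: the rows $r$ and $r+\frac m2$ of $\bW^L(0)$ coincide, while $a_{r+\frac m2}=-a_r$. The key observation is that the mask $\bD^L_0(\bx)$ depends on $\bW^L(0)$ only row by row. Its $r$-th diagonal entry is $\mathbb{I}\{\langle \bw^L_r(0), o^{L-1}_0(\bx)\rangle_2\ge 0\}$. Because $o^{L-1}_0(\bx)$ is the same vector for every row, identical rows $r$ and $r+\frac m2$ receive identical indicator values.

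For the first identity, write $\ba^\top\bD^L_0(\bx)\bW^L(0)=\sum_{r=1}^m a_r [\bD^L_0(\bx)]_{rr}(\bw^L_r(0))^\top$. Grouping the sum into the pairs $(r,r+\frac m2)$ for $r\le \frac m2$, each pair contributes
$$\big(a_r+a_{r+\frac m2}\big)[\bD^L_0(\bx)]_{rr}(\bw^L_r(0))^\top = 0,$$
so the whole sum vanishes. The row vector $\ba^\top\bD^L_0(\bx)\bW^L(0)$ is therefore zero for every $\bx$.

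For the vanishing gradients, take $\ell\le L-1$ and use the formula $\partial f_{\bW}(\bx)/\partial\bW^\ell=\bV^\ell_L(\bx)\ba(o^{\ell-1}(\bx))^\top$ at $\bW=\bW(0)$. By \eqref{eq:VL}, $\ba^\top(\bV^\ell_{L,0}(\bx))^\top=\sqrt{2/m}\,\ba^\top\bD^L_0(\bx)\bW^L(0)\cdot(\cdots)$, where $(\cdots)$ stands for the remaining factors. By the first identity this is zero. Hence $\bV^\ell_{L,0}(\bx)\ba=0$, and the gradient with respect to $\bW^\ell(0)$ vanishes.

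One technical point needs care: ReLU is not differentiable at $0$. Following the convention already built into \eqref{eq:D-ell}, I take the gradient to be the expression $\bV^\ell_L\ba(o^{\ell-1})^\top$ with the indicator $\mathbb{I}\{\cdot\ge0\}$. The argument above never uses anything beyond this equal-indicator property, so it holds for every $\bx$ without any almost-sure caveat.

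Finally, by \eqref{eq:Km_original}, $K^m(\bx,\bx')$ is the sum over $\ell\in[L]$ of the inner products of the layer-wise gradients. All terms with $\ell\le L-1$ vanish, which leaves only the $\ell=L$ term and gives the stated formula for $K^m$.

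There is no substantial obstacle here. The only point needing care is the one already noted: the indicator for duplicated rows must be evaluated against the same input $o^{L-1}_0(\bx)$, so that the masks agree. This is also where the symmetrization has to sit in the last hidden layer and not in earlier layers.
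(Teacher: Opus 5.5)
Your proposal is correct and follows essentially the same route as the paper. You pair rows $r$ and $r+\frac{m}{2}$ so that the identical indicators and the opposite signs of $a_r$ cancel, then propagate $\ba^\top\bD^L_0(\bx)\bW^L(0)=0$ through $\bV^\ell_{L,0}(\bx)\ba$ to kill the gradients for $\ell\le L-1$, and finally read off $K^m$ from \eqref{eq:Km_original}; your remark on the ReLU convention at zero is simply left implicit in the paper.
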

\begin{proof}
    Note the $r$-th row of $\bD_0^L(\bx)\bW^L(0)$ is $\mbI\{\langle \bw^L_r(0), o^{L-1}_0(\bx) \rangle_2\ge 0\} (\bw^L_r(0))^\top$.
    Since $a_r = -a_{r+\frac{m}{2}}$ and $\bw^L_r(0) = \bw^L_{r+\frac{m}{2}}(0)$ for all $r\in[\frac{m}{2}]$, there holds
    \begin{align*}
         \ba^\top \bD^L_0(\bx)\bW^L(0) 
       &= \sum_{r=1}^m a_r\mbI\{\langle \bw^L_r(0), o^{L-1}_0(\bx)\rangle_2 \ge 0\} (\bw^L_r(0))^\top\\
        &= \sum_{r=1}^{\frac{m}{2}}a_r\mbI\{\langle \bw^L_r(0), o^{L-1}_0(\bx)\rangle_2 \ge 0\} (\bw^L_r(0))^\top + \sum_{r=1}^{\frac{m}{2}}a_{r+\frac{m}{2}}\mbI\{\langle \bw^L_{r+\frac{m}{2}}(0), o^{L-1}_0(\bx)\rangle_2 \ge 0\} (\bw^L_r(0))^\top\\
        &= \sum_{r=1}^{\frac{m}{2}}a_r\mbI\{\langle \bw^L_r(0), o^{L-1}_0(\bx)\rangle_2 \ge 0\} (\bw^L_r(0))^\top - \sum_{r=1}^{\frac{m}{2}}a_r\mbI\{\langle \bw^L_r(0), o^{L-1}_0(\bx)\rangle_2 \ge 0\} (\bw^L_r(0))^\top = 0.
    \end{align*}
   It further implies
    \begin{align*}     \big(\bV_{L,0}^\ell(\bx)\ba\big)^\top = \ba^\top\sqrt{\frac{2}{m}} \bD^L_0(\bx) \bW^L(0) \cdots \sqrt{\frac{2}{m}} \bD_0^\ell(\bx) = 0.
    \end{align*}    
   Combining this observation with $\frac{\partial f_{\bW(0)}(\bx)}{\partial \bW^\ell(0)} = \bV^\ell_{L,0}(\bx) \ba \big(o_0^{\ell-1}(\bx)\big)^\top$, we know $\frac{\partial f_{\bW(0)}(\bx)}{\partial \bW^\ell(0)} = 0$ for any $ \ell\in[L-1]$.  
   The first two results of the lemma are proved.

    Finally, 
    from \eqref{eq:Km_original} we get
     \begin{align}        K^m(\bx,\bx') = \sum_{\ell=1}^L \Big\langle\frac{\partial f_{\bW(0)}(\bx)}{\partial \bW^\ell(0) } , \frac{\partial  f_{\bW(0)}(\bx')}{\partial \bW^\ell(0) }  \Big\rangle_2=\Big\langle\frac{\partial f_{\bW(0)}(\bx)}{\partial \bW^L(0)},\frac{\partial f_{\bW(0)}(\bx')}{\partial \bW^L(0)}\Big\rangle_2,
     \end{align}
    which completes the proof.
\end{proof}

The following lemma shows that the initial weights $\bW^\ell(0)$ are bounded by $\O(\sqrt{m})$ with high probability.  
\begin{lemma}[Theorem 4.4.5 in \cite{vershynin2018high}]\label{lemma:oprt_norm}
    With probability at least $1-L\exp(-Cm)$ over the random choice of $\bW(0)$, there exists an absolute constant $c_0>1$ such that for any $\ell\in[L]$, there holds
\begin{align}\label{eq:optr_nrm_W}
    \|\bW^\ell(0)\|_{op} \le c_0\sqrt{m}. 
\end{align}
\end{lemma}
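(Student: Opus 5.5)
The statement is the standard operator-norm bound for Gaussian matrices, applied to each layer and combined with a union bound. My plan is to invoke Theorem 4.4.5 of \cite{vershynin2018high} (or its Gaussian specialization, the Gordon/Davidson--Szarek bound) separately for each $\bW^\ell(0)$, $\ell\in[L]$. That result states the following: for an $N\times n$ matrix $A$ with independent, mean-zero, sub-gaussian entries of sub-gaussian norm at most $K$, and for every $t>0$,
\[
\|A\|_{op}\le C'K(\sqrt{N}+\sqrt{n}+t)
\]
with probability at least $1-2\exp(-t^2)$.

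\textbf{Layers $\ell=1,\ldots,L-1$.} Here the entries of $\bW^\ell(0)$ are i.i.d.\ $\N(0,1)$. For $\ell\ge2$ the matrix is $m\times m$; for $\ell=1$ it is $m\times d$, and I will assume $d\le m$, which is implied by every width condition in the paper, e.g.\ \eqref{eq:m_condition}. Choosing $t=\sqrt{m}$ gives $\|\bW^\ell(0)\|_{op}\le 3C'K\sqrt{m}$ with probability at least $1-2\exp(-m)$.

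\textbf{Layer $L$.} This is the only step that needs care, because of the symmetric initialization: the rows of $\bW^L(0)$ are not independent. Write $\bW^L(0)=\begin{pmatrix}\bG\\ \bG\end{pmatrix}$, where $\bG\in\R^{(m/2)\times m}$ has i.i.d.\ Gaussian entries. Then
\[
\|\bW^L(0)\bx\|_2^2=2\|\bG\bx\|_2^2 \quad\text{for all }\bx,
\]
so $\|\bW^L(0)\|_{op}=\sqrt2\,\|\bG\|_{op}$. Applying the theorem to $\bG$ with $t=\sqrt m$ again yields a bound of order $\sqrt m$, with failure probability $2\exp(-m)$.

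\textbf{Union bound.} Let $c_0$ be the largest of the constants above, enlarged if necessary so that $c_0>1$. A union bound over the $L$ layers gives failure probability at most $2L\exp(-m)$, which is at most $L\exp(-Cm)$ for a suitable absolute constant $C$ (for instance $C=1/2$, valid for $m$ bounded below by an absolute constant). There is no real obstacle. The only points to check are the duplicated last layer, handled by the $\sqrt2$ factor, and the requirement $d\le m$ for the first layer.
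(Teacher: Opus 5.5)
Your proposal is correct and is essentially the paper's argument: the paper simply cites Theorem 4.4.5 of \cite{vershynin2018high} for each layer, with a union bound over the $L$ layers. Your two extra checks are exactly what that citation leaves implicit. The duplicated rows of the last layer cost only a factor $\sqrt{2}$, since $\|\bW^L(0)\|_{op}=\sqrt{2}\,\|\bG\|_{op}$. The first layer genuinely needs $d\lesssim m$, because otherwise a single row of $\bW^1(0)$ already has norm about $\sqrt{d}$; this condition is missing from the statement but is supplied by every width assumption under which the paper uses the lemma.
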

In the rest of the proofs, we will assume that the event $\{\|\bW^\ell(0)\|_{op} \le c_0\sqrt{m} \text{ for all } \ell \in [L]\}$ holds unless otherwise specified.

We require the following useful lemma which can be found in \cite{zou2018stochastic} (Corollary A.2, Lemmas A.8, B.1 and B.3 with $m_L = m/2, m_{L-1} = \cdots=m_1 = m$).
We note that in the following lemma,  Assumptions 3.4 and 3.5 in \cite{zou2018stochastic} are removed and the training dataset $S$ is replaced by a finite subset $\D$ of $\X$.
Denote $\|\cdot\|_0$ the $\ell^0$-norm which is the number of nonzero entries of a matrix or a vector. 
\begin{lemma}\label{lemma:initi_sample}
    Let $\D\subset\X$ be a finite subset of $\X$ with cardinality $|\D| = p$.
    For any $\delta\in(0,1)$, the following statements hold with probability at least $1-\delta$ over the random choice of $\bW(0)$ for all $\hat{\bx}\in\D$.
    \begin{enumerate}[label=(\alph*), leftmargin=*]
        \item Assume $m \ge C\log(pL/\delta).$ For all $\ell\in[L],$ there holds
        $$\big|\|o^\ell_0(\hat{\bx})\|_2 - 1\big| \le C\ell\sqrt{\frac{\log(pL/\delta)}{m}}.$$
        \item Assume $m \ge C\log(pL^2/\delta).$ For all $1\le \ell_1<\ell_2\le L,$ there holds
        $$\Big\|\sqrt{\frac{2}{m}}\bW^{\ell_2}(0)\prod_{h = \ell_1}^{\ell_2-1}\sqrt{\frac{2}{m}}\bD^h_0(\hat{\bx})\bW^h(0)\Big\|_{op} \le CL.$$
        \item Let $R_{op} \ge 1$ and $s\in\mathbb{N}$ with $s\le m$.
        Assume $m \ge CL^6\max\{s\log(m),R_{op}^2\}$ and $s \ge C\log(pL^2/\delta)$.
        Then, for any $\widehat{\bW}\in\W$ satisfying $\|\widehat{\bW}\|_{op,\infty} \le R_{op}$, and for all $\hat{\bx}\in\D,\ell\in[L]$ and any diagonal matrices $\widehat{\bD}^\ell\in\R^{m\times m}$ satisfying $\|\widehat{\bD}^\ell\|_0\le s$ and $\widehat{\bD}^\ell, \bD^\ell_0(\hat{\bx})+ \widehat{\bD}^\ell\in[-1,1]^{m\times m}$, there holds
        \begin{align}\label{eq:bound_Vhat_sample}
            \Big\|\prod_{h = \ell_1}^{\ell_2}\sqrt{\frac{2}{m}} \big(\bD^h_0(\hat{\bx}) + \widehat{\bD}^h\big)\big(\bW^h(0) + \widehat{\bW}^h\big)\Big\|_{op} \le CL \ \text{ for all $1 \le \ell_1 < \ell_2 \le L.$ }
        \end{align}
        \item Let $R_{op} \ge 1$.
        Assume $m \ge C\max\{L^{22}dR_{op}^2\log^3(m), L^3\log^3(pL/\delta)\}.$
        Then, for any $\bW \in \W$ satisfying $\|\bW - \bW(0)\|_{op,\infty} \le R_{op}$ and all $\hat{\bx}\in\D$, there holds
        \begin{align}\label{eq:o-o0_sample}
            \|o^\ell(\hat{\bx}) - o^\ell_0(\hat{\bx})\|_2 \le \frac{C\ell LR_{op}}{\sqrt{m}}.
        \end{align}
    \end{enumerate}
\end{lemma}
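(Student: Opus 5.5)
The plan is to reduce Lemma~\ref{lemma:initi_sample} to the corresponding statements in \cite{zou2018stochastic}: Corollary A.2 for (a), Lemma A.8 for (b), Lemma B.1 for (c), and Lemma B.3 for (d). Then we check that none of their arguments uses Assumptions 3.4 and 3.5 (data separation and the output-layer structure). The only property of the training set that those proofs use is that it is a finite collection of unit vectors, with a union bound over its elements. Replacing $S$ by an arbitrary finite $\D\subset\X$ with $|\D|=p$ (recall $\|\hat{\bx}\|_2=1$) therefore yields the same statements, with $n$ replaced by $p$ in every logarithmic factor. Our architecture uses the $\sqrt{2/m}$ scaling at every layer with $m_L=m/2$ independent rows in the last layer, which fits their setting once we take $m_{L}=m/2$ and $m_{L-1}=\cdots=m_1=m$.

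\textbf{Step (a): norm concentration.} Fix $\hat{\bx}\in\D$ and condition on $o^{\ell-1}_0(\hat{\bx})$. Then
\[
\frac{2}{m}\,\|\sigma(\bW^\ell(0)o^{\ell-1}_0)\|_2^2\Big/\|o^{\ell-1}_0\|_2^2
\]
is an average of $m$ i.i.d.\ variables $2\sigma(g)^2$ with $g\sim\N(0,1)$, whose mean is $1$ and which are sub-exponential. Bernstein's inequality gives a multiplicative deviation of order $\sqrt{\log(pL/\delta)/m}$ per layer. Taking square roots, composing over $\ell$ layers, and union bounding over $\ell\in[L]$ and $\hat{\bx}\in\D$ gives the additive error $C\ell\sqrt{\log(pL/\delta)/m}$. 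The assumption $m\ge C\log(pL/\delta)$ keeps the product of the factors $(1\pm\epsilon)$ at most linear in $\ell$.

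\textbf{Step (b): products of layers.} For (b) we follow Lemma A.8. For a fixed unit vector $v$, the same Gaussian-conditioning argument controls $\|\prod_{h}\sqrt{2/m}\bD^h_0\bW^h(0)v\|_2$. The operator norm is then handled by an $\epsilon$-net argument over sparse directions combined with the decomposition used there, which yields the $CL$ bound. The bound is linear in $L$ rather than exponential because the norm is tracked multiplicatively with $1\pm O(\sqrt{\cdot/m})$ fluctuations.

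\textbf{Steps (c) and (d): perturbation bounds.} For (c) we follow Lemma B.1. Expand the perturbed product into telescoping terms. Each term has at most one or a few factors of $\widehat{\bD}^h$ or $\widehat{\bW}^h$. The $s$-sparse sign perturbations are controlled by a union over supports, which is where the $s\log m$ and $L^6$ requirements enter. The weight perturbations are controlled by $\sqrt{2/m}\,R_{op}\le L^{-3}$. Part (d) then follows from the inductive argument of Lemma B.3. One bounds the number of sign flips of $\bD^\ell$ relative to $\bD^\ell_0$ via anti-concentration of $\langle\bw^\ell_r(0),o^{\ell-1}_0\rangle$. This requires $m\gtrsim L^{22}dR_{op}^2\log^3 m$, where the factor $d$ arises from the net in the original argument. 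One then applies (c) to propagate the error $C\ell LR_{op}/\sqrt m$ layer by layer.

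\textbf{Main obstacle.} The hardest part is verifying that the sparse-perturbation analysis in (c)--(d) genuinely does not use the removed assumptions. In particular, the sign-flip counting in Lemma B.3 must hold for every $\bW$ in the ball simultaneously, not just along the trajectory, and the output-layer independence must survive the symmetric duplication $\bw^L_{r+m/2}=\bw^L_r$. To handle the duplication, I would first state all last-layer bounds for the $m/2$ independent rows and then double them. This is exactly why $m_L=m/2$ is used, and it changes only absolute constants.
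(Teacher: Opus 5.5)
Your reduction is the same as the paper's. The paper gives no argument beyond importing Corollary~A.2 and Lemmas~A.8, B.1, B.3 of \cite{zou2018stochastic}, with the training set replaced by $\mathcal{D}$ (so $n$ becomes $p$ in the logarithms) and $m_L=m/2$. Your handling of the duplicated last layer is a sound justification of $m_L=m/2$: treat the two row-halves as separate width-$m/2$ layers, give each its own admissible perturbation in (c)--(d), and recombine. Your remark that the factor $d$ in (d) comes from a net is off, since no input net is involved for a finite $\mathcal{D}$; it is just a stronger hypothesis, which is harmless.

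One concrete step of your sketch is false, however. In (a) you assert that $m\ge C\log(pL/\delta)$ keeps $\prod_{j\le\ell}(1\pm\epsilon)$ linear in $\ell$, where $\epsilon\asymp\sqrt{\log(pL/\delta)/m}$.
\begin{itemize}
\item Bernoulli's inequality gives the lower side, $(1-\epsilon)^{\ell}\ge 1-\ell\epsilon$.
\item The upper side $(1+\epsilon)^{\ell}\le 1+C\ell\epsilon$ requires $\ell\epsilon\lesssim 1$, i.e.\ $m\gtrsim L^2\log(pL/\delta)$. The stated hypothesis only makes $\epsilon$ a small constant.
\end{itemize}
The upper bound cannot be rescued under that hypothesis alone. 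Each ratio $\|o^{j}_0(\hat{\bx})\|_2^2/\|o^{j-1}_0(\hat{\bx})\|_2^2$ is a mean-one average of $m$ (or $m/2$) terms, independent of the earlier layers. So $\log\|o^{\ell}_0(\hat{\bx})\|_2^2$ is a sum of $\ell$ independent increments with fluctuations of order $m^{-1/2}$. When $\log(pL/\delta)\asymp m$ (e.g.\ $\delta=e^{-cm}$), this sum exceeds $2\log(1+C\ell\epsilon)=2\log\ell+O(1)$ with probability at least $e^{-O(m\log^{2}(\ell)/\ell)}\gg\delta$ once $\ell$ is larger than a constant. Hence the width condition written in (a) is itself too weak. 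The same caveat applies to the per-vector multiplicative estimate you use for (b), whose hypothesis is likewise only $m\ge C\log(pL^2/\delta)$.

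The fix is to assume one of the following:
\begin{itemize}
\item $m\gtrsim L^2\log(pL/\delta)$, which is enough for your product argument; or
\item $m\gtrsim L\log(pL/\delta)$, if you apply Bernstein to the sum of log-increments instead.
\end{itemize}
This costs nothing for the paper. Every result that ultimately relies on (a)--(b) assumes $m\gtrsim dL^3\log(m/\delta)$ or more, and the covers used there have $\log(pL/\delta)\lesssim dL\log(m/\delta)$.
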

Lemma~\ref{lemma:initi_sample} applies only to the finite subset $\D$ of $\X$. In the following lemma, we extend their results to the entire space $\X$.  
\begin{lemma}\label{coro:bound-o}
    Let $\delta\in(0,1)$.
    The following statements hold with probability at least $1-\delta$ over initialization $\bW(0)$ for all $\ell\in[L].$
    \begin{enumerate}[label=(\alph*), leftmargin=*]
        \item Assume $m\gtrsim dL\log(\frac{m}{\delta}),$ there holds $\sup_{\bx\in\X}\big|\|o^\ell_0(\bx)\|_2 - 1\big| \le C\ell\sqrt{\frac{dL\log(m/\delta)}{m}}.$
        \item Assume $m\gtrsim dL\log(\frac{m}{\delta}),$ there holds $\sup_{\bx\in\X}\|\bV^\ell_{L,0}(\bx)\|_{op} \le \frac{CL}{\sqrt{m}}.$
        \item Assume $m\gtrsim dL^3\log(\frac{m}{\delta}),$ there holds $\sup_{\bx\in\X}\|\frac{\partial f_{\bW(0)}(\bx)}{\partial \bW^L(0)}\|_{2} \le 2.$
    \end{enumerate}
\end{lemma}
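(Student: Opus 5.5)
We extend Lemma~\ref{lemma:initi_sample} from a finite set to all of $\X$ by an $\epsilon$-net argument together with a perturbation analysis. Fix $\epsilon\in(0,1)$, to be chosen as $m^{-c}$ for a suitable constant $c$. Let $\D$ be an $\epsilon$-net of the unit sphere in $\R^d$, restricted to (or projected into) $\X$, so that $p=|\D|\le(3/\epsilon)^d$ and $\log(pL^2/\delta)\lesssim d\log(m)+\log(L/\delta)\lesssim dL\log(m/\delta)$. This is why the width conditions carry the extra factor $d$. We apply Lemma~\ref{lemma:initi_sample}(a)--(d) on $\D$ with failure probability $\delta/2$, and we also work on the event of Lemma~\ref{lemma:oprt_norm}. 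The key observation is that for any $\bx\in\X$ with nearest net point $\hat\bx$, the forward pass at $\bx$ under weights $\bW(0)$ is exactly the forward pass at $\hat\bx$ under slightly perturbed data. We can therefore treat the discrepancy either as a weight perturbation or as a propagated input perturbation.

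For part (a), we bound $\|o^\ell_0(\bx)-o^\ell_0(\hat\bx)\|_2$. Since $\sigma$ is $1$-Lipschitz, one crude recursion gives the bound $\prod_h \sqrt{2/m}\,\|\bW^h(0)\|_{op}\,\epsilon\le(\sqrt2c_0)^\ell\epsilon$. This is exponential in $L$, but it is harmless if we take $\epsilon=(\sqrt2c_0)^{-L}m^{-1}$: then $\log(1/\epsilon)\lesssim L+\log m$, so $\log p\lesssim dL\log m$, which matches the stated condition $m\gtrsim dL\log(m/\delta)$ and the rate $\sqrt{dL\log(m/\delta)/m}$. This choice seems to be the intended one given the factor $dL$. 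Combining it with (a) on $\D$ and the triangle inequality yields (a) uniformly.

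For part (b), we write $\bV^\ell_{L,0}(\bx)$ as a product of the matrices $\sqrt{2/m}\,\bD^h_0(\bx)\bW^h(0)$, ending with $\sqrt{2/m}\,\bD^L_0(\bx)$. Here the activation patterns $\bD^h_0(\bx)$ differ from $\bD^h_0(\hat\bx)$, and this is the main obstacle. The plan is to write $\bD^h_0(\bx)=\bD^h_0(\hat\bx)+\widehat\bD^h$ and to show that $\|\widehat\bD^h\|_0\le s$ with $s\approx C dL\log(m/\delta)$. A sign flip at neuron $r$ requires $|\langle\bw^h_r(0),o^{h-1}_0(\hat\bx)\rangle|\le\|\bw^h_r(0)\|\,\|o^{h-1}_0(\bx)-o^{h-1}_0(\hat\bx)\|\lesssim\sqrt m\cdot m^{-1}$, while $\langle\bw^h_r(0),o^{h-1}_0(\hat\bx)\rangle$ is approximately $\mathcal N(0,\|o^{h-1}_0(\hat\bx)\|^2)$ given the previous layers. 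The number of such $r$ is therefore Binomial$(m,O(m^{-1/2}))$. A Chernoff bound combined with a union bound over $\D$ and over $h$ then gives $s\lesssim\sqrt m+dL\log(m/\delta)$. To get $s$ of order $d L\log$ instead, we shrink $\epsilon$ further, to $m^{-2}$; this still keeps $\log p\lesssim dL\log m$.

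Once the sparsity bound holds, we take $\widehat\bW=0$ and apply (c), whose requirement $m\ge CL^6 s\log m$ is met. This bounds the interior products by $CL$, and the outer factor $\sqrt{2/m}$ gives $\|\bV^\ell_{L,0}(\bx)\|_{op}\le CL/\sqrt m$. For part (c), we use $\partial f_{\bW(0)}(\bx)/\partial\bW^L=\sqrt{2/m}\,\bD^L_0(\bx)\ba\,(o^{L-1}_0(\bx))^\top$, whose Frobenius norm is at most $\sqrt{2/m}\,\|\ba\|_2\,\|o^{L-1}_0(\bx)\|_2=\sqrt2\,\|o^{L-1}_0(\bx)\|_2$. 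By (a) with $\ell=L-1$, the condition $m\gtrsim dL^3\log(m/\delta)$ gives $\|o^{L-1}_0(\bx)\|_2\le 1+CL\sqrt{dL\log(m/\delta)/m}\le\sqrt2$, and hence the bound $2$. The probability bookkeeping combines $\delta/2$ from the net with $\delta/2$ from the Chernoff and union bound over sign flips.
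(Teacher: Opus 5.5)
Your parts (a) and (c) follow the paper's argument. You take a net of $\X$ at scale $(\sqrt{2}c_0)^{-L}$ times a negative power of $m$, use the Lipschitz recursion, and apply part (a) of Lemma~\ref{lemma:initi_sample} on the net. Then $\|\partial f_{\bW(0)}(\bx)/\partial \bW^L(0)\|_2\le \sqrt{2}\,\|o^{L-1}_0(\bx)\|_2\le 2$.

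The gap is in (b), at your claim that the requirement $m\ge CL^6 s\log(m)$ of part (c) of Lemma~\ref{lemma:initi_sample} is met. Part (c) also imposes the lower bound $s\ge C\log(pL^2/\delta)$, and for your net $\log p\asymp d(L+\log m)$. Shrinking $\epsilon$ reduces the number of sign flips but only increases $p$. So in every case you must take $s\gtrsim dL+d\log m+\log(L/\delta)$. Part (c) then needs $m\gtrsim L^6\big(dL+d\log m+\log(L/\delta)\big)\log m$, essentially $m\gtrsim dL^7\log^2(m/\delta)$. That is an extra factor $L^6\log m$ over the hypothesis $m\gtrsim dL\log(m/\delta)$ of (b), and it is not implied by it.

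Your flip count itself is sound. Conditionally on layers $1,\ldots,h-1$, the inner products $\langle \bw^h_r(0), o^{h-1}_0(\hat\bx)\rangle_2$ are Gaussian with variance bounded below by (a), and the threshold is deterministic on the event of Lemma~\ref{lemma:oprt_norm}. But what the argument delivers is (b) under a width condition of the kind assumed in Lemma~\ref{lemma:bound_V}, not the stated one.

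The paper never perturbs activation patterns for (b). $\bV^\ell_{L,0}(\bx)$ depends on $\bx$ only through $(\bD^1_0(\bx),\ldots,\bD^L_0(\bx))$, and by Lemma 32 in \cite{xu2024overparametrized} this tuple takes at most $m^{dL}$ values over $\X$. The supremum therefore reduces to a maximum over at most $m^{dL}$ representatives. Part (b) of Lemma~\ref{lemma:initi_sample} with $p=m^{dL}$ then needs only $m\ge C\log(pL^2/\delta)\lesssim dL\log(m/\delta)$.

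In fairness to your route, those representatives are chosen after $\bW(0)$ is drawn, whereas Lemma~\ref{lemma:initi_sample} concerns a fixed finite set. The paper's reduction is therefore delicate as written. Your deterministic net, with the flips absorbed by statement (c), which is uniform over sparse $\widehat{\bD}$, has no such issue.

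So you have two options. You can state (b) with the width condition your argument actually uses; this is harmless for the main results, whose conditions such as \eqref{eq:m_condition} are much stronger. Or you can supply a rigorous version of the pattern-counting reduction if you want the stated $m\gtrsim dL\log(m/\delta)$.
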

\begin{proof}
    We first prove part $(a)$.
    Let $\D$ be a $m^{-\frac{1}{2}}(\sqrt{2}c_0)^{-L}$-net of $\X$.
    We know for any $\bx\in\X$, there exists $\hat{\bx}\in\D$ such that $\|\bx-\hat{\bx}\|_2\le m^{-\frac{1}{2}}(\sqrt{2}c_0)^{-L}.$
    Then, for $\ell\in[L]$, there holds
    \begin{align*}
         \|o^\ell_0(\bx) - o^\ell_0(\hat{\bx})\|_2 
        &= \sqrt{\frac{2}{m}}\big\|\sigma\big(\bW^\ell(0) o^{\ell-1}_0(\bx)\big) - \sigma\big(\bW^\ell(0) o^{\ell-1}_0(\hat{\bx})\big)\big\|_2\le \sqrt{\frac{2}{m}}\big\|\bW^\ell(0)(o^{\ell-1}_0(\bx)-o^{\ell-1}_0(\hat{\bx}))\big\|_2\\
        &\le \sqrt{\frac{2}{m}}\big\|\bW^\ell(0)\big\|_{op} \|o^{\ell-1}_0(\bx)-o^{\ell-1}_0(\hat{\bx})\|_2 \le \sqrt{2}c_0\|o^{\ell-1}_0(\bx)-o^{\ell-1}_0(\hat{\bx})\|_2,
    \end{align*}
    where we have used $1$-Lipschitzness of the ReLU and $\|\bW^\ell(0)\|_{op}\le c_0\sqrt{m}$.
    
    Applying the above inequality recursively on $\ell$, we know $\|o^\ell_0(\bx) - o^\ell_0(\hat{\bx})\|_2 \le (\sqrt{2}c_0)^\ell\|\bx-\hat{\bx}\|_2 \le \frac{1}{\sqrt{m}}.$
    Note $\X = S^{d-1}$ is the unit sphere and $\D$ is a $m^{-\frac{1}{2}}(\sqrt{2}c_0)^{-L}$-net of $\X$.
    From Corollary 4.2.13 in \cite{vershynin2018high}, we know the covering number of $\X$ satisfy $|\D|\le (3\sqrt{m})^d(\sqrt{2}c_0)^{dL}.$
    Combining part $(a)$ of Lemma \ref{lemma:initi_sample} with $p=(3\sqrt{m})^d(\sqrt{2}c_0)^{dL}$ and the condition $m\gtrsim dL\log(\frac{m}{\delta})$, we know with probability at least $1-\delta$, there holds
    $$\big|\|o^\ell_0(\hat{\bx})\|_2 - 1\big| \le C\ell\sqrt{\frac{dL\log(m/\delta)}{m}} \text{ for all }\hat{\bx}\in\D.$$
    Combining this with the above inequality $\|o^\ell_0(\bx) - o^\ell_0(\hat{\bx})\|_2 \le \frac{1}{\sqrt{m}}$, there holds $|\|o^\ell_0(\bx)\|_2 - 1| \le \|o^\ell_0(\bx) - o^\ell_0(\hat{\bx})\|_2 + |\|o^\ell_0(\hat{\bx})\|_2 - 1| \le C\ell\sqrt{\frac{dL\log(m/\delta)}{m}}$ for all $\bx\in\X$.
    The first part is proved.

    Now, we turn to prove part $(b)$.
    From Lemma 32 in \cite{xu2024overparametrized} we know the cardinality of the set $\{(\bD_0^1(\bx),\ldots,\bD_0^L(\bx)) \in \R^{L\times m\times m}:\bx\in\X\}$ is at most $m^{dL}$.
    Therefore, there exists a subset $\D\subset\X$ with $|\D| \le m^{dL}$ such that
    $$\sup_{\bx\in\X}\|\bV^\ell_{L,0}(\bx)\|_{op} = \sup_{\bx\in\D}\|\bV^\ell_{L,0}(\bx)\|_{op} \text{ for all } \ell\in[L].$$
   Note part $(b)$ of Lemma \ref{lemma:initi_sample} with $p = m^{dL}, \ell_2 = L, \ell_1 = \ell + 1$ and the condition $m\gtrsim dL\log(\frac{m}{\delta})$ implies that with probability at least $1-\delta$,
    \begin{align*}
         \sup_{\bx\in\D}\|\bV^\ell_{L,0}(\bx)\|_{op}  &= \sup_{\bx\in\D}\Big\|\sqrt{\frac{2}{m}} \bD^L_0(\bx) \bW^L(0) \cdots \sqrt{\frac{2}{m}} \bD^{\ell+1}_0(\bx) \bW^{\ell+1}(0)\sqrt{\frac{2}{m}} \bD^\ell_0(\bx)\Big\|_{op}\\
        &\le \sup_{\bx\in\D}\big\|\bD^L_0(\bx)\big\|_{op}\Big\|\sqrt{\frac{2}{m}}\bW^{L}(0)\prod_{h = \ell + 1}^{L}\sqrt{\frac{2}{m}}\bD^h_0(\bx)\bW^h(0)\Big\|_{op}\Big\|\sqrt{\frac{2}{m}}\bD^\ell_0(\bx)\Big\|_{op} \le \frac{CL}{\sqrt{m}}.
    \end{align*}
    Hence, 
    \[  \sup_{\bx\in\X}\|\bV^\ell_{L,0}(\bx)\|_{op} \le \frac{CL}{\sqrt{m}}, \]
 which completes the proof of part $(b)$.

    It remains to prove the last part.
    Note
    \begin{align*}
      &\sup_{\bx\in\X}\Big\|\frac{\partial f_{\bW(0)}(\bx)}{\partial \bW^L(0)}\Big\|_{2}   = \sup_{\bx\in\X} \Big\|\bV^L_{L,0}(\bx) \ba \big(o^{L-1}_0(\bx)\big)^\top\Big\|_2 = \sup_{\bx\in\X} \Big\|\sqrt{\frac{2}{m}}\bD^L_{0}(\bx) \ba \big(o^{L-1}_0(\bx)\big)^\top\Big\|_2\\
     &\le\sqrt{2}\sup_{\bx\in\X}\|o^{L-1}_0(\bx)\|_{2}  \le \sqrt{2}\big(\sup_{\bx\in\X}\big|\|o^{L-1}_0(\bx)\|_{2} - 1\big| + 1\big) \le \sqrt{2}\Big(CL\sqrt{\frac{dL\log(m/\delta)}{m}} + 1\Big) \le 2,
    \end{align*}
    where the last second inequality follows from the first part of this lemma, and the last inequality used the condition $m \gtrsim dL^3\log(m/\delta)$.
    The proof of the lemma is completed.
\end{proof}

The following property shows that $K(\bx,\bx')$ is bounded for any $\bx,\bx'\in\X$. 
\begin{property}\label{prop:bound-K}
   For any $\bx,\bx'\in\X$, there holds $|K (\bx,\bx')|\le 1.$
\end{property}
\begin{proof}
    By the definition of $U^\ell(\bx)$, we know $\ebb[\sigma^2(U^{\ell}(\bx))] = \frac{1}{2}\ebb[(U^{\ell}(\bx))^2] = \ebb[\sigma^2(U^{\ell-1}(\bx))]$.
Recursively applying this equality, we have $\ebb[\sigma^2(U^{\ell}(\bx))] = \ebb[\sigma^2(U^{1}(\bx))] = \ebb_{w\sim\N(0,1)}[\sigma^2(w)] =  1/2$.
Then, according to Cauchy-Schwarz inequality, for all $\bx,\bx'\in\X$ and $\ell\in[L]$, there holds
\begin{align}\label{eq:bound_Uell}
    \big|2\ebb[\sigma(U^{\ell}(\bx))\sigma(U^{\ell}(\bx'))]\big| \le \sqrt{2\ebb[\sigma^2(U^{\ell}(\bx))]}\sqrt{2\ebb[\sigma^2(U^{\ell}(\bx'))]} = 1.
\end{align}
Further, according to the definition of $q^{\ell}(\bx,\bx')$, there holds $|q^{\ell}(\bx,\bx')| \le 1$.
Then, for $\bx,\bx'\in\X$, $K(\bx,\bx')$ can be uniformly bounded by
\begin{align*}
    |K (\bx,\bx')| = \big|2\ebb[\sigma(U^{L-1}(\bx))\sigma(U^{L-1}(\bx'))]\big
| |q^L(\bx,\bx')| \le \sqrt{2\ebb[\sigma^2(U^{L- 1}(\bx))]}\sqrt{2\ebb[\sigma^2(U^{L - 1}(\bx'))]} = 1.
\end{align*}
This completes the proof.
\end{proof}

The work \cite{du2019gradient} provided the concentration of the NTK for deep ReLU networks over the training data. 
  i.e., $\sup_{i,j\in[n]}|K^m(\bx_i,\bx_j) - K(\bx_i,\bx_j)|\rightarrow 0$ as $m\rightarrow \infty$. \cite{xu2024overparametrized} extended their result and showed the concentration uniformly over $\X$, i.e,  $\| K^m-K \|_\infty \lesssim C^Lm^{-\frac{1}{6}}\sqrt{d}$ assuming exponential scaling of $m$ with $L$. 
  In the following lemma, we improve their results with relaxed condition on $m$, which is pivotal for reducing the requirement on $m$ from $C^L \text{Poly}(n,d)$ to $\text{Poly}(n,d,L)$ in Corollaries~\ref{cor:relu} and \ref{cor:relu_sgd}. 
\begin{lemma}\label{lem:concentrationNTK}
    Let $\delta\in(0,1)$.
    Assume $m \gtrsim dL^3\log(\frac{m}{\delta})$.
    With probability at least $1-L\exp\big(\O(dL\log(m)) \!-\! \Omega(m^{\frac{1}{3}})\big) - \delta$ over the random choice of $(\ba,\bW(0))$, there holds
    \begin{align*}
        \|K^{m} - K \|_\infty \lesssim  \sqrt{L} m^{-\frac{1}{6}}    + \sqrt{ dL\log(m) m^{-1}}.
    \end{align*}
\end{lemma}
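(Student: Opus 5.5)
By Lemma~\ref{prop:symm=0}, only the last layer contributes to $K^m$. Since $\bV^L_{L,0}(\bx)=\sqrt{2/m}\,\bD^L_0(\bx)$, we get
\[
K^m(\bx,\bx')=\frac{2}{m}\sum_{r=1}^m \mbI\{\langle \bw^L_r(0),o^{L-1}_0(\bx)\rangle_2\ge0\}\,\mbI\{\langle \bw^L_r(0),o^{L-1}_0(\bx')\rangle_2\ge0\}\,\langle o^{L-1}_0(\bx),o^{L-1}_0(\bx')\rangle_2 ,
\]
using $a_r^2=1$. By the symmetric construction this equals $\frac{4}{m}\sum_{r\le m/2}$ of the same summands.

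Condition on the first $L-1$ layers, which fixes the features $o^{L-1}_0$. The indicator average then concentrates around its conditional mean
\[
\frac{\pi-\arccos(\hat p(\bx,\bx'))}{2\pi},\qquad \hat p(\bx,\bx')=\frac{\langle o^{L-1}_0(\bx),o^{L-1}_0(\bx')\rangle_2}{\|o^{L-1}_0(\bx)\|_2\,\|o^{L-1}_0(\bx')\|_2},
\]
since $\bw^L_r(0)$ is independent of those layers. So I split
\[
|K^m-K|\le \Big|K^m-\langle o,o'\rangle\,\hat q\Big|+\Big|\langle o,o'\rangle\,\hat q-K\Big|,\qquad \hat q=\frac{\pi-\arccos\hat p}{\pi}.
\]
The first term is a last-layer random-feature fluctuation. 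The second is a deterministic-looking comparison between the empirical feature Gram $\langle o^{L-1}_0(\bx),o^{L-1}_0(\bx')\rangle_2$ and $2\ebb[\sigma(U^{L-1}(\bx))\sigma(U^{L-1}(\bx'))]=:\Sigma^{L-1}_{12}$, propagated through the map $p\mapsto(\pi-\arccos p)/\pi$.

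\textbf{First term.} Conditionally, the summands are i.i.d. and bounded in $[0,1]$, so Hoeffding gives deviation $\sqrt{\log(1/\delta')/m}$ for a fixed pair. To make this uniform over $\X\times\X$, I use the fact from Lemma 32 of \cite{xu2024overparametrized} that the sign patterns $(\bD^1_0(\bx),\ldots,\bD^L_0(\bx))$ take at most $m^{dL}$ values. The indicators therefore range over at most $m^{2dL}$ configurations, and a union bound gives $\sqrt{dL\log(m)/m}$. The factor $\langle o,o'\rangle_2$ is bounded by $4$ via Lemma~\ref{coro:bound-o}(a). The conditional-mean term must also be controlled uniformly, but it is a continuous function of $(o,o')$, so it can be handled together with the second term. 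This step is the source of the $\sqrt{dL\log(m)m^{-1}}$ term.

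\textbf{Second term: the main obstacle.} I need the layerwise Gram concentration
\[
\sup_{\bx,\bx'}\big|\langle o^\ell_0(\bx),o^\ell_0(\bx')\rangle_2-\Sigma^\ell_{12}(\bx,\bx')\big|\lesssim \sqrt{\ell}\,m^{-1/6}
\]
by induction on $\ell$, in the spirit of \cite{du2019gradient}, with errors that do not blow up by constants per layer. At layer $\ell$, conditional on layer $\ell-1$, $\langle o^\ell,o'^\ell\rangle=\frac{2}{m}\sum_r\sigma(\bw_r^\top o)\sigma(\bw_r^\top o')$. This concentrates at rate $m^{-1/2}$ up to logs, by sub-exponential Bernstein made uniform via a net as in Lemma~\ref{coro:bound-o}(a), around $F(\text{Gram}_{\ell-1})$, where $F$ is the arc-cosine map.

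The difficulty is that $F$ is only $\tfrac12$-Hölder near correlation $\pm1$. Naive propagation gives either exponential growth in $L$ or a square-root loss per layer. My plan follows \cite{du2019gradient}, Lemma E.? style:
\begin{itemize}
\item \emph{Away from the singularity:} the normalized arc-cosine map is a contraction (derivative at most $1$) in the correlation, and the norms are preserved ($\ebb\sigma^2$ is fixed). Per-layer fresh errors $\epsilon\sim m^{-1/2}$ then add, not multiply. If they add like martingale increments, the total is $\sqrt{\ell}\,\epsilon$, matching the $\sqrt{L}$ in the bound.
\item \emph{Near correlation $1$:} the Hölder loss turns an error $\epsilon$ into $\epsilon^{1/2}$-type errors. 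Balancing this is what I expect forces the $m^{-1/6}$ rate, together with the probability tail $\exp(-\Omega(m^{1/3}))$: choose the deviation level $t=m^{-1/6}$ in the Bernstein bound, which gives failure probability $\exp(-m\,t^2\cdots)=\exp(-\Omega(m^{2/3}))$ or $\exp(-\Omega(m^{1/3}))$, times the $m^{O(dL)}$ net size, summed over $L$ layers.
\end{itemize}

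Finally, apply the same Hölder bound once more to pass from $\hat q$ to $q^L$, using $\|o^{L-1}_0\|_2\approx1$ from Lemma~\ref{coro:bound-o}(a), which needs $m\gtrsim dL^3\log(m/\delta)$. Combining both terms yields the claimed bound $\sqrt{L}\,m^{-1/6}+\sqrt{dL\log(m)/m}$.
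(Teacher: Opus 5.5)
\textbf{There is a genuine gap: the rates in your second term do not combine to the stated bound, and the uniformity step in your first term is invalid as written.}

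Your decomposition is the paper's. Your Gram comparison is its $\calE_1$. Your first term together with the step $\hat q\to q^L$ is its $\calE_3$. Its $\calE_2$ vanishes because $a_r^2=1$.

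\textbf{Second term.} The map $p\mapsto(\pi-\arccos p)/\pi$ is only $\tfrac12$-H\"older. If all you have is $\sup_{\bx,\bx'}|\langle o^{L-1}_0(\bx),o^{L-1}_0(\bx')\rangle_2-\Sigma^{L-1}_{12}(\bx,\bx')|\lesssim\sqrt{L}\,m^{-1/6}$, the final passage from $\hat q$ to $q^L$ yields only $L^{1/4}m^{-1/12}$. The paper needs, and uses, the much stronger Gram estimate $\calE_1\lesssim Lm^{-1/3}$. The term $\sqrt{L}\,m^{-1/6}$ in the statement is exactly $\sqrt{Lm^{-1/3}}$, produced by the single H\"older step inside $\calE_3$.

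You have also misplaced the obstacle. The layer-to-layer Gram map for ReLU is the degree-one arc-cosine map $\rho\mapsto\frac{1}{\pi}\big(\sqrt{1-\rho^2}+(\pi-\arccos\rho)\rho\big)$. Its derivative is $(\pi-\arccos\rho)/\pi\in[0,1]$, and the diagonal entries are preserved exactly. So forward propagation is Lipschitz, with no per-layer H\"older loss; the singularity enters only once, in $q^L$.

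A scheme consistent with the statement is the following. Use a per-layer uniform deviation at level $m^{-1/3}$; this is what makes the tail $\exp(-\Omega(m^{1/3}))$ beat the $m^{O(dL)}$ size of the net and partition. Your level $t=m^{-1/6}$ does not match the stated probability. Then accumulate the per-layer errors linearly to $Lm^{-1/3}$, using the Lipschitz property and the norm control from Lemma~\ref{coro:bound-o}(a).

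Martingale-type $\sqrt{\ell}$ cancellation is not available for a supremum over $\X\times\X$. In any case, your heuristic with per-layer error $m^{-1/2}$ would give $\sqrt{\ell}\,m^{-1/2}$, so the intermediate bound $\sqrt{\ell}\,m^{-1/6}$ is not actually derived by your argument.

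\textbf{First term.} The patterns $\bD^L_0(\bx)$ are determined by $\bW^L(0)$, which is the very randomness you apply Hoeffding to. You therefore cannot union-bound over the realized configurations. In addition, the centering $\hat q(\bx,\bx')$ is not constant on the pattern cells.

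What you need is a uniform law over the class $\{\bw\mapsto\mbI\{\langle\bw,o^{L-1}_0(\bx)\rangle_2\ge0\}\,\mbI\{\langle\bw,o^{L-1}_0(\bx')\rangle_2\ge0\}:\bx,\bx'\in\X\}$, conditional on the first $L-1$ layers. Since $o^{L-1}_0(\bx)$ ranges over at most $m^{d(L-1)}$ subspaces of dimension $d$, this class has VC dimension $\lesssim dL\log m$. That gives expected deviation $\lesssim\sqrt{dL\log(m)/m}$ over the $m/2$ independent rows, with McDiarmid's inequality for the tail. This is the same argument the paper uses for $\bE^\ell$ in the proof of Lemma~\ref{lem:flip}.
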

\begin{proof}
Note \eqref{eq:bound_Uell} and  $\big|(\bV^L_{L,0}(\bx)\ba)^\top\bV^L_{L,0}(\bx')\ba\big| = \frac{1}{m}|\ba^\top\bD^L_0(\bx)\bD^L_0(\bx')\ba| \le 1$. 
From the definitions of $K^m$ and $K$, there holds
\begin{align*}
     \|K^{m} - K\|_\infty 
    &  = \sup_{\bx,\bx'\in\X}\! \Big|\langle o^{L-1}_0(\bx), \! o^{L-1}_0(\bx')\rangle_2(\bV^L_{L,0}(\bx)\ba)^{\!\top}\!\bV^L_{L,0}(\bx')\ba \!-\!2 \ebb[\sigma(U^{L-1}(\bx))\sigma(U^{L-1}(\bx'))] q^L(\bx,\bx') \Big| \nonumber\\
    & \le \sup_{\bx,\bx'\in\X}\big|\langle o^{L-1}_0(\bx), o^{L-1}_0(\bx')\rangle_2 - 2\ebb[\sigma(U^{L-1}(\bx))\sigma(U^{L-1}(\bx'))]\big| \cdot \big|(\bV^L_{L,0}(\bx)\ba)^\top\bV^L_{L,0}(\bx')\ba\big|\nonumber\\
    &\quad + \sup_{\bx,\bx'\in\X}\big|2\ebb[\sigma(U^{L-1}(\bx))\sigma(U^{L-1}(\bx'))]\big| \cdot \Big|(\bV^L_{L,0}(\bx)\ba)^\top\bV^L_{L,0}(\bx')\ba - tr\big(\bV^L_{L,0}(\bx)^\top\bV^L_{L,0}(\bx')\big)\Big|\nonumber\\
    &\quad + \sup_{\bx,\bx'\in\X}\big|2\ebb[\sigma(U^{L-1}(\bx))\sigma(U^{L-1}(\bx'))]\big| \cdot \Big|tr\big(\bV^L_{L,0}(\bx)^\top\bV^L_{L,0}(\bx')\big) - q^L(\bx,\bx')\Big|\nonumber\\
    & \le   \sup_{\bx,\bx'\in\X}\big|\langle o^{L-1}_0(\bx), o^{L-1}_0(\bx')\rangle_2 - 2\ebb[\sigma(U^{L-1}(\bx))\sigma(U^{L-1}(\bx'))]\big| \nonumber\\
    &\quad + \sup_{\bx,\bx'\in\X} \Big|(\bV^L_{L,0}(\bx)\ba)^\top\bV^L_{L,0}(\bx')\ba - tr\big(\bV^L_{L,0}(\bx)^\top\bV^L_{L,0}(\bx')\big)\Big|  + \sup_{\bx,\bx'\in\X} \Big|tr\big(\bV^L_{L,0}(\bx)^\top\bV^L_{L,0}(\bx')\big) - q^L(\bx,\bx')\Big|\nonumber\\
    &\!=: \calE_1 + \calE_2 + \calE_3, 
\end{align*}
The estimates of the above three terms $\calE_1, \calE_2 , \calE_3$ are given as follows.

\noindent\textbf{Estimate of $\calE_1$}: The estimate of $\calE_1$ follows the same proof steps as in Lemma 6 in \cite{xu2024overparametrized}. 
According to Lemma 6 in \cite{xu2024overparametrized}, one can get that  $\calE_1\lesssim  {LC^L}{m^{-\frac{1}{3}}}$. 
We improve this estimate  from $ {LC^L}{m^{-\frac{1}{3}}}$ to $ {L}{m^{-\frac{1}{3}}}$ by using more finer estimates of initialization terms. 
Specifically, instead of using their estimate $\sup_{\bx}\|o^\ell_0(\bx)\|_2 \le c_0^\ell$ in Lemma 30, we apply the tight estimate  $\sup_{\bx}\|o^\ell_0(\bx)\|_2 \le \sup_{\bx}|\|o^\ell_0(\bx)\|_2 - 1| + 1 \le C$ according to part (a) of Lemma \ref{coro:bound-o} and the condition $m\gtrsim dL^3\log(m/\delta)$.
    In addition, we set $V_0$ to be a $c_0^{-L}m^{-2}$-net of the $S^{d-1}$ rather than a $m^{-2}$-net.
    Then, following the same steps of the proof of Lemma 6 in \cite{xu2024overparametrized}, with probability at least $1-L\exp(O(dL\log(m)) - \Omega(m^{\frac{1}{3}})))$ over initialization $\bW(0)$, there holds
\begin{align}
        \calE_1 \lesssim  {L}{m^{-\frac{1}{3}}}.\nonumber
    \end{align}

\noindent\textbf{Estimate of $\calE_2$}:
Similar to the proof of the estimate of $\calE_1$, by using more finer estimates $\sup_{\bx}\|o^\ell_0(\bx)\|_2 \le C,\sup_{\bx\in\X}\|\bV^L_{L,0}(\bx)\|_{op} \le m^{-\frac{1}{2}}$, following the same proof steps of Lemma 7 in \cite{xu2024overparametrized}, we can show that
\begin{align*}
    \calE_2 \lesssim {m^{-\frac{1}{3}}}.
\end{align*}

\noindent\textbf{Estimate of $\calE_3$}: 
Similar to the above arguments, we use the estimates $\sup_{\bx}\|o^\ell_0(\bx)\|_2 \le C$ and $\sup_{\bx\in\X}\|\bV^L_{L,0}(\bx)\|_{op} \le m^{-\frac{1}{2}}$ to improve the proof of Lemma 8 in \cite{xu2024overparametrized} and get
\begin{align*}
    \calE_3   \lesssim \frac{\sqrt{L}}{m^{\frac{1}{6}}} + \sqrt{\frac{dL\log(m)}{m}}.
\end{align*}
    
Combining the above estimates of $\calE_1,\calE_2,\calE_3$ completes the proof of this lemma. 
\end{proof}

\subsection{Useful Lemmas}\label{sec:technicallemma}
In this subsection, we present some useful lemmas for proving main results of both GD and SGD with deep ReLU networks.

Recall that  $\bD_0^\ell(\bx)$,  $o^\ell_0(\bx)$ and $\bV^\ell_{L,0}(\bx)$ are defined as \eqref{eq:D-ell}, \eqref{eq:o} and \eqref{eq:VL}  with $\bW=\bW(0)$ for all $\ell\in[L]$, and $   \mathcal{B}_R(\widetilde{\bW})=\big\{\bW\in \W: \|\bW-\widetilde{\bW}\|_{op,\infty} = \max_{\ell\in[L]} \|\bW^\ell-\widetilde{\bW}^\ell\|_{op} \le R\big\} $.  
\begin{lemma}\label{lemma:o-o_0}
    Let $\delta\in(0,1)$.
    Assume $m \gtrsim L^{22}d^3R_{op}^2\log^3(m/\delta)$ and $R_{op} \ge 1$.
    Then, with probability at least $1-\delta$ over initialization $\bW(0)$, the following statement holds for any $\bW \in \mathcal{B}_{R_{op}}\big(\bW(0)\big)$.
    \begin{align*}
        \sup_{\bx\in\X}\big\|o^\ell(\bx) - o^\ell_0(\bx)\big\|_2 \lesssim \frac{\ell LR_{op}}{\sqrt{m}}.
    \end{align*}
\end{lemma}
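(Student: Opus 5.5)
The plan is to lift part (d) of Lemma~\ref{lemma:initi_sample} from a finite set $\D$ to all of $\X$ by a covering argument, in the same way Lemma~\ref{coro:bound-o}(a) was derived from Lemma~\ref{lemma:initi_sample}(a). Take $\D$ to be an $\epsilon$-net of $\X=S^{d-1}$ with $\epsilon = m^{-1}(\sqrt{2}c_0)^{-L}(\sqrt2 (c_0+R_{op}/\sqrt m))^{-L}$, or some similarly small radius. By Corollary 4.2.13 in \cite{vershynin2018high} its cardinality satisfies $\log p\lesssim dL\log(m)$. Applying Lemma~\ref{lemma:initi_sample}(d) with this $p$ requires $m\gtrsim L^{22}dR_{op}^2\log^3(m)$ and $m\gtrsim L^3\log^3(pL/\delta)\asymp L^3(dL\log m+\log(1/\delta))^3$. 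Both follow from the assumed $m\gtrsim L^{22}d^3R_{op}^2\log^3(m/\delta)$; this is where the $d^3$ comes from. So, with probability at least $1-\delta$, for all $\bW\in\calB_{R_{op}}(\bW(0))$ and all $\hat\bx\in\D$,
\[
\|o^\ell(\hat\bx)-o^\ell_0(\hat\bx)\|_2\lesssim \ell L R_{op}/\sqrt m .
\]

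Next I need the perturbation in $\bx$ to be controlled uniformly over $\bW$ in the ball. Fix $\bx\in\X$ and pick $\hat\bx\in\D$ with $\|\bx-\hat\bx\|_2\le\epsilon$. Use the triangle inequality
\[
\|o^\ell(\bx)-o^\ell_0(\bx)\|_2\le \|o^\ell(\bx)-o^\ell(\hat\bx)\|_2+\|o^\ell(\hat\bx)-o^\ell_0(\hat\bx)\|_2+\|o^\ell_0(\hat\bx)-o^\ell_0(\bx)\|_2 .
\]
The third term is at most $(\sqrt2c_0)^\ell\epsilon$, exactly as in the proof of Lemma~\ref{coro:bound-o}(a). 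For the first term I would use the $1$-Lipschitzness of ReLU layer by layer. The weights now satisfy $\|\bW^\ell\|_{op}\le c_0\sqrt m+R_{op}\le 2c_0\sqrt m$, since $R_{op}\le\sqrt m$ under the width condition. This gives $\|o^\ell(\bx)-o^\ell(\hat\bx)\|_2\le(2\sqrt2c_0)^\ell\epsilon$, a bound that holds deterministically for every $\bW$ in the ball on the event of Lemma~\ref{lemma:oprt_norm}. Choosing $\epsilon=m^{-1/2}(2\sqrt2c_0)^{-L}$ makes both discretization terms at most $m^{-1/2}\le \ell LR_{op}/\sqrt m$, because $R_{op}\ge1$. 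Meanwhile $\log p\lesssim d(\log m+L)\lesssim dL\log m$ is still fine.

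The main subtlety is the uniformity in $\bW$. Lemma~\ref{lemma:initi_sample}(d) is already uniform over the ball for each point of the finite set, and the Lipschitz step is deterministic given the operator-norm event. So no union bound over $\bW$ is needed, and the supremum over $\X$ then follows. The crude exponential Lipschitz constant only affects the logarithm of the net size, so it is harmless. I therefore expect the main obstacle to be only bookkeeping: checking that $\log^3(pL/\delta)$ with $\log p\asymp dL\log m$ is absorbed by the stated width condition. The failure probability of Lemma~\ref{lemma:oprt_norm} is $L e^{-Cm}$, which can be absorbed into $\delta$ or carried along.
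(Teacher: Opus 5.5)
Your proposal is correct and follows essentially the same route as the paper's proof. The paper also takes a $\tfrac{1}{C^L\sqrt{m}}$-net of $S^{d-1}$ of size $(3\sqrt{m})^dC^{dL}$ and applies Lemma~\ref{lemma:initi_sample}(d) on the net, which is already uniform over the ball. It then bounds both discretization terms deterministically by $1/\sqrt{m}$ via layer-wise ReLU Lipschitzness, using $\|\bW^\ell\|_{op}\le c_0\sqrt{m}+R_{op}$ and $R_{op}\le\sqrt{m}$, and concludes with the same three-term triangle inequality.
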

\begin{proof}
    Let $\D$ be a $\frac{1}{C^L\sqrt{m}}$-net of $\X$.
    We know for any $\bx\in\X$, there exists $\hat{\bx}\in\D$ such that $\|\bx-\hat{\bx}\|_2\le \frac{1}{C^L\sqrt{m}}.$
    Note the condition for $m$ implies $R_{op} \le \sqrt{m}$.
    Then, similar to the proof of part $(a)$ of Lemma \ref{coro:bound-o}, we know $\|o^\ell(\bx) - o^\ell(\hat{\bx})\|_2 \le (\sqrt{2}+\sqrt{2}c_0)^\ell \|\bx-\hat{\bx}\|_2 \le \frac{1}{\sqrt{m}}$ and $\|o^\ell_0(\bx) - o^\ell_0(\hat{\bx})\|_2 \le \frac{1}{\sqrt{m}}$.
    Note $\X = S^{d-1}$ is the unit sphere.
    From Corollary 4.2.13 in \cite{vershynin2018high}, it holds that $|\D|\le (3\sqrt{m})^dC^{dL}.$
    Then, applying part $(d)$ of Lemma \ref{lemma:initi_sample} with $p = (3\sqrt{m})^dC^{dL}$, there holds
    \begin{align*}
        \big\|o^\ell(\bx) - o^\ell_0(\bx)\big\|_2 &\le \big\|o^\ell(\bx) - o^\ell(\hat{\bx})\big\|_2 + \big\|o^\ell(\hat{\bx}) - o^\ell_0(\hat{\bx})\big\|_2 + \big\|o^\ell_0(\hat{\bx}) - o^\ell_0(\bx)\big\|_2\\
        &\lesssim  \frac{1}{\sqrt{m}} + \frac{\ell LR_{op}}{\sqrt{m}} + \frac{1}{\sqrt{m}} \lesssim \frac{\ell LR_{op}}{\sqrt{m}}.
    \end{align*}
    This completes the proof. 
\end{proof}
 
\begin{lemma}\label{lem:flip}
    Let $\delta\in(0,1)$.
    Assume $m \gtrsim L^{22}d^3R_{op}^2\log^3(m/\delta)$ and $ R_{op} \ge 1$.
    For any $\bW\in\mathcal{B}_{R_{op}}(\bW(0))$, with probability at least $1 - L\exp(\O(dL\log(m)) - \Omega(m^{\frac{1}{3}})) - \delta$ over initialization $(\ba,\bW(0))$,  for any $\ell\in[L]$, there holds
    \begin{align*}
        \sup_{\bx\in\X} \big\|\bD^\ell(\bx) - \bD^\ell_0(\bx)\big\|_0 \le  \big(LmR_{op}\big)^{\frac{2}{3}}.
    \end{align*}
\end{lemma}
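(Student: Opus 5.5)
We bound the number of sign flips at layer $\ell$ by splitting the neurons of that layer into two classes: those whose pre-activation at initialization lies close to zero, and those whose pre-activation changes by a lot.

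Fix $\bx\in\X$, $\ell\in[L]$ and $\bW\in\mathcal{B}_{R_{op}}(\bW(0))$. A sign flip at neuron $r$, i.e. $\mbI\{\langle\bw^\ell_r,o^{\ell-1}(\bx)\rangle_2\ge0\}\ne\mbI\{\langle\bw^\ell_r(0),o^{\ell-1}_0(\bx)\rangle_2\ge0\}$, forces
$$|\langle\bw^\ell_r(0),o^{\ell-1}_0(\bx)\rangle_2|\le \big|\langle\bw^\ell_r,o^{\ell-1}(\bx)\rangle_2-\langle\bw^\ell_r(0),o^{\ell-1}_0(\bx)\rangle_2\big|=:|\Delta_r(\bx)|.$$
Fix a threshold $\beta>0$. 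Every flipped neuron then satisfies either (i) $|\langle\bw^\ell_r(0),o^{\ell-1}_0(\bx)\rangle_2|\le\beta$ or (ii) $|\Delta_r(\bx)|>\beta$, so $\|\bD^\ell(\bx)-\bD^\ell_0(\bx)\|_0\le N_1(\bx)+N_2(\bx)$, where $N_1$ and $N_2$ count the neurons of types (i) and (ii).

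\textbf{Bounding $N_2$ (deterministic, given the good events).} Write
$$\Delta(\bx)=(\bW^\ell-\bW^\ell(0))o^{\ell-1}(\bx)+\bW^\ell(0)\big(o^{\ell-1}(\bx)-o^{\ell-1}_0(\bx)\big).$$
Using $\|\bW^\ell-\bW^\ell(0)\|_{op}\le R_{op}$, $\sup_\bx\|o^{\ell-1}(\bx)\|_2\le C$ (part (a) of Lemma~\ref{coro:bound-o} plus Lemma~\ref{lemma:o-o_0}), $\|\bW^\ell(0)\|_{op}\le c_0\sqrt m$ (Lemma~\ref{lemma:oprt_norm}) and Lemma~\ref{lemma:o-o_0}, we get $\|\Delta(\bx)\|_2\lesssim R_{op}+\sqrt m\cdot \ell LR_{op}/\sqrt m\lesssim L^2R_{op}$. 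Markov's inequality on the coordinates then gives $N_2\le\|\Delta\|_2^2/\beta^2\lesssim L^4R_{op}^2/\beta^2$. This is crude in $L$; to reach the stated $L^{2/3}$ scaling I would instead use the sharper one-layer estimate (the analogue of Lemma~B.3 in \cite{zou2018stochastic}), whose increment is $\lesssim LR_{op}$. That gives $N_2\lesssim L^2R_{op}^2/\beta^2$.

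\textbf{Bounding $N_1$ (the probabilistic step, and the main obstacle).} For a fixed $\bx$, conditional on earlier layers, the quantities $\langle\bw^\ell_r(0),o^{\ell-1}_0(\bx)\rangle_2$ are i.i.d. $\N(0,\|o^{\ell-1}_0(\bx)\|_2^2)$ with variance $\asymp1$ by Lemma~\ref{coro:bound-o}(a). Hence each is at most $\beta$ in absolute value with probability $\lesssim\beta$, and Bernstein/Chernoff gives $N_1\lesssim m\beta$ except with probability $\exp(-\Omega(m\beta))$.

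The difficulty is uniformity over $\bx\in\X$, since this count is not Lipschitz in $\bx$. My plan is to work on a net $\D$ of $\X$ of mesh $(C^{L}m)^{-1}$, of size $\exp(\O(dL\log m))$. For nearby $\bx,\hat\bx$, the perturbation $\|\bW^\ell(0)(o_0^{\ell-1}(\bx)-o_0^{\ell-1}(\hat\bx))\|_2$ is tiny. So on the event that at most $O(m\beta)$ neurons satisfy $|\langle\bw^\ell_r(0),o_0^{\ell-1}(\hat\bx)\rangle_2|\le2\beta$ (counted on the net), the neurons near-zero at $\bx$ are also near-zero at $\hat\bx$, except for a negligible number absorbed by another Markov count. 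A union bound over $\D$ and $\ell\in[L]$ gives failure probability $L\exp(\O(dL\log m)-\Omega(m\beta))$.

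\textbf{Choosing $\beta$.} Balancing $m\beta$ against $L^2R_{op}^2/\beta^2$ suggests $\beta=(LR_{op})^{2/3}m^{-1/3}$, which makes both counts $\lesssim(LmR_{op})^{2/3}$. With $R_{op}\ge1$, we have $m\beta\ge m^{2/3}\ge m^{1/3}$, which matches the stated probability $L\exp(\O(dL\log m)-\Omega(m^{1/3}))$. The extra $\delta$ in the probability comes from the events of Lemmas~\ref{coro:bound-o} and \ref{lemma:o-o_0}, and the width condition ensures those lemmas apply. Absolute constants are absorbed by adjusting $\beta$ by a constant factor.
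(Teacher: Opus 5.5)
Your skeleton matches the paper's. You threshold the flipped neurons at $\beta$ (the paper's $R'$), count those whose pre-activation moves by more than $\beta$ via Markov on $\|\bW^\ell o^{\ell-1}(\bx)-\bW^\ell(0)o^{\ell-1}_0(\bx)\|_2^2$, count those near zero at initialization via Gaussian anti-concentration, and balance the two. Your $N_1$ argument is sound; the problem is in $N_2$.

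\textbf{The gap in $N_2$.} Your first computation, $\|\Delta(\bx)\|_2\lesssim R_{op}+c_0\sqrt m\cdot \ell LR_{op}/\sqrt m\lesssim L^2R_{op}$, is what the available tools give. The ``sharper one-layer estimate with increment $\lesssim LR_{op}$'' that you then invoke is not available. The Zou et al.\ perturbation bound, as quoted in Lemma~\ref{lemma:initi_sample}(d) and extended in Lemma~\ref{lemma:o-o_0}, controls only $\|o^{\ell-1}-o^{\ell-1}_0\|_2\lesssim \ell LR_{op}/\sqrt m$. Multiplying by $\|\bW^\ell(0)\|_{op}\le c_0\sqrt m$ gives exactly your crude bound. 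Expanding the difference with Lemma~\ref{lem:o-o'} before multiplying by $\bW^\ell(0)$ does not help: it yields one term per earlier layer, each of size up to $LR_{op}$, so the accumulation over layers remains. With $N_2\lesssim L^4R_{op}^2/\beta^2$, the optimal threshold is $\beta\asymp L^{4/3}R_{op}^{2/3}m^{-1/3}$. That gives $(L^2mR_{op})^{2/3}$, an extra factor $L^{2/3}$ over the statement, and citing an unproved estimate does not close it.

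\textbf{The paper has the same problem.} Its proof correctly reaches $(R')^{-2}\sup_{\bx}(R_{op}(\|o^{\ell-1}(\bx)-o^{\ell-1}_0(\bx)\|_2+C)+c_0\sqrt m\,\|o^{\ell-1}(\bx)-o^{\ell-1}_0(\bx)\|_2)^2$. It then drops the square and records $L^2R_{op}^2/(R')^2$. Its own ingredients give $L^4R_{op}^2/(R')^2$, which for its choice $R'\asymp(LR_{op})^{2/3}m^{-1/3}$ equals $L^{8/3}(mR_{op})^{2/3}$. So your crude bound is the honest outcome of this argument. What either route actually establishes is $\sup_{\bx}\|\bD^\ell(\bx)-\bD^\ell_0(\bx)\|_0\lesssim(L^2mR_{op})^{2/3}$, with an absolute constant that the constant-free statement also glosses over. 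This is harmless downstream. In Lemma~\ref{lem:almost-convex}, with $s=(L^2mR_{op})^{2/3}$, the requirement $m\gtrsim L^6s\log m$ is equivalent to $m\gtrsim L^{22}R_{op}^2\log^3 m$, which is already assumed. Only the powers of $L$ worsen; for example, $L^{4/3}$ becomes $L^{5/3}$ in \eqref{eq:diff_derivative}.

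\textbf{Your $N_1$ route.} Here you genuinely differ from the paper, and your argument is valid. The paper partitions $\X$ by the at most $m^{d(\ell-1)}$ activation patterns of the earlier layers. It bounds the VC dimension of $\{\mbI\{|\langle\cdot,o^{\ell-1}_0(\bx)\rangle_2|\le R'\}:\bx\in\X\}$ by $O(d\ell\log m)$ and applies McDiarmid with deviation $m^{-1/3}$, which is where its $\exp(-\Omega(m^{1/3}))$ comes from. Your net argument needs only the Lipschitz bound already used in Lemma~\ref{coro:bound-o}(a):
\begin{itemize}
\item At mesh $(C^Lm)^{-1}$ you have $\|\bW^\ell(0)(o^{\ell-1}_0(\bx)-o^{\ell-1}_0(\hat\bx))\|_2\le c_0m^{-1/2}$.
\item Hence at most $c_0^2/(m\beta^2)\le c_0^2m^{-1/3}<1$ neurons can be $\beta$-close to zero at $\bx$ but not $2\beta$-close at $\hat\bx$.
\item A Chernoff bound at each net point, conditional on layers $<\ell$ and using $\|o^{\ell-1}_0(\hat\bx)\|_2\ge 1/2$, gives the stronger tail $\exp(-\Omega(m\beta))\le\exp(-\Omega(m^{2/3}))$.
\end{itemize}
One small point: at $\ell=L$, only the first $m/2$ rows are independent under the symmetric initialization, but the count simply doubles.
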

\begin{proof}
    Let $R'>0$ which will be chosen later.
    For $\bx\in\X$ and $\ell\in[L]$, define the diagonal matrix
    \[ \bE^\ell(\bx)= diag\big\{\mbI\big\{|\langle \bw_r^\ell(0), o^{\ell-1}_0(\bx)  \rangle_2| \le R'\big\} \big\}_{r=1}^m \in \{0,1\}^{m\times m}.  \]
Note $\sup_\bx \|\bD^\ell(\bx)-\bD^\ell_0(\bx)\|_0 \le \sup_\bx \|(\bD^\ell(\bx)-\bD^\ell_0(\bx))(\bfI - \bE^\ell(\bx))\|_0 + \sup_\bx \|\bE^\ell(\bx)\|_0$. We will estimate $\sup_\bx \|(\bD^\ell(\bx)-\bD^\ell_0(\bx))(\bfI - \bE^\ell(\bx))\|_0$ and $\sup_\bx \|\bE^\ell(\bx)\|_0$ separately in the following proof.

\noindent \textbf{Estimate of $\sup_\bx \big\|(\bD^\ell(\bx)-\bD^\ell_0(\bx))(\bfI - \bE^\ell(\bx))\big\|_0$:}  
From the definition, if the absolute value of $(r,r)$-th entry of the diagonal matrix $(\bD^\ell(\bx)-\bD^\ell_0(\bx))(\bfI - \bE^\ell(\bx))$ is $1$, then $| \langle \bw_r^\ell(0), o_0^{\ell-1}(\bx) \rangle_2|> R'$ and $\mbI\{\langle\bw_r^\ell, o^{\ell-1}(\bx)\rangle_2 \ge 0\} \neq \mbI\{\langle \bw_r^\ell(0), o_0^{\ell-1}(\bx)\rangle_2 \ge 0 \}$.
Then, there holds
\[| \langle \bw_r^\ell, o^{\ell-1}(\bx) \rangle_2 -  \langle \bw_r^\ell(0), o_0^{\ell-1}(\bx) \rangle_2 |^2 \ge | \langle \bw_r^\ell(0), o_0^{\ell-1}(\bx) \rangle_2|^2 > (R')^2.\] Therefore, we have
\begin{align*}
     &\sup_\bx \big\|(\bD^\ell(\bx)-\bD^\ell_0(\bx))(\bfI - \bE^\ell(\bx))\big\|_0  \le \frac{1}{(R')^2} \sup_\bx \sum_{r=1}^m \big( \langle \bw_r^\ell, o^{\ell-1}(\bx) \rangle_2 -  \langle \bw_r^\ell(0), o_0^{\ell-1}(\bx) \rangle_2 \big)^2\nonumber\\
    &= \frac{1}{(R')^2}  \sup_\bx \big\| \bW^\ell o^{\ell-1}(\bx) -\bW^\ell(0)o^{\ell-1}_0(\bx) \big\|_2^2\nonumber\\
    &\le \frac{1}{(R')^2}  \sup_\bx \Big(\big\|  \bW^\ell- \bW^\ell(0)\big\|_{op} \big\|o^{\ell-1}(\bx)- o^{\ell-1}_0(\bx) +   o^{\ell-1}_0(\bx)\big\|_2  + \big\|\bW^\ell(0)\big\|_{op} \big\|o^{\ell-1}(\bx) - o^{\ell-1}_0(\bx) \big\|_2\Big)^2\nonumber\\
    &\le \frac{1}{(R')^2}  \sup_\bx \Big(R_{op} \big(\big\|o^{\ell-1}(\bx) - o^{\ell-1}_0(\bx)\big\|_2 + C \big) +  c_0 \sqrt{m}  \big\|o^{\ell-1}(\bx) - o^{\ell-1}_0(\bx) \big\|_2\Big)^2,
\end{align*}
where in the last inequality we have used $\sup_\bx \|o^{\ell-1}_0(\bx)\|_2\le C $ implied by part $(a)$ of Lemma \ref{coro:bound-o} and the condition for $m$, and $\|\bW^\ell(0)\|_{op}\le c_0\sqrt{m}$.

Combining the above inequality with Lemma \ref{lemma:o-o_0} and noting the condition $m \gtrsim L^{22}d^3R_{op}^2\log^3(m/\delta)$, we get
\begin{align*} 
    &\sup_\bx \big \|(\bD^\ell(\bx)-\bD^\ell_0(\bx))(\bfI - \bE^\ell(\bx))\big\|_0\lesssim \frac{1}{(R')^2}\Big(R_{op}\Big(\frac{L^2R_{op}}{\sqrt{m}} + C \Big) + L^2R_{op} \Big) \lesssim
    \frac{L^2 R_{op}^2   }{(R')^2}. 
\end{align*}

  \noindent \textbf{Estimates of $\sup_\bx \big\| \bE^\ell(\bx) \big\|_0$:}  
The proof is similar to that of Lemma 11 in \cite{xu2024overparametrized}, we give the proof here for the sake of completeness.

Denoting the function class $\F = \{\mbI\{|\langle \ \cdot \ , o^{\ell-1}_0(\bx)\rangle_2| \le R'\}: \bx\in \X \},$ there holds
$$\sup_\bx \frac{1}{m}\big\|\bE^\ell(\bx)\big\|_0 = \sup_\bx \frac{1}{m} \sum_{r=1}^m \mathbb{I}\Big\{ \big| \langle \bw_r^\ell(0),o_0^{\ell-1}(\bx) \rangle_2\big| \le R'\Big\} = \sup_{f\in\F}\frac{1}{m}\sum_{r=1}^mf(\bw^\ell_r(0)).$$
To control the right hand side of the above equality, we need to estimate the VC-dimension of $\F$.
We first fixed $(\bW^1(0),\ldots,\bW^{\ell-1}(0))$.
Denote $\mathcal{D}^{\ell-1} = \{(\bD^1_0(\bx), \ldots,\bD^{\ell-1}_0(\bx)): \bx\in\X\}\subset \R^{{(\ell-1)}\times m\times m}$.
From Lemma 32 in \cite{xu2024overparametrized} we know the cardinality of $\mathcal{D}^{\ell-1}$ is less than $m^{d(\ell-1)}$, i.e., $|\mathcal{D}^{\ell-1}| \le m^{d(\ell-1)}$.
Then,  there exists a disjoint partition of $\X$ such that $\X = \bigcup_{j\in[|\mathcal{D}^{\ell-1}|]} U_j$, where $U_i\cap U_j = \varnothing$ for $i\neq j$ and the tuple $(\bD^1_0(\bx), \ldots,\bD^{\ell-1}_0(\bx))\in\R^{(\ell-1)\times m \times m}$ is a fixed matrix sequence on each $U_j$.
Therefore, $o^{\ell-1}_0(\bx) = \sqrt{\frac{2}{m}} \bD^{{\ell-1} }_0 (\bx) \bW^{{\ell-1} }(0) \cdots \sqrt{\frac{2}{m}} \bD^1_0(\bx) \bW^1(0) \bx$ lies in a $d$-dimensional subspace of $\R^m$ on each $U_j$.
Let $V_j$ and $V$ be the VC-dimension of the classes $\F_j = \{\mbI\{|\langle \ \cdot\ , o^{\ell-1}(\bx)\rangle_2| \le R'\}: \bx\in U_j \}$ and $\F$, respectively.
By Theorem 9.5 in \cite{gyorfi2006distribution}, the VC-dimension of the class of indicators of half spaces in $\R^d$ is $d + 1$.
Further, note that $o^{\ell-1}_0(\bx)$ lies in a $d$-dimensional subspace of $\R^m$ on each $U_j$ and the indicator function $\mbI\{|\langle \bw^\ell, o^{\ell-1}(\bx)\rangle_2| \le R'\}$ can be written as the multiplication of two indicators of half space, i.e., $\mbI\{|\langle \bw^\ell, o^{\ell-1}(\bx)\rangle_2| \le R'\} = \mbI\{\langle \bw^\ell, o^{\ell-1}(\bx)\rangle_2 \le R'\}\mbI\{\langle \bw^\ell, o^{\ell-1}(\bx)\rangle_2 \ge -R'\}$.
Then, from Lemma 3.2.3 in \cite{blumer1989learnability} with $s=2$ we know $V_j \le 10(d + 1)$ for any $j$.
By further applying Lemma 23 in \cite{xu2024overparametrized} with $N = |\mathcal{D}^{\ell-1}| \le m^{d(\ell-1)}$, there holds $V \lesssim \max(d\log(d),\log(|\mathcal{D}^{\ell-1}|)) \lesssim d\ell\log(m)$.

Now, we turn to control $\sup_{f\in\F}\frac{1}{m}\sum_{r=1}^mf(\bw^\ell_r(0))$, which can be regarded as a function on $(\bw^\ell_1(0),\ldots,\bw^\ell_m(0))$.
One can check that the value of this function can change by at most $\frac{1}{m}$ under an arbitrary change of the $r$-th coordinate.
Then, by McDiarmid's inequality, we know with probability at least $1-\exp(-2m^{\frac{1}{3}})$ over $\bW^\ell(0)$, there holds 
\begin{align*}
    \sup_{f\in\F}\frac{1}{m}\sum_{r=1}^mf(\bw^\ell_r(0)) \le m^{-\frac{1}{3}} + \ebb_{\bW^\ell(0)}\Big[\sup_{f\in\F} \frac{1}{m} \sum_{r=1}^m f(\bw^\ell_r(0))\Big].
\end{align*}
    Now, we estimate the right-hand side of the above inequality.
    There holds
\begin{align*}
     \ebb_{\bW^\ell(0)}\Big[\sup_{f\in\F} \frac{1}{m} \sum_{r=1}^m f(\bw^\ell_r(0))\Big]  
    &\le  \ebb_{\bW^\ell(0)}\Big[\sup_{f\in\F} \Big|\frac{1}{m} \sum_{r=1}^m f(\bw^\ell_r(0)) - \ebb[f(\bw^\ell(0))]\Big|\Big] + \sup_{f\in\F}\ebb[f(\bw^\ell(0))]\\
    &\le  \sqrt{\frac{V}{m}} + \sup_{f\in\F}\ebb[f(\bw^\ell_r(0))] \le \sqrt{\frac{d\ell\log(m)}{m}} + \sup_{\bx}\ebb\big[\mathbb{I}\big\{ | \langle \bw^\ell(0),o_0^{\ell-1}(\bx)\rangle_2|\le R'\}\big]\\
    &\le \sqrt{\frac{d\ell\log(m)}{m}} + \sup_{\bx}\int_{-R'/\|o^{\ell-1}_0(\bx)\|_2}^{R'/\|o^{\ell-1}_0(\bx)\|_2}\frac{1}{\sqrt{2\pi}}e^{-\frac{t^2}{2}}dt \le   \sqrt{\frac{d\ell\log(m)}{m}} + \sup_{\bx}\frac{\sqrt{2}R'}{\sqrt{\pi}\|o^{\ell-1}_0(\bx)\|_2},
\end{align*}
where  the second inequality is according to Theorem 8.3.23 in \cite{vershynin2018high},  the third inequality follows from $V \le d\ell\log(m)$, in the last second inequality we have used $\langle \bw^\ell(0), o^{\ell-1}_0(\bx)/\|o^{\ell-1}_0(\bx)\|_2\rangle_2 \sim \N(0,1)$, and in the last inequality we have used $e^{-\frac{t^2}{2}} \le 1.$
It remains to estimate $\sup_{\bx}\frac{\sqrt{2}R'}{\sqrt{\pi}\|o^\ell_0(\bx)\|_2}.$
For the case $\ell=1$, there holds $\|o^0(\bx)\|_2 = \|\bx\|_2 = 1$.
For the case $\ell \ge 2$, for any $\bx\in\X$, from part $(a)$ of Lemma \ref{coro:bound-o} we have
\begin{align*}
    \big\|o^{\ell-1}_0(\bx)\big\|_2 &= 1 - \Big(1 - \big\|o^{\ell-1}_0(\bx)\big\|_2\Big) \ge 1 - \Big|\big\|o^{\ell-1}_0(\bx)\big\|_2 - 1\Big| \\
    &\ge 1 - CL\sqrt{\frac{dL\log(m/\delta)}{m}} \ge \frac{1}{2},
\end{align*}
where the last inequality follows from the condition $m \gtrsim L^{22}d^3R_{op}^2\log^3(m/\delta).$

Combining the above estimates we obtain
\begin{align*}
    \sup_\bx \frac{1}{m}\big\|\bE^\ell(\bx)\big\|_0 = \sup_\bx \frac{1}{m} \sum_{r=1}^m \mathbb{I}\Big\{ \big| \langle \bw_r^\ell(0),o_0^{\ell}(\bx) \rangle_2\big| \le R'\Big\} \lesssim \frac{1}{m^{\frac{1}{3}}} + \sqrt{\frac{d\ell\log(m)}{m}} + R'.
\end{align*}

Further, combining the estimates of $\sup_\bx \big\|(\bD^\ell(\bx)-\bD^\ell_0(\bx))(\bfI - \bE^\ell(\bx))\big\|_0$ and $\sup_\bx \|\bE^\ell(\bx)\|_0$, there holds
\begin{align*}
    \sup_\bx \big\| \bD^\ell(\bx) - \bD^\ell_0(\bx)\big\|_0 &\lesssim \frac{  L^2 R_{op}^2   }{(R')^2} + R' m + 2m^{\frac{2}{3}} + \sqrt{md\ell\log(m)}.
\end{align*}
Setting $R' \asymp (LR_{op})^{\frac{2}{3}}m^{-\frac{1}{3}}$. Noting that $m \gtrsim L^{22}d^3R_{op}^2\log^3(m/\delta)$ and $R_{op} \ge 1$, we have
\begin{align*}
    \sup_\bx \big\| \bD^\ell(\bx) - \bD^\ell_0(\bx)\big\|_0 &\lesssim  \big(LmR_{op}\big)^{\frac{2}{3}} + 2m^{\frac{2}{3}} + \sqrt{md\ell\log(m) } \lesssim \big(LmR_{op}\big)^{\frac{2}{3}}.
\end{align*} 
The proof of the lemma is completed.
\end{proof}

Recall that
\begin{align*}
    \bV^\ell_{L,0}(\bx) = \sqrt{\frac{2}{m}}\bD^L_0(\bx)\bW^L(0) \cdots\sqrt{\frac{2}{m}}\bD^{\ell+1}_0(\bx)\bW^{\ell+1}(0)\sqrt{\frac{2}{m}}\bD^{\ell}_0(\bx).
\end{align*}
For any $\ell \in[L]$, let $\widehat{\bW}^\ell$ and the diagonal matrix $\widehat{\bD}^\ell$ be the matrices with the same size of $\bW^\ell(0)$ and $\bD_0^\ell(\bx)$, respectively. Define,  for  $k\in[L-1]$ and $\ell<k$, 
\begin{align}\label{eq:hat_VL}
    \widehat{\bV}^\ell_k(\bx) =  \sqrt{\frac{2}{m}}&\big(\bD^k_0(\bx)+ \widehat{\bD}^k\big) \big(\bW^k(0) + \widehat{\bW}^k\big)\cdots \sqrt{\frac{2}{m}}\big(\bD^{\ell+1}_0(\bx)+ \widehat{\bD}^{\ell+1}\big)\big(\bW^{\ell+1}(0) + \widehat{\bW}^{\ell+1}\big) \sqrt{\frac{2}{m}}\big(\bD^{\ell}_0(\bx)+\widehat{\bD}^\ell\big) 
\end{align}
and $\widehat{\bV}^\ell_\ell(\bx) = \sqrt{\frac{2}{m}}\big(\bD^\ell_0(\bx)+ \widehat{\bD}^\ell\big)$ for all $\ell\in[L]$.

\begin{lemma}\label{lemma:bound_V}
    Let $\delta\in(0,1)$ and $\widehat{\bV}^\ell_k(\bx)$ with  $k\in[L]$ and $\ell<k$ be the matrix defined in \eqref{eq:hat_VL}. 
    Let $R_{op} \ge 1$ and $s\in[m]$.
    Assume $m \ge CL^6\max\{s\log(m),R_{op}^2\}$ and $s \ge CdL\log(m/\delta)$.
    Then, with probability at least $1-\delta$ over initialization $\bW(0)$, for any matrices satisfying $\|\widehat{\bW}\|_{op,\infty} \le R_{op}$ and diagonal matrices satisfying $\|\widehat{\bD}^\ell\|_0 \le s$ and $\widehat{\bD}^\ell, \bD^\ell_0(\bx) + \widehat{\bD}^\ell\in[-1,1]^{m\times m}$ for all $\ell\in[L]$ and $\bx\in\X$, there holds
    \begin{align*}      \sup_{\bx\in\X}\big\|\widehat{\bV}^\ell_k(\bx)\big\|_{op} \le \frac{CL}{\sqrt{m}}.
    \end{align*}
\end{lemma}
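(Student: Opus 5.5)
The plan is to reduce the uniform-in-$\bx$ statement to the finite-set bound \eqref{eq:bound_Vhat_sample} in part (c) of Lemma~\ref{lemma:initi_sample}, using the same discretization device as in part (b) of Lemma~\ref{coro:bound-o}.

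\textbf{Step 1: the sign patterns form a finite set.} The only place $\bx$ enters $\widehat{\bV}^\ell_k(\bx)$ is through the tuple $(\bD^\ell_0(\bx),\ldots,\bD^k_0(\bx))$. The perturbations $\widehat{\bW}$ and $\widehat{\bD}^h$ are arbitrary matrices, constrained only by the stated norm and sparsity conditions. By Lemma 32 in \cite{xu2024overparametrized}, the set $\{(\bD^1_0(\bx),\ldots,\bD^L_0(\bx)):\bx\in\X\}$ has cardinality at most $m^{dL}$. Hence I will choose a finite subset $\D\subset\X$ with $|\D|\le m^{dL}$ that contains one representative $\hat{\bx}$ for each realized pattern.

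\textbf{Step 2: the constraint set is pattern-dependent.} For any $\bx\in\X$ and admissible $\widehat{\bD}^h$, the representative $\hat{\bx}$ satisfies $\bD^h_0(\hat{\bx})=\bD^h_0(\bx)$. So the condition $\bD^h_0(\bx)+\widehat{\bD}^h\in[-1,1]^{m\times m}$ is the same condition at $\hat{\bx}$, and the matrices $\widehat{\bV}^\ell_k(\bx)$ and $\widehat{\bV}^\ell_k(\hat{\bx})$ coincide. Therefore the supremum over $\bx\in\X$ and over all admissible perturbations equals the same supremum taken over $\hat{\bx}\in\D$.

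\textbf{Step 3: apply the finite-set bound.} I then invoke part (c) of Lemma~\ref{lemma:initi_sample} with $p=m^{dL}$. Its requirement $s\ge C\log(pL^2/\delta)=C(dL\log m+2\log L+\log(1/\delta))$ is implied by the assumption $s\ge CdL\log(m/\delta)$ after adjusting $C$. The width requirement $m\ge CL^6\max\{s\log m,R_{op}^2\}$ is assumed verbatim. Part (c) gives, for $1\le\ell<k\le L$,
\[
\Big\|\prod_{h=\ell+1}^{k}\sqrt{\tfrac{2}{m}}\big(\bD^h_0(\hat{\bx})+\widehat{\bD}^h\big)\big(\bW^h(0)+\widehat{\bW}^h\big)\Big\|_{op}\le CL.
\]

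\textbf{Step 4: recover $\widehat{\bV}^\ell_k$.} By \eqref{eq:hat_VL}, $\widehat{\bV}^\ell_k(\hat{\bx})$ equals this product multiplied on the right by $\sqrt{2/m}\,(\bD^\ell_0(\hat{\bx})+\widehat{\bD}^\ell)$. That factor is diagonal with entries in $[-1,1]$, so its operator norm is at most $\sqrt{2/m}$. This gives the bound $\sqrt{2}CL/\sqrt m$. The degenerate cases are handled directly:
\begin{itemize}
\item if $\ell=k$, then $\|\widehat{\bV}^\ell_\ell(\bx)\|_{op}\le\sqrt{2/m}$;
\item if $k=\ell+1$, the product has a single factor, which is bounded by $\sqrt{2/m}\,(c_0\sqrt m+R_{op})\lesssim 1\le L$ using Lemma~\ref{lemma:oprt_norm} and $R_{op}\le\sqrt m$.
\end{itemize}

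\textbf{Main obstacle.} The delicate point is Step 2. One must make sure that part (c) of Lemma~\ref{lemma:initi_sample} holds simultaneously over all admissible $(\widehat{\bW},\widehat{\bD})$ for each fixed $\hat{\bx}$, with the probability event depending only on $\bW(0)$. One must also make sure that the admissible class of $\widehat{\bD}$ depends on $\bx$ only through its sign pattern. Both hold by the way part (c) is stated, namely "for any $\widehat{\bW}$" and "any diagonal $\widehat{\bD}^\ell$" on a single event. A union bound is therefore needed only over the at most $m^{dL}$ patterns, which has already been absorbed into $p$.
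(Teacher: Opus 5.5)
You reproduce the paper's proof almost line by line. The reduction of $\sup_{\bx\in\X}$ to a set $\D$ of at most $m^{dL}$ sign-pattern representatives (Lemma~32 of \cite{xu2024overparametrized}) is the same. So is the use of part (c) of Lemma~\ref{lemma:initi_sample} with $p=m^{dL}$, and so is peeling off $\sqrt{2/m}\,(\bD^\ell_0(\bx)+\widehat{\bD}^\ell)$. Your separate treatment of $k=\ell+1$ is actually needed, since part (c) is stated only for $\ell_1<\ell_2$.

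There is, however, a genuine gap at exactly the point you call the main obstacle, and the paper's proof has the same gap. $\D$ consists of representatives of the \emph{realized} patterns $(\bD^1_0(\bx),\ldots,\bD^L_0(\bx))$, $\bx\in\X$. These patterns are functions of $\bW(0)$, so $\D$ is a random set. Lemma~\ref{lemma:initi_sample} concerns a \emph{fixed} finite set: for each deterministic $\D$ with $|\D|=p$, its conclusions hold on an event of probability $1-\delta$ that depends on $\D$. It does not give one event valid for all $p$-point subsets at once; if it did, $p=1$ would already give the uniform bound with no $d$ in the hypotheses. A union bound over an index set chosen after seeing $\bW(0)$ is therefore invalid. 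Knowing that $\bx\mapsto\widehat{\bV}^\ell_k(\bx)$ is constant on at most $m^{dL}$ cells does not bound the failure probability, because the cells move with $\bW(0)$. Nor can you union-bound over all $0/1$ patterns instead. The estimate relies on the pattern being generated by the same weights; for $\bD^h=\bfI$ the product is of order $2^{(k-\ell)/2}$, not $CL$.

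A repair close to your plan is to discretize with a \emph{deterministic} net, as in part (a) of Lemma~\ref{coro:bound-o}, and absorb sign changes into the sparse perturbation.

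1. Let $\D_\epsilon$ be a fixed $\epsilon$-net of $S^{d-1}$ with $\epsilon=c_0^{-1}(\sqrt{2}c_0)^{-L}m^{-3/2}$, so $\log|\D_\epsilon|\lesssim d(L+\log m)$.
2. For $\bx$ with nearest net point $\hat{\bx}$, the Lipschitz recursion and Lemma~\ref{lemma:oprt_norm} give $\|\bW^h(0)(o^{h-1}_0(\bx)-o^{h-1}_0(\hat{\bx}))\|_2\le 1/m$.
3. Hence the $r$-th entry of $\bD^h_0$ can differ between $\bx$ and $\hat{\bx}$ only if $|\langle\bw^h_r(0),o^{h-1}_0(\hat{\bx})\rangle_2|\le 1/m$. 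This condition involves only the net point.
4. Condition on the first $h-1$ layers and use $\|o^{h-1}_0(\hat{\bx})\|_2\ge1/2$. The number of such $r$ is then a Binomial-type count with mean $O(1)$; the duplicated rows of $\bW^L(0)$ only change constants.
5. So, with probability $1-\delta$, this count is $\lesssim\log(|\D_\epsilon|L/\delta)\lesssim s$ simultaneously for all $\hat{\bx}\in\D_\epsilon$ and $h\in[L]$.
6. Write $\bD^h_0(\bx)+\widehat{\bD}^h=\bD^h_0(\hat{\bx})+\widehat{\bD}^{\prime h}$ with $\widehat{\bD}^{\prime h}:=\widehat{\bD}^h+\bD^h_0(\bx)-\bD^h_0(\hat{\bx})$, which is supported on $O(s)$ coordinates.
7. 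Then $\widehat{\bV}^\ell_k(\bx)$ built from $(\widehat{\bW},\widehat{\bD})$ equals $\widehat{\bV}^\ell_k(\hat{\bx})$ built from $(\widehat{\bW},\widehat{\bD}^{\prime})$. Part (c) on the deterministic set $\D_\epsilon$, with $s$ replaced by $Cs$, gives the claim.

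The entry condition $\widehat{\bD}^{\prime h}\in[-1,1]^{m\times m}$ holds whenever $\bD^h_0(\bx)+\widehat{\bD}^h\in[0,1]^{m\times m}$, which covers every use of this lemma in the paper. The same gap affects part (b) of Lemma~\ref{coro:bound-o}, which your Step~1 imitates.
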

\begin{proof}
    Similar to the proof of part $(b)$ of Lemma \ref{coro:bound-o}, we know there exists a finite subset $\D\subset\X$ with $|\D| \le m^{dL}$ such that
    $$\sup_{\bx\in\X}\big\|\widehat{\bV}^\ell_{k}(\bx)\big\|_{op} = \sup_{\bx\in\D}\big\|\widehat{\bV}^\ell_{k}(\bx)\big\|_{op} \text{ for all } 1\le\ell<k\le L.$$
    Then, part $(c)$ of Lemma \ref{lemma:initi_sample} with $p = m^{dL}$ implies that
    \begin{align*}
        \sup_{\bx\in\D}\|\widehat{\bV}^\ell_k(\bx)\|_{op} &\le \Big\|\prod_{h = \ell+1}^{k}\sqrt{\frac{2}{m}} \big(\bD^h_0(\bx) + \widehat{\bD}^h\big)\big(\bW^h(0) + \widehat{\bW}^h\big)\Big\|_{op} \Big\|\sqrt{\frac{2}{m}}\big(\bD^\ell_0(\bx) + \widehat{\bD}^\ell\big)\Big\|_{op} \le \frac{CL}{\sqrt{m}}.
    \end{align*}
    This completes the proof.
\end{proof}

\begin{lemma}\label{lemma:bound_diff_V}
   Let $\delta\in(0,1)$ and $\widehat{\bV}^\ell_L(\bx)$ with  $\ell\in[L]$ be the matrix defined in \eqref{eq:hat_VL}.
    Let $R_{op} \ge 1$ and $s\in [m]$.
    Assume $\|\widehat{\bW}\|_{op,\infty} = \max_{\ell\in[L]}\|\widehat{\bW}^\ell\|_{op} \le {R_{op}}$ and $\sup_{\ell\in[L]}\|\widehat{\bD}^\ell\|_0 \le s$ and $\widehat{\bD}^\ell, \bD^\ell_0(\bx) + \widehat{\bD}^\ell\in[-1,1]^{m\times m}$ for all $\bx\in\X$.
    Suppose $m \ge CL^6\max\{s\log(m),R_{op}^2\}$ and $s \ge CdL\log(m/\delta)$.
    Then, with probability at least $1-\delta$ over the random choice of the initialization $\bW(0)$, there holds for all $\ell\in[L]$
    \begin{align*}
        \sup_{\bx\in\X}\big\|\ba^\top&\big(\widehat{\bV}^\ell_L(\bx) - \bV^\ell_{L,0}(\bx) \big)\big\|_2 \lesssim \frac{L\big(\sqrt{s} + R_{op}\big)}{\sqrt{m}}.     
    \end{align*}
\end{lemma}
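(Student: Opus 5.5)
We write $\widehat{\bV}^\ell_L(\bx)-\bV^\ell_{L,0}(\bx)$ as a telescoping sum over the layers at which the perturbations $\widehat{\bD}^h,\widehat{\bW}^h$ enter. The template is the finite-sample argument in \cite{zou2018stochastic} (their Lemma B.3-type bound, which gives $L(\sqrt{s}+R_{op})/\sqrt{m}$ for training points). We lift it to all of $\X$ exactly as in Lemmas~\ref{coro:bound-o}(b) and \ref{lemma:bound_V}.

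\textbf{Reduction to a finite set.} For fixed perturbations $(\widehat{\bW},\widehat{\bD})$, the map $\bx\mapsto \ba^\top(\widehat{\bV}^\ell_L(\bx)-\bV^\ell_{L,0}(\bx))$ depends on $\bx$ only through the activation pattern $(\bD^1_0(\bx),\ldots,\bD^L_0(\bx))$. By Lemma 32 in \cite{xu2024overparametrized}, this pattern takes at most $m^{dL}$ values. Hence there is a finite $\D\subset\X$ with $|\D|\le m^{dL}$ on which the supremum over $\X$ is attained, for every admissible choice of perturbations. Bounds holding uniformly over $\hat\bx\in\D$ then give the claim with $p=m^{dL}$, so that $\log(pL^2/\delta)\lesssim dL\log(m/\delta)$. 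This is precisely what the hypothesis $s\ge CdL\log(m/\delta)$ is designed to absorb.

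\textbf{Telescoping.} Write $\bA_h=\sqrt{2/m}\,\bD^h_0\bW^h(0)$ and $\widehat{\bA}_h=\sqrt{2/m}(\bD^h_0+\widehat{\bD}^h)(\bW^h(0)+\widehat{\bW}^h)$. Then
\begin{align*}
\widehat{\bV}^\ell_L-\bV^\ell_{L,0}=\sum_{k=\ell+1}^{L}\bA_L\cdots\bA_{k+1}(\widehat{\bA}_k-\bA_k)\widehat{\bA}_{k-1}\cdots\widehat{\bA}_{\ell+1}\sqrt{2/m}(\bD^\ell_0+\widehat{\bD}^\ell)+\bA_L\cdots\bA_{\ell+1}\sqrt{2/m}\,\widehat{\bD}^\ell,
\end{align*}
with the transposition convention of \eqref{eq:VL}. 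Each difference splits as
\[
\widehat{\bA}_k-\bA_k=\sqrt{2/m}\,\widehat{\bD}^k(\bW^k(0)+\widehat{\bW}^k)+\sqrt{2/m}\,\bD^k_0\widehat{\bW}^k .
\]

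\textbf{The $\widehat{\bW}$ terms.} For these we bound the full product in operator norm. The right factor satisfies $\|\widehat{\bA}_{k-1}\cdots\sqrt{2/m}(\cdot)\|_{op}\le CL/\sqrt{m}$ by Lemma~\ref{lemma:bound_V}, and $\|\sqrt{2/m}\,\widehat{\bW}^k\|_{op}\le \sqrt2 R_{op}/\sqrt m$. The left factor, $\ba^\top\bA_L\cdots\bA_{k+1}$ followed by $\bD^k_0$, is a row vector, and its norm must be $O(1)$ rather than $O(L)$. For this we use that $\|\ba^\top\bA_L\cdots\bA_{k+1}\|_2$ concentrates at order $\sqrt m\cdot\sqrt{2/m}=O(1)$, because $\ba$ is a random sign vector independent of the weights. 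The symmetric initialization needs care at layer $L$, where $\ba^\top\bD^L_0\bW^L(0)=0$ by Lemma~\ref{prop:symm=0}. That identity only removes those contributions; it does not enlarge them. Summing the $L$ terms at size $O(R_{op}/\sqrt m)$ each, or accepting an extra $L$ factor from operator norms, gives $\lesssim LR_{op}/\sqrt m$.

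\textbf{The $\widehat{\bD}$ terms (main obstacle).} Bounding $\ba^\top\bA_L\cdots\bA_{k+1}\widehat{\bD}^k$ naively by operator norms gives order $1$, not $\sqrt{s/m}$. We exploit sparsity instead: $\widehat{\bD}^k$ has at most $s$ nonzero entries, so only $s$ coordinates of the vector $\bu_k^\top=\ba^\top\bA_L\cdots\bA_{k+1}\sqrt{2/m}$ survive. Each coordinate of $\bu_k$ is, conditionally on the weights, a Rademacher sum of size $O(\sqrt{\log(\cdot)}/\sqrt m)$. Taking a union bound over $\D$, the $L$ layers, and all supports of size $s$ (about $m^s$ choices) gives
\[
\max_{|J|\le s}\|(\bu_k)_J\|_2\lesssim \sqrt{s\log m/m},
\]
which matches the requirement $m\ge CL^6 s\log m$. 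The subsequent factor $(\bW^k(0)+\widehat{\bW}^k)\widehat{\bA}_{k-1}\cdots$ is $O(L)$ after rescaling by Lemma~\ref{lemma:bound_V}, or more precisely by part (c) of Lemma~\ref{lemma:initi_sample} applied on $\D$. This yields $\lesssim L\sqrt{s}/\sqrt m$ overall, where the careful version follows Lemma B.3 of \cite{zou2018stochastic} verbatim on $\D$. Combining the two families of terms gives the stated bound $L(\sqrt s+R_{op})/\sqrt m$ uniformly over $\bx\in\X$ and $\ell\in[L]$.
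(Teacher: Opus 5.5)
There is a gap. Your telescoping identity is fine, and its top-layer term ($k=L$) is exactly the term the paper bounds. But the proposal does not establish the stated rate. The estimates you give for the remaining terms, added up, produce a larger bound. And the observation that makes those terms irrelevant is one you cite and then set aside.

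Under the symmetric initialization, $\ba^\top\bD^L_0(\bx)\bW^L(0)=0$ for every $\bx\in\X$ (Lemma~\ref{prop:symm=0}). Every left factor $\ba^\top\mathbf{A}_L\cdots\mathbf{A}_{k+1}$ with $k\le L-1$ begins with $\ba^\top\mathbf{A}_L=\sqrt{2/m}\,\ba^\top\bD^L_0(\bx)\bW^L(0)$. So all telescoping terms with $k<L$ vanish identically. So does the bottom term $\ba^\top\mathbf{A}_L\cdots\mathbf{A}_{\ell+1}\sqrt{2/m}\,\widehat{\bD}^\ell$ when $\ell<L$. The identity does not merely "remove contributions at layer $L$"; it removes everything except the $k=L$ term.

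If you instead bound those terms as proposed, the arithmetic does not close. Each $\widehat{\bW}$ term is a left factor of size $O(1)$, times $R_{op}$, times $CL/\sqrt m$ from Lemma~\ref{lemma:bound_V}. Summing over $k$ gives order $L^2R_{op}/\sqrt m$, not $LR_{op}/\sqrt m$. Each $\widehat{\bD}$ term is $\sqrt{s\log m/m}$ times $O(L)$, so the sum is of order $L^2\sqrt{s\log m}/\sqrt m$. Neither the extra $L$ nor the $\sqrt{\log m}$ is absorbed by the hypothesis $m\ge CL^6 s\log m$, which only bounds $m$ from below. The justification you give for these terms also does not apply here. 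Here $\ba$ is not an i.i.d.\ sign vector, since its two halves are antisymmetric. The vectors whose Rademacher concentration you invoke are exactly zero.

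Once you use the collapse, nothing probabilistic remains beyond Lemma~\ref{lemma:bound_V} and $\|\bW^L(0)\|_{op}\le c_0\sqrt m$; this is how the paper argues.
\begin{itemize}
\item For $\ell=L$, the difference is $\sqrt{2/m}\,\ba^\top\widehat{\bD}^L$. Its norm is at most $\sqrt{2s/m}$ deterministically, because $|a_r|=1$ and $\|\widehat{\bD}^L\|_0\le s$.
\item For $\ell<L$, the difference equals
\begin{align*}
\ba^\top\sqrt{2/m}\,\widehat{\bD}^L\big(\bW^L(0)+\widehat{\bW}^L\big)\widehat{\bV}^\ell_{L-1}(\bx)+\ba^\top\sqrt{2/m}\,\bD^L_0(\bx)\widehat{\bW}^L\widehat{\bV}^\ell_{L-1}(\bx).
\end{align*}
Its norm is at most $\sqrt{2s/m}\,(c_0\sqrt m+R_{op})\,CL/\sqrt m+\sqrt{2}\,R_{op}\,CL/\sqrt m$. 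This is $\lesssim L(\sqrt s+R_{op})/\sqrt m$, using $R_{op}\le\sqrt m$.
\end{itemize}
No concentration over $\ba$, no union bound over supports, and no separate reduction to a finite set are needed. The activation-pattern argument is already inside Lemma~\ref{lemma:bound_V}. This is also why the lemma's probability statement involves only $\bW(0)$ and not $\ba$.
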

\begin{proof}
For the case $\ell =L$, according to definitions of $\widehat{\bV}^\ell_L(\bx) $ and $\bV^\ell_{L,0}(\bx)$  we know     \begin{align}\label{eq:sparse-s}      \big\|\ba^\top(\widehat{\bV}^L_L(\bx) - \bV^L_{L,0}(\bx))\big\|_2 = \sqrt{\frac{2}{m}}\big\|\ba^\top\widehat{\bD}^L\big\|_2 \lesssim \frac{\sqrt{s}}{\sqrt{m}},
    \end{align}
    where the inequality is due to $a_r\in\{-1,1\}$ for $r\in[m]$ and $\|\widehat{\bD}^L\|_0 \le s$.
    This completes the proof of the case $\ell=L$.

 For the case $\ell \in [L-1]$, noting that $\ba^\top\sqrt{\frac{2}{m}}\bD^L_0(\bx)\bW^L(0) = 0$ (see Lemma~\ref{prop:symm=0}), we know
    \begin{align*}       &\ba^\top\big(\widehat{\bV}^\ell_L(\bx) - \bV^\ell_{L,0}(\bx)\big)\\
    &=  \ba^\top\Big(\sqrt{\frac{2}{m}}\big(\bD^L_0(\bx)\!+\! \widehat{\bD}^L(\bx)\big)\big(\bW^L(0) \!+\! \widehat{\bW}^L\big)\widehat{\bV}^\ell_{L-1}(\bx) \!-\! \sqrt{\frac{2}{m}}\bD^L_0(\bx)\bW^L(0)\bV^\ell_{L-1,0}(\bx)\Big)\\
        &=\ba^\top\sqrt{\frac{2}{m}}\bD^L_0(\bx)\bW^L(0)\big(\widehat{\bV}^\ell_{L-1}(\bx) \!-\! \bV^\ell_{L-1,0}(\bx)\big) + \ba^\top\sqrt{\frac{2}{m}}\widehat{\bD}^L(\bx)\big(\bW^L(0) \!+\! \widehat{\bW}^L\big)\widehat{\bV}^\ell_{L-1}(\bx) + \ba^\top\sqrt{\frac{2}{m}}\bD^L_0(\bx)\widehat{\bW}^L\widehat{\bV}^\ell_{L-1}(\bx)\\
        &= \ba^\top\sqrt{\frac{2}{m}}\widehat{\bD}^L(\bx)\big(\bW^L(0) + \widehat{\bW}^L\big)\widehat{\bV}^\ell_{L-1}(\bx) + \ba^\top\sqrt{\frac{2}{m}}\bD^L_0(\bx)\widehat{\bW}^L\widehat{\bV}^\ell_{L-1}(\bx).
    \end{align*} 
    According to Lemma \ref{lemma:bound_V}, we know $\sup_{\bx\in\X}\|\widehat{\bV}^\ell_{L}(\bx)\|_{op} \le \frac{CL}{\sqrt{m}}$.
    Then, there holds    \begin{align*}
      &  \big\| \ba^\top\big(\widehat{\bV}^\ell_L(\bx) - \bV^\ell_{L,0}(\bx) \big)\big\|_2\nonumber\\
        &\le \sqrt{\frac{2}{m}}\big\|\ba^{\!\top} \widehat{\bD}^L\!(\bx)\big\|_2 \!\big\|\bW^L\!(0)\!+\! \widehat{\bW}^L\big\|_{op}\big\|\widehat{\bV}^\ell_{\!L\!-\!1}(\bx)\big\|_{op} \!\!\!+\! \sqrt{\frac{2}{m}}\|\ba\|_2\big\| \bD^L_0(\bx)\big\|_{op}\big\|\widehat{\bW}^L\big\|_{op}\big\|\widehat{\bV}^\ell_{\!L\!-\!1}(\bx)\big\|_{op}\nonumber\\
        &\le  \big\|\ba^\top \widehat{\bD}^L(\bx)\big\|_2 \frac{\sqrt{2}(c_0\sqrt{m}+R_{op})}{\sqrt{m}}\frac{CL}{\sqrt{m}} + \sqrt{2}R_{op}\frac{CL}{\sqrt{m}} \lesssim \frac{L\big(\sqrt{s} + R_{op}\big)}{\sqrt{m}},
    \end{align*}
    where the second inequality used the assumption $\|\widehat{\bW}\|_{op,\infty}\le{R_{op}}$ and $\|\bW^L(0)\|_{op} \le c_0\sqrt{m}$, and the last inequality used \eqref{eq:sparse-s} and $R_{op} \le  \sqrt{m}$ by noting $m \ge CL^6R_{op}^2$.
    This completes the proof of the lemma.
\end{proof}

\begin{lemma}[Claim 11.2 and Proposition 11.3 in \cite{allen2019convergence}]\label{lem:o-o'}
   For any $\bW ,\widetilde{\bW}\in \mathcal{B}_{R_{op}}(\bW(0))$ and $\bx\in\X$,
   there exist a series of diagonal matrices $\{(\bD'')^\ell\in \R^{m\times m}\}_{\ell\in[L]}$ with entries in $[-1,1]$ such that for any $\ell\in[L]$, there holds
   \begin{enumerate}[label=(\alph*), leftmargin=*]
       \item $ o^\ell(\bx) - \tilde{o}^\ell(\bx) = \sum_{h=1}^\ell \Big[\prod_{j=h+1}^\ell \sqrt{\frac{2}{m}}\big( \widetilde{\bD}^j(\bx) + (\bD'')^j \big)\widetilde{\bW}^j \Big] \sqrt{\frac{2}{m}}\big( \widetilde{\bD}^h(\bx) + (\bD'')^h \big)\big( \bW^h - \widetilde{\bW}^h\big) o^{h-1}(\bx). $
       \item $\big\|(\bD'')^\ell\big\|_0\le \big\|\bD^\ell(\bx) - \widetilde{\bD}^\ell(\bx)\big\|_0$ and $\widetilde{\bD}^\ell(\bx) + (\bD'')^\ell$ has entries in $[0,1]$.
   \end{enumerate} 
\end{lemma}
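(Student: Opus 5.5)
The plan is a telescoping argument over layers: compare the two forward passes through a mixed sequence of activation patterns, and represent each ReLU difference by a diagonal ``interpolating'' matrix. Fix $\bx\in\X$, $\bW,\widetilde{\bW}\in\mathcal{B}_{R_{op}}(\bW(0))$, and write $\tilde o^\ell(\bx)$ and $\widetilde{\bD}^\ell(\bx)$ for the quantities built from $\widetilde{\bW}$. The argument is purely deterministic: no initialization property is used, and the ball is only the ambient set.

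First, the scalar mean-value device for ReLU. For reals $a,b$ I define
\[
\theta=\frac{\sigma(a)-\sigma(b)}{a-b}\in[0,1] \quad\text{if } a\neq b,
\]
and $\theta=\mbI\{b\ge0\}$ if $a=b$. Then $\sigma(a)-\sigma(b)=\theta(a-b)$. Moreover $\theta=\mbI\{b\ge 0\}$ whenever $\mbI\{a\ge0\}=\mbI\{b\ge0\}$, since on a common side $\sigma$ is linear with slope $0$ or $1$. At layer $\ell$ I apply this coordinatewise with $a=\langle \bw^\ell_r,o^{\ell-1}(\bx)\rangle_2$ and $b=\langle\tilde\bw^\ell_r,\tilde o^{\ell-1}(\bx)\rangle_2$. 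I set $(\bD'')^\ell_{rr}=\theta_r-\widetilde{\bD}^\ell_{rr}(\bx)$, so that $\widetilde{\bD}^\ell+(\bD'')^\ell=\mathrm{diag}(\theta_r)$ has entries in $[0,1]$. Hence $(\bD'')^\ell$ has entries in $[-1,1]$. It vanishes on every coordinate where the sign patterns of $\bW$ and $\widetilde{\bW}$ agree, i.e. where $\bD^\ell_{rr}(\bx)=\widetilde{\bD}^\ell_{rr}(\bx)$. This gives part (b) at once.

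Next I derive the one-step recursion. Writing $\bW^\ell o^{\ell-1}-\widetilde{\bW}^\ell\tilde o^{\ell-1}=(\bW^\ell-\widetilde{\bW}^\ell)o^{\ell-1}+\widetilde{\bW}^\ell(o^{\ell-1}-\tilde o^{\ell-1})$, I get
\[
o^\ell-\tilde o^\ell=\sqrt{\tfrac{2}{m}}\big(\widetilde{\bD}^\ell+(\bD'')^\ell\big)\Big[(\bW^\ell-\widetilde{\bW}^\ell)o^{\ell-1}+\widetilde{\bW}^\ell(o^{\ell-1}-\tilde o^{\ell-1})\Big].
\]
I then unroll this by induction on $\ell$, starting from $o^0-\tilde o^0=\bx-\bx=0$. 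The $h$-th inhomogeneous term $\sqrt{2/m}(\widetilde{\bD}^h+(\bD'')^h)(\bW^h-\widetilde{\bW}^h)o^{h-1}$ is carried forward by the product $\prod_{j=h+1}^\ell\sqrt{2/m}(\widetilde{\bD}^j+(\bD'')^j)\widetilde{\bW}^j$. This yields exactly formula (a), with the product ordered with $j=\ell$ on the left.

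The only mildly delicate point is making sure a single sequence of $(\bD'')^\ell$ serves simultaneously for all $\ell$ and satisfies the sparsity claim. This holds because each $(\bD'')^\ell$ is defined from the layer-$\ell$ pre-activations alone, which are fixed once $\bW,\widetilde{\bW},\bx$ are fixed. The tie case $a=b$ is what the convention for $\theta$ handles, so that $(\bD'')^\ell$ is still zero where the indicators agree. Otherwise the argument is routine.
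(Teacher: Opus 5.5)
Your proof is correct and takes the same route as the source the paper cites without reproducing (Allen-Zhu et al.): your divided-difference slope $\theta_r$ is the coordinatewise diagonal representation of $\sigma(a)-\sigma(b)$ from their Proposition 11.3 (it is forced whenever $a\neq b$), and unrolling the one-step recursion from $o^0-\tilde o^0=0$ is their Claim 11.2. Your tie convention for $a=b$ and your sparsity argument for part (b) are both sound.
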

 The following lemma shows that the neural network is almost linear in terms of its weights and the loss is locally almost smooth near the initialization. 
\begin{lemma}\label{lem:almost-convex}
  Assume $R_{op} \ge 1$ and $m \ge CL^{22}d^3R_{op}^2\log^3(m/\delta)$.
  For any $\bW , \widetilde{\bW}\in \mathcal{B}_{R_{op}}(\bW(0))$,  with probability at least $1-L\exp(\O(dL\log(m)) - \Omega(m^{\frac{1}{3}})) - \delta$ over initialization $(\ba,\bW(0))$, for any $z = (\bx,y)\in\Z$, there holds
\begin{align}\label{eq:linear_approx} 
    \Big|f_{\widetilde{\bW}}(\bx) - f_{\bW }(\bx) - \Big\langle \frac{\partial f_{\bW }(\bx)}{\partial \bW}, \widetilde{\bW}-\bW \Big\rangle_2\Big| \lesssim L^{\frac{7}{3}}\|\widetilde{\bW}-\bW\|_{op,\infty}R_{op}^{\frac{1}{3}}m^{-\frac{1}{6}},
    \end{align}
    \begin{align}\label{eq:semi-smth}
         l(\widetilde{\bW};z) -  l(\bW;z) \ge \Big\langle \frac{\partial l(\bW ;z)}{\partial \bW } , \widetilde{\bW}-\bW \Big\rangle_2 - |f_{\bW}(\bx) - y|\cdot \epsilon,
    \end{align}
    with $\epsilon \lesssim L^{\frac{7}{3}}\|\widetilde{\bW}-\bW\|_{op,\infty}R_{op}^{\frac{1}{3}}m^{-\frac{1}{6}}$, 
    and   \begin{align}\label{eq:diff_derivative}
        \Big\|\frac{\partial f_{\bW}(\bx)}{\partial \bW^\ell} - \frac{\partial f_{\bW(0)}(\bx)}{\partial \bW^\ell(0)}\Big\|_{2} \lesssim L^{\frac{4}{3}}R_{op}^{\frac{1}{3}}m^{-\frac{1}{6}}.
    \end{align}
\end{lemma}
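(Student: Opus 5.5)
Following Allen-Zhu et al., I will prove the three estimates in Lemma~\ref{lem:almost-convex} from three earlier ingredients: the sign-flip sparsity of Lemma~\ref{lem:flip}, the perturbed backward bounds of Lemmas~\ref{lemma:bound_V} and \ref{lemma:bound_diff_V}, and the forward bound of Lemma~\ref{lemma:o-o_0}. Throughout I work on the intersection of the good events of these lemmas, with $R_{op}$ as in the statement. Lemma~\ref{lem:flip} then gives, uniformly in $\bx$ and for every $\bW\in\mathcal{B}_{R_{op}}(\bW(0))$,
\[
\|\bD^\ell(\bx)-\bD^\ell_0(\bx)\|_0\le s:=(LmR_{op})^{2/3}.
\]
The width condition $m\gtrsim L^{22}d^3R_{op}^2\log^3(m/\delta)$ ensures $m\ge CL^6 s\log m$ and $s\ge CdL\log(m/\delta)$. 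Hence Lemmas~\ref{lemma:bound_V} and \ref{lemma:bound_diff_V} apply with this $s$. The key quantity is
\[
\frac{L(\sqrt{s}+R_{op})}{\sqrt m}\lesssim L^{4/3}R_{op}^{1/3}m^{-1/6}.
\]

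\textbf{Gradient difference \eqref{eq:diff_derivative}.} I write $\partial_\ell f_\bW(\bx)=\bV^\ell_L(\bx)\ba\,(o^{\ell-1}(\bx))^\top$ and view $\bV^\ell_L(\bx)$ as $\widehat{\bV}^\ell_L(\bx)$ with $\widehat\bW=\bW-\bW(0)$ and $\widehat\bD^h=\bD^h(\bx)-\bD^h_0(\bx)$. I then split
\[
\partial_\ell f_\bW-\partial_\ell f_{\bW(0)}=(\bV^\ell_L-\bV^\ell_{L,0})\ba\,(o^{\ell-1})^\top+\bV^\ell_{L,0}\ba\,(o^{\ell-1}-o^{\ell-1}_0)^\top .
\]
The Frobenius norm of a rank-one matrix is the product of the vector norms. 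The first piece is therefore bounded by Lemma~\ref{lemma:bound_diff_V} times $\sup\|o^{\ell-1}\|_2\le C$, which follows from Lemma~\ref{coro:bound-o}(a) together with Lemma~\ref{lemma:o-o_0}; this gives $L^{4/3}R_{op}^{1/3}m^{-1/6}$. For the second piece, $\bV^\ell_{L,0}\ba=0$ when $\ell<L$ by Lemma~\ref{prop:symm=0}. For $\ell=L$ we have $\|\bV^L_{L,0}\ba\|_2\le\sqrt2$ and $\|o^{L-1}-o^{L-1}_0\|_2\lesssim L^2R_{op}/\sqrt m$ by Lemma~\ref{lemma:o-o_0}, which is dominated by the first piece.

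\textbf{Linearization \eqref{eq:linear_approx}.} This is the main step. Applying Lemma~\ref{lem:o-o'}(a) at the top layer, with $\widetilde\bW$ as the reference weight, expresses $f_{\widetilde\bW}(\bx)-f_\bW(\bx)=\ba^\top(\tilde o^L-o^L)$ as a telescoping sum over $h$ of terms $\ba^\top\widehat{\bV}^h_L(\bW^h-\widetilde\bW^h)o^{h-1}$. Here $\widehat\bV^h_L$ is built from $\bD^j+(\bD'')^j$. Lemma~\ref{lem:o-o'}(b) and Lemma~\ref{lem:flip} give $\|(\bD'')^j\|_0\le 2s$ relative to $\bD^j_0$. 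The gradient term $\langle\partial f_\bW,\widetilde\bW-\bW\rangle$ is the same sum with $\bV^h_L$ in place of $\widehat\bV^h_L$. Each summand of the difference is then bounded by
\[
\|\ba^\top(\widehat\bV^h_L-\bV^h_L)\|_2\,\|\widetilde\bW^h-\bW^h\|_{op}\,\|o^{h-1}\|_2 .
\]
By the triangle inequality through $\bV^h_{L,0}$ and two uses of Lemma~\ref{lemma:bound_diff_V}, the first factor is $\lesssim L^{4/3}R_{op}^{1/3}m^{-1/6}$. Summing over $h\in[L]$ yields $L^{7/3}\|\widetilde\bW-\bW\|_{op,\infty}R_{op}^{1/3}m^{-1/6}$.

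I expect the technical obstacle to be checking that the hybrid products with mixed $\bW,\widetilde\bW$ and $\bD''$ still fit the hypotheses of Lemma~\ref{lemma:bound_diff_V}. Concretely, one must verify that the perturbations have operator norm at most $2R_{op}$ and sparsity at most $2s$, and that $\bD_0+\widehat\bD\in[-1,1]$ holds uniformly in $\bx$. The constants then absorb the factors of $2$.

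\textbf{Loss inequality \eqref{eq:semi-smth}.} Convexity of $u\mapsto\frac12(y-u)^2$ gives
\[
l(\widetilde\bW;z)-l(\bW;z)\ge (f_\bW(\bx)-y)\bigl(f_{\widetilde\bW}(\bx)-f_\bW(\bx)\bigr).
\]
I then substitute \eqref{eq:linear_approx} and use $\partial l=(f_\bW-y)\partial f_\bW$. This produces the stated inequality with $\epsilon$ equal to the linearization error.
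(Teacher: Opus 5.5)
Your proposal is correct and follows the paper's proof essentially step for step. The shared steps are: Lemma~\ref{lem:o-o'} at layer $L$; the triangle inequality through $\ba^\top\bV^h_{L,0}$ with two applications of Lemma~\ref{lemma:bound_diff_V} at sparsity $\lesssim (LmR_{op})^{2/3}$; the bound $\|o^{h-1}\|_2\lesssim 1$; summing over the $L$ layers; and convexity of the square loss for \eqref{eq:semi-smth}.

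There are only two harmless deviations. First, in Lemma~\ref{lem:o-o'} the products are built from $\widetilde{\bD}^j+(\bD'')^j$ and $\widetilde{\bW}^j$, not from $\bD^j+(\bD'')^j$; this is exactly what makes the trailing vector $o^{h-1}$ common to both sums. Second, for \eqref{eq:diff_derivative} you use $\bV^\ell_{L,0}\ba=0$ for $\ell<L$, where the paper simply bounds $\|\ba^\top\bV^\ell_{L,0}\|_2\lesssim L$ via Lemma~\ref{lemma:bound_V}.
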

\begin{proof}
We first prove that the neural network $f$ is almost linear in terms of its weights near the initialization. From the definition of $f$, we know
\begin{align*}
    &\Big|f_{\widetilde{\bW}}(\bx) - f_{\bW }(\bx) - \Big\langle \frac{\partial f_{\bW }(\bx)}{\partial \bW}, \widetilde{\bW}-\bW \Big\rangle_2\Big|\nonumber\\
    &=\Big| \ba^\top \tilde{o}^{L}(\bx) - \ba^\top o^L(\bx) - \sum_{\ell=1}^L \ba^\top \bigg[\prod_{h=\ell+1}^L \sqrt{\frac{2}{m}}\bD^h(\bx) \bW^h \bigg] \sqrt{\frac{2}{m}}\bD^\ell(\bx)\big( \widetilde{\bW}^\ell-\bW^\ell \big) o^{\ell-1}(\bx)\Big|,
\end{align*}
where we used the conventional notation $\prod_{L+1}^L = \bfI$.

Lemma~\ref{lem:o-o'} with $\ell=L$ implies there exist a series of diagonal matrices $\{(\bD'')^\ell\in \R^{m\times m}\}_{\ell\in[L]}$ with entries in $[-1,1]$ such that
\begin{align*}
    o^L(\bx) -  \tilde{o}^L(\bx)  =  \sum_{\ell=1}^L  \bigg[ \prod_{h=\ell+1}^L  \sqrt{\frac{2}{m}}\big( \widetilde{\bD}^h(\bx) +  (\bD'')^h \big)\widetilde{\bW}^h \bigg] \sqrt{\frac{2}{m}}\big( \widetilde{\bD}^\ell(\bx)  +  (\bD'')^\ell \big)\big( \bW^\ell- \widetilde{\bW}^\ell\big) o^{\ell-1}(\bx).
\end{align*}
Hence,
\begin{align}\label{eq:lemma:semismth}
    &\Big|f_{\widetilde{\bW}}(\bx) - f_{\bW }(\bx) - \Big\langle \frac{\partial f_{\bW }(\bx)}{\partial \bW}, \widetilde{\bW}-\bW \Big\rangle_2\Big|\nonumber\\
    &\le \sum_{\ell=1}^L\Big| \ba^\top  \bigg[\prod_{h=\ell+1}^L \sqrt{\frac{2}{m}}\big( \widetilde{\bD}^h(\bx) + (\bD'')^h \big)\widetilde{\bW}^h \bigg] \sqrt{\frac{2}{m}}\big( \widetilde{\bD}^\ell(\bx) + (\bD'')^\ell \big)\big(  \widetilde{\bW}^\ell - \bW^\ell\big) o^{\ell-1}(\bx) \nonumber\\
    &\qquad\quad  -  \ba^\top \bigg[\prod_{h=\ell+1}^L \sqrt{\frac{2}{m}}\bD^h(\bx) \bW^h \bigg] \sqrt{\frac{2}{m}}\bD^\ell(\bx)\big( \widetilde{\bW}^\ell-\bW^\ell \big) o^{\ell-1}(\bx)\Big|\nonumber\\
    &=: \sum_{\ell=1}^L\Big| U_\ell^L(\bx) \big(  \widetilde{\bW}^\ell - \bW^\ell\big) o^{\ell-1}(\bx)\Big|\le \sum_{\ell=1}^L\|U^L_\ell(\bx)\|_{2}\big\|\widetilde{\bW}^\ell - \bW^\ell\big\|_{op}\|o^{\ell-1}(\bx)\|_2,
\end{align}
where $U_\ell^L(\bx) = \ba^\top\big[\prod_{h=\ell+1}^L \sqrt{\frac{2}{m}}\big( \widetilde{\bD}^h(\bx) + (\bD'')^h \big)\widetilde{\bW}^h \big] \sqrt{\frac{2}{m}}\big( \widetilde{\bD}^\ell(\bx) + (\bD'')^\ell \big) - \ba^\top\big[\prod_{h=\ell+1}^L\sqrt{\frac{2}{m}}\bD^h(\bx)\bW^h\big]\sqrt{\frac{2}{m}}\bD^\ell(\bx)$.

We first consider estimating the term $\|U_\ell^L(\bx)\|_2$. 
We begin by showing that $\widetilde{\bD}^\ell(\bx) + (\bD'')^\ell,\widetilde{\bD}^\ell(\bx) + (\bD'')^\ell - \bD^\ell_0(\bx)\in[-1,1]^{m\times m}$ for all $\ell\in[L]$ and $\bx\in\X$.
Indeed, according to part $(b)$ of Lemma \ref{lem:o-o'}, we know $\widetilde{\bD}^\ell(\bx) + (\bD'')^\ell\in[0,1]^{m\times m}$.
Then, there holds $\widetilde{\bD}^\ell(\bx) + (\bD'')^\ell - \bD^\ell_0(\bx)\in[-1,1]^{m\times m}$ by noting $\bD^\ell_0(\bx)\in\{0,1\}^{m\times m}$.

Note $\bW,\widetilde{\bW}\in\mathcal{B}_{R_{op}}(\bW(0))$, then Lemma \ref{lem:flip} implies that $\|\bD^\ell(\bx) - \bD^\ell_0(\bx)\|_0, \|\widetilde{\bD}^\ell(\bx) - \bD^\ell_0(\bx)\|_0 \lesssim (LmR_{op})^{\frac{2}{3}}$ with probability at least $1 - L\exp(\O(dL\log(m)) - \Omega(m^{\frac{1}{3}})) - \delta$ over initialization $(\ba,\bW(0))$ for all $\ell\in[L]$.
Then, from part $(b)$ of Lemma \ref{lem:o-o'}, we know
\begin{align*}
     \big\|\widetilde{\bD}^\ell(\bx) + (\bD'')^h - \bD^\ell_0(\bx)\big\|_0  
    &\le \big\|\widetilde{\bD}^\ell(\bx) - \bD^\ell_0(\bx)\big\|_0 + \big\|\bD''\big\|_0 \le \big\|\widetilde{\bD}^\ell(\bx) - \bD^\ell_0(\bx)\big\|_0 + \big\|\widetilde{\bD}^\ell(\bx) - \bD^\ell(\bx)\big\|_0\\
    &\le 2\big\|\widetilde{\bD}^\ell(\bx) - \bD^\ell_0(\bx)\big\|_0 + \big\|\bD^\ell(\bx) - \bD^\ell_0(\bx)\big\|_0 \lesssim (LmR_{op})^{\frac{2}{3}} .
\end{align*}
Setting $s = (LmR_{op})^{\frac{2}{3}}$, the condition $m \gtrsim L^{22}d^3R_{op}^2\log^3(m/\delta)$ implies the conditions $m \gtrsim L^6\max\{s\log(m),R_{op}^2\}$ and $s \gtrsim dL\log(m/\delta)$ in Lemma \ref{lemma:bound_diff_V}.
Then, by further noting that $\bW , \widetilde{\bW} \in \calB_{R_{op}}(\bW(0))$, and $\widetilde{\bD}^\ell(\bx) + (\bD'')^\ell, \widetilde{\bD}^\ell(\bx) + (\bD'')^\ell - \bD^\ell_0(\bx)\in[-1,1]^{m\times m}$ and $\bD^\ell(\bx),\bD^\ell(\bx) - \bD^\ell_0(\bx)\in[-1,1]^{m\times m}$, we apply Lemma \ref{lemma:bound_diff_V} twice with $s = (LmR_{op})^{\frac{2}{3}}$, $\widehat{\bW}^\ell = \widetilde{\bW}^\ell - \bW^\ell(0), \widehat{\bD}^\ell(\bx) = \widetilde{\bD}^\ell(\bx) + (\bD'')^\ell - \bD^\ell_0(\bx)$ and $\widehat{\bW}^\ell= \bW^\ell - \bW^\ell(0), \widehat{\bD}^\ell(\bx)=\bD^\ell(\bx) - \bD^\ell_0(\bx)$, respectively, and there holds
\begin{align*}
    \big\|U^L_\ell(\bx)\big\|_{2} &\le \bigg\|\ba^\top\Big[\prod_{h=\ell+1}^L \sqrt{\frac{2}{m}}\big( \widetilde{\bD}^h(\bx) + (\bD'')^h \big)\widetilde{\bW}^h \Big] \sqrt{\frac{2}{m}}\big( \widetilde{\bD}^\ell(\bx) + (\bD'')^\ell \big) - \ba^\top V^\ell_{L,0}(\bx)\bigg\|_2\\
    &\qquad + \Big\|\ba^\top V^\ell_{L,0}(\bx) - \ba^\top\Big[\prod_{h=\ell+1}^L \sqrt{\frac{2}{m}}\bD^h(\bx) \bW^h \Big] \sqrt{\frac{2}{m}}\bD^\ell(\bx)\Big\|_2\\
    &\lesssim \frac{L\big((LmR_{op})^{\frac{1}{3}} + R_{op}\big)}{\sqrt{m}} \lesssim L^{\frac{4}{3}}R_{op}^{\frac{1}{3}}m^{-\frac{1}{6}},
\end{align*}
where the last inequality used $R_{op} \le (LmR_{op})^{\frac{1}{3}}$ by noting $m \gtrsim CL^{22}d^3R_{op}^2\log^3(m/\delta)$.

The term $\|o^\ell(\bx)\|_2$ can be controlled by using part $(a)$ of Lemma~\ref{coro:bound-o}, Lemma~\ref{lemma:o-o_0} and $m \gtrsim L^{22}d^3R_{op}^2\log^3(m/\delta)$ by
\begin{align*}
    \big\|o^\ell(\bx)\big\|_2 \le \big\|o^\ell(\bx)- o^\ell_0(\bx)\big\|_2 + \big\|o^\ell_0(\bx)\big\|_2 \lesssim \frac{\ell L R_{op}}{\sqrt{m}}  + C \lesssim C.
\end{align*}
Plugging the above two inequalities back into \eqref{eq:lemma:semismth}, we obtain
\begin{align}\label{eq:flinear}
    \Big|f_{\widetilde{\bW}}(\bx) - f_{\bW }(\bx) - \Big\langle \frac{\partial f_{\bW }(\bx)}{\partial \bW}, \widetilde{\bW}-\bW \Big\rangle_2\Big| &\lesssim L^{\frac{4}{3}}\sum_{\ell=1}^L\big\|\widetilde{\bW}^\ell-\bW^\ell\big\|_{op}R_{op}^{\frac{1}{3}}m^{-\frac{1}{6}}\lesssim L^\frac{7}{3}\big\|\widetilde{\bW}-\bW\big\|_{op,\infty}R_{op}^{\frac{1}{3}}m^{-\frac{1}{6}}.
\end{align}
The first part of the lemma is proved.

Now, we show the loss $ l$ is locally almost smooth near the initialization. 
    From the convexity of $ l(\bW;z)$ (with respect to $f_{\bW}$), we know
\begin{align*}
     l(\widetilde{\bW};z ) -  l(\bW ;z )&\ge \frac{\partial l(\bW ;z)}{\partial f_{\bW }} \big(f_{\widetilde{\bW}}(\bx) - f_{\bW }(\bx)\big) =  \big(f_{\bW }(\bx) - y \big) \big(f_{\widetilde{\bW}}(\bx)- f_{\bW }(\bx)\big). 
\end{align*}
Then, according to the chain rule, we get
\begin{align}\label{eq:weaklyconvex}
      l(\widetilde{\bW};z ) -  l(\bW;z ) 
    & \ge  \big(f_{\bW }(\bx)- y \big) \Big(f_{\widetilde{\bW}}(\bx) -f_{\bW }(\bx) - \Big\langle \frac{\partial f_{\bW }(\bx)}{\partial \bW}, \widetilde{\bW}-\bW  \Big\rangle_2 + \Big\langle \frac{\partial f_{\bW }(\bx)}{\partial \bW}, \widetilde{\bW}-\bW  \Big\rangle_2\Big)\nonumber\\
    &= \big(f_{\bW }(\bx)- y \big) \Big\langle \frac{\partial f_{\bW }(\bx)}{\partial \bW}, \widetilde{\bW}-\bW \Big\rangle_2 + \big(f_{\bW }(\bx)- y \big)  \Big(f_{\widetilde{\bW}}(\bx) - f_{\bW }(\bx)   - \Big\langle \frac{\partial f_{\bW }(\bx)}{\partial \bW}, \widetilde{\bW}-\bW  \Big\rangle_2\Big)\nonumber\\
    &\ge\Big\langle \frac{\partial  l(\bW ;z )}{\partial \bW}, \widetilde{\bW}-\bW  \Big\rangle_2 - \big|f_{\bW }(\bx)- y\big|\Big|f_{\widetilde{\bW}}(\bx) - f_{\bW }(\bx) - \Big\langle \frac{\partial f_{\bW }(\bx)}{\partial \bW}, \widetilde{\bW}-\bW \Big\rangle_2\Big|. 
\end{align}

Plugging \eqref{eq:flinear} back into \eqref{eq:weaklyconvex}, there holds
\begin{align*}      l(\widetilde{\bW};z ) -  l(\bW;z ) \ge\Big\langle \frac{\partial  l(\bW ;z )}{\partial \bW}, \widetilde{\bW}-\bW  \Big\rangle_2 - |f_{\bW}(\bx) - y|\cdot \epsilon
\end{align*}
with $\epsilon \lesssim L^{\frac{7}{3}}\|\widetilde{\bW}-\bW\|_{op,\infty}R_{op}^{\frac{1}{3}}m^{-\frac{1}{6}}.$
The second part of the lemma is proved.

Finally, we turn to prove the last part of the lemma. From the above estimates we already know $\|o^\ell(\bx) - o^\ell_0(\bx)\|_2 \lesssim \ell LR_{op} m^{-\frac{1}{2}}$, $\|o^\ell(\bx)\|_2 \lesssim C$ and $\big\|\ba^\top V^\ell_{L,0}(\bx) - \ba^\top[\prod_{h=\ell+1}^L \sqrt{\frac{2}{m}}\bD^h(\bx) \bW^h ] \sqrt{\frac{2}{m}}\bD^\ell(\bx)\big\|_2  \lesssim L^\frac{4}{3}R_{op}^{\frac{1}{3}}m^{-\frac{1}{6}}$ for all $\ell\in[L]$ and $\bx\in\X$.
Then, combining these estimates with Lemma \ref{lemma:bound_V}, there holds
\begin{align*}
        &\Big\|\frac{\partial f_{\bW}(\bx)}{\partial \bW^\ell} - \frac{\partial f_{\bW(0)}(\bx)}{\partial \bW^\ell(0)}\Big\|_{2}\\
        &= \Big\|o^{\ell-1}(\bx) \ba^\top\Big[\prod_{h=\ell+1}^L \sqrt{\frac{2}{m}}\bD^h(\bx) \bW^h \Big] \sqrt{\frac{2}{m}}\bD^\ell(\bx) -  o^{\ell-1}_0(\bx) \ba^\top V^\ell_{L,0}(\bx)\Big\|_{2}\\
        &\le \big\|o^{\ell-1}(\bx)\big\|_2\Big\|\ba^\top\Big[\prod_{h=\ell+1}^L \sqrt{\frac{2}{m}}\bD^h(\bx) \bW^h \Big] \sqrt{\frac{2}{m}}\bD^\ell(\bx) - \ba^\top V^\ell_{L,0}(\bx)\Big\|_2 + \big\|o^{\ell-1}(\bx) - o_0^{\ell-1}(\bx)\big\|_2   \big\|\ba^\top V^\ell_{L,0}(\bx)\big\|_2\\
        &\lesssim L^{\frac{4}{3}}R_{op}^{\frac{1}{3}}m^{-\frac{1}{6}} +  \ell L^2R_{op}m^{-\frac{1}{2}}\lesssim L^{\frac{4}{3}}R_{op}^{\frac{1}{3}}m^{-\frac{1}{6}},
    \end{align*}
    where the last inequality used $\ell L^2R_{op}m^{-\frac{1}{2}}\lesssim L^{\frac{4}{3}}R_{op}^{\frac{1}{3}}m^{-\frac{1}{6}}$ by noting $m \gtrsim L^{22}d^3R_{op}^2\log^3(m/\delta)$.
The proof is completed.
\end{proof}

\subsection{Proofs for Gradient Descent}\label{sec:proof-GD}
In this subsection, we give all proofs for GD. 
Sections~\ref{proof:f-flin} and \ref{proof:flin-gm} present the proofs of Propositions~\ref{pro:f-flin} and \ref{pro:flin-gm}, respectively. 
Sections~\ref{proof:gm-gT} and \ref{proof:gT-frho} present the proofs of Propositions~\ref{pro:gm-gT} and \ref{pro:gT-frho}, respectively. 
Section~\ref{proof:GD-results} provides detailed proofs for Theorem~\ref{thm:excess-relu} and Corollary~\ref{cor:relu}.

\subsubsection{Proof of Proposition~\ref{pro:f-flin}}\label{proof:f-flin}
For notational convenience,  define
 $\bfnc_{\bW(k)} = (f_{\bW(k)}(\bx_1),\ldots,f_{\bW(k)}(\bx_n))^\top\in\R^n$ and $\by=(y_1,\ldots,y_n)^\top\in\R^n.$ The following lemma shows that the trajectory of GD during the training process is always near the initialization. Note that we make no assumption on the data distribution and the NTK Gram matrix.  
\begin{lemma}\label{lemma:wk-w0}
Let $\delta\in(0,1)$ and $\{\bW(k)\}$ be produced by \eqref{eq:update-GD} with $\eta \le 1/5$.
Assume \eqref{eq:m_condition} holds. 
Then, with probability at least $1-L\exp(\O(dL\log(m)) - \Omega(m^{\frac{1}{3}})) - \delta$ over initialization $(\ba,\bW(0))$, for any $k\in[T]$, there holds 
\[ \big\| \bW(k) - \bW(0) \big\|_{op,\infty}^2 \le \big\|\bW(k) - \bW(0) \big\|_2^2 \le 4\eta k \]
and
\[\|\bfnc_{\bW(k)} - \by\|_2 \le 2\|\bfnc_{\bW(0)} - \by\|_2. \]
\end{lemma}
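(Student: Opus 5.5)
We argue by induction on $k$, with the induction hypothesis that for all $j\le k$ we have $\|\bW(j)-\bW(0)\|_2^2\le 4\eta j$ and $\|\bfnc_{\bW(j)}-\by\|_2\le 2\|\bfnc_{\bW(0)}-\by\|_2$. Throughout we fix $R_{op}=2\sqrt{\eta T}\ge 1$. Every iterate in the induction then lies in $\mathcal{B}_{R_{op}}(\bW(0))$, because $\|\cdot\|_{op,\infty}\le\|\cdot\|_2$ (operator norm is bounded by Frobenius norm). Under \eqref{eq:m_condition} the width condition $m\gtrsim L^{22}d^3R_{op}^2\log^3(m/\delta)$ of Lemma~\ref{lem:almost-convex} holds, so we may use \eqref{eq:semi-smth} and \eqref{eq:linear_approx} on a single high-probability event that is uniform over the ball. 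By Lemma~\ref{prop:symm=0}, $f_{\bW(0)}\equiv 0$, hence $\|\bfnc_{\bW(0)}-\by\|_2\le\sqrt n$ and $\L_S(\bW(0))\le 1/2$.

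\textbf{Distance bound.} Apply \eqref{eq:semi-smth} with $\widetilde\bW=\bW(0)$ and $\bW=\bW(k)$ and average over $i$:
\[
\langle\partial\L_S(\bW(k)),\bW(k)-\bW(0)\rangle_2\le \L_S(\bW(k))-\L_S(\bW(0))+\epsilon_k\,\tfrac1n\textstyle\sum_i|f_{\bW(k)}(\bx_i)-y_i|,
\]
where $\epsilon_k\lesssim L^{7/3}R_{op}^{4/3}m^{-1/6}$. Expanding $\|\bW(k+1)-\bW(0)\|_2^2$ gives
\[
\|\bW(k)-\bW(0)\|_2^2-2\eta\langle\partial\L_S,\bW(k)-\bW(0)\rangle+\eta^2\|\partial\L_S(\bW(k))\|_2^2 .
\]
To control the last term, note that $\|\partial\L_S(\bW)\|_2^2\lesssim \L_S(\bW)$: each per-sample gradient has norm $O(1)$ on the ball, by Lemma~\ref{lem:almost-convex}\eqref{eq:diff_derivative} together with Lemma~\ref{coro:bound-o}(c). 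We also need $\L_S$ to stay bounded along the trajectory. For that, combine the almost-linearity \eqref{eq:linear_approx} with the bounded gradients to get a descent inequality $\L_S(\bW(k+1))\le \L_S(\bW(k))-\eta(1-c\eta)\|\partial\L_S\|^2+\text{err}$, where $\eta\le1/5$ absorbs the quadratic term. Telescoping then yields $\|\bW(k+1)-\bW(0)\|_2^2\le 2\eta\sum_{j\le k}(\L_S(\bW(0))+\text{small})+\ldots\le 4\eta(k+1)$. The error terms are of order $\eta k\,\epsilon\sqrt{\L_S}$, and \eqref{eq:m_condition} makes them $\le \eta$ per step.

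\textbf{Residual bound.} From the descent inequality, $\L_S(\bW(k))\le \L_S(\bW(0))+\sum_j\eta\,\text{err}_j\le 2\L_S(\bW(0))$, which is exactly $\|\bfnc_{\bW(k)}-\by\|_2\le\sqrt2\,\|\bfnc_{\bW(0)}-\by\|_2\le 2\|\bfnc_{\bW(0)}-\by\|_2$.

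The main obstacle is the descent (almost-smoothness) step. Lemma~\ref{lem:almost-convex} only gives a lower bound of semiconvex type, so an upper quadratic bound has to be derived from \eqref{eq:linear_approx}: write $f_{\bW(k+1)}-f_{\bW(k)}=-\eta\langle\partial f,\partial\L_S\rangle+O(\epsilon\,\eta\|\partial\L_S\|)$ and expand the square loss exactly. We must then check that the accumulated errors $T\eta\,L^{7/3}R_{op}^{1/3}m^{-1/6}$ are $\ll 1$. The exponents in \eqref{eq:m_condition} ($m\gtrsim L^{22}(\eta T)^7$) appear to be tailored to make this work.
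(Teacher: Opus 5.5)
Your proposal is correct in substance and uses the same skeleton as the paper: a joint induction on the distance and residual bounds, \eqref{eq:semi-smth} with $\widetilde{\bW}=\bW(0)$ for the distance, and \eqref{eq:linear_approx} plus a contraction for the residual. There is one sign error to fix and one unnecessary detour.

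\textbf{Sign error.} Your displayed consequence of \eqref{eq:semi-smth} points the wrong way. What that inequality gives, and what the expansion of $\|\bW(k+1)-\bW(0)\|_2^2$ needs, is
\[
\big\langle \partial \L_S(\bW(k)),\, \bW(k)-\bW(0)\big\rangle_2 \ge \L_S(\bW(k)) - \L_S(\bW(0)) - \epsilon_k\, \tfrac{1}{n}\textstyle\sum_{i} |f_{\bW(k)}(\bx_i)-y_i| .
\]

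\textbf{Detour in the distance step.} With the correct sign, the distance step needs no control of $\L_S$ along the trajectory. On the ball, $\|\partial f_{\bW(k)}(\bx_i)\|_2^2\le 5$, so $\eta^2\|\partial\L_S(\bW(k))\|_2^2\le 10\eta^2\L_S(\bW(k))\le 2\eta\L_S(\bW(k))$ for $\eta\le 1/5$. This is cancelled exactly by the $-2\eta\L_S(\bW(k))$ term, which is what the paper does. The residual hypothesis is then used only to get $\frac1n\sum_i|f_{\bW(k)}(\bx_i)-y_i|\le 2$ in the error term, and each step adds at most $\eta+2\eta<4\eta$. Your route through a separate loss bound also closes, either by telescoping the descent inequality or because your bound $\L_S(\bW(k))\le 2\L_S(\bW(0))$ is sharper than the lemma's. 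With only the lemma's factor $2$ on $\|\bfnc_{\bW(k)}-\by\|_2$, the crude per-step estimate would give $5\eta$.

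\textbf{Residual step.} So the almost-smoothness step you flag as the main obstacle is needed only for the residual. There the paper uses the vector recursion
\[
\bfnc_{\bW(k+1)}-\by=\Big(\bfI-\tfrac{\eta}{n}\bH(k)\Big)\big(\bfnc_{\bW(k)}-\by\big)+\bxi(k),
\]
where $\bH(k)$ is the Gram matrix of gradients at $\bW(k)$. It uses $0\preceq\frac{\eta}{n}\bH(k)\preceq\bfI$ and $\|\bxi(s)\|_2\lesssim L^{7/3}\eta(\eta T)^{1/6}m^{-1/6}\|\by\|_2$, which give $\|\bfnc_{\bW(k+1)}-\by\|_2\le(1+CL^{7/3}(\eta T)^{7/6}m^{-1/6})\|\by\|_2$. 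Your descent inequality for $\L_S$ is the squared form of the same computation. It works for the same reason: the linearization error scales with $\eta\|\partial\L_S\|_2\lesssim\eta\sqrt{\L_S}$, so the accumulated error is relative to $\L_S(\bW(0))$ rather than additive.
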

\begin{proof}
The lemma is proved by induction.  It's obvious that $  \| \bW(k) - \bW(0) \|^2_2 \le 0$ and $\|\bfnc_{\bW(k)} - \by\|_2 \le 2\|\bfnc_{\bW(0)} - \by\|_2$ hold with $k=0$. Assume, for all $t\in [k]$ with $k\le T-1$, $\| \bW(t) - \bW(0) \|^2_2 \le 4\eta t$ and $\|\bfnc_{\bW(t)} - \by\|_2 \le 2\|\bfnc_{\bW(0)} - \by\|_2$ hold.
We will show that  $\| \bW(k+1) - \bW(0) \|^2_2 \le 4\eta(k+1)$ and $\|\bfnc_{\bW(k+1)} - \by\|_2 \le 2\|\bfnc_{\bW(0)} - \by\|_2$.

    From the update rule \eqref{eq:update-GD}, we know
    \begin{align}\label{eq:wk-w0}
       &\big\|\bW(k+1) - \bW(0)  \big\|_2^2  = \Big\|\bW(k) - \bW(0) -  \frac{\eta}{n} \sum_{i=1}^n \frac{\partial  l(\bW(k);z_i)}{\partial \bW(k)} \Big\|_2^2\nonumber\\
       &\le \big\|\bW(k) - \bW(0)\big\|_2^2 +  \eta^2 \Big(\frac{1}{n}\sum_{i=1}^n\Big\|(f_{\bW(k)}(\bx_i) - y_i)\frac{\partial f_{\bW(k)}(\bx_i)}{\partial{\bW(k)}}\Big\|_2\Big)^2   +  \frac{2\eta}{n} \Big\langle \bW(0) -  \bW(k),  \sum_{i=1}^n \frac{\partial  l(\bW(k);z_i)}{\partial \bW(k)}    \Big\rangle_2\nonumber\\
       &\le \big\|\bW(k) - \bW(0)\big\|_2^2 + \frac{\eta^2\|\bfnc_{\bW(k)} - \by\|_2^2}{n^2}\sum_{i=1}^n\Big\|\frac{\partial f_{\bW(k)}(\bx_i)}{\partial{\bW(k)}}\Big\|_2^2 + \frac{2\eta}{n} \Big\langle \bW(0) -  \bW(k),  \sum_{i=1}^n \frac{\partial  l(\bW(k);z_i)}{\partial \bW(k)}   \Big\rangle_2 \nonumber\\
       &= \big\|\bW(k) - \bW(0)\big\|_2^2 + \frac{2\eta^2L_S(\bW(k))}{n}\sum_{i=1}^n\Big\|\frac{\partial f_{\bW(k)} (\bx_i)}{\partial{\bW(k)}}\Big\|_2^2 \nonumber\\
       & \le \frac{2\eta}{n} \Big\langle \bW(0) - \bW(k),  \sum_{i=1}^n \frac{\partial  l(\bW(k);z_i)}{\partial \bW(k)}   \Big\rangle_2,
    \end{align}
    where in the last inequality we have used the Cauchy-Schwarz inequality, and in the last equality we have used $\frac{\|\bfnc_{\bW(k)} - \by\|_2^2}{n}= 2L_S(\bW(k)).$

Now, we turn to control $\|\frac{\partial f_{\bW(k)} (\bx_i)}{\partial{\bW(k)}}\|_2$ and $\frac{2\eta}{n}\big\langle \bW(0)\!-\! \bW(k),  \sum_{i=1}^n \frac{\partial  l(\bW(k);z_i)}{\partial \bW(k)} \big\rangle_2$.
Setting $R_{op} = 2\sqrt{\eta T}$.
By the induction assumption, there holds $\bW(k) \in\calB_{R_{op}}(\bW(0))$.
Then from Lemma \ref{prop:symm=0} (if $\ell<L$) and part $(c)$ of Lemma \ref{coro:bound-o} (if $\ell=L$) and  \eqref{eq:diff_derivative} in Lemma \ref{lem:almost-convex} with $R_{op} = 2\sqrt{\eta T}$ and $\bW = \bW(k)$, there holds
\begin{align}\label{eq:bound_partial_wk}
    \Big\|\frac{\partial f_{\bW(k)} (\bx_i)}{\partial{\bW(k)}}\Big\|_2 &\le   \Big\|\frac{\partial f_{\bW(k)} (\bx_i)}{\partial{\bW(k)}}- \frac{\partial f_{\bW(0)} (\bx_i)}{\partial{\bW(0)}}\Big\|_2 +\Big\|\frac{\partial f_{\bW(0)} (\bx_i)}{\partial{\bW^L(0)}}\Big\|_2 \nonumber\\
    &\le \sqrt{L}\max_{\ell\in[L]}\Big\|\frac{\partial f_{\bW(k)} (\bx_i)}{\partial{\bW^\ell(k)}}- \frac{\partial f_{\bW(0)} (\bx_i)}{\partial{\bW^\ell(0)}}\Big\|_2 + 2   \le \epsilon_3 + 2
\end{align}
with $\epsilon_3 \lesssim L^\frac{7}{3}(\eta T)^{\frac{1}{6}}m^{-\frac{1}{6}}$. 

According to \eqref{eq:semi-smth} with $R_{op} = 2\sqrt{\eta T}$, $\bW = \bW(k)$, $\widetilde{\bW} = \bW(0)$ and $\|\bW(k)-\bW(0)\|_{op,\infty} \le 2\sqrt{\eta T}$, we know
\begin{align*}
    \frac{2\eta}{n} \Big\langle   \bW(0)- \bW(k),   \sum_{i=1}^n   \frac{\partial  l(\bW(k); z_i)}{\partial \bW(k)}   \Big\rangle_2  \le  2\eta\big(L_S(\bW(0)) - L_S(\bW(k))\big)  +  2\eta \epsilon_2 \sum_{i=1}^n \frac{|f_{\bW(k)}(\bx_i) -y_i|}{n},
\end{align*}
with $\epsilon_2 \lesssim L^{\frac{7}{3}}(\eta T)^{\frac{2}{3}}m^{-\frac{1}{6}}.$

Plugging the above two estimates back into \eqref{eq:wk-w0}, we get
\begin{align*}
     &\big\|\bW(k+ 1) - \bW(0)\big\|_2^2 \\
    &\le \big\|\bW(k) - \bW(0)\big\|_2^2 + 2\eta^2( \epsilon_3 + 2)^2 L_S(\bW(k))  + 2\eta\big(L_S(\bW(0))  - L_S(\bW(k))\big) + 2\eta \epsilon_2\sum_{i=1}^n\frac{1}{n}|f_{\bW(k)}(\bx_i) - y_i|\\
    &\le \big\|\bW(k) - \bW(0)\big\|_2^2 + 2\eta^2( \epsilon_3 + 2)^2 L_S(\bW(k))  + 2\eta\big(L_S(\bW(0))  - L_S(\bW(k))\big) + 2\eta\epsilon_2\frac{1}{\sqrt{n}}\|\bfnc_{\bW(k)} - \by\|_2\\
    &\le \big\|\bW(k) - \bW(0)\big\|_2^2 + 2\eta^2( \epsilon_3 + 2)^2 L_S(\bW(k))  + 2\eta\big(L_S(\bW(0))  - L_S(\bW(k))\big) + 4\eta\epsilon_2\\
    &\le \big\|\bW(k) - \bW(0)\big\|_2^2 + 2\eta L_S(\bW(k))\big( \eta(\epsilon_3 + 2)^2 - 1 \big) + \eta + 4\eta\epsilon_2\\
    &\le \big\|\bW(k) - \bW(0)\big\|_2^2 + 2\eta L_S(\bW(k))\big( 5\eta - 1 \big) + \eta + 2\eta \le 4\eta k + \eta + 2\eta < 4\eta (k+1),
\end{align*}
where in the second inequality we have used Cauchy-Schwarz inequality, and in the third inequality we have used the induction assumption $\|\bfnc_{\bW(k)} - \by\|_2 \le 2\|\bfnc_{\bW(0)} - \by\|_2 = 2\|\by\|_2 \le 2\sqrt{n}$ by noting $f_{\bW(0)} = 0$, and in the last third inequality we have used $L_S(\bW(0)) = \frac{1}{2n}\sum_{i=1}^ny_i^2 \le \frac{1}{2}$, and the last second inequality used $\epsilon_2 \le \frac{1}{2}$ and $\epsilon_3 \le \sqrt{5} - 2$ by condition \eqref{eq:m_condition}, and the last inequality follows from the induction assumption and $\eta \le \frac{1}{5}$ and $L_S(\bW(k)) \ge 0$.

Now, we turn to estimate $\|\bfnc_{\bW(k+1)} - \by\|_2$.
Let $\xi_i(k) = f_{\bW(k+1)}(\bx_i) - f_{\bW(k) }(\bx_i) - \big\langle \frac{\partial f_{\bW(k) }(\bx_i)}{\partial \bW(k)}, \bW(k+1)-\bW(k) \big\rangle_2$, there holds for all $i\in[n]$ that
\begin{align*}
     f_{\bW(k+1)}(\bx_i) - y_i  &= f_{\bW(k+1)}(\bx_i) - f_{\bW(k)}(\bx_i) + f_{\bW(k)}(\bx_i) - y_i= \Big\langle \frac{\partial f_{\bW(k) }(\bx_i)}{\partial \bW(k)}, \bW(k+1)-\bW(k) \Big\rangle_2 + \xi_i(k) + f_{\bW(k)}(\bx_i) - y_i\\
    &= -\frac{\eta}{n}\sum_{j=1}^n \Big\langle \frac{\partial f_{\bW(k) }(\bx_i)}{\partial \bW(k)}, \frac{\partial f_{\bW(k) }(\bx_j)}{\partial \bW(k)} \Big\rangle_2(f_{\bW(k)}(\bx_j) - y_j) + \xi_i(k) + f_{\bW(k)}(\bx_i) - y_i,
\end{align*}
where in the last equality we used the update rule \eqref{eq:update-GD}.
Define the matrix $\bH(k)\in\R^{n\times n}$ by $(\bH(k))_{i,j}:= \big\langle \frac{\partial f_{\bW(k) }(\bx_i)}{\partial \bW(k)}, \frac{\partial f_{\bW(k) }(\bx_j)}{\partial \bW(k)} \big\rangle_2$.
Denote $\bxi(k) = (\xi_1(k),\ldots,\xi_n(k))^\top\in\R^n$.
Then, the above observation implies 
\begin{align*}
    \bfnc_{\bW(k+1)} - \by &= \bfnc_{\bW(k)} - \by - \frac{\eta}{n}\bH(k)(\bfnc_{\bW(k)} - \by) + \bxi(k)\\
    &= \Big(\bfI - \frac{\eta}{n}\bH(k)\Big)(\bfnc_{\bW(k)} - \by) + \bxi(k).
\end{align*}
Applying the above equality recursively, we get
\begin{align*}
    \bfnc_{\bW(k+1)} - \by = \sum_{s=0}^k\prod_{u=s+1}^{k}\Big(\bfI - \frac{\eta}{n}\bH(u)\Big)\bxi(s) - \prod_{s=0}^k\Big(\bfI - \frac{\eta}{n}\bH(s)\Big)\by,
\end{align*}
where we used the conventional notation $\prod_k^{k-1} = \bfI.$ Then, there holds
\begin{align}\label{eq:fwk-y}
    \|\bfnc_{\bW(k+1)} - \by\|_2 \le  \sum_{s=0}^k\prod_{u=s+1}^{k}\Big\|\bfI - \frac{\eta}{n}\bH(u)\Big\|_{op}\|\bxi(s)\|_2 + \prod_{s=0}^k\Big\|\bfI - \frac{\eta}{n}\bH(s)\Big\|_{op}\|\by\|_2.
\end{align}

Now, we turn to estimate $\|\bfI - \frac{\eta}{n}\bH(s)\|_{op}$ and $\|\bxi(k)\|_2$.
According to \eqref{eq:bound_partial_wk} and \eqref{eq:m_condition}, there holds $\epsilon_3 + 2 \le \sqrt{5}$.
By further noting that $\eta \le \frac{1}{5}$, we know for all $s\in[k]$
\begin{align*}
\Big\|\frac{\eta}{n}\bH(s)\Big\|_{op}^2 &\le \Big\|\frac{\eta}{n}\bH(s)\Big\|_{2}^2 = \frac{\eta^2}{n^2}\sum_{i,j=1}^n\Big\langle \frac{\partial f_{\bW(k) }(\bx_i)}{\partial \bW(k)}, \frac{\partial f_{\bW(k) }(\bx_j)}{\partial \bW(k)} \Big\rangle_2^2 \\
&\le \frac{\eta^2}{n^2}\sum_{i,j=1}^n\Big\| \frac{\partial f_{\bW(k) }(\bx_i)}{\partial \bW(k)}\Big\|^2_2 \Big\|\frac{\partial f_{\bW(k) }(\bx_j)}{\partial \bW(k)} \Big\|_2^2 \le 25\eta^2 \le 1.
\end{align*}
Since $\frac{\eta}{n}\bH(s)$ is a PSD matrix whose operator norm is not larger than $1$, then $\|\bfI - \frac{\eta}{n}\bH(s)\|_{op} \le 1$.

Note we already showed that $\bW(s+1)\in\calB_{R_{op}}(\bW(0))$ with $R_{op} = 2\sqrt{\eta T}$ for all $s\le k$.
From \eqref{eq:linear_approx} in Lemma \ref{lem:almost-convex}, we get
\begin{align*}
    \|\bxi(s)\|_2 &= \Big(\sum_{i=1}^n \xi_i(s)^2\Big)^{\frac{1}{2}} \lesssim L^{\frac{7}{3}}\sqrt{n}\big\|\bW(s+1) - \bW(s)\big\|_{op,\infty}(\eta T)^{\frac{1}{6}}m^{-\frac{1}{6}}\\
    &= L^{\frac{7}{3}}\Big\|\frac{\eta}{\sqrt{n}}\sum_{i=1}^n\frac{\partial f_{\bW(s)}(\bx_i)}{\partial \bW(s)}(f_{\bW(s)}(\bx_i) - y_i)\Big\|_{op,\infty}(\eta T)^{\frac{1}{6}}m^{-\frac{1}{6}}\\
    &\lesssim L^{\frac{7}{3}}\Big[\sup_{\ell\in[L]}\frac{\eta}{\sqrt{n}}\sum_{i=1}^n\Big\|\frac{\partial f_{\bW(s)}(\bx_i)}{\partial \bW^\ell(s)}\Big\|_{op}|f_{\bW(s)}(\bx_i) - y_i|\Big](\eta T)^{\frac{1}{6}}m^{-\frac{1}{6}}\\
    &\lesssim L^{\frac{7}{3}}\eta\|\bfnc_{\bW(s)} - \by\|_2\sup_{\ell\in[L],i\in[n]}\Big\|\frac{\partial f_{\bW(s)}(\bx_i)}{\partial \bW^\ell(s)}\Big\|_{op}(\eta T)^{\frac{1}{6}}m^{-\frac{1}{6}}\\
    &\lesssim L^{\frac{7}{3}}\eta\|\bfnc_{\bW(s)} - \by\|_2(\eta T)^{\frac{1}{6}}m^{-\frac{1}{6}} \lesssim L^{\frac{7}{3}}\eta\|\bfnc_{\bW(0)} - \by\|_2(\eta T)^{\frac{1}{6}}m^{-\frac{1}{6}},
\end{align*}
where the last third inequality used Cauchy-Schwarz inequality, and in the last second inequality we have used \eqref{eq:bound_partial_wk} with $\epsilon_3+2\lesssim C$ by noting \eqref{eq:m_condition}, and in the last inequality we have used the induction assumption $\|\bfnc_{\bW(s)} - \by\|_2 \le 2\|\bfnc_{\bW(0)} - \by\|_2$.

Plugging the estimates $\|\bfI - \frac{\eta}{n}\bH(s)\|_{op} \le 1$ and the above inequality back into \eqref{eq:fwk-y}, and noting the condition \eqref{eq:m_condition} and $f_{\bW(0)} = 0$, there holds
\begin{align*}
    \|f_{\bW(k+1)} - \by\|_2 &\le CL^{\frac{7}{3}}\|\bfnc_{\bW(0)} - \by\|_2(\eta T)^{\frac{7}{6}}m^{-\frac{1}{6}} + \|\by\|_2 \\
    &= CL^{\frac{7}{3}}\|\bfnc_{\bW(0)} - \by\|_2(\eta T)^{\frac{7}{6}}m^{-\frac{1}{6}} + \|f_{\bW(0)}  - \by\|_2 \le 2\|\bfnc_{\bW(0)} - \by\|_2.
\end{align*}
This completes the proof of the lemma.
\end{proof}

Based on Lemma~\ref{lemma:wk-w0}, we present the proof of Proposition~\ref{pro:f-flin} as follows. 
\begin{proof}[Proof of Proposition~\ref{pro:f-flin}] 
Setting $R_{op} = 2\sqrt{\eta T}$.
From  Lemma \ref{lemma:wk-w0}, we know with probability at least $1-L\exp(\O(dL\log(m)) - \Omega(m^{\frac{1}{3}})) -\delta$ over initialization $(\ba,\bW(0))$ that  $\bW(k)\in\calB_{R_{op}}(\bW(0))$.
    Then, \eqref{eq:linear_approx} in Lemma \ref{lem:almost-convex} with $\widetilde{\bW} = \bW(k)$ and $\bW = \bW(0)$ implies
    \[\Big|f_{\bW(k)}(\bx) - f_{\bW(0)}(\bx) - \Big\langle \frac{\partial f_{\bW(0)}(\bx)}{\partial \bW(0)}, \bW(k)-\bW(0) \Big\rangle_2\Big| \lesssim L^\frac{7}{3}(\eta T)^{\frac{2}{3}}m^{-\frac{1}{6}}.\]
    Then, there holds
    \[ \big\|f_{\bW(T)} - f^{\text{lin}}_{\bW(T)}\big\|_\rho^2\le   \big\|f_{\bW(T)} - f^{\text{lin}}_{\bW(T)}\big\|_\infty^2 \lesssim \frac{L^\frac{14}{3}(\eta T)^{\frac{4}{3}} }{m^{\frac{1}{3}}} .\]
    Since we assume that we are under the event $\{\|\bW(0)\|_{op,\infty}\le c_0\sqrt{m}\}$, whose probability is at least $1 - L\exp(-Cm)$ according to Lemma \ref{lemma:oprt_norm}.
    By further noting that $1-L\exp(\O(dL\log(m)) - \Omega(m^{\frac{1}{3}})) -\delta \le 1 - L\exp(-Cm)$, the proof is completed.
\end{proof}

\subsubsection{Proof of Proposition~\ref{pro:flin-gm}}\label{proof:flin-gm}

Let $H$ be a separable Hilbert space.
For $f\in H$, we define the operator $f\otimes f:H\to H$ by $(f\otimes f) g = \langle f, g\rangle_H f$.
To estimate  $\big\|  f^{lin}_{\bW(T)}  - \mathbf{S}_m g^m_T \big\|_\rho^2$, we introduce the following useful lemma. 
\begin{lemma}[Lemma 3 in \cite{carratino2018learning}]\label{lemma:<2}
    Let $\lambda > 0$, $\Gamma \in \mathbb{N}$ and $\delta \in (0,1)$. Let $\zeta_1,\dots,\zeta_\Gamma$ be independent and identically distributed random vectors bounded by $\kappa > 0$. Let $Q_\Gamma = \frac{1}{\Gamma}\sum_{i=1}^\Gamma \zeta_i \otimes \zeta_i$ and $Q$ be the expectation of $Q_\Gamma$.
    Then, for any $\lambda \geq \frac{9\kappa^2}{\Gamma} \log \frac{\Gamma}{\delta}$, with probability at least $1-\delta$ over sampling, there holds
$$\| (Q + \lambda \bfI)^{1/2}(Q_\Gamma + \lambda \bfI)^{-1/2}\|_{op}^{2} = \|(Q_\Gamma + \lambda \bfI)^{-1/2} (Q + \lambda \bfI)^{1/2}\|_{op}^{2} \leq 2.$$
\end{lemma}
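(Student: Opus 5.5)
The plan is to reduce the claim to a spectral-norm bound on a normalized deviation operator, and then obtain that bound from a Bernstein inequality for sums of i.i.d. self-adjoint operators.

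\textbf{Step 1: the two quantities coincide.} Put $A=(Q+\lambda\bfI)^{1/2}(Q_\Gamma+\lambda\bfI)^{-1/2}$. Its adjoint is $A^*=(Q_\Gamma+\lambda\bfI)^{-1/2}(Q+\lambda\bfI)^{1/2}$, and $\|A\|_{op}=\|A^*\|_{op}$. This gives the equality in the statement.

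\textbf{Step 2: reduction to a deviation operator.} Define
\[
B=(Q+\lambda\bfI)^{-1/2}(Q-Q_\Gamma)(Q+\lambda\bfI)^{-1/2}.
\]
Then
\[
Q_\Gamma+\lambda\bfI=(Q+\lambda\bfI)^{1/2}(\bfI-B)(Q+\lambda\bfI)^{1/2},
\]
and therefore
\[
\|A^*\|_{op}^2=\|(Q+\lambda\bfI)^{1/2}(Q_\Gamma+\lambda\bfI)^{-1}(Q+\lambda\bfI)^{1/2}\|_{op}=\|(\bfI-B)^{-1}\|_{op}.
\]
It suffices to prove $\|B\|_{op}\le 1/2$ with probability at least $1-\delta$, since then $\|(\bfI-B)^{-1}\|_{op}\le 2$.

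\textbf{Step 3: set up the Bernstein inequality.} Write $B=\frac1\Gamma\sum_{i=1}^\Gamma(\E Y-Y_i)$ with
\[
Y_i=(Q+\lambda\bfI)^{-1/2}(\zeta_i\otimes\zeta_i)(Q+\lambda\bfI)^{-1/2}.
\]
The summands satisfy the following bounds.
\begin{itemize}
\item Each $Y_i$ is positive semidefinite with $\|Y_i\|_{op}\le\kappa^2/\lambda$, so $\|Y_i-\E Y\|_{op}\le\kappa^2/\lambda$.
\item The second moment satisfies $\E Y_i^2\preceq(\kappa^2/\lambda)\,\E Y_i=(\kappa^2/\lambda)\,Q(Q+\lambda\bfI)^{-1}$.
\end{itemize}
I will apply the intrinsic-dimension Bernstein inequality for self-adjoint Hilbert-space operators (Minsker; Tropp, Thm.~7.7.1). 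With probability at least $1-\delta$ it gives
\[
\|B\|_{op}\le \frac{2\kappa^2\beta}{3\lambda\Gamma}+\sqrt{\frac{2\kappa^2\beta}{\lambda\Gamma}},\qquad \beta=\log\frac{4\,tr(Q(Q+\lambda\bfI)^{-1})}{\delta\,\|Q(Q+\lambda\bfI)^{-1}\|_{op}}.
\]

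\textbf{Step 4: plug in the assumption on $\lambda$.} The hypothesis $\lambda\ge 9\kappa^2\Gamma^{-1}\log(\Gamma/\delta)$ gives $\kappa^2/(\lambda\Gamma)\le 1/(9\log(\Gamma/\delta))$. If $\beta\lesssim\log(\Gamma/\delta)$, both terms in Step 3 become absolute constants below $1/2$. For example, $\sqrt{2/9}+2/27<1/2$ when $\beta\le\log(\Gamma/\delta)$.

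\textbf{Step 5: bound $\beta$; this is the main obstacle.} I will use $tr(Q(Q+\lambda\bfI)^{-1})\le tr(Q)/\lambda\le\kappa^2/\lambda$, which is at most $\Gamma$ under the assumption on $\lambda$.
\begin{itemize}
\item If $\|Q\|_{op}\ge\lambda$, then $\|Q(Q+\lambda\bfI)^{-1}\|_{op}\ge 1/2$. The ratio inside the logarithm is then $O(\Gamma/\delta)$, giving $\beta\le\log(8\Gamma/\delta)$. The small constant loss can be absorbed by a slightly refined bookkeeping, as in Carratino et al.
\item If $\|Q\|_{op}<\lambda$, the denominator can be small, so the intrinsic-dimension ratio is not directly controlled. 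I would first replace the variance proxy by the dominating operator $(\kappa^2/\lambda)\bigl(Q(Q+\lambda\bfI)^{-1}+P\bigr)$, where $P$ is a rank-one projector added to force the norm to order one. This keeps the trace at most $\kappa^2/\lambda+1$ and makes the ratio $O(\kappa^2/\lambda)$.
\item Alternatively, in this regime I could bound $\|B\|_{op}\le\|Q-Q_\Gamma\|_{op}/\lambda$ and use a direct Bernstein bound with the same variance proxy.
\end{itemize}
Either route gives $\beta\lesssim\log(\Gamma/\delta)$, which completes the argument through Step 4.
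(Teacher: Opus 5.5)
The paper does not prove this lemma; it imports it from Carratino et al. The standard argument behind that citation is your route: whiten the deviation, then apply an intrinsic-dimension operator Bernstein inequality. Steps 1--3 are on target.

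\textbf{The gap: the constants do not close.} This matters because the statement fixes both $9$ and $2$.
\begin{itemize}
\item In Step 4 you assert $\sqrt{2/9}+2/27<1/2$, but $\sqrt{2/9}+2/27\approx 0.545$. So even granting $\beta\le\log(\Gamma/\delta)$, your bound does not give $\|B\|_{op}\le 1/2$.
\item Step 5 only delivers the weaker $\beta\le\log(8\Gamma/\delta)$. A two-sided bound would also need $8d$ rather than $4d$ inside the logarithm.
\item This is not bookkeeping. With range parameter $\kappa^2/\lambda$, the linear term $\frac{2\kappa^2\beta}{3\lambda\Gamma}$ is about $2/27$ at the threshold. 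When $\lambda\ll\|Q\|_{op}$ and the intrinsic dimension is of order $\kappa^2/\lambda$, the bound really does exceed $1/2$.
\end{itemize}

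\textbf{The missing idea is one-sidedness.} Since $\bfI-B=(Q+\lambda\bfI)^{-1/2}(Q_\Gamma+\lambda\bfI)(Q+\lambda\bfI)^{-1/2}\succ 0$, you only need $\lambda_{\max}(B)\le 1/2$. For this tail, $Y_i\succeq 0$ gives $\E Y-Y_i\preceq \E Y$. So the range parameter is $\|\E Y\|_{op}=\|Q\|_{op}/(\|Q\|_{op}+\lambda)\le 1$ instead of $\kappa^2/\lambda$. Keep the same factor in the variance norm, which is $\frac{\kappa^2}{\lambda\Gamma}\|\E Y\|_{op}$.

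\textbf{Your ``main obstacle'' is vacuous.} If $\lambda\ge\|Q\|_{op}$, then $Q_\Gamma\succeq 0$ gives deterministically $(\bfI-B)^{-1}\preceq\lambda^{-1}(Q+\lambda\bfI)$. Its norm is $1+\|Q\|_{op}/\lambda\le 2$, so no rank-one padding is needed. Your alternative through $\|Q-Q_\Gamma\|_{op}/\lambda$ would in any case face the same uncontrolled ratio $tr(Q)/\|Q\|_{op}$.

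\textbf{The remaining case $\lambda<\|Q\|_{op}$.} Here $\|\E Y\|_{op}>1/2$, so $d\le 2\kappa^2/\lambda\le 2\Gamma/(9\log(\Gamma/\delta))$. Keep this factor rather than bounding $\kappa^2/\lambda$ by $\Gamma$. Then $\beta=\log(4d/\delta)\le\log(\Gamma/\delta)$ as soon as $\log(\Gamma/\delta)\ge 8/9$. With $u=\lambda/\|Q\|_{op}\in(0,1)$, the one-sided Bernstein bound reads
$\lambda_{\max}(B)\le\sqrt{\frac{2}{9(1+u)}}+\frac{2u}{27(1+u)}\le\frac{\sqrt{2}}{3}<\frac12$.

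\textbf{When $\log(\Gamma/\delta)\ge 8/9$ holds.} In this case $9\log(\Gamma/\delta)<\Gamma$, which forces $\log(\Gamma/\delta)\ge 8/9$ once $\Gamma\ge 2$. Some such restriction is unavoidable. Take $\Gamma=1$, $\zeta$ uniform on $\{\kappa e_1,\kappa e_2\}$ with $e_1,e_2$ orthonormal, $\lambda=\kappa^2/4$ and $\delta\ge e^{-1/36}$. Then the hypothesis on $\lambda$ holds, yet the quantity equals $3$ almost surely. This is harmless for the paper, where $\Gamma=n$.
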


Now, we give the estimate of the second term $\big\|  f^{lin}_{\bW(T)}  - \mathbf{S}_m g^m_T \big\|_\rho^2$ as follows.  
\begin{proof}[Proof of Proposition~\ref{pro:flin-gm}]
According to Lemma \ref{lemma:oprt_norm}, we know $\|\bW(0)\|_{op,\infty} \le c_0\sqrt{m}$ holds with probability at least $1- L\exp(-Cm)$ over the random choice of $\bW(0)$.

    Let $F_k = f^{\text{lin}}_{\bW(k) } - g_k^m \in \H_m$ and $\epsilon^1_k = f_{\bW(k+1)}^{\text{lin}} - f_{\bW(k)}^{\text{lin}} + \frac{\eta }{n}  \sum_{i=1}^n (f_{\bW(k) }^{\text{lin}}(\bx_i) - y_i)K^m_{\bx_i} \in\H_m$.
Define the self-adjoint positive operator $\widehat{\bSigma}_m = \frac{1}{n}\sum_{i=1}^n K^m_{\bx_i}\otimes K^m_{\bx_i}:\H_m\to\H_m$. 
From the update rule of $ g_{k}^{m}$ \eqref{eq:update-kernelGD}, we know
\begin{align}\label{eq:recursive_FT}
    F_{k+1}      & = \Big(f_{\bW(k)}^{\text{lin}} \!-\! \frac{\eta }{n}  \sum_{i=1}^n (f_{\bW(k) }^{\text{lin}}(\bx_i) \!-\! y_i)K^m_{\bx_i}   + \epsilon^1_k\Big) - \Big(g^m_k \!-\! \frac{\eta }{n}  \sum_{i=1}^n (g^m_{ k }(\bx_i) \!-\! y_i)K^m_{\bx_i}\Big) \nonumber\\
        &= \big(f_{\bW(k)}^{\text{lin}} - g^m_k\big) - \frac{\eta}{n}\sum_{i=1}^n\big(f_{\bW(k)}^{\text{lin}}(\bx_i) - g^m_k(\bx_i)\big)K^m_{\bx_i} + \epsilon^1_k \nonumber\\
        &= F_k - \frac{\eta }{n}  \sum_{i=1}^n \langle F_k,K^m_{\bx_i}\rangle_{\H_m} K^m_{\bx_i} + \epsilon^1_k
        = \big(\mathbf{I} - \eta \widehat{\bSigma}_m\big)F_k  + \epsilon^1_k, 
\end{align}
where the last second equality follows from the fact $F_k = f^{\text{lin}}_{\bW(k) } - g_k^m \in \H_m$ and the reproducing kernel property that $f_{\bW(k)}^{\text{lin}}(\bx_i) - g^m_k(\bx_i) = \langle f_{\bW(k)}^{\text{lin}} - g^m_k,K^m_{\bx_i}\rangle_{\H_m} = \langle F_k,K^m_{\bx_i}\rangle_{\H_m}$.

Applying the above equality recursively, we get
\begin{equation}\label{eq:recursive-F}
    F_{k+1}= \sum_{s=0}^{k}\big(\mathbf{I} - \eta \widehat{\bSigma}_m\big)^s\epsilon^1_{k-s}.
\end{equation}

Define the mean of $\widehat{\bSigma}_m$ by $\bSigma_m := \ebb[\widehat{\bSigma}_m] = \int_\X K^m_\bx\otimes K^m_\bx d\rho_\bx(\bx):\H_m\to\H_m$. Mercer's Theorem \citep{steinwart2008support} implies $\|\bS_mf\|_\rho = \|\bSigma_m^{\frac{1}{2}}f\|_{\H_m}$ for any $f\in \H_m$.
From Lemmas \ref{prop:symm=0} and \ref{coro:bound-o} we know $\|K^m_{\bx_i}\|_{\H_m} = \sqrt{K^m(\bx_i,\bx_i)} \le \sqrt{\|K^m\|_\infty} \le \sup_{\bx\in\X}\|\frac{\partial f_{\bW(0)}(\bx)}{\partial \bW^L(0)}\|_2 \le 2$.
Therefore, Lemma \ref{lemma:<2}  with $\zeta_i = K^m_{\bx_i}$, $\Gamma = n$ and $\kappa = 2$ yields $\|(\bSigma_m+\lambda \mathbf{I})^{\frac{1}{2}}(\widehat{\bSigma}_m+\lambda \mathbf{I})^{-\frac{1}{2}}\|_{op}\le 2$ with probability at least $1-\delta/2$ over the sampling if $\lambda > \frac{36}{n}\log(\frac{2n}{\delta})$.
Then, according to \eqref{eq:recursive-F}, we get
\begin{align}\label{eq:bound_h_T}
    &\|\bS_mF_{k}\|_{\rho} = \big\|\bSigma_m^{\frac{1}{2}}F_k\big\|_{\H_m} \le \big\|(\bSigma_m+\lambda \mathbf{I})^{\frac{1}{2}}F_k\big\|_{\H_m}\nonumber\nonumber\\
    &\le \big\|(\bSigma_m+\lambda \mathbf{I})^{\frac{1}{2}}(\widehat{\bSigma}_m+\lambda \mathbf{I})^{-\frac{1}{2}}\big\|_{op}\big\|(\widehat{\bSigma}_m+\lambda \mathbf{I})^{\frac{1}{2}}F_k\big\|_{\H_m}
    \le 2\big\|\widehat{\bSigma}_m^{\frac{1}{2}}F_k\big\|_{\H_m} + 2\sqrt{\lambda}\big\|F_k\big\|_{\H_m}\nonumber\\
    &=2\eta^{-\frac{1}{2}}\bigg\|\sum_{s=0}^{k-1}\big(\eta \widehat{\bSigma}_m\big)^\frac{1}{2}\big(\mathbf{I} - \eta \widehat{\bSigma}_m\big)^s\epsilon^1_{k-s-1}\bigg\|_{\H_m} + 2\sqrt{\lambda}\bigg\|\sum_{s=0}^{k-1}\big(\mathbf{I} - \eta \widehat{\bSigma}_m\big)^s\epsilon^1_{k-s-1}\bigg\|_{\H_m}\nonumber\\
    &\le 2\eta^{-\frac{1}{2}}\sum_{s=0}^{k-1}\Big\|\big(\eta \widehat{\bSigma}_m\big)^\frac{1}{2}\big(\mathbf{I} - \eta \widehat{\bSigma}_m\big)^s\Big\|_{op}\big\|\epsilon^1_{k-s-1}\big\|_{\H_m}+2\sqrt{\lambda}\sum_{s=0}^{k-1}\Big\|\big(\mathbf{I} - \eta \widehat{\bSigma}_m\big)^s\Big\|_{op}\big\|\epsilon^1_{k-s-1}\big\|_{\H_m}.
\end{align}

For any $a\in[0,1)$ and any $s\in\mathbb{N}$, it can be easily computed that $\sup_{t\in[0,1]}t^a(1-t)^s \le \big(\frac{a}{a+s}\big)^a$.
Here, we take notation $0^0 = 1$.
From \eqref{eq:bound_Km} and $\eta \le 1/5$ we know $\eta\|\widehat{\bSigma}_m\|_{op} \le \frac{\eta}{n}\sum_{j=1}^n\|K^m_{\bx_i}\otimes K^m_{\bx_i}\|_{op} = \frac{\eta}{n}\sum_{j=1}^n\|K^m_{\bx_i}\|^2_{\H_m} \le \eta \|K^m\|_\infty \le \eta \sup_{\bx\in\X}\|\frac{\partial f_{\bW(0)}(\bx)}{\partial \bW^L(0)}\|_2^2 \le 1$. Then, there holds
\begin{align*}
    &\sum_{s=0}^{k-1}\big\|\big(\eta \widehat{\bSigma}_m\big)^a\big(\mathbf{I} - \eta \widehat{\bSigma}_m\big)^s\big\|_{op} \le \sum_{s=0}^{k-1}\sup_{t\in[0,1]}t^a(1-t)^s \le \sum_{s=0}^{k-1}\Big(\frac{a}{a+s}\Big)^a= 1 + a^a\sum_{s=1}^{k-1}\Big(\frac{1}{a+s}\Big)^a\\
    &\le1 +a^a\sum_{s=1}^{k-1}\int_{s-1}^{s}\Big(\frac{1}{a+x}\Big)^adx = 1 +a^a\int_0^{k-1}\Big(\frac{1}{a+x}\Big)^adx \\
    &\le 1 + \frac{a^a}{1-a}\big((k+a-1)^{1-a} - a^{1-a}\big) \le 1 + \frac{(k+a-1)^{1-a}}{1-a}.
\end{align*}
Combining \eqref{eq:bound_h_T} and the above inequality with $a=\frac{1}{2}$ and $a=0$, respectively,  we have
\begin{align}\label{eq:estimate_F_T}
    \big\|\bS_mF_k\big\|_\rho & \le \bigg(2\eta^{-\frac{1}{2}}\!\sum_{s=0}^{k-1}\Big\|\big(\eta \widehat{\bSigma}_m\big)^\frac{1}{2}\big(\mathbf{I} - \eta \widehat{\bSigma}_m\big)^s\Big\|_{op}\!\!+ 2\sqrt{\lambda}\sum_{s=0}^{k-1}\Big\|\big(\mathbf{I} - \eta \widehat{\bSigma}_m\big)^s\Big\|_{op}\bigg)\!\max_{s\in[k-1]}\|\epsilon^1_{s}\|_{\H_m}\nonumber\\
    &\le \Big(2\eta^{-\frac{1}{2}}\big(1 + 2\sqrt{k}\big) + 2\sqrt{\lambda} k \Big)\max_{s\in[k-1]}\|\epsilon^1_{s}\|_{\H_m}.
\end{align}

It remains to  estimate $\|\epsilon^1_k\|_{\H_m}$. From the definition of $f^{\text{lin}}_{\bW }$ and the update rule of GD \eqref{eq:update-GD}, there holds
    \begin{align*}
        &\epsilon_k^1(\bx) = f^{\text{lin}}_{\bW(k+1) }(\bx)  - f^{\text{lin}}_{\bW(k)} (\bx) + \frac{\eta}{n}\sum_{i=1}^n\big(f^{\text{lin}}_{\bW(k)} (\bx_i) - y_i\big)K^m_{\bx_i}(\bx)\nonumber\\
        &= \Big\langle \frac{\partial f_{\bW(0)}(\bx)}{\partial\bW(0)}, \bW(k+1)- \bW(k) \Big\rangle_2 + \frac{\eta}{n}\sum_{i=1}^n\big(f^{\text{lin}}_{\bW(k)} (\bx_i) - y_i\big)\Big\langle \frac{\partial f_{\bW(0)}(\bx_i)}{\partial\bW(0)}, \frac{\partial f_{\bW(0)}(\bx)}{\partial\bW(0)} \Big\rangle_2 \nonumber\\
        &= \Big\langle \frac{\partial f_{\bW(0)}(\bx)}{\partial\bW^L(0)}, \bW^L(k+1)- \bW^L(k) \Big\rangle_2 + \frac{\eta}{n}\sum_{i=1}^n\big(f^{\text{lin}}_{\bW(k)} (\bx_i) - y_i\big)\Big\langle \frac{\partial f_{\bW(0)}(\bx_i)}{\partial\bW^L(0)}, \frac{\partial f_{\bW(0)}(\bx)}{\partial\bW^L(0)} \Big\rangle_2 \nonumber\\
        & = \frac{\eta}{n}\sum_{i=1}^n\bigg[ - \big(f_{\bW(k)}(\bx_i) -  y_i\big)\Big\langle \frac{\partial f_{\bW(k)}(\bx_i)}{\partial\bW^L(k)}, \frac{\partial f_{\bW(0)}(\bx)}{\partial\bW^L(0)} \Big\rangle_2  + \big(f^{\text{lin}}_{\bW(k)}(\bx_i)  -  y_i\big) \Big\langle \frac{\partial f_{\bW(0)}(\bx_i)}{\partial\bW^L(0)}, \frac{\partial f_{\bW(0)}(\bx)}{\partial\bW^L(0)} \Big\rangle_2\bigg] \nonumber\\
        &= \bigg\langle \frac{\eta}{n}\sum_{i=1}^n  \Big[\big(y_i - f_{\bW(k)}(\bx_i)\big)\frac{\partial f_{\bW(k)}(\bx_i)}{\partial\bW^L(k)} + \big(f^{\text{lin}}_{\bW(k)}(\bx_i) - y_i\big)\frac{\partial f_{\bW(0)}(\bx_i)}{\partial\bW^L(0)}\Big] , \frac{\partial f_{\bW(0)}(\bx)}{\partial\bW^L(0)}\bigg\rangle_2 \nonumber\\
        & =: \Big\langle \Delta(k), \frac{\partial f_{\bW(0)}(\bx)}{\partial\bW^L(0)} \Big\rangle_2,
    \end{align*}
    where the second equality is due to $K^m_{\bx_i}(\bx) = K^m(\bx_i,\bx)$, the third equality used $\frac{\partial f_{\bW(0)}}{\partial\bW^\ell(0)} = 0$ for $\ell\in[L-1]$ according to Lemma \ref{prop:symm=0}, the fourth equality is according to the update rule \eqref{eq:update-GD}, and in the last equality we define
    \begin{align*}
        &\Delta(k) : = \frac{\eta}{n}\sum_{i=1}^n  \Big[\big(y_i - f_{\bW(k)}(\bx_i)\big)\frac{\partial f_{\bW(k)}(\bx_i)}{\partial\bW^L(k)} + \big(f^{\text{lin}}_{\bW(k)}(\bx_i) - y_i\big)\frac{\partial f_{\bW(0)}(\bx_i)}{\partial\bW^L(0)}\Big]\\
        &\!=\! \frac{\eta}{n}\!\sum_{i=1}^n\!  \Big[\big(y_i \!-\! f_{\bW(k)}(\bx_i)\big)\Big(\frac{\partial f_{\bW(k)}(\bx_i)}{\partial\bW^L(k)} \!-\! \frac{\partial f_{\bW(0)}(\bx_i)}{\partial\bW^L(0)}\Big)\!+\! \big(f^{\text{lin}}_{\bW(k)}(\bx_i) \!-\! f_{\bW(k)}(\bx_i)\big)\frac{\partial f_{\bW(0)}(\bx_i)}{\partial\bW^L(0)}\Big].
    \end{align*}
    
    Let $\boldsymbol{\Delta}(k) = (0,\ldots,0,\Delta(k))\in\W$, then we know $\epsilon^1_k(\bx) = \langle \boldsymbol{\Delta}(k),\Phi_m(\bx)\rangle_2$.
    Note that for any $f\in \H_m$,  $
    \|f\|^2_{\H_m} = \inf\big\{\sum_{\ell=1}^L\|\bW^\ell\|_2^2 : \bW \in \W \text{ with } f(\bx) = \big\langle \bW, \Phi_m(\bx)\big\rangle_2\big\}.$ We control $\|\epsilon^1_k\|^2_{\calH_m}$ as follows
    \begin{align}\label{eq:epsilon1}
         \|\epsilon^1_k\|_{\calH_m}& \le \|\Delta(k)\|_2 \le  \frac{\eta}{n} \sum_{i=1}^n   \Big[\big|y_i  -  f_{\bW(k)}(\bx_i)\big|\Big\|\frac{\partial f_{\bW (k)}(\bx_i)}{\partial\bW^L(k)} - \frac{\partial f_{\bW (0)}(\bx_i)}{\partial\bW^L(0)}\Big\|_2  +  \big|f^{\text{lin}}_{\bW (k)}(\bx_i) - f_{\bW (k)}(\bx_i)\big| \Big\|\frac{\partial f_{\bW (0)}(\bx_i)}{\partial\bW^L(0)}\Big\|_2\Big] \nonumber\\
        & \le  \eta\Big(  \frac{\|\bfnc_{\bW (k)} - \by\|_2}{\sqrt{n}}  \sup_{\bx\in\X} \Big\|\frac{\partial f_{\bW\!(k)}(\bx)}{\partial\bW^L(k)} - \frac{\partial f_{\bW (0)}(\bx)}{\partial\bW^L(0)}\Big\|_2  + \frac{1}{n} \sum_{i=1}^n \big|f^{\text{lin}}_{\bW (k)}(\bx_i) - f_{\bW (k)}(\bx_i)\big| \Big\|\frac{\partial f_{\bW (0)}(\bx_i)}{\partial\bW^L(0)}\Big\|_2\Big) \nonumber\\
        & \le  \eta \Big( \frac{\|\bfnc_{\bW(k)}-\by\|_2}{\sqrt{n}} \sup_{\bx\in\X}\Big\|\frac{\partial f_{\bW(k)}(\bx)}{\partial\bW^L(k)}  -  \frac{\partial f_{\bW(0)}(\bx)}{\partial\bW^L(0)}\Big\|_2 + \big\|f^{\text{lin}}_{\bW(k)}  -  f_{\bW(k)}\big\|_\infty\sup_{\bx\in\X}\Big\|\frac{\partial f_{\bW(0)}(\bx)}{\partial\bW^L(0)}\Big\|_2\Big),
    \end{align}
    where in the last second inequality we have used Cauchy-Schwarz inequality.

    From part $(c)$ of Lemma \ref{coro:bound-o}, we know $\sup_{\bx\in\X}\|\frac{\partial f_{\bW(0)}(\bx)}{\partial\bW^L(0)}\|_2 \le 2$.
    Note we choose $R_{op} = 2\sqrt{\eta T}$. From Lemma \ref{lemma:wk-w0} we know $\|\bfnc_{\bW(k)} - \by\|_2 \le 2\|\bfnc_{\bW(0)} - \by\|_2 = 2\|\by\|_2 \le 2\sqrt{n}$ and $\bW(k) \in \calB_{R_{op}}(\bW(0))$ for any $k\in[T]$.
    Combining this with Lemma \ref{lem:almost-convex}, there holds
    \begin{align*}
        \Big\|\frac{\partial f_{\bW(k)}(\bx)}{\partial\bW^L(k)} - \frac{\partial f_{\bW(0)}(\bx)}{\partial\bW^L(0)}\Big\|_2 \lesssim L^\frac{4}{3}\Big(\frac{\eta T}{m}\Big)^{\frac{1}{6}}.
    \end{align*}
    According to Proposition~\ref{pro:f-flin}, we know $\|f_{\bW(k)} - f_{\bW(k)}^{\text{lin}}\|_\infty \lesssim L^\frac{7}{3}(\eta T)^{\frac{2}{3}}m^{-\frac{1}{6}}$.
    Plugging the above observations back into \eqref{eq:epsilon1}, we know with probability at least $1-L\exp(\O(dL\log(m)) - \Omega(m^{\frac{1}{3}})) - \delta/2$ over initialization $(\ba,\bW(0))$, there holds
    \begin{align*}
     \|\epsilon^1_k\|_{\H_m}\lesssim \frac{L^\frac{7}{3}\eta^{\frac{5}{3}}T^{\frac{2}{3}}}{m^{\frac{1}{6}}} .
    \end{align*}
    
Putting the estimate of $\|\epsilon^1_s\|_{\H_m}$ back into \eqref{eq:estimate_F_T} and setting $\lambda = (\eta T)^{-1}$, with a little abuse of notation (we regard $f^{\text{lin}}_{\bW(k)}$ as a function in $\mathcal{L}^2_{\rho_\bx}$ in the following first term), the following inequality holds with probability at least $1-L\exp(\O(dL\log(m)) - \Omega(m^{\frac{1}{3}})) - \delta$ over the initialization $(\ba,\bW(0))$ and sampling
\begin{align*}
    \|f^{\text{lin}}_{\bW(k) } - \bS_mg_k^m\|_\rho&=\|\bS_mF_k\|_\rho \le 7\eta^{-\frac{1}{2}}\sqrt{T}\max_{s\in[T-1]}\|\epsilon^1_s\|_{\H_m} \lesssim \frac{L^\frac{7}{3}(\eta T)^{\frac{7}{6}}}{m^{\frac{1}{6}}}.
\end{align*}
Since we assume that we are under the event $\{\|\bW(0)\|_{op,\infty}\le c_0\sqrt{m}\}$, whose probability is at least $1 - L\exp(-Cm)$ according to Lemma \ref{lemma:oprt_norm}.
Squaring both sides of the above inequality and combining all the high probability events complete the proof of the proposition. 
\end{proof}

\subsubsection{Proof of Proposition~\ref{pro:gm-gT}}\label{proof:gm-gT}

For any $k\in\mathbb{N}$, we denote $\bG^m_k = (g^m_k(\bx_1),\ldots,g^m_k(\bx_n))^\top\in\R^n$, $\bG_k = (g_k(\bx_1),\ldots,g_k(\bx_n))^\top\in\R^n$ and $\by = (y_1,\ldots,y_n)^\top \in\R^n$.
Recall that $\bK = (K(\bx_i,\bx_j))_{i,j=1}^n$ and $\bK^m = (K^m(\bx_i,\bx_j))_{i,j=1}^n$ are the Gram matrices with kernels $K$ and $K^m$, respectively.
The following lemma shows that $\|\bG^m_k - \bG_k\|_2 \to 0$ as $m\to\infty$ for any $k\in[T]$. 
\begin{lemma}\label{prop:gmk-fk}
Let $\delta\in(0,1)$.
    Assume $m\gtrsim dL^3\log(m/\delta)$ and $\eta \le 1/4$.
    Then, with probability at least $1 - \delta$ over the initialization $(\ba,\bW(0))$, for any $k\in[T]$ there holds
    \begin{align*}
        \big\|\bG^m_{k} - \bG_{k}\big\|_2 \le \eta T\sqrt{n}\|K^m-K\|_\infty. 
    \end{align*}
\end{lemma}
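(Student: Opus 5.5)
The plan is to write both kernel GD recursions at the sample points as linear recursions in $\R^n$, then compare them by a telescoping (discrete Duhamel) argument. Evaluating \eqref{eq:update-kernelGD} and \eqref{eq:kernel_GD_HK} at $\bx_1,\ldots,\bx_n$ gives
\begin{align*}
\bG^m_{k+1}-\by &= \Big(\bfI-\frac{\eta}{n}\bK^m\Big)(\bG^m_k-\by), &
\bG_{k+1}-\by &= \Big(\bfI-\frac{\eta}{n}\bK\Big)(\bG_k-\by),
\end{align*}
with $\bG^m_0=\bG_0=0$.

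Subtracting the two recursions yields
\begin{align*}
\bG^m_{k+1}-\bG_{k+1}=\Big(\bfI-\frac{\eta}{n}\bK^m\Big)(\bG^m_k-\bG_k)-\frac{\eta}{n}(\bK^m-\bK)(\bG_k-\by).
\end{align*}
Unrolling from $k=0$ then gives
\begin{align*}
\bG^m_k-\bG_k=-\frac{\eta}{n}\sum_{s=0}^{k-1}\Big(\bfI-\frac{\eta}{n}\bK^m\Big)^{k-1-s}(\bK^m-\bK)(\bG_s-\by).
\end{align*}

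Three bounds are then needed.

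\textbf{Contraction.} $\|\bfI-\frac{\eta}{n}\bK^m\|_{op}\le1$. The matrix $\bK^m$ is PSD with $\|\bK^m\|_{op}\le tr(\bK^m)\le n\|K^m\|_\infty\le 4n$. This uses Lemma~\ref{prop:symm=0} and part $(c)$ of Lemma~\ref{coro:bound-o}, which holds with probability $1-\delta$ under $m\gtrsim dL^3\log(m/\delta)$. Since $\eta\le1/4$, we get $\frac{\eta}{n}\bK^m\preceq\bfI$.

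\textbf{Boundedness of the infinite-width residual.} $\|\bG_s-\by\|_2\le\|\by\|_2\le\sqrt n$. This holds because $\bK$ is PSD with entries bounded by $1$ (Property~\ref{prop:bound-K}), so $\|\bK\|_{op}\le n$ and $\|\bfI-\frac{\eta}{n}\bK\|_{op}\le1$.

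\textbf{Kernel discrepancy.} $\|\bK^m-\bK\|_{op}\le\|\bK^m-\bK\|_F\le n\|K^m-K\|_\infty$, since each entry is bounded by the sup norm.

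Combining the three bounds, each term in the unrolled sum has norm at most $\frac{\eta}{n}\cdot n\|K^m-K\|_\infty\cdot\sqrt n$. Summing over $s<k\le T$ gives $\|\bG^m_k-\bG_k\|_2\le\eta T\sqrt n\|K^m-K\|_\infty$, as claimed.

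There is no real obstacle here. The only point needing care is the contraction bound: it requires the uniform bound $\|K^m\|_\infty\le4$, and this is the source of both the width condition and the $\eta\le1/4$ restriction, as well as the failure probability $\delta$. Everything else is deterministic linear algebra.
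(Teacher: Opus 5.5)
Your proof is correct and follows essentially the same route as the paper. You use the same difference recursion unrolled from $\bG^m_0=\bG_0=0$, the contraction $\|\bfI-\frac{\eta}{n}\bK^m\|_{op}\le 1$ from $\|K^m\|_\infty\le 4$, and the bound $\|\bK^m-\bK\|_{op}\le n\|K^m-K\|_\infty$. Your residual bound $\|\bG_s-\by\|_2=\|(\bfI-\frac{\eta}{n}\bK)^s\by\|_2\le\sqrt n$ is slightly sharper than the paper's $\|\bG_s\|_2+\|\by\|_2\le 2\sqrt n$, and it is what actually gives the stated constant $\eta T\sqrt n$ rather than $2\eta T\sqrt n$.
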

\begin{proof}
    According to \eqref{eq:update-kernelGD} and \eqref{eq:kernel_GD_HK}, we know for any $k\in[T-1]$,
    \begin{align}\label{eq:update-Gmk-Fk}
        \bG^m_{k+1} = \bG^m_k - \frac{\eta}{n}\bK^m(\bG^m_k-\by) \text{ and } \bG_{k+1} = \bG_k - \frac{\eta}{n}\bK(\bG_k-\by).
    \end{align}
    Then, there holds
    \begin{align}
        \bG^m_{k+1} - \bG_{k+1} &= \bG^m_k - \bG_k - \frac{\eta}{n}\big(\bK^m(\bG^m_k-\by) - \bK(\bG_k - \by)\big)\nonumber\\
        &= \bG^m_k - \bG_k - \frac{\eta}{n}\big(\bK^m(\bG^m_k-\bG_k) - (\bK - \bK^m)(\bG_k - \by)\big)\nonumber\\
        &= \Big(\bfI - \frac{\eta}{n}\bK^m\Big)(\bG^m_k - \bG_k) + \frac{\eta}{n}(\bK - \bK^m)(\bG_k - \by)\nonumber.
    \end{align}
    Applying the above equality recursively, we have
    \begin{align}\label{eq:Gmk-Fk}
        \big\|\bG^m_{k+1} - \bG_{k+1}\big\|_2 &= \Big\|\frac{\eta}{n}\sum_{s=0}^k\Big(\bfI - \frac{\eta}{n}\bK^m\Big)^s(\bK-\bK^m)(\bG_{k-s} - \by)\Big\|_2\nonumber\\
        &\le \frac{\eta}{n}\sum_{s=0}^k\Big\|\bfI - \frac{\eta}{n}\bK^m\Big\|_{op}^s\|\bK-\bK^m\|_{op}\|\bG_{k-s} - \by\|_2.
    \end{align}

    From Lemma \ref{prop:symm=0} we know that
    \begin{align}\label{eq:bound_Km}
        \|K^m\|_\infty = \sup_{\bx,\bx'\in\X}\Big|\Big\langle\frac{\partial f_{\bW(0)}(\bx)}{\partial \bW^L(0)},\frac{\partial f_{\bW(0)}(\bx')}{\partial \bW^L(0)}\Big\rangle_2\Big| \le \sup_{\bx\in\X}\Big\|\frac{\partial f_{\bW(0)}(\bx)}{\partial \bW^L(0)}\Big\|_2^2 \le 4.
    \end{align}
    where the last inequality used Lemma \ref{coro:bound-o} and condition $m \gtrsim dL^3\log(m/\delta)$.
    Then, for any $\boldsymbol{\alpha} = (\alpha_1,\ldots,\alpha_n)^\top\in\R^n$ with $\|\boldsymbol{\alpha}\|_2 = 1$, there holds $\boldsymbol{\alpha}^\top\bK^m\boldsymbol{\alpha} = \|\sum_{i=1}^n\alpha_iK^m_{\bx_i}\|^2_{\H_m}\le (\sum_{i=1}^n|\alpha_i|\|K^m_{\bx_i}\|_{\H_m})^2 \le (\sum_{i=1}^n|\alpha_i|\|K^m\|_{\infty}^\frac{1}{2})^2\le 4(\sum_{i=1}^n|\alpha_i|)^2 \le 4n$.
    This implies that $\|\bK^m\|_{op} \le 4n$.
    Since $\eta \le 1/4$ and $\bK^m$ is PSD, we know $\|\bfI - \frac{\eta}{n}\bK^m\|_{op}\le1$.

    Then, there holds
    \begin{align*}
        \|\bK^m-\bK\|_{op} &= \sup_{\|\boldsymbol{\alpha}\|_2 = 1} |\boldsymbol{\alpha}^\top (\bK^m - \bK)\boldsymbol{\alpha}| = \sup_{\|\boldsymbol{\alpha}\|_2 = 1} \Big|\sum_{i,j=1}^n \alpha_i\alpha_j\big(K^m(\bx_i,\bx_j) - K(\bx_i,\bx_j)\big)\Big|\\
        &\le  \|K^m-K\|_\infty\sup_{\|\boldsymbol{\alpha}\|_2 = 1}\sum_{i,j=1}^n |\alpha_i\alpha_j| = \|K^m-K\|_\infty\sup_{\|\boldsymbol{\alpha}\|_2 = 1}\Big(\sum_{i=1}^n |\alpha_i|\Big)\Big(\sum_{j=1}^n|\alpha_j|\Big)\\
        &\le n\|K^m-K\|_\infty.
    \end{align*}    
    Further, from \eqref{eq:update-Gmk-Fk}, we know $\bG_{k} = (\bfI - \frac{\eta}{n}\bK)\bG_{k-1} + \frac{\eta}{n}\bK\by$.
    Recursively applying this equation, we get $\bG_{k} = \frac{\eta}{n}\sum_{s=0}^{k-1}(\bfI - \frac{\eta}{n}\bK)^s\bK\by$.
    Analogous to the estimate of $\|\bfI - \frac{\eta}{n}\bK^m\|_{op}$, we can show that $\|\bK\|_{op} \le n$ and $\|\bfI - \frac{\eta}{n}\bK^m\|_{op}\le 1$ by noting $\|K\|_\infty\le1$ (see Property \ref{prop:bound-K}).
    Then, there holds
    \begin{align}\label{eq:Fk-y}
        \|\bG_k\|_2 &\le \Big\|\sum_{s=0}^{k-1}\Big(\bfI - \frac{\eta}{n}\bK\Big)^s\frac{\eta}{n}\bK\Big\|_{op}\|\by\|_2 \le  \sqrt{n}\sup_{t\in[0,1]}\Big|\sum_{s=0}^{k-1}(1-t)^st\Big|\nonumber\\
        &=\sqrt{n}\sup_{t\in[0,1]}\big(1 - (1-t)^k\big) \le \sqrt{n}.
    \end{align}
    Plugging the above estimates back into \eqref{eq:Gmk-Fk}, we have
    \begin{align*}
        \big\|\bG^m_{k+1} - \bG_{k+1}\big\|_2 &\le \frac{\eta}{n}\sum_{s=0}^k\|\bK^m-\bK\|_{op}\|\big(\|\bG_{k-s}\|_2 + \|\by\|_2\big) \le \eta T\sqrt{n}\|K^m-K\|_\infty,
    \end{align*}
    which completes the proof.
 \end{proof}

Based on the above lemma, we give the proof of Proposition~\ref{pro:gm-gT} as follows. 
    \begin{proof}[Proof of Proposition~\ref{pro:gm-gT}]
    For any $\bx\in\X$ and $k\in[T-1]$, from the definitions we know
    \begin{align*}
         |g^m_{k+1}(\bx) - g_{k+1}(\bx)| 
        & =  \Big|g^m_{k}(\bx) - g_{k}(\bx)- \frac{\eta}{n} \sum_{i=1}^n \big[\big(g^m_k(\bx_i) - g_k(\bx_i)\big)K^m(\bx_i,\bx)  + \big(g_k(\bx_i) - y_i\big)\big(K^m(\bx_i,\bx) - K(\bx_i,\bx)\big)\big]\Big| \nonumber\\
        & \le  \big|g^m_{k}(\bx) - g_{k}(\bx)\big| + \frac{\eta}{n}\sum_{i=1}^n\Big(\|K^m\|_\infty\big|g^m_k(\bx_i) - g_k(\bx_i)\big| + \|K^m-K\|_\infty\big|g_k(\bx_i) - y_i\big|\Big)\nonumber\\
        & \le \big|g^m_{k}(\bx) - g_{k}(\bx)\big| + \frac{\eta}{\sqrt{n}}\big(\|K^m\|_\infty\|\bG^m_k-\bG_k\|_2 + \|K^m-K\|_\infty\|\bG_k-\by\|_2\big),
    \end{align*}
    where the last inequality used Cauchy-Schwarz inequality.
    
    Combining Lemmas \ref{prop:gmk-fk}, \eqref{eq:bound_Km} and \eqref{eq:Fk-y} and with the above observation, we get
    \begin{align*}
        \|g^m_{k+1} - g_{k+1}\|_\infty \le \|g^m_{k} - g_{k}\|_\infty + 6\eta^2T\|K^m-K\|_\infty.
    \end{align*}
    Applying the above inequality recursively and noting that $g^m_{0} = g_{0}$, we have
    \begin{align*}
        \|g^m_{k+1} - g_{k+1}\|_\infty \le 6(\eta T)^2\|K^m-K\|_\infty.
    \end{align*}
    From Lemma \ref{lem:concentrationNTK} and the condition \eqref{eq:m_condition} we know $\|K^m-K\|_\infty \lesssim \frac{\sqrt{L}}{m^{\frac{1}{6}}}$.
Therefore, for any $k\in[T]$ 
    \begin{align*}
        \big\|\bS_mg^m_k - \bS g_k\big\|_\rho^2 &=  \int_\X |g^m_k(\bx) - g_k(\bx)|^2 d\rho_\X(\bx)  \le \|g^m_k - g_k\|_\infty^2 \le  36(\eta T)^4\|K^m-K\|^2_\infty \lesssim \frac{L(\eta T)^4}{m^{\frac{1}{3}}}.
    \end{align*}
  The desired result is obtained by setting $k=T$.
\end{proof}

\subsubsection{Proof of Proposition~\ref{pro:gT-frho}}\label{proof:gT-frho}

To estimate the last term $\big\|\bS g_T - f_{\rho} \big\|_\rho^2$ in \eqref{eq:decom-gd}, we first introduce an intermediate term.
Define the population iteration $h_k$ on $\H_K$ as 
\begin{align}\label{eq:population_iteration}
    h_{k+1} = h_k - \eta \int_\Z \big(\langle h_k, K_\bx\rangle_{\H_K} - y\big)K_\bx d\rho(\bz) \text{ \ with \ $h_0 = 0$.}
\end{align}
If we regard the population risk $\L(\cdot)$ as a functional on $\H_K$, then the population iteration $h_k$ can be viewed as the GD of $\L(\cdot)$ initialized at $h_0=0$.

\begin{lemma}\label{lemma:fh=frho}
    Let $\H$ be the closure of $\H_K$ in $\mathcal{L}^2_{\rho_\bx}$.
    Then, Assumption \ref{ass:frho_smth} implies $f_\rho\in\H$.
\end{lemma}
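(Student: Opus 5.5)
We use the spectral decomposition of $\bL$ from Section~\ref{sec:problem-formulation}. Let $\{(\lambda_i,\Phi_i)\}$ be the eigenpairs of $\bL$, with $\{\Phi_i\}$ an orthonormal basis of $\mathcal{L}^2_{\rho_\bx}$. Assumption~\ref{ass:frho_smth} says that $g:=\bL^{-\beta}f_\rho$ lies in $\mathcal{L}^2_{\rho_\bx}$ with $\|g\|_\rho\le B$. Interpreting this as usual, $f_\rho=\bL^{\beta}g=\sum_{i:\lambda_i>0}\lambda_i^\beta\langle g,\Phi_i\rangle\Phi_i$. In particular $f_\rho$ has no component in $\ker\bL$, so $f_\rho\in\overline{\mathrm{span}}\{\Phi_i:\lambda_i>0\}=\overline{\mathrm{Ran}(\bL)}$.

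It therefore suffices to show $\overline{\mathrm{Ran}(\bL)}\subseteq\H$. We need two standard facts about the inclusion $\bS:\H_K\to\mathcal{L}^2_{\rho_\bx}$.

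\emph{Boundedness.} Since $K$ is bounded by Property~\ref{prop:bound-K}, $|f(\bx)|\le\|f\|_K\sqrt{K(\bx,\bx)}\le\|f\|_K$ for every $f\in\H_K$. Hence $\bS$ is bounded.

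\emph{Adjoint.} By the reproducing property, $\langle \bS h,f\rangle_\rho=\int h(\bx)f(\bx)\,d\rho_\bx=\langle h,\int K_\bx f(\bx)\,d\rho_\bx\rangle_K$. So $\bS^*f=\int K_\bx f(\bx)\,d\rho_\bx\in\H_K$, and $\bL=\bS\bS^*$.

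Consequently $\mathrm{Ran}(\bL)\subseteq\mathrm{Ran}(\bS)=\H_K$ viewed inside $\mathcal{L}^2_{\rho_\bx}$. Taking closures gives $\overline{\mathrm{Ran}(\bL)}\subseteq\H$. Concretely, every eigenfunction $\Phi_i$ with $\lambda_i>0$ equals $\lambda_i^{-1}\bL\Phi_i$, hence belongs to $\H_K$. The partial sums $\sum_{i\le N}\lambda_i^\beta\langle g,\Phi_i\rangle\Phi_i$ are thus in $\H_K$, and they converge to $f_\rho$ in $\mathcal{L}^2_{\rho_\bx}$ because $\sum_i\lambda_i^{2\beta}\langle g,\Phi_i\rangle^2\le\lambda_1^{2\beta}B^2<\infty$. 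This proves $f_\rho\in\H$.

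The only delicate point is handling $\ker\bL$ and the meaning of $\bL^{-\beta}$ when some $\lambda_i=0$. We read Assumption~\ref{ass:frho_smth} as asserting that $f_\rho=\bL^\beta g$ for some $g$ with $\|g\|_\rho\le B$. Under this reading the components of $f_\rho$ along $\ker\bL$ are automatically zero, and the argument above goes through.
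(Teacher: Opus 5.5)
Your proof is correct and follows essentially the same route as the paper: write $f_\rho=\bL^\beta g=\sum_{i:\lambda_i\neq 0}\lambda_i^\beta\langle g,\Phi_i\rangle_{\mathcal{L}^2_{\rho_\bx}}\Phi_i$ and use that every eigenfunction with nonzero eigenvalue lies in $\H_K$. The only difference is that you prove this last fact yourself via $\bL=\bS\bS^*$ and spell out the $\mathcal{L}^2_{\rho_\bx}$-convergence of the partial sums, whereas the paper simply cites Chapter 4.5 of \cite{steinwart2008support}.
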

\begin{proof}
    Note that $\bL$ has the eigen-decomposition $\bL  f=\sum_{i=1}^\infty \lambda_i \langle f, \Phi_i\rangle_{\mathcal{L}^2_{\rho_\bx}} \Phi_i$.
    According to Assumption \ref{ass:frho_smth}, we know there exists a $g\in\mathcal{L}^2_{\rho_\bx}$ such that
    \begin{align*}
        f_\rho = \bL^\beta g = \sum_{i=1}^\infty \lambda_i^\beta\langle g,\Phi_i\rangle_{\mathcal{L}^2_{\rho_\bx}}\Phi_i = \sum_{i:\lambda_i\neq0}^\infty \lambda_i^\beta\langle g,\Phi_i\rangle_{\mathcal{L}^2_{\rho_\bx}}\Phi_i.
    \end{align*}
    Since for any $\lambda_i\neq0$, the associated eigenfunction $\Phi_i\in\H_K$ (see Chapter 4.5 in \cite{steinwart2008support}), we conclude that $f_\rho\in\H$.
\end{proof}
\begin{lemma}\label{lemma:gk-hk}
    Suppose Assumptions \ref{ass:effec_dim} and \ref{ass:frho_smth} hold.
    Assume $\eta \le 1$.
    For any $\delta_1,\delta_2\in(0,1/2)$, assume $\eta T \le n(9\log(n/\delta_2))^{-1}$. 
    Then, the following statements hold with probability at least $1-\delta_1-\delta_2$ over sampling.
    \begin{enumerate}[label=(\alph*), leftmargin=*]
        \item For the case $\beta \ge \frac{1}{2}$, there holds
        \begin{align*}
            \|\bS g_T - \bS h_T\|_\rho \le 4(B+1)(12 + 4\log(T) + \sqrt{2}\eta)\Big(\frac{\sqrt{\eta T}}{n} + \sqrt{\frac{2c_\gamma(\eta T)^\gamma}{n}}\Big)\log\big(\frac{4}{\delta_1}\big).
        \end{align*}
        \item For the case $\beta \in(0,\frac{1}{2})$, there holds
        \begin{align*}
            \|\bS g_T - \bS h_T\|_\rho &\le (12 + 4\log(T) + \sqrt{2}\eta)\bigg(2(6+B)\Big(\frac{\sqrt{\eta T}}{n} + \sqrt{\frac{2c_\gamma(\eta T)^\gamma}{n}}\Big) + \frac{4B\big((\eta T)^{1-\beta} + 1\big)}{n} \bigg)\log\Big(\frac{3T}{\delta_1}\Big).
        \end{align*}
    \end{enumerate}
\end{lemma}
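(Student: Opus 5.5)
We compare the empirical kernel GD iterate $g_T$ with the population iterate $h_T$ in the standard operator framework of \cite{lin2017optimal}. Let $\bS:\H_K\to\mathcal{L}^2_{\rho_\bx}$ be the inclusion with adjoint $\bS^*$. Set $\bSigma=\bS^*\bS=\int K_\bx\otimes K_\bx\,d\rho_\bx$ and $\widehat{\bSigma}=\frac1n\sum_i K_{\bx_i}\otimes K_{\bx_i}$. Define the sample vector $\hat b=\frac1n\sum_i y_iK_{\bx_i}$ and its mean $b=\bS^*f_\rho$. In this notation
\[
g_{k+1}=(\bfI-\eta\widehat{\bSigma})g_k+\eta\hat b, \qquad h_{k+1}=(\bfI-\eta\bSigma)h_k+\eta b .
\]
Subtracting and unrolling the recursion gives
\[
g_T-h_T=\eta\sum_{s=0}^{T-1}(\bfI-\eta\widehat{\bSigma})^{T-1-s}\big[(\hat b-\widehat{\bSigma}h_s)-(b-\bSigma h_s)\big].
\]
We then bound $\|\bS(g_T-h_T)\|_\rho=\|\bSigma^{1/2}(g_T-h_T)\|_K$ by inserting $\lambda=(\eta T)^{-1}$ and the factor $(\bSigma+\lambda\bfI)^{1/2}(\widehat{\bSigma}+\lambda\bfI)^{-1/2}$, exactly as in the proof of Proposition~\ref{pro:flin-gm}.

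\textbf{Operator comparison.} Apply Lemma~\ref{lemma:<2} with $\kappa=1$ (Property~\ref{prop:bound-K}). The hypothesis $\eta T\le n(9\log(n/\delta_2))^{-1}$ is exactly $\lambda\ge\frac9n\log\frac n{\delta_2}$, so with probability $1-\delta_2$ the operator norm of this factor is at most $\sqrt2$. The remaining filter $\eta\sum_s(\widehat{\bSigma}+\lambda)^{1/2}(\bfI-\eta\widehat{\bSigma})^{T-1-s}(\widehat{\bSigma}+\lambda)^{1/2}$ is handled by the scalar estimates $\sup_t t^a(1-t)^s\le(a/(a+s))^a$ already used in the paper. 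Summing over $s$ produces the factor $12+4\log T+\sqrt2\eta$, after which it suffices to bound
\[
\sup_{s<T}\big\|(\bSigma+\lambda\bfI)^{-1/2}\big[(\hat b-\widehat{\bSigma}h_s)-(b-\bSigma h_s)\big]\big\|_K .
\]

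\textbf{Concentration.} For each fixed $s$ the bracket is an average of i.i.d.\ centered $\H_K$-valued variables $\xi_i=(y_i-h_s(\bx_i))K_{\bx_i}$ minus their mean. We apply a Bernstein/Pinelis inequality in Hilbert space. The variance proxy is $\E\|(\bSigma+\lambda)^{-1/2}\xi\|^2\le\|y-h_s\|_\infty^2\,\mathrm{tr}((\bSigma+\lambda)^{-1}\bSigma)\le\|y-h_s\|_\infty^2c_\gamma\lambda^{-\gamma}$ by Assumption~\ref{ass:effec_dim}. The almost-sure bound is $\|y-h_s\|_\infty\lambda^{-1/2}$, which produces the term $\sqrt{\eta T}/n$. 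This yields $\big(\frac{\sqrt{\eta T}}n+\sqrt{\frac{2c_\gamma(\eta T)^\gamma}{n}}\big)\log\frac{4}{\delta_1}$ times a bound on $\|h_s\|_\infty+1$.

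\textbf{Main obstacle: sup-norm bound on $h_s$.} For $\beta\ge\frac12$, Lemma~\ref{lemma:fh=frho} and the source condition give $h_s=\sum_k\eta(\bfI-\eta\bSigma)^k\bS^*f_\rho$. Hence $\|h_s\|_K\le\|\bL^{-1/2}f_\rho\|_\rho\le B$, so $\|h_s\|_\infty\le B$ uniformly in $s$. The concentration step can then be done through a single vector $\hat b-\widehat{\bSigma}h$ uniformly over $s$; if that fails, a union bound over $s$ is used instead. This gives case (a) with constant $4(B+1)$.

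For $\beta<\frac12$, $f_\rho$ need not lie in $\H_K$ and $h_s$ grows. Spectral calculus gives $\|h_s\|_K\lesssim B(\eta s)^{1/2-\beta}$, so $\|h_s\|_\infty\lesssim B(\eta T)^{1/2-\beta}$. Substituting this naively into the bounded term gives $\lambda^{-1/2}(\eta T)^{1/2-\beta}/n=(\eta T)^{1-\beta}/n$, which is the extra $4B((\eta T)^{1-\beta}+1)/n$ in case (b). The variance part must still use $\|y-h_s\|_{\rho}\le 1+B$ rather than the sup norm; otherwise the rate degrades. This forces applying Bernstein separately for each $s$ and taking a union bound, which is the source of $\log(3T/\delta_1)$. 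Balancing the $L^2$ variance against the $L^\infty$ range is the step most likely to need care with constants.
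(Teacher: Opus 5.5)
You are re-deriving the sample-variance estimate from scratch. The paper does not: it checks the hypotheses of Theorem 5 in \cite{lin2017optimal} and reads off the bound. Those checks are $M=v=1$, $\kappa=1$, $\zeta=\beta$, $R=B$, $f_\rho$ lying in the $\mathcal{L}^2_{\rho_\bx}$-closure of $\H_K$ (Lemma~\ref{lemma:fh=frho}), and their (47) from Lemma~\ref{lemma:<2}. Your skeleton is the standard one: unrolled recursion for $g_T-h_T$, the comparison factor from Lemma~\ref{lemma:<2} at $\lambda=(\eta T)^{-1}$, the filter sum, and a Hilbert-space Bernstein bound.

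As a self-contained argument, though, your sketch has a genuine hole in case (b). You claim the variance can be controlled through $\|y-h_s\|_\rho\le 1+B$. But for $\xi=(y-h_s(\bx))(\bSigma+\lambda\bfI)^{-1/2}K_\bx$ the second moment is $\E\big[(y-h_s(\bx))^2\,\|(\bSigma+\lambda\bfI)^{-1/2}K_\bx\|_K^2\big]$. That is the expectation of a product, and it does not factor as $\|y-h_s\|_\rho^2\, tr\big(\bL(\bL+\lambda\bfI)^{-1}\big)$. The bounds actually available are:
\begin{itemize}
\item $\|y-h_s\|_\infty^2c_\gamma\lambda^{-\gamma}$, which inflates the variance term by a factor of order $(\eta T)^{1/2-\beta}$;
\item $\|y-h_s\|_\rho^2\lambda^{-1}$ (using $\|(\bSigma+\lambda\bfI)^{-1/2}K_\bx\|_K^2\le\lambda^{-1}$), which turns $\sqrt{(\eta T)^\gamma/n}$ into $\sqrt{\eta T/n}$.
\end{itemize}
Neither gives (b).

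At the very least you must split $y-h_s(\bx)$ into two parts. The noise $y-f_\rho(\bx)$ has conditional second moment at most $1$, so its variance really is at most $c_\gamma(\eta T)^\gamma$. The deterministic residual $f_\rho-\bS h_s=(\bfI-\eta\bL)^sf_\rho$ is small only in $\|\cdot\|_\rho$ and only for large $s$ (Lemma~\ref{lemma:hk-frho}); its sup-norm grows like $1+B(\eta s)^{1/2-\beta}$. Its contribution needs its own argument, and that is the nontrivial content of the case $\zeta<\frac12$ of the cited theorem. You present it as a matter of constants, but it is the missing step, so your sketch does not establish (b).

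Case (a) also needs a fix. No single vector $\hat b-\widehat{\bSigma}h$ covers all $s$, because $h_s$ changes with $s$. Your fallback, a union bound over $s$, yields $\log(2T/\delta_1)$, not the $\log(4/\delta_1)$ claimed in (a). The uniform argument goes as follows:
\begin{itemize}
\item Write the bracket as $(\hat b-b)-(\widehat{\bSigma}-\bSigma)h_s$ and bound it by $\|(\bSigma+\lambda\bfI)^{-1/2}(\hat b-b)\|_K+\|(\bSigma+\lambda\bfI)^{-1/2}(\widehat{\bSigma}-\bSigma)\|_{HS}\,\sup_{s}\|h_s\|_K$.
\item The vector $y(\bSigma+\lambda\bfI)^{-1/2}K_\bx$ and the Hilbert--Schmidt operator $(\bSigma+\lambda\bfI)^{-1/2}K_\bx\otimes K_\bx$ both have norm at most $\lambda^{-1/2}$. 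Both have second moment at most $tr\big(\bL(\bL+\lambda\bfI)^{-1}\big)\le c_\gamma\lambda^{-\gamma}$.
\item Your spectral computation gives $\sup_s\|h_s\|_K\le B$.
\end{itemize}
Two Bernstein events at level $\delta_1/2$ then give (a) uniformly in $s$, with a prefactor of order $B+1$.
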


\begin{proof}
    The proof is derived from Theorem 5 in \cite{lin2017optimal}, which provides upper bounds for $\|\S_\rho\nu_{k+1} - \S_\rho\mu_{k+1}\|_\rho$ with two iteration sequences $\{\nu_{k+1}\}$ and $\{\mu_{k+1}\}$. 
    We first show that their assumptions are satisfied in our setting, and then apply their results with our Lemma \ref{lemma:<2} by showing that $\S_\rho\nu_{k+1} - \S_\rho\mu_{k+1}$ is equivalent to $\bS g_k - \bS h_k$.

    Since we assume $|y|\le1$, their Assumption 1 is satisfied with $M=v=1$.
    Instead of using the notations $\bx, \langle \bx, \bx'\rangle_H $ and $ \S_\rho$ in \cite{lin2017optimal} for any $\bx,\bx'\in\X$, we use $K_\bx,$ $\langle K_\bx, K_{\bx'}\rangle_{\H_K} $ and $ \bS$ in our setting.
    Then, their $H_\rho$ is the same as our $\H_K$.
    Since $f_\H$ in \cite{lin2017optimal} is the projection of $f_\rho$ onto the closure of $H_\rho$ in $\mathcal{L}^2_{\rho_\bx}$, from Lemma \ref{lemma:fh=frho} we know their $f_\H$ is equivalent to our $f_\rho$.
    Hence, Assumption 2 in \cite{lin2017optimal} holds true with $\zeta = \beta$ and $R=B$ due to our Assumption \ref{ass:frho_smth}.
    Further, their Assumption 3 is guaranteed by Assumption \ref{ass:effec_dim}, their equation (3) holds true with $\kappa^2 = 1$ due to $\langle K_\bx, K_{\bx'}\rangle_{\H_K} = K(\bx,\bx') \le \|K\|_\infty \le 1$ (see Property \ref{prop:bound-K}). 
    Their equation (47) is guaranteed by Lemma \ref{lemma:<2} with $\kappa=1$, $\Gamma = n$, $\delta = \delta_2$, $\zeta_i = K_{\bx_i}$, $Q = \int_\X K_\bx\otimes K_\bx d\rho_{\bx}$.
    In addition,
    by taking the step-size $\eta_k = \eta$ for all $k\in[T]$, we know $\S_\rho\nu_{k+1} - \S_\rho\mu_{k+1}$ in \cite{lin2017optimal} is equivalent to our $\bS g_k - \bS h_k$.

    Then, combining above observations and Theorem 5 in \cite{lin2017optimal} with $\eta_k = \eta$, $\theta = 0$, $\lambda = (\eta T)^{-1}$, $\kappa = 1$, $M=v=1$, $R = B$, $\zeta = \beta$ , $m=n$ and $k = T$, we get the desired results.
\end{proof}

\begin{lemma}[Proposition 2 in \cite{lin2017optimal}]\label{lemma:hk-frho}
    Suppose Assumption \ref{ass:frho_smth} holds.
    Let $\eta \in(0,1]$ be the step size.
    For any $k\in\mathbb{N}$, there holds
    \begin{align*}
        \|\bS h_k - f_\rho\|_\rho \le B\Big(\frac{\beta}{2\eta k}\Big)^\beta.
    \end{align*}
\end{lemma}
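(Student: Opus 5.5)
The statement concerns the population GD iteration $h_k$ in $\H_K$, and the plan is to diagonalize it in the eigenbasis of $\bL$ and reduce the bound to a scalar polynomial-filter estimate.

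\textbf{Step 1: rewrite the iteration as a spectral filter on $f_\rho$.} Using $\int_\Z y K_\bx\, d\rho(z) = \int_\X f_\rho(\bx) K_\bx\, d\rho_\bx(\bx)$ and the reproducing property, I will write
\begin{align*}
\bS h_{k+1} = \bS h_k - \eta\, \bL(\bS h_k - f_\rho),
\end{align*}
as an identity in $\mathcal{L}^2_{\rho_\bx}$, since $\bS\bS^*=\bL$.

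\textbf{Step 2: solve the recursion.} Setting $e_k = \bS h_k - f_\rho$, Step 1 gives $e_{k+1} = (\bfI - \eta\bL)e_k$. Since $h_0=0$, we have $e_0=-f_\rho$, and hence
\begin{align*}
e_k = -(\bfI-\eta\bL)^k f_\rho .
\end{align*}
The one point needing care is that $f_\rho$ may have a component in $\ker \bL$. Assumption~\ref{ass:frho_smth} rules this out: writing $f_\rho=\bL^\beta g$ with $\|g\|_\rho\le B$ (as in Lemma~\ref{lemma:fh=frho}), $f_\rho$ has no component on eigenfunctions with $\lambda_i = 0$.

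\textbf{Step 3: reduce to a scalar bound.} From Step 2,
\begin{align*}
\|e_k\|_\rho = \|(\bfI-\eta\bL)^k\bL^\beta g\|_\rho \le \|(\bfI-\eta\bL)^k\bL^\beta\|_{op}\, B .
\end{align*}
Property~\ref{prop:bound-K} gives $\lambda_i\le tr(\bL)\le 1$, and with $\eta\le1$ this means $\eta\lambda_i\in[0,1]$. The operator norm is therefore at most $\eta^{-\beta}\sup_{t\in[0,1]} t^\beta(1-t)^k$.

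\textbf{Step 4: the scalar estimate.} Using $1-t\le e^{-t}$,
\begin{align*}
t^\beta(1-t)^k \le t^\beta e^{-kt},
\end{align*}
and maximizing the right-hand side at $t=\beta/k$ gives $(\beta/(ek))^\beta \le (\beta/k)^\beta$. This already yields $B(\beta/(\eta k))^\beta$.

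The stated bound has the sharper constant $(\beta/(2\eta k))^\beta$, and obtaining it is the main obstacle. It should follow from $e^{-\beta}\le 2^{-\beta}$, which holds because $e>2$. The resulting bound is then $B\,(\beta/(e\eta k))^\beta \le B\,(\beta/(2\eta k))^\beta$. The case $k=0$ is trivial under the convention that the right-hand side is $+\infty$.
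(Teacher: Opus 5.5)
Your argument is correct, but it follows a different route from the paper. The paper gives no direct proof. It identifies $\mathcal{S}_\rho\mu_{k+1}$ and $f_{\mathcal{H}}$ of \cite{lin2017optimal} with $\bS h_k$ and $f_\rho$; the latter identification uses Lemma~\ref{lemma:fh=frho}, since $f_{\mathcal{H}}$ there is the projection of $f_\rho$ onto $\overline{\H_K}$. It then invokes Proposition~2 of that reference with $\eta_k=\eta$, $\kappa=1$, $R=B$, $\zeta=\beta$.

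You instead reprove that proposition from scratch, in three steps:
\begin{itemize}
\item the identity $\bS h_{k+1}-f_\rho=(\bfI-\eta\bL)(\bS h_k-f_\rho)$, obtained from $\bS\bS^*=\bL$;
\item spectral calculus with $\eta\lambda_i\in[0,1]$, which is exactly where $\eta\le 1$ and Property~\ref{prop:bound-K} enter;
\item the scalar bound $\sup_{t\ge 0}t^\beta e^{-kt}=(\beta/(ek))^\beta$.
\end{itemize}
Your route gives a self-contained proof that shows where each hypothesis is used, and it yields the slightly sharper constant $(\beta/(e\eta k))^\beta$. The citation route is shorter and keeps this lemma aligned with Lemma~\ref{lemma:gk-hk}, which comes from the same source.

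Two small remarks. First, the constant $2$ is not really an obstacle: the stated bound is just the weakening $e^{-\beta}\le 2^{-\beta}$ of yours. Second, your caveat about $\ker\bL$ belongs to Step~3, not Step~2. The recursion holds in all of $\mathcal{L}^2_{\rho_\bx}$ regardless. What you actually need is to read Assumption~\ref{ass:frho_smth} as $f_\rho=\bL^\beta g$ with $\|g\|_\rho\le B$, as in Lemma~\ref{lemma:fh=frho}, because $\bL^{-\beta}$ is not literally defined when some $\lambda_i=0$.
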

\begin{proof}
    In the proof of Lemma \ref{lemma:gk-hk}, we already showed that $\mathcal{S}_\rho\mu_{k+1}$ and $f_\calH$ in \cite{lin2017optimal} are equivalent to our $\bS h_k$ and $f_\rho$.
    Then, by applying Proposition 2 in \cite{lin2017optimal} with $\eta_k = \eta$, $\kappa=1$, $R=B$ and $\zeta = \beta$, we get the desired results.
\end{proof}

Combining Lemma \ref{lemma:gk-hk} and Lemma \ref{lemma:hk-frho}, we give the proof of Proposition~\ref{pro:gT-frho}.  
\begin{proof}[Proof of Proposition~\ref{pro:gT-frho}]
    Note that $\|\bS g_T - f_\rho\|_\rho^2 \lesssim \|\bS g_T - \bS h_T\|_\rho^2 + \|\bS h_T - f_\rho\|_\rho^2$.
    The desired results are obtained by combining Lemma \ref{lemma:gk-hk} with $\delta_1 = \delta_2 = \frac{\delta}{2}$ and Lemma \ref{lemma:hk-frho}.
\end{proof}

\subsubsection{Proofs for Theorem~\ref{thm:excess-relu} and Corollary~\ref{cor:relu}}\label{proof:GD-results}
\begin{proof}[Proof of Theorem \ref{thm:excess-relu}]
  Combining Propositions~\ref{pro:f-flin}, \ref{pro:flin-gm}, \ref{pro:gm-gT} and \ref{pro:gT-frho} with $\delta$ replaced by $\frac{\delta}{4}$, with probability at least $1-L\exp\big(\O(dL\log(m)) - \Omega(m^{\frac{1}{3}})\big) - \delta$ over initialization $(\ba,\bW(0))$ and sampling, there holds
\[ \L (f_{\bW(T)} ) - \L(f_\rho) \lesssim      \frac{L^\frac{14}{3}(\eta T)^4}{m^{\frac{1}{3}}}+ \Big(\frac{ {\eta T}}{n^2} + \frac{(\eta T)^\gamma\!+\!(\eta T)^{1 - 2\beta}}{n}  \Big) \log^4\!\big(\frac{T}{\delta}\big)\nonumber  + (\eta T)^{-2\beta}. \]
The proof of the theorem is completed. 
\end{proof}

\begin{proof}[Proof of Corollary \ref{cor:relu}]
The proof is derived by Theorem \ref{thm:excess-relu} with $\delta$ replaced by $\delta/2$.
We first prove that the condition $n \ge \frac{16}{\delta}\big(\frac{36(2\beta+\gamma)}{\beta}\big)^{\frac{2\beta+\gamma}{\beta}}$ implies $\eta T\le \frac{n}{ 36\log({16n}/{\delta}) }$. Since $\eta T \le 2n^{\frac{1}{2\beta+\gamma}}$, the condition reduces to show $n^{\frac{2\beta}{2\beta+\gamma}} \ge 72\log(\frac{16n}{\delta})$, which is equivalent to showing
$
\big(\frac{16n}{\delta}\big)^{\frac{2\beta}{2\beta+\gamma}} \ge \frac{36(2\beta+\gamma)}{\beta}\big(\frac{16}{\delta}\big)^{\frac{2\beta}{2\beta+\gamma}}\log\big(\frac{16n}{\delta}\big)^{\frac{2\beta}{2\beta+\gamma}}.
$
From (9.17) and (9.18) in \cite{gyorfi2006distribution} we know $u > 2c\log(c)$ implies $u> c\log(u)$ for any $c\ge e$.
Setting $u = \big(\frac{16n}{\delta}\big)^{\frac{2\beta}{2\beta+\gamma}}$ and $c = \frac{36(2\beta+\gamma)}{\beta}\big(\frac{16}{\delta}\big)^{\frac{2\beta}{2\beta+\gamma}}$ and solving $u\ge c^2$, the desired result is obtained by noting $u \ge c^2 > 2c\log(c)$ for all $c\ge e$.
Combining this with $T=\lceil n^{\frac{1}{2\beta+\gamma}}\rceil$, we know $n \ge \max\big\{\big(\frac{36(2\beta+\gamma)}{\beta}\big)^{\frac{2\beta+\gamma}{\beta}}\frac{16}{\delta}, \eta^{-(2\beta + \gamma)}\big\}$ implies $1\le \eta T\le n(36\log(16n/\delta))^{-1}$. 
Similarly, setting $u = (m/\delta)^{\frac{1}{3}}$ and $c = 3\big(L^{22}d^2(\eta T)^7/\delta\big)^\frac{1}{3}$, and noting $\eta T \asymp n^{\frac{1}{2\beta+\gamma}}$, we know $m \gtrsim L^{22}d^2n^{\frac{7}{2\beta+\gamma}}\log^3(ndL/\delta)$ ensures condition \eqref{eq:m_condition} in Theorem~\ref{thm:excess-relu}. 

Noting that $m \gtrsim L^{14} n^{\frac{6\beta+12}{2\beta+\gamma}}$ ensures $\frac{L^\frac{14}{3}(\eta T)^4 }{m^{\frac{1}{3}}} \lesssim n^{-\frac{2\beta}{2\beta+\gamma}}$ and \eqref{eq:m_condition} implies $L\exp\big(\O(dL\log(m)) - \Omega(m^{\frac{1}{3}})\big) \le \delta/2$. 
Then, from Theorem~\ref{thm:excess-relu} we know with probability at least $1-\delta$ over initialization $(\ba,\bW(0))$ and sampling, there holds
\begin{align*}
    \L (f_{\bW(T)} ) - \L(f_\rho) \lesssim    n^{-\frac{2\beta}{2\beta+\gamma}}+ \Big(\frac{ {\eta T}}{n^2} + \frac{(\eta T)^\gamma\!+\!(\eta T)^{1 - 2\beta}}{n}  \Big)\log^2(T)\log^2\!\big(\frac{T}{\delta}\big)\nonumber  + (\eta T)^{-2\beta}.
\end{align*}
In addition, since $2\beta+\gamma > 1$ and $\eta T \ge 1$, there holds $(\eta T)^{1-2\beta} \le (\eta T)^\gamma$.
Plugging the choice of $\eta T \asymp n^{\frac{1}{2\beta+\gamma}}$ back into the above inequality, we get 
\begin{align*}
    \L (f_{\bW(T)} ) - \L(f_\rho) \lesssim  n^{-\frac{2\beta}{2\beta+\gamma}} \log^4(\frac{n}{\delta}).
\end{align*}
The proof is completed. 
\end{proof}

\subsection{Proofs for Stochastic Gradient Descent}\label{sec:proof-SGD}
In this subsection, we present all proofs for SGD. Section~\ref{proof:SGD-f-flin_sgd}, Section~\ref{proof:sgd-flin-fm} and Section~\ref{proof:sgd-fm-frho} provide detailed proofs for Proposition~\ref{pro:sgd-f-flin}, Proposition~\ref{pro:sgd-flin-fm} and Proposition~\ref{pro:sgd-fm-frho}, respectively. Proofs for Theorem~\ref{thm:excess-relu_sgd} and Corollary~\ref{cor:relu_sgd} are given in Section~\ref{proof:SGD-results}. 

\subsubsection{Proof of Proposition~\ref{pro:sgd-f-flin}}\label{proof:SGD-f-flin_sgd}
We first show that the trajectory of SGD with deep ReLU networks also falls inside local balls around the initialization $\bW(0)$. 
\begin{lemma}\label{lemma:wk-w0_sgd}
Let $\{\bW(k)\}$ be produced by \eqref{eq:sgd-update} with $\eta \le 1/5$.
Assume  \eqref{eq:m_condition_sgd} holds. 
Then, with probability at least $1-L\exp(\O(dL\log(m)) - \Omega(m^{\frac{1}{3}})) - \delta$ over initialization $(\ba,\bW(0))$,   for any $k\in[T]$, there holds 
\[ \big\| \bW(k) - \bW(0) \big\|_{op,\infty}^2 \le \big\|\bW(k) - \bW(0) \big\|_2^2 \le 4\eta k \]
and 
\[|f_{\bW(k)}(\bx) - y| \le CL^2\sqrt{\eta k} + 1 \ \text{ for any } z=(\bx,y)\in\Z. \]
\end{lemma}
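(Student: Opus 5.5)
We prove both claims simultaneously by induction on $k$, mirroring Lemma~\ref{lemma:wk-w0}. The one real change is that SGD touches a single sample per step. So instead of tracking the residual vector on the training set, we control the residual $|f_{\bW(k)}(\bx)-y|$ pointwise over all of $\Z$. This pointwise control rests on the uniform-in-$\X$ estimates of Lemma~\ref{lem:almost-convex}.

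Fix $R_{op}=2\sqrt{\eta T}$ and work on the high-probability event on which Lemmas~\ref{coro:bound-o} and \ref{lem:almost-convex} hold for this radius. Condition \eqref{eq:m_condition_sgd} implies the width requirement $m\gtrsim L^{22}d^3R_{op}^2\log^3(m/\delta)$.

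\textbf{Base case.} At $k=0$ both claims are trivial, since $f_{\bW(0)}\equiv0$ and $|y|\le1$.

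\textbf{Residual bound.} Suppose $\|\bW(t)-\bW(0)\|_2^2\le 4\eta t$ for all $t\le k$. Then $\bW(k)\in\calB_{R_{op}}(\bW(0))$. Applying \eqref{eq:linear_approx} with $\bW=\bW(0)$ and $\widetilde{\bW}=\bW(k)$ gives
\[
|f_{\bW(k)}(\bx)|\le |\langle \Phi_m(\bx),\bW(k)-\bW(0)\rangle_2| + CL^{7/3}(\eta T)^{2/3}m^{-1/6}.
\]
By Lemma~\ref{prop:symm=0}, only the $L$-th layer block of $\Phi_m$ is nonzero, and by part (c) of Lemma~\ref{coro:bound-o} its norm is at most $2$. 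So the first term is at most $2\sqrt{4\eta k}$. The second term is at most $1$ under \eqref{eq:m_condition_sgd}. Hence $|f_{\bW(k)}(\bx)-y|\le CL^2\sqrt{\eta k}+1$ for every $z\in\Z$. The factor $L^2$ leaves slack, which we may need if we instead go through \eqref{eq:diff_derivative} with the sum over all $L$ layers.

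\textbf{Distance bound.} Expand the SGD update as in \eqref{eq:wk-w0}:
\[
\|\bW(k+1)-\bW(0)\|_2^2=\|\bW(k)-\bW(0)\|_2^2+\eta^2\|\partial l(\bW(k);z_{i_k})\|_2^2+2\eta\langle \bW(0)-\bW(k),\partial l(\bW(k);z_{i_k})\rangle_2 .
\]
\begin{itemize}
\item By \eqref{eq:bound_partial_wk}, $\|\partial f_{\bW(k)}(\bx)\|_2\le\epsilon_3+2\le\sqrt5$ uniformly in $\bx$, so the quadratic term is at most $5\eta^2\cdot 2\,l(\bW(k);z_{i_k})$.
\item By \eqref{eq:semi-smth}, the cross term is at most
\[
2\eta\big(l(\bW(0);z_{i_k})-l(\bW(k);z_{i_k})\big)+2\eta\,|f_{\bW(k)}(\bx_{i_k})-y_{i_k}|\,\epsilon_2,
\qquad \epsilon_2\lesssim L^{7/3}(\eta T)^{2/3}m^{-1/6}.
\]
\end{itemize}
Using $\eta\le1/5$, the terms in $l(\bW(k);z_{i_k})$ combine to a nonpositive contribution. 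Also $l(\bW(0);z)\le 1/2$. Using the residual bound, the remaining error is at most
\[
2\eta\,\epsilon_2\,(CL^2\sqrt{\eta T}+1),
\]
and we need this to be $\le 2\eta$. This gives $\|\bW(k+1)-\bW(0)\|_2^2\le 4\eta k+3\eta<4\eta(k+1)$, which closes the induction. The inequality $\|\cdot\|_{op,\infty}\le\|\cdot\|_2$ is immediate.

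\textbf{Main obstacle.} The delicate step is the last one. In GD, the sum of residuals was controlled through $\|\bfnc_{\bW(k)}-\by\|_2\le2\sqrt n$. Here no averaging is available, and the residual grows like $L^2\sqrt{\eta T}$. We must therefore check that $\epsilon_2\cdot L^2\sqrt{\eta T}\lesssim L^{13/3}(\eta T)^{7/6}m^{-1/6}\le 1$ holds. This is exactly what the stronger requirement $L^{26}$ in \eqref{eq:m_condition_sgd} provides, compared with $L^{22}$ in \eqref{eq:m_condition}, since $(L^{13/3})^6=L^{26}$ and $((\eta T)^{7/6})^6=(\eta T)^7$. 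The residual bound and the distance bound must be carried together in the induction, with the residual bound at step $k$ feeding the distance bound at step $k+1$. The probability statement follows by intersecting the events of Lemmas~\ref{lemma:oprt_norm}, \ref{coro:bound-o} and \ref{lem:almost-convex}. The bound holds for every realization of the indices $\{i_k\}$, because all estimates are uniform over $\Z$.
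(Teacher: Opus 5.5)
Your proof is correct. Apart from one step it is the paper's induction: the same expansion of the SGD step, the same use of \eqref{eq:bound_partial_wk} and \eqref{eq:semi-smth}, the same cancellation $2\eta\, l(\bW(k);z_{i_k})(5\eta-1)\le 0$, the same total $4\eta k+3\eta$, and the same closing condition $\epsilon_2\cdot L^2\sqrt{\eta T}\lesssim 1$. That closing condition is where the $L^{26}$ in \eqref{eq:m_condition_sgd} comes from.

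The step that differs is the residual bound. The paper gets it from the hidden-layer perturbation estimate (Lemma~\ref{lemma:o-o_0}) via $|f_{\bW(k)}(\bx)|\le\|\ba\|_2\,\|o^L_k(\bx)-o^L_0(\bx)\|_2\lesssim \sqrt{m}\cdot L^2R_{op}/\sqrt{m}$. This Cauchy--Schwarz step against $\ba$, with $\|\ba\|_2=\sqrt m$, ignores all cancellation and is the source of the $L^2$. You instead linearize at $\bW(0)$ through \eqref{eq:linear_approx}. You then use that the feature map is supported on the last layer with norm at most $2$ (Lemma~\ref{prop:symm=0} and part (c) of Lemma~\ref{coro:bound-o}). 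This exploits the symmetric initialization more fully and gives a residual of order $\sqrt{\eta k}$ with no $L^2$.

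Two small points. First, as written you obtain $4\sqrt{\eta k}+2$, which is not literally $\le CL^2\sqrt{\eta k}+1$ when $\eta k$ is tiny. The fix is to keep the factor $\|\bW(k)-\bW(0)\|_{op,\infty}\le 2\sqrt{\eta k}$ in \eqref{eq:linear_approx} rather than bounding it by $2\sqrt{\eta T}$. Then the remainder is $\lesssim\sqrt{\eta k}\,L^{7/3}(\eta T)^{1/6}m^{-1/6}\le\sqrt{\eta k}$ and the residual is at most $5\sqrt{\eta k}+1$. In any case, only $\sup_{z}|f_{\bW(k)}(\bx)-y|\lesssim L^2\sqrt{\eta T}$ is used later.

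Second, you throw away your own improvement when you close the induction with the $L^2\sqrt{\eta T}$ residual. With a residual $\lesssim\sqrt{\eta T}$, the closing condition becomes $L^{7/3}(\eta T)^{7/6}m^{-1/6}\lesssim 1$, which already follows from \eqref{eq:m_condition}. So your route proves the lemma without the stronger $L^{26}$ width. It would also lower the $L^{10/3}$ in the bound on $\|\epsilon_k\|_{\H_m}$ in Proposition~\ref{pro:sgd-flin-fm} to $L^{7/3}$.
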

\begin{proof}
The first part of the lemma is proved by induction.
It's obvious that $  \| \bW(k) - \bW(0) \|^2_2 \le 0$ holds with $k=0$. Assume, for all $t\in [k]$ with $k\le T-1$, $\| \bW(k) - \bW(0) \|^2_2 \le 4\eta k$ holds.
We will show that  $\| \bW(k+1) - \bW(0) \|^2_2 \le 4\eta(k+1)$.

    From the update rule \eqref{eq:sgd-update}, we know
    \begin{align}\label{eq:wk-w0_sgd}
       & \big\|\bW(k+1) - \bW(0)  \big\|_2^2 = \Big\|\bW(k) - \bW(0) -  \eta \frac{\partial  l(\bW(k);z_{i_k})}{\partial \bW(k)} \Big\|_2^2\nonumber\\
       &= \big\|\bW(k) - \bW(0)\big\|_2^2 + \eta^2  \Big\|\frac{\partial  l(\bW(k);z_{i_k})}{\partial \bW(k)} \Big\|_2^2   + 2\eta\Big\langle \bW(0)- \bW(k),  \frac{\partial  l(\bW(k);z_{i_k})}{\partial \bW(k)}  \Big\rangle_2\nonumber\\
       &= \big\|\bW(k) - \bW(0)\big\|_2^2 + 2\eta^2  l(\bW(k),z_{i_k}) \Big\|\frac{\partial f_{\bW(k)}(\bx_{i_k})}{\partial \bW(k)} \Big\|_2^2  + 2\eta\Big\langle \bW(0)- \bW(k),  \frac{\partial  l(\bW(k);z_{i_k})}{\partial \bW(k)}  \Big\rangle_2,
    \end{align}
   where in the last inequality we have used $\frac{\partial  l(\bW(k);z_{i_k})}{\partial \bW(k)}=(f_{\bW(k)}(\bx_{i_k}) - y_{i_k}) \frac{\partial f_{\bW(k)}(\bx_{i_k})}{\partial \bW(k)}$ and $(f_{\bW(k)}(\bx_{i_k}) - y_{i_k})^2 = 2 l(\bW(k);z_{i_k})$. 

Setting $R_{op} = 2\sqrt{\eta T}$.
By the induction assumption, there holds $\bW(k),\bW(0)\in\calB_{R_{op}}(\bW(0))$.
Then from Lemma \ref{prop:symm=0} (if $\ell < L-1$) and part $(c)$ of Lemma \ref{coro:bound-o} (if $\ell=L$) and  \eqref{eq:diff_derivative} in Lemma \ref{lem:almost-convex} with $\bW = \bW(k)$, we have
\begin{align}\label{eq:bound_partial_wk_sgd}
    \Big\|\frac{\partial f_{\bW(k)} (\bx_{i_k})}{\partial{\bW(k)}}\Big\|_2 &\le   \Big\|\frac{\partial f_{\bW(k)} (\bx_{i_k})}{\partial{\bW(k)}}- \frac{\partial f_{\bW(0)} (\bx_{i_k})}{\partial{\bW(0)}}\Big\|_2 +\Big\|\frac{\partial f_{\bW(0)} (\bx_{i_k})}{\partial{\bW^L(0)}}\Big\|_2\\
    &\le \sqrt{L}\max_{\ell\in[L]}\Big\|\frac{\partial f_{\bW(k)} (\bx_{i_k})}{\partial{\bW^\ell(k)}}- \frac{\partial f_{\bW(0)} (\bx_i)}{\partial{\bW^\ell(0)}}\Big\|_2 + 2  \le \epsilon_3 + 2
\end{align}
with $\epsilon_3 \lesssim L^\frac{7}{3}(\eta T)^{\frac{1}{6}}m^{-\frac{1}{6}}$. 
 
Further, from the induction assumption we know $|f_{\bW(k)}(\bx) - y| \le CL^2R_{op}$ and \eqref{eq:semi-smth} in Lemma \ref{lem:almost-convex} with $\bW = \bW(k)$, $\widetilde{\bW} = \bW(0)$ and $\|\bW(k)-\bW(0)\|_{op,\infty} \le 2\sqrt{\eta T}$ implies
\begin{align*}
    2\eta\Big\langle \bW(0)\!-\! \bW(k),   \frac{\partial  l(\bW(k);z_{i_k})}{\partial \bW(k)}  \Big\rangle_2 &\le 2\eta\big( l(\bW(0),z_{i_k})  -  l(\bW(k),z_{i_k})\big) + 2\eta\epsilon_2 
\end{align*}
with $\epsilon_2 \lesssim L^\frac{13}{3}(\eta T)^{\frac{7}{6}}m^{-\frac{1}{6}}.$

Plugging the above two estimates back into \eqref{eq:wk-w0_sgd}, we get
\begin{align*}
      \big\|\bW(k+ 1) - \bW(0)\big\|_2^2   
    &\le \big\|\bW(k) - \bW(0)\big\|_2^2 + 2\eta^2 l(\bW(k),z_{i_k})( \epsilon_3 + 2)^2  + 2\eta\big( l(\bW(0),z_{i_k})  -  l(\bW(k),z_{i_k})\big) + 2\eta \epsilon_2\\
    &\le \big\|\bW(k) - \bW(0)\big\|_2^2 + 10\eta^2 l(\bW(k),z_{i_k})  + 2\eta\big( l(\bW(0),z_{i_k})  -  l(\bW(k),z_{i_k})\big) + 2\eta\\
    &\le \big\|\bW(k) - \bW(0)\big\|_2^2 +  2\eta l(\bW(0),z_{i_k}) + 2\eta\\
    &\le 4\eta k + 3\eta \le 4\eta(k+1),
 \end{align*}
where in the second inequality we have used $\epsilon_3\le5-\sqrt{2}$ and $\epsilon_2\le1$ implied by \eqref{eq:m_condition_sgd}, in the third inequality we have used $ 10\eta^2  \le 2\eta$ by noting $\eta \le 1/5$, in the last second   inequality we have used $ l(\bW(0),z_{i_k})  \le 1/2$ by observing $f_{\bW(0)} = 0$  and  the induction assumption $\|\bW(k)- \bW(0)\|^2_2\le 4\eta k$. The first part of the lemma is proved. 

Combining Lemma \ref{lemma:o-o_0} with $\|\bW(k)- \bW(0)\|_{op,\infty}^2 \le \|\bW(k)- \bW(0)\|^2_2\le 4\eta T = R_{op}^2$, we know
\begin{align*}
    |f_{\bW(k)}(\bx) - y| &\le |f_{\bW(k)}(\bx) - f_{\bW(0)}(\bx)| + |f_{\bW(0)}(\bx) - y| \le \|\ba\|_2\|o^L_{k}(\bx) - o^L_0(\bx)\|_2 + 1\\
    &\le CL^2\sqrt{\eta k} + 1,
\end{align*}
which completes the proof for the second part of the lemma.
\end{proof}

The proof of Proposition~\ref{pro:sgd-f-flin} is presented as follows. 
\begin{proof}[Proof of Proposition~\ref{pro:sgd-f-flin}] 
The proof is similar to that of Proposition \ref{pro:f-flin}.
    Setting $R_{op} = 2\sqrt{\eta T}$.
    Combining Lemma \ref{lemma:wk-w0_sgd} and \eqref{eq:linear_approx} in Lemma \ref{lem:almost-convex} with $\widetilde{\bW} = \bW(k)$ and $\bW = \bW(0)$, we get the desired results. 
\end{proof}

\subsubsection{Proofs for Proposition~\ref{pro:sgd-flin-fm}}\label{proof:sgd-flin-fm}
Recall that the operator $f\otimes f$ on Hilbert space $H$ is defined by $(f\otimes f) g = \langle f, g\rangle_H f$ for all $f,g\in H$.
Based on Lemma~\ref{lemma:wk-w0_sgd}, we give the proof of Proposition~\ref{pro:sgd-flin-fm} as follows. 
\begin{proof}[Proof of Proposition~\ref{pro:sgd-flin-fm}] 
    Denote $\epsilon_k = f_{\bW(k+1)}^{\text{lin}} - f_{\bW(k)}^{\text{lin}} + \eta(f_{\bW(k) }^{\text{lin}}(\bx_{i_k}) - y_{i_k})K^m_{\bx_{i_k}} \in\H_m$.
From the update rule of $ f_{k}^{m}$ \eqref{eq:update-kernelSGD}, we know
\begin{align}
    f^{\text{lin}}_{\bW(k+1)} - f^m_{k+1} &= \big(f^{\text{lin}}_{\bW(k)} -f^m_k\big) -\eta\big(f^{\text{lin}}_{\bW(k)}(\bx_{i_k})- f^m_k(\bx_{i_k})\big)K^m_{\bx_{i_k}} + \epsilon_k \nonumber\\
    &= \big(f^{\text{lin}}_{\bW(k)} -f^m_k\big) - \eta\big\langle f^{\text{lin}}_{\bW(k)} -f^m_k, K^m_{\bx_{i_k}}\big\rangle_{\H_m}K^m_{\bx_{i_k}} + \epsilon_k\nonumber\\
    &= \big(\bfI - \eta K^m_{\bx_{i_k}}\otimes K^m_{\bx_{i_k}}\big)\big(f^{\text{lin}}_{\bW(k)} -f^m_k\big) + \epsilon_k\nonumber, 
\end{align}
where the second equality follows from the fact $f^{\text{lin}}_{\bW(k) } - f_k^m \in \H_m$ and the reproducing kernel property $f_{\bW(k)}^{\text{lin}}(\bx_{i_k}) - f^m_k(\bx_{i_k}) = \langle f_{\bW(k)}^{\text{lin}} - f^m_k,K^m_{\bx_{i_k}}\rangle_{\H_m}.$

Applying the above equality recursively, we get
\begin{align*}
   f^{\text{lin}}_{\bW(k+1)} - f^m_{k+1}
    &= \sum_{s=0}^{k}\prod_{a=s+1}^k \big(\bfI - \eta K^m_{\bx_{i_a}}\otimes K^m_{\bx_{i_a}}\big)\epsilon_{s},
\end{align*}
where we used the conventional notation $\prod_{k+1}^k = \bfI$ for any $k\in\mathbb{N}$.
Note that for any $a\in[k]$ and $i_a$, $\eta K^m_{\bx_{i_a}}\otimes K^m_{\bx_{i_a}}$ is self-adjoint and positive, and from \eqref{eq:bound_Km} we know $\eta\| K^m_{\bx_{i_a}}\otimes K^m_{\bx_{i_a}}\|_{op} = \eta\|K^m_{\bx_{i_a}}\|^2_{\H_m} \le \eta \|K^m\|_\infty \le 4\eta \le 1$.
Then, $\|\bfI - \eta K^m_{\bx_{i_a}}\otimes K^m_{\bx_{i_a}}\|_{op} \le 1$.

According to the above inequality, we have
\begin{align}\label{eq:sgd-flin-fm-sum}
    &\big\|f^{\text{lin}}_{\bW(k+1)} - f^m_{k+1}\big\|_\infty = \sup_{\bx\in\X} \big|\big\langle f^{\text{lin}}_{\bW(k+1)} - f^m_{k+1}, K^m_\bx\big\rangle_{\H_m}\big| \le \sup_{\bx\in\X} \big\|f^{\text{lin}}_{\bW(k+1)} - f^m_{k+1}\big\|_{\H_m} \|K^m_\bx\|_{\H_m}  \nonumber\\
    &\le  \big\|f^{\text{lin}}_{\bW(k+1)} - f^m_{k+1}\big\|_{\H_m} \sqrt{\|K^m\|_\infty} \le 2\sum_{s=0}^{k}\prod_{a=s+1}^k \big\|\bfI - \eta K^m_{\bx_{i_a}}\otimes K^m_{\bx_{i_a}}\big\|_{op} \big\|\epsilon_{s}\big\|_{\H_m}  \le 2\sum_{s=0}^{k} \big\|\epsilon_{s}\big\|_{\H_m},
\end{align}
where in the first equality we have used the reproducing kernel property and in the last second inequality we have used \eqref{eq:bound_Km} with $\sqrt{\|K^m\|_\infty}\le 2$.

Now, we turn to estimate $\|\epsilon_k\|_{\H_m}$.
For any $k\in[T]$, from the definition of $f^{\text{lin}}_{\bW }$ and the update rule of SGD \eqref{eq:sgd-update}, there holds
    \begin{align}
        &\epsilon_k(\bx) = f^{\text{lin}}_{\bW(k+1) }(\bx)  - f^{\text{lin}}_{\bW(k)} (\bx) + \eta\big(f^{\text{lin}}_{\bW(k)} (\bx_{i_k}) - y_{i_k}\big)K^m_{\bx_{i_k}}(\bx)\nonumber\\
        &= \Big\langle \frac{\partial f_{\bW(0)}(\bx)}{\partial\bW(0)}, \bW(k+1)- \bW(k) \Big\rangle_2 + \eta\big(f^{\text{lin}}_{\bW(k)} (\bx_{i_k}) - y_{i_k}\big)\Big\langle \frac{\partial f_{\bW(0)}(\bx_{i_k})}{\partial\bW(0)}, \frac{\partial f_{\bW(0)}(\bx)}{\partial\bW(0)} \Big\rangle_2 \nonumber\\
        &= \Big\langle \frac{\partial f_{\bW(0)}(\bx)}{\partial\bW^L(0)}, \bW^L(k+1)- \bW^L(k) \Big\rangle_2 + \eta\big(f^{\text{lin}}_{\bW(k)} (\bx_{i_k}) - y_{i_k}\big)\Big\langle \frac{\partial f_{\bW(0)}(\bx_{i_k})}{\partial\bW^L(0)}, \frac{\partial f_{\bW(0)}(\bx)}{\partial\bW^L(0)} \Big\rangle_2 \nonumber\\
        & = \eta\Big[\big(y_{i_k} - f_{\bW(k)}(\bx_{i_k})\big)\Big\langle \frac{\partial f_{\bW(k)}(\bx_{i_k})}{\partial\bW^L(k)}, \frac{\partial f_{\bW(0)}(\bx)}{\partial\bW^L(0)} \Big\rangle_2  + \big(f^{\text{lin}}_{\bW(k)}(\bx_{i_k}) - y_{i_k}\big) \Big\langle \frac{\partial f_{\bW(0)}(\bx_{i_k})}{\partial\bW^L(0)}, \frac{\partial f_{\bW(0)}(\bx)}{\partial\bW^L(0)} \Big\rangle_2\Big] \nonumber\\
        &=\bigg\langle \eta  \Big[\big(y_{i_k} - f_{\bW(k)}(\bx_{i_k})\big)\frac{\partial f_{\bW(k)}(\bx_{i_k})}{\partial\bW^L(k)} + \big(f^{\text{lin}}_{\bW(k)}(\bx_{i_k}) - y_{i_k}\big)\frac{\partial f_{\bW(0)}(\bx_{i_k})}{\partial\bW^L(0)}\Big] , \frac{\partial f_{\bW(0)}(\bx)}{\partial\bW^L(0)}\bigg\rangle_2 \nonumber\\
        & =: \Big\langle \Delta(k), \frac{\partial f_{\bW(0)}(\bx)}{\partial\bW^L(0)} \Big\rangle_2, \nonumber
    \end{align}
    where the second equality is due to $K^m_{\bx_{i_k}}(\bx) = K^m(\bx_{i_k},\bx)$, the third equality used $\frac{\partial f_{\bW(0)}}{\partial\bW^\ell(0)} = 0$ for $\ell\in[L-1]$ according to Lemma \ref{prop:symm=0}, and the fourth equality is according to the update rule \eqref{eq:sgd-update}, and in the last equality we define
    \begin{align*}
         \Delta(k)& := \eta  \Big[\big(y_{i_k} - f_{\bW(k)}(\bx_{i_k})\big)\frac{\partial f_{\bW(k)}(\bx_{i_k})}{\partial\bW^L(k)} + \big(f^{\text{lin}}_{\bW(k)}(\bx_{i_k}) - y_{i_k}\big)\frac{\partial f_{\bW(0)}(\bx_{i_k})}{\partial\bW^L(0)}\Big]\\
        & =  \eta \Big[\big(y_{i_k} -  f_{\bW(k)}(\bx_{i_k})\big)\Big(\frac{\partial f_{\bW\!(k)}(\bx_{i_k} )}{\partial\bW^L(k)}  - \frac{\partial f_{\bW (0)}(\bx_{i_k} )}{\partial\bW^L(0)}\Big)  +  \big(f^{\text{lin}}_{\bW (k)}(\bx_{i_k} ) - f_{\bW\!(k)}(\bx_{i_k} )\big)\frac{\partial f_{\bW (0)}(\bx_{i_k} )}{\partial\bW^L(0)}\Big].
    \end{align*}
    
    Let $\boldsymbol{\Delta}(k) = (0,\ldots,0,\Delta(k))\in\W$, then  $\epsilon_k(\bx) = \langle \boldsymbol{\Delta}(k),\Phi_m(\bx)\rangle_2$.
    There holds
    \begin{align}\label{eq:epsilon2}
        &\|\epsilon_k\|_{H_m} \le \|\Delta(k)\|_2\nonumber\\
        &\le \eta \Big[\big|y_{i_k} -  f_{\bW(k)}(\bx_{i_k})\big|\Big\|\frac{\partial f_{\bW(k)}(\bx_{i_k})}{\partial\bW^L(k)} - \frac{\partial f_{\bW(0)}(\bx_{i_k})}{\partial\bW^L(0)}\Big\|_2  + \big|f^{\text{lin}}_{\bW(k)}(\bx_{i_k})  -  f_{\bW(k)}(\bx_{i_k})\big|  \Big\|\frac{\partial f_{\bW(0)}(\bx_{i_k})}{\partial\bW^L(0)}\Big\|_2\Big] \nonumber\\
        &\le \eta \Big( \sup_{z\in\Z}|f_{\bW(k)}(\bx)-y|\sup_{\bx\in\X}\Big\|\frac{\partial f_{\bW(k)}(\bx)}{\partial\bW^L(k)}  -  \frac{\partial f_{\bW(0)}(\bx)}{\partial\bW^L(0)}\Big\|_2 + \big\|f^{\text{lin}}_{\bW(k)} - f_{\bW(k)}\big\|_\infty \sup_{\bx\in\X}\Big\|\frac{\partial f_{\bW(0)}(\bx)}{\partial\bW^L(0)}\Big\|_2\Big).
    \end{align}

    From part $(c)$ in Lemma \ref{coro:bound-o} we know $\sup_{\bx\in\X}\|\frac{\partial f_{\bW(0)}(\bx)}{\partial\bW^L(0)}\|_2 \le 2$.
    Setting $R_{op} = 2\sqrt{\eta T}$, from Lemma \ref{lemma:wk-w0_sgd} we know $\sup_{z\in\Z}|f_{\bW(k)}(\bx) - y| \le CL^2\sqrt{\eta k} + 1$ and $\bW(k) \in \calB_{R_{op}}(\bW(0))$ for any $k\in[T]$.
    Combining this and Lemma \ref{lem:almost-convex} with $R_{op} = 2\sqrt{\eta T}$, there holds
    \begin{align*}
        \sup_{z\in\Z}|f_{\bW(k)}(\bx)-y|\sup_{\bx\in\X}\Big\|\frac{\partial f_{\bW(k)}(\bx)}{\partial\bW^L(k)}  -  \frac{\partial f_{\bW(0)}(\bx)}{\partial\bW^L(0)}\Big\|_2 \lesssim \frac{L^\frac{10}{3}(\eta T)^\frac{2}{3}}{m^{\frac{1}{6}}}.
    \end{align*}
    According to Proposition \ref{pro:sgd-f-flin}, we know $\|f_{\bW(k)} - f_{\bW(k)}^{\text{lin}}\|_\infty \lesssim L^\frac{7}{3}(\eta T)^{\frac{2}{3}}m^{-\frac{1}{6}}$.
    Plugging the above estimates back into \eqref{eq:epsilon2}, we know with probability at least $1-L\exp(\O(dL\log(m)) - \Omega(m^{\frac{1}{3}}))-\delta$ over initialization $(\ba,\bW(0))$, there holds
    \begin{align*}
        \|\epsilon_k\|_{\H_m}\lesssim \frac{L^\frac{10}{3}\eta^{\frac{5}{3}}T^{\frac{2}{3}}}{m^{\frac{1}{6}}} .
    \end{align*}
Plugging the estimate of $\|\epsilon_s\|_{\H_m}$ back into \eqref{eq:sgd-flin-fm-sum},  with probability at least $1-L\exp(\O(dL\log(m)) - \Omega(m^{\frac{1}{3}}))-\delta$ over the random choice of $(\ba,\bW(0))$, there holds
\begin{align*}
    \|f^{\text{lin}}_{\bW(k) } - f_k^m\|_\infty \le \sum_{s=0}^k\|\epsilon_s\|_{\H_m} \lesssim \frac{L^\frac{10}{3}(\eta T)^{\frac{5}{3}}}{m^{\frac{1}{6}}}.
\end{align*}
This completes the proof of the proposition.
\end{proof}

\subsubsection{Proof of Proposition \ref{pro:sgd-fm-frho}}\label{proof:sgd-fm-frho}
To control $ \|\bS_mf^m_T - f_\rho\|^2_\rho$, we first control $ \|\bS_mf^m_T - \bS_mg^m_T \|^2_\rho$, i.e., the distance between the SGD and GD on $\H_m$.
\begin{lemma}\label{lem:sgd-fm-g}
     Let $\delta \in (0,1)$ and $T\in\mathbb{N}$. Suppose 
    $
        0 < \eta \le \frac{1}{32(\log (T) + 1)} \text{ and } \frac{1}{\eta T} \ge \frac{36}{n}\log\big(\frac{2n}{\delta}\big),
    $ and $m\gtrsim dL^3\log^3(m/\delta)$,
    Then, with probability at least $1-L\exp\big(\O(d\log(m)) - \Omega(m^{\frac{1}{3}})\big)- \delta$ over initialization $(\bW(0),\ba)$ and sampling, there holds
    $$
        \ebb_\A\big[\big\|\bS_m(f^m_T - g^m_T)\big\|^2_\rho\big] \lesssim  \eta(\log (T) \vee 1).
    $$
\end{lemma}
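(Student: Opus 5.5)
We compare SGD and GD in the fixed RKHS $\H_m$ through their operator recursions, reducing the claim to a known kernel result of \cite{lin2016optimal}. The two recursions are
\[ f^m_{k+1} = (\bfI - \eta K^m_{\bx_{i_k}}\otimes K^m_{\bx_{i_k}}) f^m_k + \eta y_{i_k} K^m_{\bx_{i_k}} \]
and
\[ g^m_{k+1} = (\bfI - \eta\widehat{\bSigma}_m) g^m_k + \frac{\eta}{n}\sum_{i=1}^n y_i K^m_{\bx_i}, \]
where $\widehat{\bSigma}_m=\frac1n\sum_i K^m_{\bx_i}\otimes K^m_{\bx_i}$.

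Conditional on the sample and on $\bW(0)$, the index $i_k$ is uniform on $[n]$. Hence $\ebb_{i_k}[K^m_{\bx_{i_k}}\otimes K^m_{\bx_{i_k}}]=\widehat{\bSigma}_m$ and $\ebb_{i_k}[y_{i_k}K^m_{\bx_{i_k}}]=\frac1n\sum_i y_iK^m_{\bx_i}$. So SGD is an unbiased stochastic version of GD on the empirical risk in $\H_m$, and $g^m_k=\ebb_\A[f^m_k]$.

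Set $e_k=f^m_k-g^m_k$. Subtracting the recursions gives
\[ e_{k+1}=(\bfI-\eta\widehat{\bSigma}_m)e_k+\eta N_k, \]
where $N_k = (\widehat{\bSigma}_m - K^m_{\bx_{i_k}}\otimes K^m_{\bx_{i_k}})f^m_k + (y_{i_k}K^m_{\bx_{i_k}}-\frac1n\sum_iy_iK^m_{\bx_i})$ is a martingale difference sequence. Unrolling,
\[ e_T=\eta\sum_{k}(\bfI-\eta\widehat{\bSigma}_m)^{T-1-k}N_k. \]
Orthogonality of martingale increments then gives
\[ \ebb_\A\|\bS_me_T\|_\rho^2=\eta^2\sum_k\ebb_\A\big\|\bSigma_m^{1/2}(\bfI-\eta\widehat{\bSigma}_m)^{T-1-k}N_k\big\|_{\H_m}^2. \]

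We pass from $\bSigma_m$ to $\widehat{\bSigma}_m$ exactly as in the proof of Proposition~\ref{pro:flin-gm}. Lemma~\ref{lemma:<2} applies with $\kappa=2$, since $\|K^m\|_\infty\le4$ by \eqref{eq:bound_Km} under $m\gtrsim dL^3\log(m/\delta)$. We choose $\lambda=(\eta T)^{-1}$, which is admissible by the assumption $\frac1{\eta T}\ge\frac{36}{n}\log\frac{2n}{\delta}$. This gives $\|\bSigma_m^{1/2}h\|\le 2\|(\widehat{\bSigma}_m+\lambda\bfI)^{1/2}h\|$.

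Each summand then splits into an $\widehat{\bSigma}_m$-weighted part and a $\lambda$-part. The $\widehat{\bSigma}_m$-weighted part is bounded by
\[ \big\|\widehat{\bSigma}_m(\bfI-\eta\widehat{\bSigma}_m)^{2(T-1-k)}\big\|_{op}\lesssim\frac{1}{\eta(T-k)}. \]
The $\lambda$-part contributes at most $\lambda$. We also need $\ebb_\A\|N_k\|^2_{\H_m}\lesssim \ebb_\A\big[(f^m_k(\bx_{i_k})-y_{i_k})^2\big]$ up to the bound $\|K^m_\bx\|^2_{\H_m}\le 4$. Combining these, we obtain
\[ \ebb_\A\|\bS_me_T\|^2_\rho\lesssim\eta\sum_{k<T}\Big(\frac1{T-k}+\frac1T\Big)\sup_k\ebb_\A\big[\mathcal{E}(f^m_k)\big]\lesssim\eta(\log T\vee1)\sup_k\ebb_\A\big[\mathcal{E}(f^m_k)\big], \]
where $\mathcal{E}$ denotes the empirical squared residual.

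The main obstacle is bounding the residual, that is, showing $\sup_{k\le T}\ebb_\A\big[\frac1n\sum_i(f^m_k(\bx_i)-y_i)^2\big]\lesssim1$ uniformly. A naive estimate lets $\|f^m_k\|_{\H_m}$ grow like $\sqrt{\eta k}$. To avoid this, we follow the self-bounding induction of \cite{lin2016optimal}, which is also the origin of the condition $\eta\le(32(\log T+1))^{-1}$. Assume $\ebb\,\mathcal{E}(f^m_j)\le C_0$ for all $j\le k$. Insert this into the same decomposition, evaluated at the empirical points, i.e. with $\widehat{\bSigma}_m$ in place of $\bSigma_m$, and add the GD residual $\frac1n\|\bG^m_k-\by\|^2\le1$, which follows as in \eqref{eq:Fk-y}. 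This gives $\ebb\,\mathcal{E}(f^m_{k+1})\le 2+ c\,\eta(\log T+1)C_0$, and the step-size condition closes the induction with $C_0=4$.

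Equivalently, we may simply verify the hypotheses of the relevant theorem in \cite{lin2016optimal} on SGD versus GD in a RKHS with $\kappa^2=4$, $|y|\le1$, and the operator-comparison event from Lemma~\ref{lemma:<2}, and invoke it. The probability bound comes from intersecting the initialization event of Lemma~\ref{coro:bound-o} with the sampling event of Lemma~\ref{lemma:<2}.
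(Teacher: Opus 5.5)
Your proposal is correct and follows the paper's proof. The paper simply invokes Proposition 6 of \cite{lin2017optimal} (SGD versus GD on the empirical risk) in $\H_m$, after checking $M=v=1$, $\kappa^2=4$ via \eqref{eq:bound_Km}, and the operator-comparison event from Lemma~\ref{lemma:<2} with $\lambda=(\eta T)^{-1}$, batch size $1$ and constant step size; your martingale decomposition and self-bounding residual induction (closing with $C_0=4$ under $\eta\le(32(\log T+1))^{-1}$) is a faithful unpacking of that proposition rather than a different route.
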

\begin{proof}
The lemma is proved by using Proposition 6 in \cite{lin2017optimal}, which provides upper bounds for $\|\S_\rho\omega_{T+1} - \S_\rho\nu_{T+1}\|_\rho$.
    We first show that their assumptions are satisfied in our setting, and then apply their results with our Lemma \ref{lemma:<2} by showing that $\S_\rho\nu_{T+1} - \S_\rho\mu_{T+1}$ is equivalent to $\bS_m f^m_T - \bS_m g^m_T$.

    Note we assume $|y|\le1$, then their Assumption 1 is satisfied with $M=v=1$.
    Instead of using the notations $\bx, \langle \bx, \bx'\rangle_H $ and $ \S_\rho$ in \cite{lin2017optimal} for any $\bx,\bx'\in\X$, we use $K^m_\bx,$ $\langle K^m_\bx, K^m_{\bx'}\rangle_{\H_m} $ and $ \bS_m$ in our setting.
    With probability at least $1-\delta/2$ over the random choice of $\bW(0)$, their equation (3) holds true with $\kappa^2 = 4$ due to $\langle K^m_\bx, K^m_{\bx'}\rangle_{\H_m} = K^m(\bx,\bx') \le \|K^m\|_\infty \le 4$ according to \eqref{eq:bound_Km}.
    Their equation (47) is guaranteed with probability at least $1-\delta/2$ over sampling by Lemma \ref{lemma:<2} with $\kappa=2$, $\Gamma = n$, $\zeta_i = K^m_{\bx_i}$, $Q = \int_\X K^m_\bx\otimes K^m_\bx d\rho_{\bx}$, and $\lambda = (\eta T)^{-1}$.
    In addition,
    by taking the batch-size $b=1$ and the step size $\eta_k = \eta$ for all $k\in[T]$, we know $\S_\rho\nu_{T+1} - \S_\rho\mu_{T+1}$ in \cite{lin2017optimal} is equivalent to our $\bS_m f^m_T - \bS_m g^m_T$.

    Then, combining above observations and Proposition 6 in \cite{lin2017optimal} with $\eta_k = \eta$, $\theta = 0$, $\lambda = (\eta T)^{-1}$, $\kappa = 2$, $M=v=1$ and $b = 1$, we get the desired results.
\end{proof}
Now, we present the proof of Proposition~\ref{pro:sgd-fm-frho}. 
\begin{proof}[Proof of Proposition~\ref{pro:sgd-fm-frho}]
    Note that
    \begin{align*}
        \ebb_\A[\|\bS_mf^m_T - f_\rho\|^2_\rho] \lesssim \ebb_\A[\|\bS_m f^m_T - \bS_m g^m_T\|^2_\rho] + \|\bS_m g^m_T - \bS g_T\|^2_\rho + \|\bS g_T - f_\rho\|^2_\rho.
    \end{align*}
    Then, the desired results are obtained by combining Lemma \ref{lem:sgd-fm-g} with $\delta$ replaced by $\delta/3$, Proposition~\ref{pro:gm-gT}with $\delta$ replaced by $\delta/3$, and Proposition \ref{pro:gT-frho} with $\delta$ replaced by $\delta/3$.
\end{proof}

\subsubsection{Proofs for Theorem~\ref{thm:excess-relu_sgd} and Corollary~\ref{cor:relu_sgd}}\label{proof:SGD-results}
\begin{proof}[Proof of Theorem \ref{thm:excess-relu_sgd}]
  Combining Propositions \ref{pro:sgd-f-flin}, \ref{pro:sgd-flin-fm} and  \ref{pro:sgd-fm-frho}  with $\delta$ replaced by $\delta/3$, the desired result is obtained.  
\end{proof}

\begin{proof}[Proof of Corollary \ref{cor:relu_sgd}]
    It's obvious that the inequality $\eta T\le n(36\log({24n}/{\delta}))^{-1}$ holds for $\eta=(72\log(24n/\delta))^{-1} n^{-\frac{2\beta}{2\beta+\gamma}} $   and $ T = \lceil n^{\frac{2\beta+1}{2\beta+\gamma}}\rceil$.
    Similar to the proof of Corollary \ref{cor:relu}, one can check that $n \ge (72(2\beta+\gamma))^{2(2\beta+\gamma)}(\frac{24}{\delta})$ implies $\eta T \ge 1$ and $\eta \le \frac{1}{32(\log(T)+1)}$.
    Note that the choices of $\eta$ and $T$ implies $\eta\log(T) + (\frac{ {\eta T}}{n^2} + \frac{(\eta T)^\gamma+(\eta T)^{1 - 2\beta}}{n} )\log^2(T)\log^2(\frac{T}{\delta})  + (\eta T)^{-2\beta} \lesssim n^{-\frac{2\beta}{2\beta+\gamma}}\log^2(n)\log^{2\beta}(\frac{n}{\delta}).$
    Further, according to the proof of Corollary \ref{cor:relu}, one can also show that $m \gtrsim L^{20}\max\{L^6d^3n^{\frac{7}{2\beta+\gamma}}\log^3(ndL/\delta), n^{\frac{6\beta+12}{2\beta+\gamma}}\}$ indicates \eqref{eq:m_condition_sgd} and $\frac{L^\frac{20}{3}(\eta T)^4}{m^{\frac{1}{3}}} \lesssim n^{-\frac{2\beta}{2\beta+\gamma}}$.
    In addition, note that condition \eqref{eq:m_condition_sgd} implies $L\exp\big(\O(dL\log(m)) - \Omega(m^{\frac{1}{3}})\big) \le \delta/2$.
    Combining the above observations with Theorem \ref{thm:excess-relu_sgd} with $\delta$ replaced by $\delta/2$ yields the desired results.
\end{proof}

\section{Discussion on Non-Symmetric Initialization}\label{sec:NTK}
In this subsection, we discuss the uniform concentration of the NTK with non-symmetric initialization and show that our symmetric trick does not affect the concentration properties of the NTK. We consider the following initialization
\begin{align}\label{eq:nonsymmetric-init}
       & \textit{for the first layer: } \ \ \bw_r^1(0) \overset{\text{}}{\sim} \mathcal{N}(0, \mathbf{I}_d), \   \textit{ for } \ell = 2,\ldots,L:   \  \bw_r^\ell(0) \overset{\text{i.i.d.}}{\sim} \mathcal{N}(0,   \mathbf{I}_m)  \textit{  for all } r\in[m],\nonumber\\
     & \textit{for the output layer:} \ \  a_r\overset{\text{i.i.d.}}{\sim} \textit{Unif }(\{-1,1\}) \textit{ for } r\in[m]. 
    \end{align}
  Indeed, the symmetric setting can be seen as a special case of the above general setting. We will show that for this general setting, the results of Lemma~\ref{lem:concentrationNTK} still hold with the same convergence rates. Note that $K^m$ and $K$ are different between the symmetric and non-symmetric settings. We first give their definitions as follows.  
  
As discussed in \cite{jacot2018neural,xu2024overparametrized}, the NTK $K:\X\times\X\to\R$ for deep ReLU networks with initialization \eqref{eq:nonsymmetric-init} is defined, for any $\bx,\bx'\in\X$, by
\begin{align}\label{eq:K_original}
    K(\bx,\bx') = \sum_{\ell=1}^L K^\ell(\bx,\bx') = \sum_{\ell=1}^L2\ebb[\sigma(U^{\ell-1}(\bx))\sigma(U^{\ell-1}(\bx')) ]\prod_{h=\ell}^L q^h(\bx,\bx'),
\end{align}
where $(U^{\ell}(\bx),U^{\ell}(\bx'))$ is a pair of bivariate normal variables defined iteratively by
\begin{align}\label{eq:def_Ux}
(U^{\ell}(\bx),U^{\ell}(\bx'))\sim \N(0,\Sigma^{\ell-1}(\bx,\bx'))
\end{align}
with
\begin{align*}
    \Sigma^{\ell-1}(\bx,\bx') = 2\left(\begin{aligned}
        &\ebb[\sigma^2(U^{\ell - 1}(\bx)) ] &&\ebb[\sigma(U^{\ell-1}(\bx))\sigma(U^{\ell-1}(\bx')) ]\\
        \ebb[&\sigma(U^{\ell-1}(\bx)) \sigma(U^{\ell-1}(\bx')) ] &&\ \ \ \ \ \ebb[\sigma^2(U^{\ell - 1}(\bx')) ]
    \end{aligned}
    \right)
\end{align*}
and
\begin{align*}
    \ebb[\sigma(U^0(\bx))\sigma(U^0(\bx')) ] = \langle \bx,\bx'\rangle_2 \ \text{ and }\ \Sigma^{0}(\bx,\bx') = \left(\begin{aligned}
        &1 &&\langle \bx,\bx'\rangle_2\\
        \langle \bx&,\bx'\rangle_2 &&\ \ \ 1
    \end{aligned}
    \right), 
\end{align*}
and $q^\ell(\bx,\bx') = \frac{\pi - \arccos(p^{\ell-1}(\bx,\bx'))}{\pi}$ with
\begin{align*}
    p^{\ell-1}(\bx,\bx') = \frac{\ebb[\sigma(U^{\ell-1}(\bx)) \sigma(U^{\ell-1}(\bx')) ]}{\sqrt{\ebb[\sigma^2(U^{\ell - 1}(\bx)) ]}\sqrt{\ebb[\sigma^2(U^{\ell - 1}(\bx')) ]}}.
\end{align*}
Note under the symmetric initialization \eqref{eq:initialization}, $K(\bx,\bx')$ degenerates to $K^L(\bx,
\bx')$.

Similar to Lemma \ref{coro:bound-o}, the following results still hold under non-symmetric initialization.
\begin{lemma}\label{coro:bound-o-general}
    The following statements hold with probability at least $1-\delta$ over initialization $\bW(0)$ for all $\ell\in[L].$
    \begin{enumerate}[label=(\alph*), leftmargin=*]
        \item Assume $m\gtrsim dL\log(\frac{1}{\delta}),$ there holds $\sup_{\bx\in\X}\big|\|o^\ell_0(\bx)\|_2 - 1\big| \le C\ell\sqrt{\frac{dL\log(m/\delta)}{m}}.$
        \item Assume $m\gtrsim dL\log(\frac{m}{\delta}),$ there holds $\sup_{\bx\in\X}\|\bV^\ell_{L,0}(\bx)\|_{op} \le \frac{CL}{\sqrt{m}}.$
        \item Assume $m\gtrsim dL^3\log(\frac{m}{\delta}),$ there holds $\sup_{\bx\in\X}\|\frac{\partial f_{\bW(0)}(\bx)}{\partial \bW^\ell(0)}\|_{2} \le CL.$
    \end{enumerate}
\end{lemma}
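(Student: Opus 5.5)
Parts (a) and (b) will be obtained by the same arguments as in Lemma~\ref{coro:bound-o}, since neither uses the symmetric structure of the last layer. For (a), the forward outputs $o^\ell_0(\bx)$ depend only on $\bW^1(0),\ldots,\bW^\ell(0)$. For $\ell\le L-1$ these are distributed exactly as in the symmetric case. For $\ell=L$ the last layer now has $m$ i.i.d.\ rows instead of $m/2$ duplicated ones, so part (a) of Lemma~\ref{lemma:initi_sample} applies directly, and in fact with better constants. We take a $m^{-1/2}(\sqrt2 c_0)^{-L}$-net $\D$ of $S^{d-1}$ with $|\D|\le(3\sqrt m)^d(\sqrt2c_0)^{dL}$. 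Using Lemma~\ref{lemma:oprt_norm}, the Lipschitz propagation $\|o^\ell_0(\bx)-o^\ell_0(\hat\bx)\|_2\le(\sqrt2c_0)^\ell\|\bx-\hat\bx\|_2\le m^{-1/2}$ holds. A union bound over $\D$ then gives the uniform estimate, with $\log p\asymp dL\log m$ producing the $\sqrt{dL\log(m/\delta)/m}$ factor. For (b), $\bV^\ell_{L,0}(\bx)$ involves no $\ba$, so the argument is verbatim. Lemma 32 of \cite{xu2024overparametrized} reduces the supremum over $\X$ to at most $m^{dL}$ activation patterns, and part (b) of Lemma~\ref{lemma:initi_sample} bounds each product by $CL$; the factor $\sqrt{2/m}\,\bD^\ell_0$ then contributes the $m^{-1/2}$.

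The new content is (c). Without symmetry the gradients of the earlier layers no longer vanish (Lemma~\ref{prop:symm=0} fails), so every $\ell\in[L]$ must be treated. Write
\[
\frac{\partial f_{\bW(0)}(\bx)}{\partial\bW^\ell(0)}=\bV^\ell_{L,0}(\bx)\ba\,(o^{\ell-1}_0(\bx))^\top .
\]
Then
\[
\Big\|\frac{\partial f_{\bW(0)}(\bx)}{\partial\bW^\ell(0)}\Big\|_2=\|\bV^\ell_{L,0}(\bx)\ba\|_2\,\|o^{\ell-1}_0(\bx)\|_2 ,
\]
since the Frobenius norm of a rank-one matrix is the product of the vector norms. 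For the second factor, part (a) together with $m\gtrsim dL^3\log(m/\delta)$ gives $\sup_\bx\|o^{\ell-1}_0(\bx)\|_2\le C$. For the first factor, $\|\ba\|_2=\sqrt m$, so part (b) yields $\|\bV^\ell_{L,0}(\bx)\ba\|_2\le\|\bV^\ell_{L,0}(\bx)\|_{op}\sqrt m\le CL$. Multiplying the two bounds gives the claim, uniformly in $\bx$ and $\ell$.

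The main obstacle is keeping part (b) genuinely uniform over $\X$ while only invoking the finite-set statement. The reduction via the finite number of activation patterns is essential, because the $\bD^h_0(\bx)$ are not continuous in $\bx$, so a net argument alone cannot control $\bV^\ell_{L,0}$. The reduction also has to be made jointly over all layers $1,\ldots,L$, which is where the cardinality bound $m^{dL}$ and the condition $m\gtrsim dL\log(m/\delta)$ enter. A final union bound over the events of (a), (b), and Lemma~\ref{lemma:oprt_norm}, with $\delta$ rescaled, yields probability at least $1-\delta$.
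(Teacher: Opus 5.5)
Your proposal is correct and follows the paper's proof essentially verbatim: parts (a) and (b) are inherited from Lemma~\ref{coro:bound-o}, via the net argument and the activation-pattern reduction combined with part (b) of Lemma~\ref{lemma:initi_sample}. Part (c) is obtained exactly as in the paper, by writing the gradient as $\bV^\ell_{L,0}(\bx)\ba\,(o^{\ell-1}_0(\bx))^\top$ and bounding it by $\sqrt{m}\,\|\bV^\ell_{L,0}(\bx)\|_{op}\,\|o^{\ell-1}_0(\bx)\|_2\le CL$; using the exact rank-one Frobenius identity instead of the paper's inequality is an inessential difference.
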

\begin{proof}
    The proofs of the first two parts are the same as those of Lemma \ref{coro:bound-o}.
    We only prove part $(c)$ here.
    According to the first two parts, for any $\ell\in[L],$ there holds
    \begin{align*}
      \sup_{\bx\in\X}\Big\|\frac{\partial f_\bW(0)(\bx)}{\partial \bW^\ell(0)}\Big\|_{2}  &= \sup_{\bx\in\X} \Big\|\bV^\ell_{L,0}(\bx) \ba \big(o^{\ell-1}_0(\bx)\big)^\top\Big\|_2 \le \sqrt{m}\sup_{\bx\in\X}\|\bV^\ell_{L,0}(\bx)\|_{op}\sup_{\bx\in\X}\|o^{\ell-1}_0(\bx)\|_{2}\\
      &\le  CL\Big(C\ell\sqrt{\frac{dL\log(m/\delta)}{m}} + 1\Big) \le CL,
    \end{align*}
    where the last inequality follows from the condition $m\gtrsim dL^3\log(\frac{m}{\delta})$.
    This completes the proof.
\end{proof}

Now, we give the concentration results of the general case.
The proof is similar to that of Lemma \ref{lem:concentrationNTK}.
Recall the definition of $K^\ell$ (see \eqref{eq:K_original}), similarly we define $K^{m,\ell}(\bx,\bx') = \big\langle \frac{\partial f_\bW(0)(\bx)}{\partial \bW^\ell(0)}, \frac{\partial f_\bW(0)(\bx')}{\partial \bW^\ell(0)} \big\rangle_2$ for all $\ell\in[L]$ and $\bx,\bx'\in X$.
\begin{lemma}\label{lem:concentration}
    Let $\delta\in(0,1)$.
    Assume $m \gtrsim dL^3\log(\frac{m}{\delta})$.
    With probability at least $1-L\exp(\O(dL\log(m))-\Omega(m^{\frac{1}{3}}))) - \delta$ over initialization $(\ba,\bW(0))$, for all $\ell\!\in\![L]$, there holds
    \begin{align*}
        \big\|K^{m,\ell} - K^\ell \big\|_\infty \lesssim  \sqrt{L} m^{-\frac{1}{6}}    + \sqrt{ dL\log(m) m^{-1}} + L^3 m^{-\frac{1}{3}}.
    \end{align*}
\end{lemma}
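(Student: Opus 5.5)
We follow the proof of Lemma~\ref{lem:concentrationNTK}, now applied to each layer $\ell$ separately. For every $\ell\in[L]$ we write
\[
K^{m,\ell}(\bx,\bx')=\langle o^{\ell-1}_0(\bx),o^{\ell-1}_0(\bx')\rangle_2\,(\bV^\ell_{L,0}(\bx)\ba)^\top\bV^\ell_{L,0}(\bx')\ba .
\]
The target kernel is $K^\ell(\bx,\bx')=2\ebb[\sigma(U^{\ell-1}(\bx))\sigma(U^{\ell-1}(\bx'))]\prod_{h=\ell}^Lq^h(\bx,\bx')$. As in Lemma~\ref{lem:concentrationNTK}, we add and subtract the intermediate quantities
\[
2\ebb[\sigma(U^{\ell-1}(\bx))\sigma(U^{\ell-1}(\bx'))]\,(\bV^\ell_{L,0}(\bx)\ba)^\top\bV^\ell_{L,0}(\bx')\ba
\quad\text{and}\quad
2\ebb[\cdots]\,tr\big(\bV^\ell_{L,0}(\bx)^\top\bV^\ell_{L,0}(\bx')\big).
\]
This gives three terms $\calE^\ell_1,\calE^\ell_2,\calE^\ell_3$:
\begin{itemize}
\item $\calE^\ell_1$ is the forward-feature error $\sup|\langle o^{\ell-1}_0(\bx),o^{\ell-1}_0(\bx')\rangle_2-2\ebb[\cdots]|$, multiplied by the backward factor.
\item $\calE^\ell_2$ is the error $|(\bV^\ell_{L,0}\ba)^\top\bV^\ell_{L,0}\ba-tr(\cdot)|$ from replacing the quadratic form in $\ba$ by its trace.
\item $\calE^\ell_3=\sup|tr(\bV^\ell_{L,0}(\bx)^\top\bV^\ell_{L,0}(\bx'))-\prod_{h=\ell}^Lq^h(\bx,\bx')|$ is the backward-kernel error.
\end{itemize}
Property~\ref{prop:bound-K} bounds $|2\ebb[\cdots]|\le1$; its proof only uses the forward recursion and does not depend on symmetry.

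The first difference from the symmetric case is that the backward factor is no longer bounded by $1$. We control it with Lemma~\ref{coro:bound-o-general}(b): $|(\bV^\ell_{L,0}(\bx)\ba)^\top\bV^\ell_{L,0}(\bx')\ba|\le m\sup_\bx\|\bV^\ell_{L,0}(\bx)\|_{op}^2\lesssim L^2$. Alternatively, we first bound $\calE^\ell_2+\calE^\ell_3$ and then use $|tr(\cdots)|\le 1+\calE^\ell_3$. The forward term is exactly the quantity treated in Lemma~6 of \cite{xu2024overparametrized}, now at depth $\ell-1$. With the refined estimate $\sup_\bx\|o^{\ell-1}_0(\bx)\|_2\le C$ from Lemma~\ref{coro:bound-o-general}(a) and the $c_0^{-L}m^{-2}$-net, it is bounded by $Lm^{-1/3}$. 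Multiplying by the backward factor $\lesssim L^2$ produces the extra term $L^3m^{-1/3}$ in the statement. For $\calE^\ell_2$, we condition on $\bW(0)$ and apply Hanson--Wright to the Rademacher vector $\ba$. The matrix $\bV^\ell_{L,0}(\bx)^\top\bV^\ell_{L,0}(\bx')$ has operator norm $\lesssim L^2/m$ and Frobenius norm $\lesssim L^2/\sqrt m$. The supremum over $\X$ reduces to at most $m^{2dL}$ activation patterns, by Lemma~32 of \cite{xu2024overparametrized}. A union bound then gives $\calE^\ell_2\lesssim m^{-1/3}$ with probability $1-\exp(\O(dL\log m)-\Omega(m^{1/3}))$, which is absorbed into the other terms.

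The main obstacle is $\calE^\ell_3$. In the symmetric case only the single factor $q^L$ appeared, whereas here $tr(\bV^\ell_{L,0}(\bx)^\top\bV^\ell_{L,0}(\bx'))$ must concentrate around the product $\prod_{h=\ell}^Lq^h$ of layerwise arc-cosine factors. We plan to telescope in the layer index from $L$ down to $\ell$. At layer $h$, conditional on the earlier weights, the rows of $\bW^{h}(0)$ are fresh Gaussians. The normalized count $\frac{2}{m}tr(\bD^h_0(\bx)\bD^h_0(\bx'))$ therefore concentrates, uniformly over $\X$ via the VC argument of Lemma~\ref{lem:flip}, around the probability that two Gaussian projections of $o^{h-1}_0(\bx)$ and $o^{h-1}_0(\bx')$ are both positive. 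This probability equals $q^h$ evaluated at the empirical correlation, and it is Hölder-$\frac12$ close to $q^h$ at the population correlation, with the correlation error controlled by $\calE^{h}_1$. The $\frac12$-Hölder continuity of $\arccos$ is what produces the $m^{-1/6}$ rate, as in Lemma~8 of \cite{xu2024overparametrized}. The delicate part is propagating these errors through the product of $L-\ell$ factors without exponential growth. Each $q^h\in[0,1]$, and the partial products of random matrices are uniformly $O(L/\sqrt m)$ in operator norm by Lemma~\ref{coro:bound-o-general}(b). The errors should therefore add rather than multiply, yielding $\calE^\ell_3\lesssim\sqrt{L}m^{-1/6}+\sqrt{dL\log(m)/m}$. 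We expect this to follow the improved version of Lemma~8 used for $\calE_3$ in Lemma~\ref{lem:concentrationNTK}. A final union bound over $\ell\in[L]$ accounts for the factor $L$ in the failure probability.
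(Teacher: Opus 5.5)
Your decomposition and your handling of $\calE^\ell_1$ and $\calE^\ell_2$ match the paper's proof. The backward factor is bounded by $m\sup_{\bx}\|\bV^\ell_{L,0}(\bx)\|_{op}^2\lesssim L^2$ via Lemma~\ref{coro:bound-o-general}(b), and the refined Lemma~6 of \cite{xu2024overparametrized} gives a forward error $\lesssim Lm^{-1/3}$; their product is the new $L^3m^{-1/3}$ term. There is one small slip in $\calE^\ell_2$. With your own norms $\|A\|_{op}\lesssim L^2/m$ and $\|A\|_F\lesssim L^2/\sqrt m$, Hanson--Wright at failure level $e^{-\Omega(m^{1/3})}$ needs $t^2/\|A\|_F^2\gtrsim m^{1/3}$, i.e.\ $t\asymp L^2m^{-1/3}$, which is the paper's bound $\calE^\ell_2\lesssim L^2m^{-1/3}$. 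Your value $m^{-1/3}$ would only hold with failure probability $e^{-\Omega(m^{1/3}/L^4)}$. Either way the term is absorbed by $L^3m^{-1/3}$.

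The genuine gap is in $\calE^\ell_3$: the mechanism you describe does not give the rate you state. In your telescoping, every layer $h\in\{\ell,\dots,L\}$ contributes three errors:
\begin{itemize}
\item a uniform-concentration error $\sqrt{dL\log(m)/m}$ for $\frac{2}{m}tr(\bD^h_0(\bx)\bD^h_0(\bx'))$;
\item a H\"older-$\frac12$ error bounded by $\sqrt{Lm^{-1/3}}=\sqrt L\,m^{-1/6}$, which is exactly the single-layer bound of the symmetric case;
\item an error from pushing the trace through the next weight matrix. This step itself needs a Gaussian-conditioning argument, because the downstream backward matrix depends on that weight matrix through its activation pattern.
\end{itemize}
If these errors add, as you propose via $|\prod_h a_h-\prod_h b_h|\le\sum_h|a_h-b_h|$ for factors in $[0,1]$, you obtain
\[
\calE^\ell_3\lesssim (L-\ell+1)\Big(\sqrt{L}\,m^{-1/6}+\sqrt{dL\log(m)/m}\Big)+\cdots .
\]
For $\ell=1$ this is of order $L^{3/2}m^{-1/6}$, not $\sqrt L\,m^{-1/6}$. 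A depth-dependent forward bound $\lesssim h\,m^{-1/3}$ does not help, since $\sum_{h\le L}\sqrt h\asymp L^{3/2}$. Pointing to ``the improved Lemma~8 used in Lemma~\ref{lem:concentrationNTK}'' does not close this either, because there $\calE_3$ involves only the single factor $q^L$.

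So, as written, your sketch proves the lemma only with an extra depth factor on the first two terms. The stated bound would need an idea that avoids paying a H\"older loss at every layer. In fairness, the paper's proof does not supply such an idea either. It asserts the same $\sqrt L\,m^{-1/6}$ estimate by citing an improved Lemma~8 of \cite{xu2024overparametrized}, without explaining why the per-layer errors do not accumulate.
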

\begin{proof}
For all $\ell\in[L]$, instead of using the estimates $\sup_{\bx\in\X}\|\bV^\ell_{k,0}(\bx)\|_{op} \le c_0^{k-\ell}m^{-\frac{1}{2}}$ in the proof of Lemma 33 in \cite{xu2024overparametrized}, we employ more finer estimate $\sup_{\bx\in\X}\|\bV^\ell_{k,0}(\bx)\|_{op} \le CLm^{-\frac{1}{2}}$ in Lemma \ref{coro:bound-o-general}.
Then, the term $|(\bV^\ell_{L,0}(\bx)\ba)^\top\bV^\ell_{L,0}(\bx')\ba|$ can be controlled by $CL^2$.
Combining this with \eqref{eq:bound_Uell} yields that
\begin{align*}
     \big\|K^{m,\ell} - K^\ell\big\|_\infty 
    &=  \sup_{\bx,\bx'\in\X}\!\!\Big|\langle o^{\ell-1}_0(\bx), \! o^{\ell-1}_0(\bx')\rangle_2(\bV^\ell_{L,0}(\bx)\ba)^\top\bV^\ell_{L,0}(\bx')\ba \!-\! 2\ebb[\sigma(U^{\ell-1}(\bx))\sigma(U^{\ell-1}(\bx'))] \!\prod_{h=\ell}^Lq^h(\bx,\bx') \Big| \nonumber\\
    &\le \sup_{\bx,\bx'\in\X}\big|\langle o^{\ell-1}_0(\bx), o^{\ell-1}_0(\bx')\rangle_2 - 2\ebb[\sigma(U^{\ell-1}(\bx))\sigma(U^{\ell-1}(\bx'))]\big| \cdot \big|(\bV^\ell_{L,0}(\bx)\ba)^\top\bV^\ell_{L,0}(\bx')\ba\big|\nonumber\\
    &\quad + \sup_{\bx,\bx'\in\X}\big|2\ebb[\sigma(U^{\ell-1}(\bx))\sigma(U^{\ell-1}(\bx'))]\big| \cdot \Big|(\bV^\ell_{L,0}(\bx)\ba)^\top\bV^\ell_{L,0}(\bx')\ba - tr\big(\bV^\ell_{L,0}(\bx)^\top\bV^\ell_{L,0}(\bx')\big)\Big|\nonumber\\
    &\quad + \sup_{\bx,\bx'\in\X}\big|2\ebb[\sigma(U^{\ell-1}(\bx))\sigma(U^{\ell-1}(\bx'))]\big| \cdot \Big|tr\big(\bV^\ell_{L,0}(\bx)^\top\bV^\ell_{L,0}(\bx')\big) - \prod_{h=\ell}^Lq^h(\bx,\bx')\Big|\nonumber\\
    &\!\lesssim   L^2\sup_{\bx,\bx'\in\X}\big|\langle o^{\ell-1}_0(\bx), o^{\ell-1}_0(\bx')\rangle_2 - 2\ebb[\sigma(U^{\ell-1}(\bx))\sigma(U^{\ell-1}(\bx'))]\big| \nonumber\\
    &\quad + \sup_{\bx,\bx'\in\X} \Big|(\bV^\ell_{L,0}(\bx)\ba)^\top\bV^\ell_{L,0}(\bx')\ba - tr\big(\bV^\ell_{L,0}(\bx)^\top\bV^\ell_{L,0}(\bx')\big)\Big| + \sup_{\bx,\bx'\in\X} \Big|tr\big(\bV^\ell_{L,0}(\bx)^\top\bV^\ell_{L,0}(\bx')\big) - \prod_{h=\ell}^Lq^h(\bx,\bx')\Big|\nonumber\\
    &\!=: \calE_1^\ell + \calE_2^\ell + \calE_3^\ell, 
\end{align*}
The estimates of the above three terms $\calE_1, \calE_2 , \calE_3$ are given as follows.

\noindent\textbf{Estimate of $\calE_1^\ell$}: The estimate of $\calE_1^\ell$ follows the same proof steps as in Lemma 6 in \cite{xu2024overparametrized}. 
According to Lemma 6 in \cite{xu2024overparametrized}, one can get that  $\calE_1\lesssim  {LC^L}{m^{-\frac{1}{3}}}$. 
We improve this estimate  from $ {LC^L}{m^{-\frac{1}{3}}}$ to $ {L^3}{m^{-\frac{1}{3}}}$ by using more finer estimates of initialization terms. 
Specifically, instead of using their estimate $\sup_{\bx}\|o^\ell_0(\bx)\|_2 \le c_0^\ell$ in Lemma 30 of \cite{xu2024overparametrized}, we apply the tight estimate  $\sup_{\bx}\|o^\ell_0(\bx)\|_2 \le C$ according to part (a) of Lemma \ref{coro:bound-o-general}.
    In addition, we set $V_0$ to be a $c_0^{-L}m^{-2}$-net of the $S^{d-1}$ rather than a $m^{-2}$-net. Then, following the same steps of the proof of Lemma 6, with probability at least $1-L\exp(\O(dL\log(m)) - \Omega(m^{\frac{1}{3}})))$ over initialization $\bW(0)$, there holds
\begin{align}
        \calE_1^\ell \lesssim  {L^3}{m^{-\frac{1}{3}}}.\nonumber
    \end{align}

\noindent\textbf{Estimates of $\calE_2^\ell$}:
Similar to the proof of the estimate of $\calE_1$, by using more finer estimates $\sup_{\bx}\|o^\ell_0(\bx)\|_2 \le C$ and $\sup_{\bx\in\X}\|\bV^\ell_{L,0}(\bx)\|_{op} \le CLm^{-\frac{1}{2}}$, follows the same proof steps of Lemma 7 in \cite{xu2024overparametrized}, we can show that
\begin{align*}
    \calE_2^\ell \lesssim \frac{L^2}{m^{\frac{1}{3}}}.
\end{align*}

\noindent\textbf{Estimates of $\calE_3^\ell$}: 
Similar to the above arguments, we use the estimates $\sup_{\bx}\|o^\ell_0(\bx)\|_2 \le C$ and $\sup_{\bx\in\X}\|\bV^\ell_{L,0}(\bx)\|_{op} \le CLm^{-\frac{1}{2}}$ to improve the proof of Lemma 8 in \cite{xu2024overparametrized} and get
\begin{align*}
    \calE_3^\ell   \lesssim \frac{\sqrt{L}}{m^{\frac{1}{6}}} + \sqrt{\frac{dL\log(m)}{m}} + \frac{L^2}{m^{\frac{1}{3}}}.
\end{align*}
Combining the above estimates of $\calE_1^\ell,\calE_2^\ell,\calE_3^\ell$ completes the proof of this lemma. 
\end{proof}

\section{Conclusion}\label{sec:conclu}
In this paper, we prove that both GD and SGD with deep ReLU networks can achieve the minimax-optimal rates $\O(n^{-\frac{2\beta}{2\beta+\gamma}})$ of the excess risk when the network width satisfies $m\gtrsim \text{Poly}(L,n,d)$. Our results indicate that gradient descent methods with deep ReLU networks can achieve generalization performance that is at least comparable to classical gradient methods in the kernel setting. Several directions for future study remain. First, extending our results to deep networks with smooth activation would be a valuable next step. It would also be interesting to broaden the analysis to other architectures, such as convolutional networks and residual networks.

\setlength{\bibsep}{0.16cm}
\bibliographystyle{plainnat}
\bibliography{learning.bib}
\end{document}